\def\eqref#1{equation~\ref{#1}}
\def\Eqref#1{Equation~\ref{#1}}
\def\1{\bm{1}}
\def\vvarepsilon{{\bm{\varepsilon}}}
\def\vbeta{{\bm{\beta}}}
\def\va{{\bm{a}}}
\def\vb{{\bm{b}}}
\def\ve{{\bm{e}}}
\def\vh{{\bm{h}}}
\def\vk{{\bm{k}}}
\def\vp{{\bm{p}}}
\def\vq{{\bm{q}}}
\def\vs{{\bm{s}}}
\def\vt{{\bm{t}}}
\def\vu{{\bm{u}}}
\def\vv{{\bm{v}}}
\def\vw{{\bm{w}}}
\def\vx{{\bm{x}}}
\def\vy{{\bm{y}}}
\def\vz{{\bm{z}}}
\def\mA{{\bm{A}}}
\def\mH{{\bm{H}}}
\def\mI{{\bm{I}}}
\def\mP{{\bm{P}}}
\def\mQ{{\bm{Q}}}
\def\mR{{\bm{R}}}
\def\mU{{\bm{U}}}
\def\mV{{\bm{V}}}
\def\mX{{\bm{X}}}
\def\mZ{{\bm{Z}}}
\def\mSigma{{\bm{\Sigma}}}
\DeclareMathAlphabet{\mathsfit}{\encodingdefault}{\sfdefault}{m}{sl}
\SetMathAlphabet{\mathsfit}{bold}{\encodingdefault}{\sfdefault}{bx}{n}
\newcommand{\E}{\mathbb{E}}
\newcommand{\R}{\mathbb{R}}
\newcommand{\Var}{\mathrm{Var}}
\newcommand{\Cov}{\mathrm{Cov}}
\DeclareMathOperator{\Tr}{Tr}
\newtheorem{theorem}{Theorem}
\newtheorem{lemma}{Lemma}
\newtheorem{corollary}{Corollary}
\newtheorem{prop}{Proposition}
\newtheorem{assumption}{Assumption}
\newtheorem{remark}{Remark}
\title{{Risk Phase Transitions in Spiked Regression: Alignment Driven Benign and Catastrophic Overfitting}}
\author{Jiping Li \\
Department of Mathematics\\
University of California, Los Angeles\\
\texttt{jipingli0324@g.ucla.edu} \\
\and
Rishi Sonthalia \\
Department of Mathematics \\
Boston College\\
\texttt{rishi.sonthalia@bc.edu}
}
\begin{document}

\maketitle

\begin{abstract}
This paper analyzes the generalization error of minimum-norm interpolating solutions in linear regression using spiked covariance data models. The paper characterizes how varying spike strengths and target-spike alignments can affect risk, especially in overparameterized settings. The study presents an exact expression for the generalization error, leading to a comprehensive classification of benign, tempered, and catastrophic overfitting regimes based on spike strength, the aspect ratio $c=d/n$ (particularly as $c \to \infty$), and target alignment. Notably, in well-specified aligned problems, increasing spike strength can surprisingly induce catastrophic overfitting before achieving benign overfitting. The paper also reveals that target-spike alignment is not always advantageous, identifying specific, sometimes counterintuitive, conditions for its benefit or detriment. Alignment with the spike being detrimental is empirically demonstrated to persist in nonlinear models.
\end{abstract}

\section{Introduction}

Understanding the generalization error of overparameterized models is a central challenge in modern machine learning. Phenomena such as double descent \citep{belkin2019reconciling, hastie2022surprises} and benign overfitting \cite{bartlett2020benign,mallinar2022benign, tsigler2023benign} have spurred research underscoring the critical role of the data's spectral structure \cite{bartlett2020benign, dobriban2018high, hastie2022surprises, kausik2024double, mei2022generalization, sonthalia2023training, tsigler2023benign, wang2024near}. The spiked covariance model is one commonly considered spectral structure \cite{couillet_liao_2022}. In this model, the data matrix $\mX = \mZ + \mA \in \mathbb{R}^{d \times n}$, comprising $n$ data points in $\mathbb{R}^d$, is decomposed into a rank-one signal component (``spike'') $\mZ$ and an isotropic noise component (``bulk'') $\mA$. Spiked covariance models emerge naturally in practice, for instance, in the features learned by neural networks during training \cite{sonthalia2025low,ba2022high, ba2023learning, damian2022neural, dandi2024how, martin2021implicit, moniri2023theory, wang2024nonlinear}. While recent studies have examined benign overfitting in spiked models \citep{ba2023learning, kausik2024double}, they lack a systematic taxonomy spanning spike strength, target–spike alignment, model misspecification, and train–test covariate shift. This paper closes the gap for linear regression.

This work explores how general spike sizes and target alignments affect generalization error in least squares linear regression. We consider targets $\vy$ generated by:
\[
    \vy = \alpha_Z \vbeta_*^\top \vz + \alpha_A \vbeta_*^\top \va + \vvarepsilon
\]
Here, $\vz \in \mathbb{R}^d$ represents the signal component, $\va \in \mathbb{R}^d$ corresponds to the bulk component, $\vvarepsilon$ is observation noise, and $\vbeta_* \in \mathbb{R}^d$. The coefficients $\alpha_Z$ and $\alpha_A$ model the target's dependence on the spike and bulk components, respectively. Notably, if $\alpha_A \neq \alpha_Z$, the targets are non-linear functions of $\vx = \vz+\va$, introducing model mis-specification. We address two fundamental questions:
\begin{itemize}
    \item \textbf{Q1:} For a fixed aspect ratio $c=d/n$, in asympototic proportional regime under what conditions does alignment of the target signal with the data spike improve or impair generalization?
    \item \textbf{Q2:} In the high-dimensional limit where $c \to \infty$, when do we observe benign, tempered, or catastrophic overfitting regimes?
\end{itemize}

\textbf{Contributions} We present precise characterization of the generalization performance of minimum-norm interpolating solutions in linear regression. Our exact risk decomposition pinpoints conditions for transitions between benign and catastrophic overfitting. This reveals alignment-dependent phenomena obscured by isotropic theories, clarifying how signal structure, data scaling, and overparameterization shape generalization. Our primary contributions are as follows:
\begin{itemize}
    \item \textbf{Precise Risk Characterization:} We derive an exact generalization error decomposition (\Cref{thm:alignment-risk}) into interpretable bias, variance, data noise, and alignment terms.
    \item \textbf{Comprehensive Categorization of Overfitting Regimes:} We precisely classify benign, tempered, or catastrophic overfitting regimes based on spike strength, overparameterization ($c=d/n$), and target alignment (Table \ref{tab:summary-regimes}).
    Surprisingly, for well-specified aligned problems, increasing spike strength can induce catastrophic overfitting before achieving benign overfitting. Misspecified problems show distinct transitions, often precluding benign overfitting.
    
    \item \textbf{Conditions for Beneficial Alignment:} Challenging conventional wisdom, we show spike alignment is not always beneficial and depends on spike strength meeting critical thresholds (Table \ref{tab:alignment-benefit}).
    For misspecified problems, beneficial alignment requires $\alpha_Z/\alpha_A$ in a specific, non-trivial range. Counterintuitively, very strong spike dependence ($\alpha_Z/\alpha_A$) can render alignment detrimental. 
    
    \item \textbf{Empirical Validation: \footnote{Our code is available at the anonymous GitHub repository: \href{https://anonymous.4open.science/r/Alignment-Spike-45B6/Equal_Operator_Norm.ipynb}{link}}} Empirical validation confirms our theoretical phenomena, including surprising negative alignment impacts, persist in nonlinear models, underscoring broader relevance.
\end{itemize}

\paragraph{Benign Overfitting in Linear Regression.} Significant research has explored benign overfitting in linear regression \cite{bartlett2020benign, cao2021risk, JMLR:v22:20-974, karhadkar2024benign, koehler2021uniform, liang2020just, mallinar2022benign,  muthukumar2020harmless, shamir2022implicit, tsigler2023benign, wu2020optimal}. Many studies assume a uniformly bounded largest covariance eigenvalue or lack precise characterizations of its interplay with target alignment and generalization. \emph{Our work allows this eigenvalue to grow, offering precise performance characterizations based on this growth and alignment.} While \citet{kausik2024double} considers spiked models, their focus is on noiseless, well-specified scenarios with specific spike scaling. \emph{Our analysis is broader, encompassing observation noise, misspecification, and general spike scaling.}

Many prior works\citep{karhadkar2024benign, shamir2022implicit, tsigler2023benign} on benign overfitting with low-rank signals plus isotropic noise require near-orthogonality between signal and noise, sometimes imposing strong conditions like $d = \Omega(n^2\log n)$. \emph{We instead consider the proportional regime $d/n \to c = \Theta(1)$, subsequently examining $c \to \infty$.} This setting is morally similar to allowing $d = \omega(n)$ and aligns with approaches like \citep{karhadkar2024benign} which, for classification, shows misclassification probability can be upper bounded by $Ce^{-d/n}$, vanishing as $d/n \to \infty$.

\paragraph{Generalization Error with Spiked Covariance.} While recovering spike properties \cite{sonthalia2023training, kausik2024double, nadakuditi2014optshrink, benaych2011eigenvalues, benaych2012singular} and analyzing generalization error in spiked models \cite{ba2022high, ba2023learning, mousavi2023gradient, moniri2023theory} are active research areas, existing analyses often characterize generalization implicitly (e.g., via fixed-point equations) or focus on specific spike strengths/alignments. \emph{In contrast, we provide explicit, generic formulae for generalization error, enabling precise categorization of overfitting regimes and conditions for beneficial spike alignment.}

\paragraph{Notation} The subscript on $o,O,\omega, \Omega, \Theta$ will denote which quantity is being sent to infinity. 

\begin{table}[ht]
    \centering
    \caption{\textbf{Asymptotic Generalization Regimes.} This table summarizes conditions for when overfitting is benign, tempered, or catastrophic in the limit where $d/n \to c$ and subsequently $c \to \infty$. The behavior depends on the spike scaling relative to the bulk, target alignment ($\vbeta_*$ relative to spike direction $\vu$), and target specifications $\alpha_A, \alpha_Z$ (train) and $\tilde{\alpha}_A, \tilde{\alpha}_Z$ (test). Here, $\theta^2$ quantifies the scaled spike strength and $\tau^2$ the scaled bulk variance; the two primary scaling regimes are operator norm based ($\theta^2 =\gamma \tau^2$) and Frobenius norm based ($\theta^2 = d\tau^2$). The $\omega, o, O, \Theta$ are all as we send $c \to \infty$. }
    \vspace{0.5em}
    \begin{tabular}{p{1.5cm}>{\centering\arraybackslash}p{3.4cm}>{\centering\arraybackslash}p{2.4cm}>{\centering\arraybackslash}p{4.9cm}}
        \toprule
        \textbf{Scaling} & \textbf{Benign} & \textbf{Tempered} & \textbf{Catastrophic} \\
        \midrule
        \multicolumn{4}{l}{\textbf{Well-Specified, No Covariate Shift:} $\alpha_A = \tilde{\alpha}_A = \alpha_Z = \tilde{\alpha}_Z = \alpha > 0$} \\
        \midrule
        $\theta^2 = \gamma \tau^2$ 
        & $\gamma = \omega_c(c^2)$, $\vbeta_* \parallel \vu$
        & All other cases
        & $o_c(c^2) \ge \gamma \ge \omega_c(1)$, $\vbeta_* \not\perp \vu$ \\[20pt]
        $\theta^2 = d \tau^2$ 
        & $\vbeta_* \parallel \vu$
        & $\vbeta_* \nparallel \vu$
        & Never \\
        \midrule
        \multicolumn{4}{l}{\textbf{Misspecified, No Covariate Shift:} $\alpha_A = \tilde{\alpha}_A, \alpha_Z = \tilde{\alpha}_Z, \alpha_A \neq \alpha_Z$} \\
        \midrule
        $\theta^2 = \gamma \tau^2$ 
        & Never 
        & All other cases
        & $o_c(c^2) \ge \gamma \ge \omega_c(1)$, $\vbeta_* \not\perp \vu$ \\[20pt]
        $\theta^2 = d \tau^2$ 
        & Never
        & Always
        & Never \\[5pt]
        \midrule
        \multicolumn{4}{l}{\textbf{Misspecified with Covariate Shift:} $\alpha_A \neq \tilde{\alpha}_A$ or $\alpha_Z \neq \tilde{\alpha}_Z$} \\
        \midrule
        $\theta^2 = \gamma \tau^2$ 
        & Never & All other cases 
        & \begin{tabular}[c]{@{}c@{}}$\alpha_Z \neq \tilde{\alpha}_Z, \vbeta_* \not\perp \vu$, $\gamma = \omega_c(1)$ \\ or \\ $\alpha_Z = \tilde{\alpha}_Z, \vbeta_* \not\perp \vu$, \\ $\omega_c(1) \le \gamma \le o_c(c^2)$ \end{tabular} \\[30pt]
        $\theta^2 = d \tau^2$ 
        & $\alpha_Z = \tilde{\alpha}_Z = \tilde{\alpha}_A$, $\vbeta_* \parallel \vu$
        & All other cases
        & $\alpha_Z \neq \tilde{\alpha}_Z$ and $\vbeta_* \not\perp \vu$\\[5pt]
        \midrule
        \multicolumn{4}{l}{\textbf{Spike Recovery:} $\alpha_A = \tilde{\alpha}_A = 0$, $\alpha_Z = \tilde{\alpha}_Z$} (Appendix~\ref{app:spike})\\
        \midrule
        $\theta^2 = \gamma \tau^2$ 
        & $\gamma\tau^2 = o_c(1)$ 
        & $\gamma\tau^2 = \Theta_c(1)$
        & $\gamma\tau^2 = \omega_c(1)$ \\[5pt]
        $\theta^2 = d \tau^2$ 
        & $\tau^2=o_c(1)$ 
        & $\tau^2=\Theta_c(1)$ 
        & Never  \\
        \bottomrule
    \end{tabular}
    \label{tab:summary-regimes}
\end{table}

\begin{table}[ht]
    \centering
    \caption{\textbf{Conditions for Beneficial Spike Alignment at Finite Aspect Ratios ($c=d/n$).} This table outlines the specific regions where alignment of the target signal with the data's principal spike direction improves generalization. Conditions depend on the problem setting (well-specified vs. mis-specified), the spike scaling regime (operator or frobenius norm based), the overparameterization level $c=d/n$, and the relative dependence of the targets $y$ on the spike versus the bulk $\alpha_Z/\alpha_A$.}
    \vspace{0.5em}
    \begin{tabular}{l c c}
        \toprule
        \textbf{Setting} & & \textbf{Alignment Beneficial Region} \\[10pt]
        \midrule
        Well-Specified, Operator Norm & & $\gamma > c(c-2)$ \\
        Well-Specified, Frobenius Norm & & $c > 1$ \\[10pt]
        Misspecified, No Covariate Shift, Operator Norm & & $ \frac{1}{c} \le \frac{\alpha_Z}{\alpha_A} \le \frac{1}{c}\left(\frac{3c^2 - \gamma + 2c\gamma - 2c}{(c^2+\gamma)}\right)$ \\[10pt]
        Misspecified, No Covariate Shift, Frobenius Norm & & $ \frac{1}{c} < \frac{\alpha_Z}{\alpha_A} < 2 - \frac{1}{c}$ \\
        \bottomrule
    \end{tabular}
    \label{tab:alignment-benefit}
\end{table}
\section{Problem Setting}
\label{sec:setting}

We study the generalization of minimum-norm interpolators in high-dimensional linear regression. Using a spiked covariance data model, we quantify how spike strength and alignment influence generalization and the emergence of benign, tempered, or catastrophic overfitting.

\paragraph{Data Model.}
We consider a data matrix $\mX = \mZ + \mA\in \mathbb{R}^{d \times n}$ with \emph{signal component} $\mZ$ and \emph{isotropic noise component} $\mA$ that satisfy the following assumptions. Specifically, we shall that the population feature covariance is $\mSigma = \theta^2 \vu\vu^\top  + \tau^2\mI_d$, modeling a rank-one perturbation of isotropic noise. 
\begin{assumption}[Signal]\label{assumption:Z}
    Let $\vu \in \mathbb{R}^d$ be a fixed unit vector representing the spike direction. Then
    \begin{equation}\label{eq:signal}
        \mZ = \theta \vu\vv^\top ,
    \end{equation}
    where $\theta > 0$ controls the spike strength, and the vector $\vv \in \mathbb{R}^n$ has i.i.d. standard normal entries.
\end{assumption}

\begin{assumption}[Noise]\label{assumption:A}
    The entries of $\mA$ have zero mean and variance $\tau^2$. The matrix $\mA$ satisfies:
    \begin{itemize}[nosep, leftmargin=*]
        \item Its entries are uncorrelated and possess finite fourth moments.
        \item Its distribution is invariant under left and right orthogonal transformations.
        \item The empirical spectral distribution of $\frac{1}{\tau^2 d}\mA\mA^\top $ converges to the Marchenko–Pastur law as $n,d \to \infty$ with $d/n \to c \in (0,\infty)$.
    \end{itemize}
\end{assumption}

\paragraph{Spike Strength Normalizations.}
We consider two key scaling regimes for the spike strength relative to the bulk noise. These lead to distinct generalization behaviors.

\begin{enumerate}
    \item \textbf{Operator Norm Scaling ($\theta^2 = \gamma\tau^2$):}
    Here $\gamma$ tunes the spike strength $\theta^2$ relative to the noise variance $\tau^2$. When $\gamma = (1+\sqrt{c})^2$, the spectral norm of the signal component $\mZ$ is comparable to that of the noise component $\mA$. If $\gamma > (1+\sqrt{c})^2$, the spike emerges as an isolated eigenvalue beyond the bulk spectrum established by $\mA$, a phenomenon known as the Baik–Ben Arous–Péché (BBP) transition \citep{baik2005phase}. This scaling reflects spikes in learned neural network features \citep{ba2022high, moniri2023theory}. 

    \item \textbf{Frobenius Norm Scaling ($\theta^2 = d\tau^2$):} 
    Here $\theta^2 = d\tau^2$ matches expected signal and noise Frobenius norms ($\mathbb{E}[\|\mZ\|_F^2] = \mathbb{E}[\|\mA\|_F^2]$) and the spike has macroscopic proportion of the energy. Such strong signals can lead to improved sample complexity, potentially overcoming limitations observed in purely isotropic models \citep{ba2023learning, mei2022generalization}.
\end{enumerate}

\paragraph{Target Model.}
Given $\vx_i = \vz_i + \va_i$, the targets $\vy$ are obtained as follows:
\begin{equation}\label{eq:target-model}
    \vy_i = \alpha_Z \vz_i^\top  \vbeta_* + \alpha_A \va_i^\top  \vbeta_* + \vvarepsilon_i,
\end{equation}
where $\vbeta_* \in \mathbb{R}^d$ in uniformly distributed in the subspace $\{\vbeta \in \mathbb{S}^{d-1}: \vbeta^\top\vu =\text{ fixed constant}\}$ is the true underlying parameter vector. The terms $\vz_i$ and $\va_i$ are the $i$-th columns of $\mZ$ and $\mA$ respectively. The observation noise $\vvarepsilon_i$ are i.i.d. with $\mathbb{E}[\vvarepsilon_i]=0$, $\mathbb{E}[\vvarepsilon_i^2]=\tau_\varepsilon^2$. The coefficients $\alpha_Z, \alpha_A \in \mathbb{R}$ control the target's dependence on the signal and noise components. If $\alpha_Z \neq \alpha_A$, the true data generating process for $\vy$ differentially weights components of $x_i$, causing model misspecification.

\paragraph{Generalization Risk.}
We study the minimum-norm interpolating ordinary least squares estimator:
\begin{equation}\label{eq:beta-int}
    \vbeta_{int} = \mX^\dagger \vy, \qquad \text{ with } \qquad  \hat{\vy} = (\tilde{\vz} + \tilde{\va}) \vbeta_{int} 
\end{equation}
where $\mX^\dagger$ denotes the pseudoinverse. Given a new test data point $(\tilde{\vx}, \tilde{\vy})$, where $\tilde{\vx} = \tilde{\vz} + \tilde{\va}$ and targets $\tilde{\vy} = \tilde{\alpha}_Z \tilde{\vz}^\top  \vbeta_* + \tilde{\alpha}_A \tilde{\va}^\top  \vbeta_* + \tilde{\vvarepsilon}$ with potentially with different coefficients $\tilde{\alpha}_Z, \tilde{\alpha}_A$ and model parameters $\tilde{\tau}, \tilde{\tau_{\varepsilon}}$, the generalization risk is defined as the expected squared prediction error:
\begin{equation}\label{eq:risk}
    \mathcal{R}(\vbeta_{int}) = \mathbb{E}_{\mX, \vvarepsilon, \{\tilde{\vx}, \tilde{\vvarepsilon}\}} \left[(\tilde{\vy} - \hat{\vy})^2 \right] = \mathbb{E}_{\mX, \vvarepsilon, \{\tilde{\vx}, \tilde{\vvarepsilon}\}} \left[(\tilde{\vy} - \tilde{\vx}^T\vbeta_{int})^2 \right].
\end{equation}
The expectation is over the training data $(\mX, \vvarepsilon)$ and the test data realization $(\{\tilde{\vx}, \tilde{\vvarepsilon}\})$. We shall denote the asymptotic excess risk in the proportional regime as follows:
\[
    \mathcal{R}_c = \lim_{n,d \to \infty,\, d/n \to c} \mathcal{R}(\vbeta_{int}) - \tilde{\tau}^2_{\varepsilon}.
\]

\begin{remark}[Generalizing Prior Work]
This problem formulation encompasses several existing models as special cases. For instance, isotropic regression settings studied in \cite{hastie2022surprises} are recovered by setting $\theta = 0$ (no spike) and $\alpha_Z = 0$. Spike recovery models, such as in \cite{sonthalia2023training}, correspond to specific choices like $\tau^2 = 1/d$, $\tau_\varepsilon^2 = 0$, and $\alpha_A = 0$. Our generalized setup allows for a nuanced investigation of the interplay between signal structure, target alignment, and overparameterization.
\end{remark}

\paragraph{Quantifying the Benefit of Alignment.}
A key aspect of our investigation is to determine when the alignment of the true parameter vector $\vbeta_*$ with the data's principal spike direction $\vu$ is beneficial for generalization. We define alignment as \emph{beneficial} if the generalization risk $\mathcal{R}(\vbeta_{int})$ (or $\mathcal{R}_c$), is monotonically decreasing as a function of $(\vbeta_*^\top  \vu)^2 \in [0,1]$. Conversely, alignment is \emph{detrimental} if the risk is a monotonically increasing function of $(\vbeta_*^\top  \vu)^2$.

\paragraph{Characterizing Overfitting Regimes.}
Following \cite{bartlett2020benign, mallinar2022benign}, we classify the asymptotic behavior of the excess risk, $\mathcal{R}_c$ as $c \to \infty$ as benign, tempered or catastrophic.   
We say the overfitting is \textbf{benign} if $\lim_{c \to \infty}\mathcal{R}_c$ is zero, \textbf{tempered} if this limit is positive and finite, \textbf{catastrophic} if this limit is infinite.

\section{Theoretical Results}

Our core theoretical contribution is a precise analytical formula for excess risk in the spiked covariance model. This result relies on Assumption \ref{assumption:scaling}, which encompasses both the operator norm scaling
($\theta^2 = \gamma \tau^2$) and Frobenius norm scaling ($\theta^2 = d\tau^2$)
regimes. We develop our general risk theorem by analyzing progressively complex scenarios. Specifically, our forthcoming theorems provide specific conditions for benign, tempered, or catastrophic overfitting (as $c \to \infty$), and determine when, for finite $c$, alignment of $\vbeta_*$ with spike 
$\vu$ is beneficial or detrimental.

\begin{assumption}[Scaling] \label{assumption:scaling}
As $n, d \to \infty$ with $d/n \to c \in (0, \infty)$, we assume that
$\theta^2$ and $\tau^2$ satisfy $\Omega(\tau^2) \le \theta^2 \le O(d\tau^2)$ and $ \tau^2 = \Theta(1)$.
\end{assumption}

\subsection{Well Specified Problem}

\begin{figure}[t]
    \centering
    \begin{subfigure}[t]{0.48\linewidth}
        \centering
        \includegraphics[width=\linewidth]{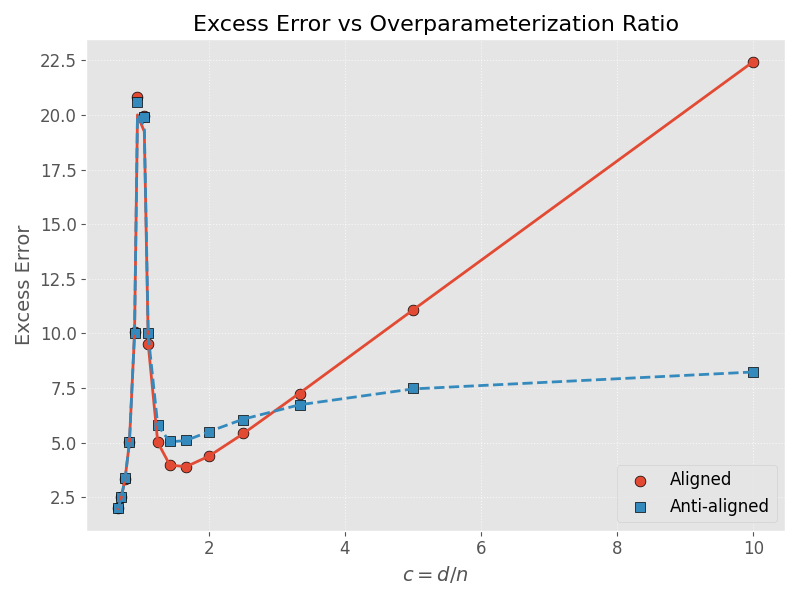}
        \caption{Operator norm scaling ($\theta^2 = c\tau^2$). Alignment initially improves generalization, but have catastrophic risk as $c \to \infty$. Anti-alignment yields tempered risk.}
        \label{fig:well-operator}
    \end{subfigure}
    \hfill
    \begin{subfigure}[t]{0.48\linewidth}
        \centering
        \includegraphics[width=\linewidth]{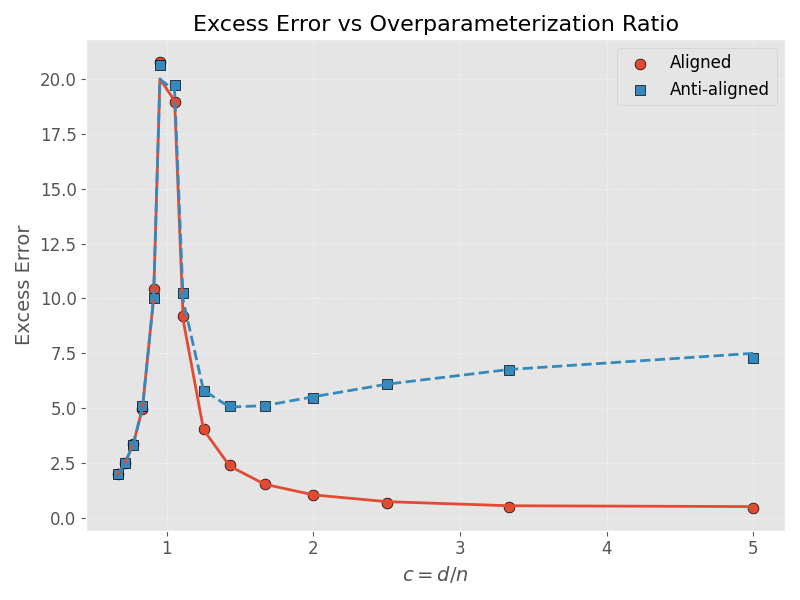}
        \caption{Equal Frobenius norm scaling ($\theta^2 = d\tau^2$). Alignment leads to benign overfitting, while anti-alignment results in tempered risk.}
        \label{fig:well-frobenius}
    \end{subfigure}
    \caption{Excess error vs. overparameterization ratio $c = d/n$ in the well-specified case. Each plot shows the risk for aligned and anti-aligned targets under different spike scaling regimes. \textbf{The scatter plots are empirically obtained and the lines are theory.}}
    \label{fig:well}
\end{figure}

We begin by analyzing the well-specified case, where the target $\vy$ is a direct linear function of the observed covariates $\mX = \mZ + \mA$. This scenario is realized by setting:
\[
    \alpha_Z = \alpha_A = \tilde{\alpha}_Z = \tilde{\alpha}_A = \alpha > 0.
\]
Consequently, $y_i = \alpha \vx_i^\top  \vbeta_* + \varepsilon_i$, and the model is properly specified.

\begin{theorem}[Well-Specified Risk] \label{thm:well_specified_risk}
Given data $(\mX,\vy)$ and $(\tilde{\mX}, \tilde{\vy})$ generated according to Assumptions \ref{assumption:Z} (Signal), \ref{assumption:A} (Noise), \Eqref{eq:target-model} (Target Model), and Assumption \ref{assumption:scaling} (Scaling). If the well-specification condition $\alpha_Z = \alpha_A = \tilde{\alpha}_Z = \tilde{\alpha}_A = \alpha > 0$ holds, the asymptotic excess risk $\mathcal{R}_c$ is:
\[
    \mathcal{R}_c = \begin{cases}
    \tau^2_{\varepsilon}\frac{c}{1-c}  & \text{if } c < 1 \\
    \tau^2_{\varepsilon}\frac{1}{c-1} + \alpha^2\tau^2\left(1-\frac{1}{c}\right) \left[\|\vbeta_*\|^2 + (\vbeta_*^\top \vu)^2\, \frac{\theta^2 \tau^2 c^2 - 2\theta^2 \tau^2 c - \theta^4}{(\theta^2 + \tau^2 c)^2}  \right] & \text{if } c > 1
    \end{cases}
\]
where $\vu$ is the unit vector defining the spike direction.
\end{theorem}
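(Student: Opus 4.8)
The plan is to compute the risk $\mathcal{R}(\vbeta_{int})$ exactly for finite $n,d$, then take the proportional limit. Since $\alpha_Z=\alpha_A=\tilde\alpha_Z=\tilde\alpha_A=\alpha$, the targets collapse to $\vy=\alpha\mX^\top\vbeta_*+\vvarepsilon$ and $\tilde\vy=\alpha\tilde\vx^\top\vbeta_*+\tilde\vvarepsilon$, so $\vbeta_{int}=\mX^\dagger\vy=\alpha\mX^\dagger\mX^\top\vbeta_*+\mX^\dagger\vvarepsilon$. Writing $\mP=\mX^\dagger\mX^\top$ (the orthogonal projector onto the row space of $\mX^\top$, i.e. the column space of $\mX$, when $d>n$ and $\mX$ has full column... actually full row rank $n$), a standard bias–variance split gives
\[
\mathcal{R}(\vbeta_{int})-\tilde\tau_\varepsilon^2=\alpha^2\,\mathbb{E}\big[\|(\mI-\mP)\vbeta_*\|_{\mSigma}^2\big]+\mathbb{E}\big[\,\vvarepsilon^\top(\mX^\dagger)^\top\mSigma\,\mX^\dagger\vvarepsilon\,\big],
\]
using $\mathbb{E}[\tilde\vx\tilde\vx^\top]=\mSigma=\theta^2\vu\vu^\top+\tau^2\mI_d$ and independence of the test noise. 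The variance term, after taking $\mathbb{E}_{\vvarepsilon}$, becomes $\tau_\varepsilon^2\,\Tr(\mSigma\,(\mX\mX^\top)^{\dagger})$ (since $\mX^\dagger(\mX^\dagger)^\top=(\mX\mX^\top)^\dagger$ on the relevant subspace, using $\mX^\dagger=\mX^\top(\mX\mX^\top)^{-1}$ in the overparameterized case).

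Next I would handle the two terms asymptotically using the spiked structure $\mX=\theta\vu\vv^\top+\mA$. The key spectral facts I would invoke from random matrix theory (the BBP picture and deterministic equivalents for the resolvent of $\mA\mA^\top$, which hold under Assumption~\ref{assumption:A}) are: (i) the Stieltjes transform / trace $\frac1d\Tr((\mA\mA^\top)^{-1})$ and its spiked perturbation converge to explicit Marchenko–Pastur functionals, giving $\tau_\varepsilon^2\Tr(\mSigma(\mX\mX^\top)^\dagger)\to\tau_\varepsilon^2\cdot\frac{1}{c-1}$ for $c>1$ (and $\frac{c}{1-c}$ for $c<1$ via the dual); (ii) for the bias term, the projector $\mI-\mP$ onto the null space of $\mX^\top$ — a space of dimension $d-n$ — has, under the rotational invariance of $\mA$ and the uniform law on $\vbeta_*$ restricted to fixed overlap with $\vu$, an explicit "effective" action on both $\|\vbeta_*\|^2$ and $(\vbeta_*^\top\vu)^2$. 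The cross term picks up the factor $\frac{\theta^2\tau^2c^2-2\theta^2\tau^2 c-\theta^4}{(\theta^2+\tau^2c)^2}$, which is exactly the signed quantity measuring how the spike rotates the null space relative to $\vu$; this is computed by a rank-one update (Sherman–Morrison) of the resolvent $(\mX\mX^\top+z\mI)^{-1}$ around the deterministic equivalent of $(\mA\mA^\top+z\mI)^{-1}$, tracking the $\vv^\top(\mA^\top\mA+z)^{-1}\vv$-type scalars whose limits are MP functionals of $\theta^2,\tau^2,c$.

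Concretely the steps are: (1) reduce to the two scalar quantities $T_1=\alpha^2\,\vbeta_*^\top\mathbb{E}[(\mI-\mP)\mSigma(\mI-\mP)]\vbeta_*$ and $T_2=\tau_\varepsilon^2\Tr(\mSigma(\mX\mX^\top)^\dagger)$; (2) for $c<1$ note $\mX$ has full row rank... rather full column rank, $\mP=\mI$, so $T_1=0$ and only the noise term survives, giving $\tau_\varepsilon^2\frac{c}{1-c}$ by the under-parameterized formula; (3) for $c>1$, decompose $\vbeta_*=(\vbeta_*^\top\vu)\vu+\vbeta_*^{\perp}$, use rotational invariance of $\mA$ to reduce all resolvent quadratic forms in $\vbeta_*^{\perp}$ to traces, and handle the $\vu$-direction via a rank-one perturbation; (4) substitute the deterministic equivalents (the MP companion transform evaluated at $0$, plus the spike-outlier correction) and simplify. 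The factor $1-\tfrac1c$ out front in the $c>1$ case is the limiting $\frac{d-n}{d}$ dimension of the null space; the bracket is the $\mSigma$-weighted average of the null-space action, split into an isotropic part ($\|\vbeta_*\|^2$, coefficient $1$ after normalization) and a spike-correction part.

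The main obstacle is step (3)–(4): getting the spike-direction correction term exactly right, i.e. showing the coefficient of $(\vbeta_*^\top\vu)^2$ is precisely $\frac{\theta^2\tau^2c^2-2\theta^2\tau^2c-\theta^4}{(\theta^2+\tau^2c)^2}$ and not merely $O(1)$. This requires carefully combining (a) the deterministic equivalent for $\vu^\top(\mA\mA^\top)^{-1}\vu$ and $\vv^\top(\mA^\top\mA)^{-1}\vv$ type scalars, (b) the BBP self-consistent equation locating how much of the null space aligns with $\vu$ (the alignment $|\langle\vu,\text{null eigenvectors}\rangle|^2$ and the complementary outlier alignment $1-\theta^{-2}\cdot(\text{something})$ familiar from Benaych-Georges–Nadakuditi), and (c) the $\mSigma$-weighting which reweights the $\vu$-direction by $\theta^2+\tau^2$ vs $\tau^2$ elsewhere. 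I would organize this as a lemma computing $\mathbb{E}[(\mI-\mP)_{\vu\vu}]$, $\mathbb{E}[\Tr(\mI-\mP)/d]$, and $\mathbb{E}[\vu^\top(\mI-\mP)\mSigma(\mI-\mP)\vu]$ via resolvent identities and Sherman–Morrison, then assemble. Concentration of these quadratic forms around their means (so the $\mathbb{E}$ can be dropped in the limit) follows from standard Lipschitz/Gaussian-concentration arguments given Assumption~\ref{assumption:A}, and I expect Assumption~\ref{assumption:scaling} ($\Omega(\tau^2)\le\theta^2\le O(d\tau^2)$, $\tau^2=\Theta(1)$) to be exactly what keeps all these MP functionals well-defined and bounded in the limit.
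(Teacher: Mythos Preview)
Your approach is correct in outline and takes a genuinely different route from the paper. The paper does not prove Theorem~\ref{thm:well_specified_risk} directly; it first establishes the general Theorem~\ref{thm:alignment-risk} (arbitrary $\alpha_Z,\alpha_A,\tilde\alpha_Z,\tilde\alpha_A$) via a four-term decomposition (Bias, Variance, Data Noise, Target Alignment), using Meyer's explicit rank-one pseudoinverse formula for $(\mZ+\mA)^\dagger$ rather than Sherman--Morrison on resolvents, and then specializes by setting all $\alpha$'s equal. Your two-term bias--variance split exploits the well-specified structure (so that $\alpha\vbeta_*-\vbeta_{int}=\alpha(\mI-\mP)\vbeta_*-(\mX^\top)^\dagger\vvarepsilon$ cleanly) and is more elementary for this theorem, but it does not extend to the misspecified results (Theorems~\ref{thm:misspecified_2}--\ref{thm:miss-shift-frobenius}), for which the target is no longer linear in $\mX$ and the projector identity fails; the paper's machinery was built precisely to handle that.

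A few points to tighten. First, your notation slips: $\mP=\mX^\dagger\mX^\top$ does not parse dimensionally, and for $c>1$ you want $(\mX^\top)^\dagger=\mX(\mX^\top\mX)^{-1}$ (the $n\times n$ Gram matrix is invertible), not $\mX^\top(\mX\mX^\top)^{-1}$. Second, your claim that $\tau_\varepsilon^2\Tr(\mSigma(\mX\mX^\top)^\dagger)\to\tau_\varepsilon^2/(c-1)$ for $c>1$ is correct but not immediate: the spike enters both $\mSigma$ and $\mX\mX^\top$, and you must check that $\theta^2\vu^\top(\mX\mX^\top)^\dagger\vu=O(1/n)$ (it is, since $\vu$ aligns with the outlier whose eigenvalue is $\Theta(n\theta^2+d\tau^2)$) and that the rank-one perturbation to $\Tr((\mA\mA^\top)^\dagger)$ is negligible. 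For $c<1$ a whitening argument ($\mX=\mSigma^{1/2}\mW$, whence $\Tr(\mSigma(\mX\mX^\top)^{-1})=\Tr((\mW\mW^\top)^{-1})$ is $\mSigma$-free) makes this trivial, but note Assumption~\ref{assumption:A} gives only bi-orthogonal invariance plus finite fourth moments, not Gaussianity, so your ``standard Lipschitz/Gaussian-concentration'' comment is too quick; the paper handles concentration via spherical hypercontractivity for polynomials in Haar vectors, which is the right tool at this level of generality.
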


\begin{remark} If $\theta^2 = \gamma\tau^2$ with $\gamma = o(1)$ (a regime not allowed by Assumption \ref{assumption:scaling} but useful for sanity checks), the coefficient of $(\vbeta_*^\top \vu)^2$ vanishes, the risk expression aligns with that of isotropic models, such as in \cite[Theorem 1]{hastie2022surprises}.
\end{remark}

\paragraph{Operator Norm Scaling ($\theta^2 = \gamma\tau^2$).}
In this regime, the excess risk for $c > 1$ becomes:
\[
    \mathcal{R}_c = \alpha^2 \tau^2 \left(1 - \frac{1}{c}\right)\left(\|\vbeta_*\|^2 + \frac{\gamma c^2 - 2\gamma c - \gamma^2}{(\gamma + c)^2}(\vbeta_*^\top  \vu)^2\right) + \tau^2_\varepsilon\, \frac{1}{c - 1}.
\]
The formula shows that alignment with the spike direction $\vu$ is beneficial if and only if the coefficient of $(\vbeta_*^\top  \vu)^2$ is negative, which occurs when $\gamma > c(c-2)$. We consider different scalings for $\gamma$.

\textit{Case 1: $\gamma = \Theta_c(1)$ (constant with respect to $c$).}
The condition for beneficial alignment, $\gamma > c(c-2)$, interacts intricately with the BBP phase transition condition, $\gamma > (1+\sqrt{c})^2$. Let $c_* \approx 4.212$ be the unique solution to $c(c-2) = (1+\sqrt{c})^2$ for $c>1$.
\begin{itemize}
    \item For $1 < c < c_*$: Here, $c(c-2) < (1+\sqrt{c})^2$. If $c(c-2) < \gamma < (1+\sqrt{c})^2$, alignment is beneficial even though the BBP transition has \emph{not} occurred (the spike is not resolved from the bulk).
    \item For $c > c_*$: Here, $c(c-2) > (1+\sqrt{c})^2$. For alignment to be beneficial ($\gamma > c(c-2)$), the BBP transition must have occurred (as $\gamma > c(c-2) \implies \gamma > (1+\sqrt{c})^2$). However, the BBP transition occurring is not sufficient for beneficial alignment. If $(1+\sqrt{c})^2 < \gamma < c(c-2)$, the BBP transition occurs, yet alignment is detrimental.
\end{itemize}
Regarding the type of overfitting as $c \to \infty$ (while $\gamma$ remains constant):
\[
    \lim_{c \to \infty} \mathcal{R}_c = \alpha^2\, \tau^2 \left(\|\vbeta_*\|^2 + \gamma (\vbeta_*^\top \vu)^2\right).
\]
Since this limit is a positive constant, we consistently observe \emph{tempered overfitting} when $\gamma = \Theta_c(1)$.

\textit{Case 2: $\gamma = \omega_c(1)$ ($\gamma$ grows with $c$).}
The behavior depends on the growth rate of $\gamma$ relative to $c$. The limit of the excess risk for $\vbeta_*^\top  \vu \neq 0$ as $c \to \infty$ is:
\[
    \lim_{c \to \infty} \mathcal{R}_c = \alpha^2 \tau^2 \cdot \begin{cases}  \infty & \text{if } \omega_c(1) \le \gamma \le o_c(c^2) \\
    \|\vbeta_*\|^2 + (\frac{1}{\phi}-1)(\vbeta_*^\top \vu)^2 & \text{if } \gamma = \phi c^2 \text{ for const. } \phi > 0 \\
    \|\vbeta_*\|^2 - (\vbeta_*^\top \vu)^2 & \text{if } \gamma = \omega_c(c^2) \end{cases}
\]
Surprisingly, while  $\gamma = \Theta_c(1)$ gives tempered overfitting, increasing spike strength to $\omega_c(1) \le \gamma \le o_c(c^2)$ results in \emph{catastrophic overfitting}, even though morally, this version of the problem has less noise. Additionally, we see that this catastrophic overfitting is not present in the anti-aligned ($\vbeta_*^\top \vu)$ case. More, aligned with intuition, we see that further increasing the size of the spike improves the generalization performance. Specifically, we get \emph{tempered overfitting} if $\gamma = \phi c^2$ and \emph{benign overfitting} if $\gamma = \omega_c(c^2)$, $\vbeta_* \parallel \vu$ and $\|\vbeta_*\|=1$. 

For $\gamma=c$, the $(\vbeta_*^\top  \vu)^2$ coefficient is $(c-3)/4$.
Thus, for $1 < c < 3$, alignment is beneficial and for $c > 3$, alignment becomes detrimental. As $c \to \infty$, if $\vbeta_* \parallel \vu$, the excess risk grows approximately as $\alpha^2\tau^2 \frac{c}{4} (\vbeta_*^\top \vu)^2$, indicating \emph{catastrophic overfitting}. In contrast, if $\vbeta_* \perp \vu$, the excess risk grows like $\alpha^2\tau^2(1-1/c)\|\vbeta_*\|^2$, leading to \emph{tempered overfitting}. This transition is illustrated in \Cref{fig:well-operator}.

\paragraph{Frobenius Norm Scaling ($\theta^2 = d\tau^2$).}
The excess risk for $c > 1$ simplifies to:
\[
    \mathcal{R}_{c > 1} = \alpha^2 \tau^2 \left(1 - \frac{1}{c}\right)\left(\|\vbeta_*\|^2 - (\vbeta_*^\top  \vu)^2\right) + \tau^2_\varepsilon \, \frac{1}{c - 1}.
\]
We have a few observations. First, if $\vbeta_* \parallel \vu$ and  $\|\vbeta_*\|=1$, the excess risk $\mathcal{R}_c$ tends to $0$ as $c \to \infty$ (\emph{benign overfitting}). Second, if $\vbeta_*$ is not perfectly aligned with $\vu$, $ \mathcal{R}_c \to \alpha^2 \tau^2 (\|\vbeta_*\|^2 - (\vbeta_*^\top  \vu)^2) > 0$ as $c \to \infty$ (\emph{tempered overfitting}). Finally, the coefficient of $(\vbeta_*^\top  \vu)^2$ in the risk formula is negative. Hence, in contrast with the operator norm regime, \emph{alignment is always beneficial} in this regime for $c>1$, and we visualize these behaviors in \Cref{fig:well-frobenius}.

\paragraph{Takeaways for the Well-Specified Case.}
Spike scaling profoundly impacts overfitting, especially with target alignment. For aligned targets, increasing spike strength can drive transitions from tempered $\rightarrow$ catastrophic $\rightarrow$ tempered $\rightarrow$ benign overfitting, while anti-alignment ($\vbeta_* \perp \vu$) can mitigate catastrophic overfitting. Additionally, alignment with the spike is not always beneficial.

\subsection{Misspecified Case and no Covariate Shift}

We next consider misspecified targets $\vy$ with differing dependence on spike $\mZ$ and noise $\mA$ feature components. Specifically, we assume $\alpha_Z \neq \alpha_A$ but introduce no covariate shift between training and test distributions, i.e., $\tilde{\alpha}_Z = \alpha_Z$ and $\tilde{\alpha}_A = \alpha_A$. 
This scenario models situations where intrinsic feature properties lead to differential correlations with the target, a common occurrence in practice. 
For notational convenience, we define $\Delta_c := \alpha_Z - \frac{\alpha_A}{c}$ with $\Delta_1 := \alpha_Z - \alpha_A$.

\begin{theorem}[Misspecified] \label{thm:misspecified_2}  Let $\mZ, \tilde{\mZ}$ satisfy \Cref{assumption:Z}, $\mA, \tilde{\mA}$ satisfy \Cref{assumption:A} and $\vy, \tilde{\vy}$ according to \Cref{eq:target-model}. If \Cref{assumption:scaling} holds with $\alpha_Z = \tilde{\alpha}_Z$, $\alpha_A = \tilde{\alpha}_A$, then
\[
    \mathcal{R}_c = \begin{cases} \tau^2_{\varepsilon}\frac{c}{1-c} + \tau^2\,  (\vbeta_*^\top \vu)^2\, \frac{\Delta_1^2}{1-c} \, \frac{\theta^2}{\theta^2 + \tau^2}   & c < 1 \\[10pt]
    \tau^2_{\varepsilon}\frac{1}{c-1} + \alpha_A^2 \tau^2 \|\vbeta_*\|^2\left(1-\frac{1}{c}\right) + \tau^2\, (\vbeta_*^\top \vu)^2\, \Delta_c^2\, \frac{\theta^2}{\theta^2 + \tau^2 c} \left[\frac{c}{c-1}\, \frac{\theta^2 + \tau^2 c^2}{\theta^2 + \tau^2c} -2\frac{\alpha_A}{\Delta_c}\right] & c > 1\end{cases}
\]
\end{theorem}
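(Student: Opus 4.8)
The plan is to compute the risk $\mathcal{R}(\vbeta_{int})$ by conditioning on the training data $\mX$ and then taking the proportional limit, exactly as one does for the well-specified case of \Cref{thm:well_specified_risk}, but now carrying the two distinct coefficients $\alpha_Z, \alpha_A$ through the algebra. First I would write $\vbeta_{int} = \mX^\dagger \vy$ with $\vy = (\alpha_Z \mZ + \alpha_A \mA)^\top \vbeta_* + \vvarepsilon$, and decompose $\alpha_Z \mZ + \alpha_A \mA = \alpha_A \mX + (\alpha_Z - \alpha_A)\mZ = \alpha_A \mX + (\alpha_Z-\alpha_A)\theta\vu\vv^\top$. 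Thus $\vbeta_{int} = \alpha_A \mX^\dagger \mX^\top \vbeta_* + (\alpha_Z-\alpha_A)\theta\, \mX^\dagger \vv (\vu^\top\vbeta_*) + \mX^\dagger \vvarepsilon$. The test risk $\mathbb{E}[(\tilde\vy - \tilde\vx^\top \vbeta_{int})^2]$ then expands; since $\tilde\vx$ is independent of training with $\mathbb{E}[\tilde\vx\tilde\vx^\top] = \mSigma = \theta^2\vu\vu^\top + \tau^2\mI$ and $\tilde\vy = \tilde\alpha_Z\tilde\vz^\top\vbeta_* + \tilde\alpha_A\tilde\va^\top\vbeta_* + \tilde\vvarepsilon$, the cross terms produce an ``effective test regression vector'' $\vb_* := (\tilde\alpha_Z - \tilde\alpha_A)\theta^2 (\vu^\top\vbeta_*)\vu + \tilde\alpha_A\mSigma\vbeta_*$ contracted against $\vbeta_{int}$, plus $\tilde\tau_\varepsilon^2$ and a quadratic form $\vbeta_{int}^\top\mSigma\vbeta_{int}$. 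With no covariate shift, $\tilde\alpha_Z=\alpha_Z$, $\tilde\alpha_A=\alpha_A$, which simplifies matching.

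Next I would reduce everything to a handful of scalar traces involving the resolvent-type quantities $\Tr(\mSigma \mX^\dagger \mX^{\dagger\top})$, $\vu^\top \mX^\dagger\mX^{\dagger\top}\vu$, $\vu^\top \mX^\dagger\mX^\top$ and the analogous contractions against $\vbeta_*$. Because $\mA$ is bi-orthogonally invariant and $\vu$ is a fixed unit vector, I can rotate so that $\vu = e_1$; then the relevant objects split into (i) a rank-one "spike-direction" channel governed by how $\theta\vu\vv^\top$ interacts with the pseudoinverse, and (ii) an isotropic bulk channel governed by Marchenko--Pastur. The bulk channel gives the familiar $\tau_\varepsilon^2\frac{c}{1-c}$ (or $\frac{1}{c-1}$) and $\alpha_A^2\tau^2\|\vbeta_*\|^2(1-1/c)$ pieces, identical in form to Hastie et al.\ and to \Cref{thm:well_specified_risk} with $\alpha\to\alpha_A$. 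The spike channel is where $\Delta_c := \alpha_Z - \alpha_A/c$ and the factor $\theta^2/(\theta^2+\tau^2 c)$ enter: this factor is precisely the asymptotic ``survival'' of the spike through $\mX^\dagger$, i.e.\ the limit of $\theta^2\, \vv^\top\mX^\top\mX^\dagger\mX^{\dagger\top}\mX\vv / (\text{norm})$-type ratios, computable via the known BBP/Marchenko--Pastur deformed-spike formulas (e.g.\ $\lim \theta\vu^\top\mX^\dagger\vv = \theta^2/(\theta^2+\tau^2 c)$ up to the usual $c>1$ vs.\ $c<1$ bookkeeping). I would then carefully assemble the coefficient of $(\vbeta_*^\top\vu)^2$: it collects a variance-like term $\propto \Delta_c^2 \frac{c}{c-1}\frac{\theta^2+\tau^2c^2}{\theta^2+\tau^2c}$ (the spike's contribution to $\vbeta_{int}^\top\mSigma\vbeta_{int}$) and a bias/cross term $\propto -2\alpha_A\Delta_c$ (interaction between the $\alpha_A\mX$-part and the $\Delta$-part of $\vbeta_{int}$ against the effective test vector), giving the bracketed expression $\big[\frac{c}{c-1}\frac{\theta^2+\tau^2c^2}{\theta^2+\tau^2c} - 2\frac{\alpha_A}{\Delta_c}\big]$ once the common factor $\Delta_c^2\frac{\theta^2}{\theta^2+\tau^2c}$ is pulled out.

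For the $c<1$ branch I would do the analogous computation with $\mX^\dagger = \mX^\top(\mX\mX^\top)^{-1}$ (underparameterized/least-squares regime), where the bulk part of $\vbeta_{int}$ recovers $\alpha_A\vbeta_*$ projected appropriately and the leftover is pure noise, yielding $\tau_\varepsilon^2\frac{c}{1-c}$; the only spike-direction residual is the misspecification term, producing $\tau^2(\vbeta_*^\top\vu)^2\frac{\Delta_1^2}{1-c}\frac{\theta^2}{\theta^2+\tau^2}$ — note $\Delta_1 = \alpha_Z-\alpha_A$ and the survival factor collapses to $\theta^2/(\theta^2+\tau^2)$ since $c\to1^-$ bookkeeping replaces $\tau^2 c$ with $\tau^2$ in that normalization. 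Consistency checks: setting $\alpha_Z=\alpha_A=\alpha$ forces $\Delta_1=0$ and $\Delta_c = \alpha(1-1/c)$, $\alpha_A/\Delta_c = 1/(1-1/c) = c/(c-1)$, so the bracket becomes $\frac{c}{c-1}\big(\frac{\theta^2+\tau^2c^2}{\theta^2+\tau^2c} - 2\big) = \frac{c}{c-1}\cdot\frac{-\theta^2+\tau^2c^2-2\tau^2c}{\theta^2+\tau^2c}$ and the whole $(\vbeta_*^\top\vu)^2$ coefficient reduces (after multiplying by $\Delta_c^2\theta^2/(\theta^2+\tau^2c)$ and comparing to the $\alpha^2\tau^2(1-1/c)$ normalization in \Cref{thm:well_specified_risk}) to $\alpha^2\tau^2(1-1/c)\cdot\frac{\theta^2\tau^2c^2 - 2\theta^2\tau^2 c - \theta^4}{(\theta^2+\tau^2c)^2}$, matching Theorem~\ref{thm:well_specified_risk} exactly. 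The main obstacle I anticipate is not any single limit but the bookkeeping of the spike channel: correctly identifying the four or five non-equivalent scalar limits involving $\vu$, $\vv$, $\vbeta_*$ and $\mX^\dagger$ (especially the mixed terms $\vu^\top\mX^\dagger\mX^\top\vbeta_*$ and $\theta^2\vv^\top\mX^\dagger\mX^{\dagger\top}\vv$), and verifying that cross terms between the isotropic-bulk randomness and the rank-one spike vanish in the limit (they do, by invariance and concentration, but this needs the finite-fourth-moment and orthogonal-invariance hypotheses of \Cref{assumption:A}). I would handle this by isolating a lemma that computes all spike-channel limits once via the deformed rectangular free convolution / explicit $2\times2$ block resolvent, then plug in; the rest is algebra of the kind already displayed in the well-specified section.
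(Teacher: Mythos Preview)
Your proposal is sound and would go through, but it takes a genuinely different route from the paper. The paper obtains \Cref{thm:misspecified_2} as a one-line specialization of its master result (\Cref{thm:alignment-risk}), and the real work sits in proving that master theorem: there the authors use Meyer's explicit rank-one pseudoinverse update to write $(\mZ+\mA)^\dagger$ algebraically in terms of $\mA^\dagger$ and the auxiliary vectors $\vh=\vv^\top\mA^\dagger$, $\vk=\mA^\dagger\vu$, $\vs=(\mI-\mA\mA^\dagger)\vu$, $\vt=\vv^\top(\mI-\mA^\dagger\mA)$, then decompose the risk into four named pieces (Bias, Variance, Data Noise, Target Alignment), reduce each to sums of products of these building blocks, compute each factor's mean and variance via Marchenko--Pastur moment identities, and control the products through spherical/Gaussian hypercontractivity plus a careful inverse-moment lemma for the denominators $\gamma_i$. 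Your plan instead front-loads the decomposition $\alpha_Z\mZ+\alpha_A\mA=\alpha_A\mX+(\alpha_Z-\alpha_A)\mZ$ and proposes to read off the spike-channel limits directly from deformed-resolvent/BBP machinery. That is closer to standard RMT practice and is more economical for this single statement; the paper's route is heavier but pays off by delivering all four interpretable risk components simultaneously and by handling the full covariate-shift case $(\alpha_Z,\alpha_A,\tilde\alpha_Z,\tilde\alpha_A)$ in one pass. One small correction to your narrative: the $c<1$ survival factor $\theta^2/(\theta^2+\tau^2)$ is not a ``$c\to 1^-$ bookkeeping'' artifact of the $c>1$ factor $\theta^2/(\theta^2+\tau^2 c)$; the two regimes use structurally different pseudoinverse expressions (full-column-rank versus full-row-rank in Meyer's theorem), so the factors arise from distinct computations rather than a limiting coincidence.
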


A key observation is that misspecification ($\alpha_Z \neq \alpha_A$) can itself induce double descent, even if $\tau_\varepsilon^2 = 0$. This contrasts with the well-specified case where, if $\tau_\varepsilon^2 = 0$, double descent is absent. However, in the misspecified case, we do not observe double descent if there is no alignment $\vbeta^\top _* \vu = 0$. 

\begin{figure}
    \centering
    \begin{subfigure}[t]{0.48\linewidth}
        \centering
        \includegraphics[width=\linewidth]{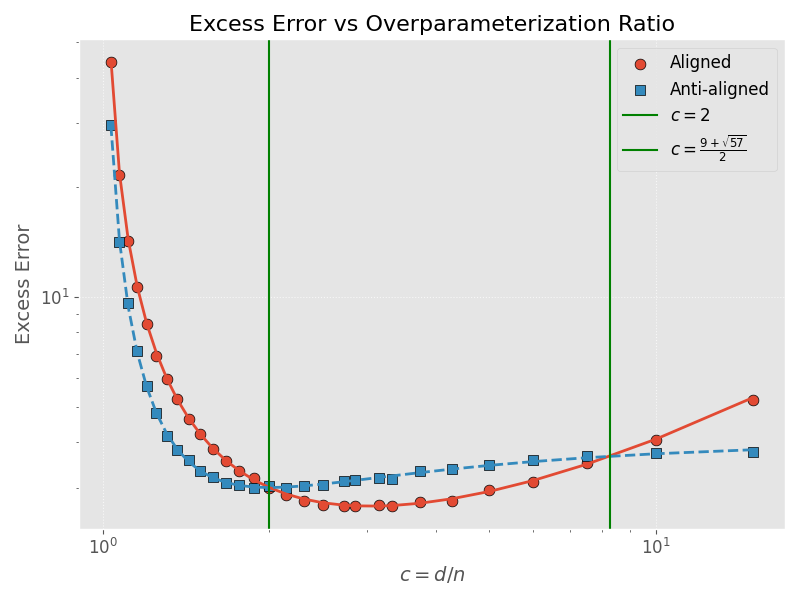}
        \caption{Under operator norm scaling ($\theta^2 = c\tau^2$) with $\alpha_Z = 1$, $\alpha_A = 2$, alignment initially improves generalization for small $c$, but becomes harmful beyond a critical point, leading to catastrophic overfitting.}
        \label{fig:negative-operator}
    \end{subfigure}
    \hfill
    \begin{subfigure}[t]{0.48\linewidth}
        \centering
        \includegraphics[width=\linewidth]{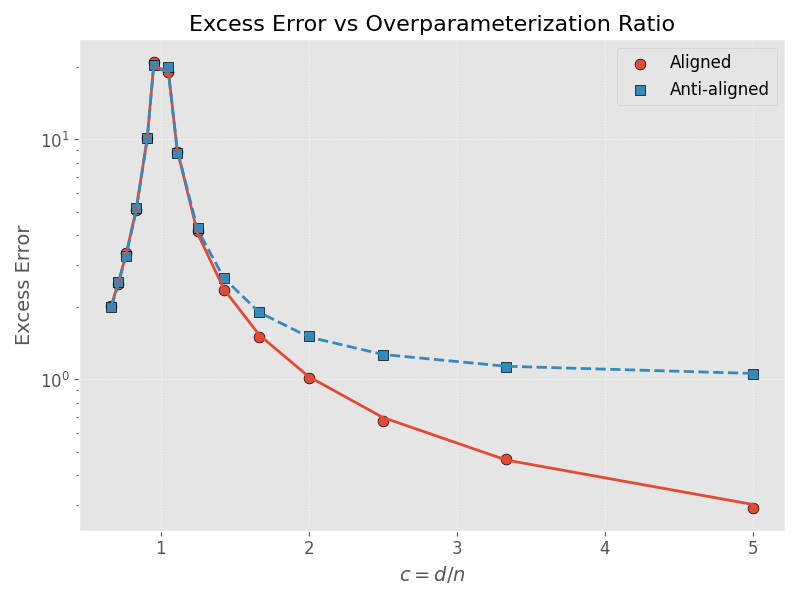}
        \caption{Under Frobenius norm scaling ($\theta = \sqrt{d}\tau$) with $\alpha_A = 1$ and $\alpha_Z = 1.1$, alignment remains better than anti-alignment across all $c$, but benign overfitting is not achieved unless $\alpha_Z = \alpha_A$.}
        \label{fig:negative-frobenius}
    \end{subfigure}
    \caption{Transition from beneficial to harmful alignment under mild misspecification. \textbf{The scatter plots are empirically obtained and the lines are theory.}}
    \label{fig:negative}
\end{figure}

\paragraph{Equal Operator Norm Case.} For $\theta^2 = \gamma\tau^2$, the excess risk is
\begin{align*}
    \mathcal{R} = \begin{cases}   \tau^2 (\vbeta_*^\top \vu)^2\, \frac{\Delta_1^2}{1-c} \, \frac{\gamma}{\gamma + 1} + \tau^2_{\varepsilon}\frac{c}{1-c}  & c < 1 \\
    \tau^2 \frac{\gamma}{\gamma+c}(\vbeta_*^\top  \vu)^2 \Delta_c^2 \left[\left(\frac{c^2+\gamma}{\gamma+c}\,\frac{c}{c-1}\right) - 2 \frac{\alpha_A}{\Delta_c} \right] + \alpha_A^2\tau^2\|\vbeta_*\|^2 \left(1 - \frac{1}{c}\right) + \tau^2_{\varepsilon}\frac{1}{c-1}  & c > 1 \end{cases}
\end{align*}
For $c < 1$, the spike is \emph{detrimental}. For $c > 1$, the behavior depends on $\alpha_Z/\alpha_A$. In particular, if 
\[
    \frac{1}{c} \le \frac{\alpha_Z}{\alpha_A} \le \frac{1}{c}\left(\frac{3c^2 - \gamma + 2c\gamma - 2c}{(c^2+\gamma)}\right), 
\]
then we have that the coefficient in front of $(\vbeta^\top _*\vu)^2$ is negative. Thus, when  $\alpha_Z/\alpha_A$ lies between these thresholds, the spike \emph{helps}, but the spike \emph{is harmful} outside this range. As $c \to \infty$, if $\gamma = o_c(c^2)$, the beneficial region shrinks and \emph{alignment increasingly harms generalization}. On the other hand, if the spike is big enough ($\gamma = \omega_c(c^2)$), we have that the beneficial region limits to $0 \le \frac{\alpha_Z}{\alpha_A} \le 2$. Figures~\ref{fig:phase-operator-2} and \ref{fig:phase-operator-20} plot the coefficient of $(\vbeta_*^\top  \vu)^2$ for $c=2$ and $c=20$ for $\gamma = c$. 

The upper bound on beneficial $\alpha_Z / \alpha_A$ is surprising, as stronger target dependence on the spike might be expected to always favor alignment. Additionally, the dependence on the level of overparameterization $c$ also offers new insights. Consider the example of $\gamma = c$, and $\alpha_Z/\alpha_A = 2$. Then when $c < 2$ or $c > (9+\sqrt{57})/2$, we have that the ratio is outside the beneficial region. Figure~\ref{fig:negative-operator} shows that in the beneficial region, the aligned risk is lower than the anti-aligned risk. However, outside the beneficial region, the aligned risk becomes strictly larger than the anti-aligned counterpart. 

Next, in terms of benign vs. tempered vs. catastrophic overfitting, we have that
\[
    \lim_{c\to \infty} \mathcal{R}_c = \begin{cases} \tau^2 \left[\gamma \alpha_Z^2 (\vbeta_*^\top \vu)^2 + \alpha_A^2 \|\vbeta_*\|^2\right] & \vbeta_* \not\perp \vu, \gamma = \Theta_c(1) \\[5pt]
    \infty & \vbeta_* \not\perp \vu, \omega_c(1) \le \gamma \le o_c(c^2) \\[5pt]
    \tau^2\left[\alpha_A^2 \|\vbeta_*\|^2 + \left(\alpha_Z^2\left(1+\frac{1}{\phi}\right)  - 2\alpha_Z\alpha_A\right)(\vbeta_*^\top \vu)^2 \right] & \vbeta_* \not\perp \vu, \gamma = \phi c^2 \\[5pt]
    \tau^2(\alpha_A^2 \|\vbeta_*\|^2 + (\alpha_Z^2 - 2\alpha_Z \alpha_A) (\vbeta_*^\top \vu)^2) & \vbeta_* \not\perp \vu, \gamma = \omega_c(c^2) \\[5pt]
    \alpha_A^2 \tau^2 \|\vbeta_*\|^2 & \vbeta_* \perp \vu \end{cases}.
\]
For $\vbeta_* \not\perp \vu$, if $\omega_c(1) \le \gamma \le o_c(c^2)$ we have \emph{catastrophic overfitting}. If $\gamma = \Theta_c(c^2)$, overfitting is tempered, with benign overfitting precluded (Appendix \Cref{prop:miss-c^2}). If $\gamma = \omega_c(c^2)$, overfitting is again tempered with benign requiring returning to the well-specified case ($\alpha_A = \alpha_Z$).

\paragraph{Equal Frobenius Norm Case.} For $\theta^2 = d\tau^2$, the excess risk becomes:
\[
    \mathcal{R}_{c > 1} = \alpha_A^2 \|\vbeta_*\|^2 \left(1-\frac{1}{c}\right) + (\vbeta_*^\top  \vu)^2\left[\frac{c}{c-1}\left(\alpha_Z - \frac{\alpha_A}{c}\right)^2 - 2\alpha_A\left(\alpha_Z - \frac{\alpha_A}{c}\right)\right] + \frac{\tau_{\varepsilon}^2}{c-1}.
\]
For $c > 1$, the beneficial region for the ratio $\alpha_Z/\alpha_A$ is defined by:
\(\displaystyle
    \frac{1}{c} \leq \frac{\alpha_Z}{\alpha_A} \leq 2 - \frac{1}{c}.
\)
The beneficial region expands with $c$, making alignment increasingly beneficial in extreme overparameterization (Figure~\ref{fig:phase-frobenius}). Beneficial alignment can also be seen in \Cref{fig:negative-frobenius}. Here $\alpha_Z/\alpha_A = 1.1$, which is in the beneficial region for $c > 10/9$. Finally, the overfitting is tempered unless $\alpha_A = \alpha_Z$. 

\begin{figure}[t]
    \centering
    \begin{subfigure}[t]{0.32\linewidth}
        \centering
        \includegraphics[width=\linewidth]{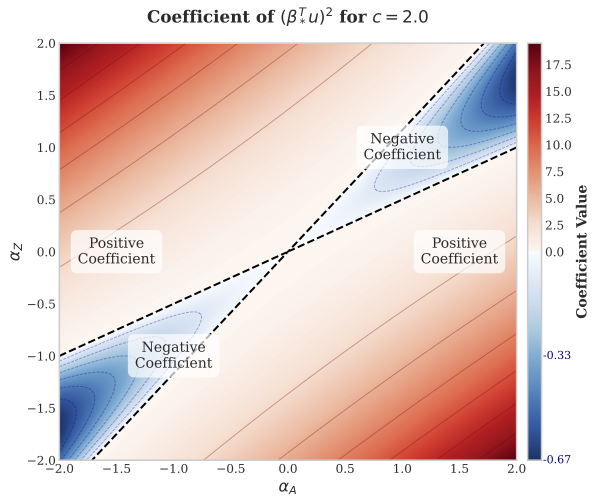}
        \caption{\textbf{Operator norm scaling}, $c=2$. Large beneficial region. }
        \label{fig:phase-operator-2}
    \end{subfigure}
    \hfill
    \begin{subfigure}[t]{0.32\linewidth}
        \centering
        \includegraphics[width=\linewidth]{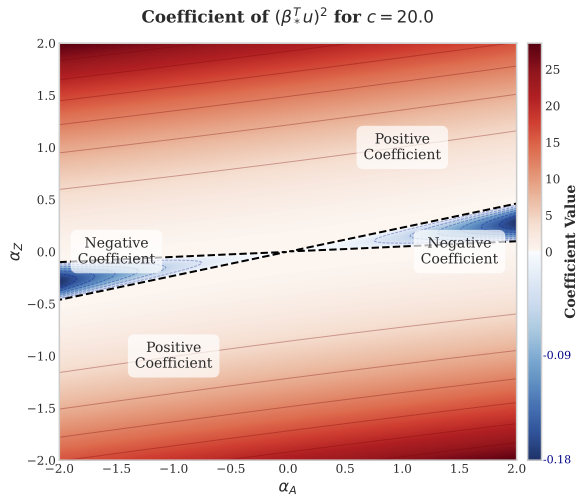}
        \caption{\textbf{Operator norm scaling}, $c=20$. Smaller beneficial region}
        \label{fig:phase-operator-20}
    \end{subfigure}
    \hfill
    \begin{subfigure}[t]{0.32\linewidth}
        \centering
        \includegraphics[width=\linewidth]{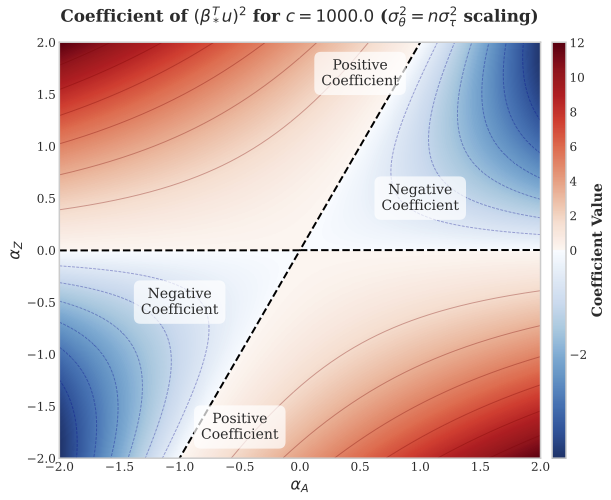}
        \caption{\textbf{Frobenius norm scaling}, $c=1000$. The beneficial region persists at extreme overparameterization.}
        \label{fig:phase-frobenius}
    \end{subfigure}
    \caption{
    \textbf{Phase boundaries for spike alignment impact.} Coefficient of $(\vbeta_*^\top  \vu)^2$ as a function of $\alpha_Z/\alpha_A$, indicating whether alignment improves or harms generalization.}
    \label{fig:phase}
\end{figure}

\subsection{Misspecified Target and Covariate Shift}

Lastly, in addition to  misspecifation, we also have covariate shift between train and test. Specifically, $\alpha_Z \neq \tilde{\alpha}_Z$ or $\alpha_A \neq \tilde{\alpha}_A$, hence we have the spike/noise importance differ between train and test. For the \textbf{equal operator norm} case, we show the following. 
\begin{theorem} \label{thm:eq_opt_transitions} Given data $\mZ, \tilde{\mZ}$ that satisfy \Cref{assumption:Z}, $\mA, \tilde{\mA}$ that satisfy \Cref{assumption:A} and $\vy, \tilde{\vy}$ according to \Cref{eq:target-model}. If \Cref{assumption:scaling} holds, catastrophic overfitting occurs if $\tilde{\alpha}_Z = \alpha_Z$, $\vbeta_* \not\perp \vu$, and $\omega_c(1) \le \gamma \le o_c(c^2)$. Additionally, if $\tilde{\alpha}_Z \neq \alpha_Z$ with $\gamma = \omega_c(1)$ and $\vbeta_* \not\perp \vu$ we get catastrophic overfitting. Other scenarios yield tempered overfitting.
\end{theorem}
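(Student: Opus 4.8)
The argument rests on a bias--variance identity that isolates the spike direction. Since $\vbeta_{int}$ is a function of the training data only, it is independent of the test point $(\tilde{\vx},\tilde{\vy})$; writing $\tilde{\vx}=\tilde{\vz}+\tilde{\va}$ with $\E[\tilde{\vz}\tilde{\vz}^\top]=\theta^2\vu\vu^\top$, $\E[\tilde{\va}\tilde{\va}^\top]=\tilde{\tau}^2\mI_d$, $\E[\tilde{\vz}\tilde{\va}^\top]=0$, and expanding $\tilde{\vy}-\tilde{\vx}^\top\vbeta_{int}=\tilde{\vz}^\top(\tilde{\alpha}_Z\vbeta_*-\vbeta_{int})+\tilde{\va}^\top(\tilde{\alpha}_A\vbeta_*-\vbeta_{int})+\tilde{\vvarepsilon}$, all cross terms vanish in expectation, so that with $\theta^2=\gamma\tau^2$
\[
    \mathcal{R}(\vbeta_{int})-\tilde{\tau}_{\varepsilon}^2
    = \underbrace{\gamma\tau^2\,\E\big[\big(\tilde{\alpha}_Z(\vbeta_*^\top\vu)-\vu^\top\vbeta_{int}\big)^2\big]}_{=:S}
    + \underbrace{\tilde{\tau}^2\,\E\big[\|\tilde{\alpha}_A\vbeta_*-\vbeta_{int}\|^2\big]}_{=:B}.
\]
Both $S,B\ge0$, so the divergence claims below are immune to cancellation.

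Under \Cref{assumption:scaling} the minimum-norm interpolator obeys $\E\|\vbeta_{int}\|^2=\Theta_c(1)$ (its bulk/noise component even vanishes as $c\to\infty$), hence $B=\Theta_c(1)$; therefore $\lim_{c\to\infty}\mathcal{R}_c$ is finite iff $\limsup_c S<\infty$, and the trichotomy collapses to \emph{tempered when $S$ stays bounded, catastrophic when $S\to\infty$}. The decisive structural fact is that $\vu^\top\vbeta_{int}$ depends on the training data alone, hence \emph{not} on the test coefficients $(\tilde{\alpha}_Z,\tilde{\alpha}_A)$: it is the very same random variable as in the no-covariate-shift problem. Its asymptotics are supplied by the deterministic-equivalent (anisotropic resolvent) analysis underlying \Cref{thm:well_specified_risk} and \Cref{thm:misspecified_2}: $\E[\vu^\top\vbeta_{int}]\to(\vbeta_*^\top\vu)\,\mu_\infty(\gamma)$, where the recovery coefficient carries the factor $\frac{\gamma}{\gamma+c}$ and satisfies $\mu_\infty(\gamma)\to\alpha_Z$ once $\gamma=\omega_c(c)$ (i.e.\ above the BBP edge), while $\gamma\,\Var(\vu^\top\vbeta_{int})=O_c(1)$ throughout.

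Now the case analysis. If $\vbeta_*\perp\vu$ then $S=\gamma\tau^2\,\E[(\vu^\top\vbeta_{int})^2]=O_c(1)$, so the risk is tempered, consistent with every catastrophic clause requiring $\vbeta_*\not\perp\vu$; assume then $\vbeta_*\not\perp\vu$. \textbf{(i) $\tilde{\alpha}_Z=\alpha_Z$:} here $S$ is exactly the spike summand of the no-covariate-shift risk, and since $B$ is bounded there too, the $c\to\infty$ analysis already carried out for \Cref{thm:misspecified_2} gives $S\to\infty$ precisely when $\omega_c(1)\le\gamma\le o_c(c^2)$ and $S=\Theta_c(1)$ otherwise --- the first catastrophic clause, with tempered overfitting for $\gamma=\Theta_c(1)$ and for $\gamma=\Omega_c(c^2)$. \textbf{(ii) $\tilde{\alpha}_Z\neq\alpha_Z$:} here $S\ge\gamma\tau^2\big(\tilde{\alpha}_Z(\vbeta_*^\top\vu)-\E[\vu^\top\vbeta_{int}]\big)^2$, and once $\gamma\to\infty$ the parenthesis tends to a nonzero constant --- to $(\tilde{\alpha}_Z-\alpha_Z)(\vbeta_*^\top\vu)$ when $\gamma=\omega_c(c)$, and to $\tilde{\alpha}_Z(\vbeta_*^\top\vu)$ up to a vanishing correction for weaker growing $\gamma$ --- so $S\to\infty$: the spike coefficient the interpolator learns from training cannot be reconciled with the one demanded at test, and this mismatch is amplified by the test spike variance $\gamma\tau^2$. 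Thus for $\gamma=\omega_c(1)$ the risk is catastrophic (the second clause), whereas for $\gamma=\Theta_c(1)$ both $\gamma$ and the bracketed quantity are $\Theta_c(1)$, so $S,B=\Theta_c(1)$ and the risk is tempered. Assembling (i), (ii) and the orthogonal case yields the stated dichotomy.

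The main obstacle is not the reduction, which is elementary once one notices that $\vu^\top\vbeta_{int}$ is insensitive to covariate shift, but rather the two random-matrix inputs it relies on: the bound $\E\|\vbeta_{int}\|^2=\Theta_c(1)$ and, above all, the deterministic equivalent of $\vu^\top\vbeta_{int}$ \emph{together with its rate of convergence}, since it is this rate (roughly $1-\mu_\infty/\alpha_Z\asymp c/\gamma$ above the BBP edge) that pins the tempered/catastrophic threshold at $\gamma\asymp c^2$ in case (i) and controls the growth of the $\gamma$-amplified mismatch in case (ii). These are exactly the quantities produced in --- equivalently, can be read off the explicit formula of --- \Cref{thm:misspecified_2}, so they may here simply be invoked; the genuinely new ingredients are the bias--variance split and the case bookkeeping that hinges on whether $\tilde{\alpha}_Z=\alpha_Z$.
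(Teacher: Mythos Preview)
Your approach is essentially the paper's: both specialize the general risk formula of \Cref{thm:alignment-risk} to $\theta^2=\gamma\tau^2$, identify your $S$ with the paper's Bias term and your $B$ with the sum of Variance, Data Noise and Target Alignment, and then do the same $c\to\infty$ case analysis on the explicit expression
\[
    S=\gamma\tau^2(\vbeta_*^\top\vu)^2\Big((\tilde{\alpha}_Z-\alpha_Z)+(\alpha_Z-\tfrac{\alpha_A}{c})\tfrac{c}{\gamma+c}\Big)^2.
\]
Your structural observation that $\vu^\top\vbeta_{int}$ is independent of the test coefficients --- hence that case (i) literally reduces to the spike summand already analyzed after \Cref{thm:misspecified_2} --- is a cleaner way to organize the argument than the paper's direct plug-and-limit, but it is the same computation underneath. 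One small slip: your claim $\E\|\vbeta_{int}\|^2=\Theta_c(1)$ is too strong (for $\gamma=o_c(c)$ it tends to zero); you only need, and only use, the upper bound $O_c(1)$ to conclude that $B$ cannot rescue a diverging $S$, while the tempered lower bounds come from the Data Noise term $\tilde{\alpha}_A^2\tau^2\|\vbeta_*\|^2$ or from the finite nonzero limits of $S$ themselves.
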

Different covariate shifts pose varying challenges. In particular, if $\alpha_Z \neq \tilde{\alpha}_Z$, (target's spike dependence shifts),  then catastrophic overfitting becomes unavoidable for sufficiently large spikes. This contradicts the earlier theoretical intuition, as increasing the spike size in this setting actually induces catastrophic overfitting instead of mitigating it. 

\paragraph{Equal Frobenius Norm.} In this case, we have the following theorem. 
\begin{theorem} \label{thm:miss-shift-frobenius} Let $\mZ, \tilde{\mZ}$ satisfy \Cref{assumption:Z}, $\mA, \tilde{\mA}$ satisfy \Cref{assumption:A} and $\vy, \tilde{\vy}$ according to \Cref{eq:target-model}. If \Cref{assumption:scaling} holds and $\alpha_Z \neq \tilde{\alpha}_Z$ then $\mathcal{R}_c = \infty$ for all $c \neq 1$. 
For $\alpha_Z = \tilde{\alpha}_Z$:
\[
    \lim_{c \to \infty}\mathcal{R}_c = \tau^2 \left[ (\vbeta_*^\top  \vu)^2 (\alpha_Z^2 - 2 \tilde{\alpha}_A \alpha_Z) + \|\vbeta_*\|^2 \tilde{\alpha}_A^2 \right].
\]
\end{theorem}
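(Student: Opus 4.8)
The plan is to deduce \Cref{thm:miss-shift-frobenius} from the master exact‑risk decomposition of this section (\Cref{thm:alignment-risk}, and the resolvent computations behind \Cref{thm:misspecified_2}) specialized to the Frobenius normalization $\theta^2=d\tau^2$ with covariate shift, and then to pass to the limits $n,d\to\infty$ (yielding $\mathcal{R}_c$) and $c\to\infty$. The organizing identity is obtained by conditioning on the training pair $(\mX,\vvarepsilon)$ and using that the test spike coordinate $\tilde\vz=\theta\tilde v\,\vu$ (with $\tilde v$ a scalar standard normal), the test bulk $\tilde\va$, and the test noise $\tilde\vvarepsilon$ are independent and mean zero, together with $\vu\perp\vbeta_{int}^\perp$:
\[
\mathcal{R}_c=\theta^2\,\E\!\big[(\tilde\alpha_Z(\vbeta_*^\top\vu)-\vu^\top\vbeta_{int})^2\big]+\tau^2\,\E\!\big[(\tilde\alpha_A(\vbeta_*^\top\vu)-\vu^\top\vbeta_{int})^2\big]+\tau^2\,\E\!\big[\|\tilde\alpha_A\vbeta_*^\perp-\vbeta_{int}^\perp\|^2\big],
\]
where $\vbeta_{int}^\perp,\vbeta_*^\perp$ are the components orthogonal to $\vu$ and both $\mA,\tilde\mA$ carry variance $\tau^2$ under \Cref{assumption:A}. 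The only diverging prefactor is $\theta^2=d\tau^2$, attached to the spike‑coordinate discrepancy, so everything hinges on how closely the spike coordinate $\vu^\top\vbeta_{int}$ of the \emph{trained} interpolator (fit with the training coefficients $\alpha_Z,\alpha_A$, not the tilde versions) tracks $\tilde\alpha_Z(\vbeta_*^\top\vu)$. Two facts from the exact‑risk analysis are needed: under $\theta^2=d\tau^2$ the spike coordinate concentrates, $\E[\vu^\top\vbeta_{int}]\to\alpha_Z(\vbeta_*^\top\vu)$ (the BBP shrinkage $\tfrac{\theta^2}{\theta^2+\tau^2 c}=\tfrac{d}{d+c}\to1$), and the training‑coefficient spike term $S_c:=\theta^2\,\E[(\alpha_Z(\vbeta_*^\top\vu)-\vu^\top\vbeta_{int})^2]$ converges, for each fixed $c$, to the finite $(\vbeta_*^\top\vu)^2$‑contribution already appearing in the no‑shift Frobenius formula of \Cref{thm:misspecified_2}, which tends to $0$ as $c\to\infty$.

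For the divergence claim, take the fixed value of $\vbeta_*^\top\vu$ to be nonzero ($\vbeta_*\perp\vu$ being degenerate). By Jensen the first term of the decomposition is at least $\theta^2(\tilde\alpha_Z(\vbeta_*^\top\vu)-\E[\vu^\top\vbeta_{int}])^2$, and since $\E[\vu^\top\vbeta_{int}]\to\alpha_Z(\vbeta_*^\top\vu)$ this equals
\[
d\tau^2\big((\tilde\alpha_Z-\alpha_Z)(\vbeta_*^\top\vu)+o(1)\big)^2\longrightarrow\infty \qquad\text{whenever }\tilde\alpha_Z\neq\alpha_Z,
\]
while the other two terms are nonnegative; hence $\mathcal{R}_c=+\infty$ for every fixed $c\neq1$. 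The exclusion $c=1$ is merely the interpolation‑threshold singularity, at which the master formula already carries $1/(c-1)$‑type poles independently of the spike.

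For $\tilde\alpha_Z=\alpha_Z$ the first term of the decomposition is exactly $S_c$, hence converges, for each fixed $c$, to a finite value that vanishes as $c\to\infty$. It then remains to take $c\to\infty$ in the other two terms. Since $\vu^\top\vbeta_{int}\to\alpha_Z(\vbeta_*^\top\vu)$ in $L^2$, the second term tends to $\tau^2(\tilde\alpha_A-\alpha_Z)^2(\vbeta_*^\top\vu)^2$. In the third term, $\vbeta_{int}^\perp$ is asymptotically the minimum‑norm interpolator of an isotropic sub‑problem of overparameterization ratio $c$ (the enormous spike pins $\vu^\top\vbeta_{int}$ and decouples it from the orthogonal complement), whose norm vanishes as $c\to\infty$, so $\E\|\tilde\alpha_A\vbeta_*^\perp-\vbeta_{int}^\perp\|^2\to\tilde\alpha_A^2\|\vbeta_*^\perp\|^2=\tilde\alpha_A^2(\|\vbeta_*\|^2-(\vbeta_*^\top\vu)^2)$, and the data‑noise pole $\tilde{\tau}_\varepsilon^2/(c-1)\to0$. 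Summing the three limits and expanding,
\[
\lim_{c\to\infty}\mathcal{R}_c=\tau^2(\tilde\alpha_A-\alpha_Z)^2(\vbeta_*^\top\vu)^2+\tau^2\tilde\alpha_A^2\big(\|\vbeta_*\|^2-(\vbeta_*^\top\vu)^2\big)=\tau^2\big[(\vbeta_*^\top\vu)^2(\alpha_Z^2-2\tilde\alpha_A\alpha_Z)+\|\vbeta_*\|^2\tilde\alpha_A^2\big],
\]
the asserted formula; setting $\tilde\alpha_A=\alpha_A$ recovers the $c\to\infty$ value of the no‑covariate‑shift Frobenius case as a consistency check.

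The main obstacle is not this limit bookkeeping but the two inputs quoted from the exact‑risk analysis. One must extract, from the resolvent analysis of the spiked Gram matrix, the $\theta\to\infty$ asymptotics of the first two moments of $\vu^\top\vbeta_{int}$ \emph{precisely enough that the $\theta^2=d\tau^2$ prefactor is correctly controlled} --- in particular, that the bias discrepancy $\tilde\alpha_Z(\vbeta_*^\top\vu)-\E[\vu^\top\vbeta_{int}]$ is $O(1/d)$ when $\tilde\alpha_Z=\alpha_Z$ and $\Theta(1)$ otherwise, and that $\theta^2\,\mathrm{Var}(\vu^\top\vbeta_{int})$ converges to the correct $c$‑dependent constant (vanishing as $c\to\infty$). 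Concretely this uses $\vbeta_{int}=\mX^\dagger\vy\in\mathrm{span}\{\theta v_i\vu+\va_i\}$, the rotational invariance of $\mA$ to separate $\vu$ from its orthogonal complement, and the Marchenko--Pastur law together with the rank‑one BBP perturbation to evaluate the relevant traces; the orthogonal‑complement terms then reduce to the familiar analysis of isotropic minimum‑norm interpolation.
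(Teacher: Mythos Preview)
Your decomposition is correct and is exactly the paper's \Cref{thm:alignment-risk} split reorganized: your spike term $\theta^2\E[(\tilde\alpha_Z(\vbeta_*^\top\vu)-\vu^\top\vbeta_{int})^2]$ is the paper's Bias, and your two remaining terms sum to $\tau^2\E\|\tilde\alpha_A\vbeta_*-\vbeta_{int}\|^2$, which is precisely Variance $+$ Data Noise $+$ Target Alignment. The paper's own proof is the mechanical version of this: it substitutes $\tilde\theta=\theta$, $\tilde\tau=\tau$, $\theta^2=d\tau^2$ into the explicit four-term formulae of \Cref{thm:alignment-risk}, observes that Bias contains $d\tau^2(\vbeta_*^\top\vu)^2\big(\tilde\alpha_Z-\alpha_Z+O(1/d)\big)^2\to\infty$ when $\tilde\alpha_Z\neq\alpha_Z$, and when $\tilde\alpha_Z=\alpha_Z$ writes down each term's $n,d\to\infty$ limit and then sends $c\to\infty$. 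Your Jensen lower bound and $L^2$ argument are a clean conceptual gloss on the same computation, and the ``two inputs'' you flag at the end are exactly what \Cref{thm:alignment-risk} supplies.

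One justification to fix: you identify $S_c$ with ``the $(\vbeta_*^\top\vu)^2$-contribution of \Cref{thm:misspecified_2}'' and assert that the latter tends to $0$ as $c\to\infty$. That contribution in fact tends to $\tau^2(\vbeta_*^\top\vu)^2(\alpha_Z^2-2\alpha_A\alpha_Z)$, which is generically nonzero. The quantity $S_c$ is only the \emph{Bias} portion, and under Frobenius scaling the paper's Bias formula shows it is already $0$ after $n,d\to\infty$ for every fixed $c$ (the shrinkage $\tau^2c/(\theta^2+\tau^2c)=c/(d+c)=O(1/d)$ makes $\theta^2\cdot O(1/d^2)\to0$), so no $c\to\infty$ step is needed there. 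This mis-citation does not break your argument since you only use $S_c\to0$, but the stated reason is wrong. Similarly, your ``isotropic sub-problem'' heuristic for the third term is morally right but in a rigorous version should be replaced by reading $\E[\|\vbeta_{int}\|^2]$ and $\E[\vbeta_*^\top\vbeta_{int}]$ directly from the Variance and Target Alignment formulae of \Cref{thm:alignment-risk}.
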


If $\alpha_Z \neq \tilde{\alpha}_Z$, catastrophic overfitting occurs. When $\vbeta_*$ and $\vu$ are parallel, we have that
\(
    \tau^2 \|\vbeta_*\|^2 (\alpha_Z - \tilde{\alpha}_A)^2.
\)
This is benign if and only if $\alpha_Z = \tilde{\alpha}_A$.  Notably, if training data is misspecified ($\alpha_A \neq \alpha_Z$) but test data is well-specified and matches the training spike dependence ($\alpha_Z = \tilde{\alpha}_Z = \tilde{\alpha}_A$), benign overfitting becomes achievable.

\subsection{General Theorem}

Prior results are special cases of our main theorem (\Cref{thm:alignment-risk}). Its full form is complex (Appendix~\ref{app:general}). We present a high-level decomposition here.

\begin{restatable}[Generalization Risk]{theorem}{alignmentrisk} \label{thm:alignment-risk} Suppose \Cref{assumption:Z}, \Cref{assumption:A}, and \Cref{assumption:scaling} hold. 
\[
    \mathcal{R} = \mathbb{E}\left[\underbrace{\left\| \tilde{\alpha}_{z}\vbeta_{*}^\top  \tilde{\mZ} - \vbeta_{int}^\top \tilde{\mZ}\right\|_{F}^2}_{\text{Bias}} + \underbrace{\tau^2\left\| \vbeta_{int}^\top \tilde{\mA} \right\|_{F}^2}_{\text{Variance}} + \underbrace{\tilde{\alpha}_{A}^2\left\|\vbeta_{*}^\top \tilde{\mA}\right\|_F^2}_{\text{Data Noise}}  +\underbrace{ \left(-2\tilde{\alpha}_{A}\vbeta_{*}^\top \tilde{\mA}\tilde{\mA}^\top \vbeta_{int} \right)}_{\text{Target Alignment}}\right].
\]
\end{restatable}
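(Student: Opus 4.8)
\Cref{thm:alignment-risk} is an orthogonal‑in‑expectation decomposition of the squared prediction error, so the plan is to substitute the model definitions into $(\tilde{\vy}-\hat{\vy})^2$, expand, and discard every term that vanishes because the fresh test randomness is mean zero and independent of the interpolator. Writing the test covariates as $\tilde{\mX} = \tilde{\mZ} + \tilde{\mA}$ with $\tilde{\mZ} = \tilde{\theta}\,\vu\tilde{\vv}^\top$ (\Cref{assumption:Z}), I substitute $\hat{\vy} = \tilde{\mX}^\top\vbeta_{int}$ and $\tilde{\vy} = \tilde{\alpha}_Z\tilde{\mZ}^\top\vbeta_* + \tilde{\alpha}_A\tilde{\mA}^\top\vbeta_* + \tilde{\vvarepsilon}$ and group the residual by which matrix carries the randomness:
\[
    \tilde{\vy} - \hat{\vy} \;=\; \underbrace{\tilde{\mZ}^\top(\tilde{\alpha}_Z\vbeta_* - \vbeta_{int})}_{=:\,\vr_Z} \;+\; \underbrace{\tilde{\mA}^\top(\tilde{\alpha}_A\vbeta_* - \vbeta_{int})}_{=:\,\vr_A} \;+\; \tilde{\vvarepsilon}.
\]
Squaring and taking expectations splits $\mathbb{E}\|\tilde{\vy}-\hat{\vy}\|^2$ into the diagonal pieces $\mathbb{E}\|\vr_Z\|^2,\ \mathbb{E}\|\vr_A\|^2,\ \mathbb{E}\|\tilde{\vvarepsilon}\|^2$ and the three cross terms $2\,\mathbb{E}[\vr_Z^\top\vr_A]$, $2\,\mathbb{E}[\vr_Z^\top\tilde{\vvarepsilon}]$, $2\,\mathbb{E}[\vr_A^\top\tilde{\vvarepsilon}]$.

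The heart of the argument is that the three cross terms vanish. The two involving $\tilde{\vvarepsilon}$ are immediate: the test label noise is mean zero and independent of $(\mX,\vvarepsilon,\tilde{\vv},\vu,\vbeta_*)$, hence of $\vr_Z$ and $\vr_A$, so conditioning on the latter gives zero. For $\mathbb{E}[\vr_Z^\top\vr_A]$ the point is to keep straight what is shared between the training and test problems: $\vbeta_{int} = \mX^\dagger\vy$ depends only on the \emph{training} matrices $\mZ,\mA,\vvarepsilon$ and on the shared $\vu,\vbeta_*$, while $\tilde{\mZ}$ depends only on $\vu$ and the test loadings $\tilde{\vv}$; consequently the \emph{test} bulk $\tilde{\mA}$ is independent of the pair $(\vr_Z,\ \tilde{\alpha}_A\vbeta_* - \vbeta_{int})$. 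Conditioning on everything except $\tilde{\mA}$ and using $\mathbb{E}[\tilde{\mA}] = 0$ (\Cref{assumption:A}) gives $\mathbb{E}[\vr_A \mid \text{rest}] = \mathbb{E}[\tilde{\mA}^\top]\,(\tilde{\alpha}_A\vbeta_* - \vbeta_{int}) = 0$, so the tower property forces $\mathbb{E}[\vr_Z^\top\vr_A] = 0$. What survives is $\mathbb{E}\|\vr_Z\|^2 + \mathbb{E}\|\vr_A\|^2 + \mathbb{E}\|\tilde{\vvarepsilon}\|^2$: the first is exactly the \emph{Bias} term $\mathbb{E}\|\tilde{\alpha}_Z\vbeta_*^\top\tilde{\mZ} - \vbeta_{int}^\top\tilde{\mZ}\|_F^2$, and the last is the irreducible label‑noise floor $\tilde{\tau}_\varepsilon^2$ that is removed in passing to the excess risk.

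It then remains only to split $\mathbb{E}\|\vr_A\|^2$ by the binomial expansion
\[
    \mathbb{E}\|\vr_A\|^2 \;=\; \tilde{\alpha}_A^2\,\mathbb{E}\|\vbeta_*^\top\tilde{\mA}\|_F^2 \;-\; 2\tilde{\alpha}_A\,\mathbb{E}\!\left[\vbeta_*^\top\tilde{\mA}\tilde{\mA}^\top\vbeta_{int}\right] \;+\; \mathbb{E}\|\vbeta_{int}^\top\tilde{\mA}\|_F^2,
\]
whose three pieces are, up to the normalization constants recorded in the appendix (the $\tau^2$ attached to the variance term and the $\tilde{\alpha}$ factors being inherited from the target model and from the second‑moment scale of $\tilde{\mA}$ in \Cref{assumption:A}), the \emph{Data Noise}, \emph{Target Alignment}, and \emph{Variance} terms; combining these with the Bias term yields the claimed decomposition. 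For an identity of this type the only real obstacle is bookkeeping rather than analysis: one must fix precisely which sources of randomness are common to train and test ($\vu$ and $\vbeta_*$ are; the test bulk $\tilde{\mA}$, loadings $\tilde{\vv}$, and label noise $\tilde{\vvarepsilon}$ are fresh) so that each cross‑term cancellation is justified at the correct conditional level, and one must carry the $\tau$‑scalings and the single‑test‑point $\leftrightarrow$ Frobenius‑norm conversion correctly. The genuinely hard analytic work — evaluating these four expectations in closed form under the operator‑ and Frobenius‑norm spike scalings — is exactly what \Cref{thm:well_specified_risk}, \Cref{thm:misspecified_2}, \Cref{thm:eq_opt_transitions}, and \Cref{thm:miss-shift-frobenius} carry out as special cases.
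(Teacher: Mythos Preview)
Your proposal is correct and follows essentially the same approach as the paper's Step~1 (Section~\ref{sec:step1}): write the test residual as $\vr_Z+\vr_A+\tilde{\vvarepsilon}$, use the independence and mean-zero property of $\tilde{\mA}$ (and of $\tilde{\vvarepsilon}$) to kill the cross terms, and then binomially expand $\|\vr_A\|^2$ into the Variance, Data Noise, and Target Alignment pieces. Your bookkeeping of which randomness is shared versus fresh matches the paper's, and your observation that the analytic content lies in evaluating the four expectations (the paper's Steps~2--5) is exactly right.
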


\begin{itemize}
    \item \textbf{Bias.}  
    This is the squared error between the learned predictor $\vbeta_{int}$ and the true parameter $\vbeta_*$ \emph{projected onto the spike direction} $\vu$.  
    In particular, the risk penalizes discrepancies only along the top eigen-direction of the population covariance $\Sigma$, reflecting the anistropic influence of the spike.

    \item \textbf{Variance.}  
    The variance is equivalent to  $\tau^2\|\vbeta_{int}\|_2$.  
    This mirrors classical isotropic regression results \citep{hastie2022surprises, bartlett2020benign}, but the norm $\|\vbeta_{int}\|^2$ itself is dependent upon the interaction between signal and noise, the alignment between $\vbeta_*$ and $\vu$, and the scaling parameters.

    \item \textbf{Data Noise.}  
    The data noise term quantifies the contribution of the noise matrix $\mA$ to the target outputs $y_i$ through $\alpha_A$.  
    Even in the absence of observation noise ($\tau^2_\varepsilon = 0$), target corruption via data noise can create an irreducible error floor.

    \item \textbf{Target Alignment.}  
    The alignment term measures the inner product between $\vbeta_{int}$ and $\vbeta_*$ with respect to the sample noise covariance.  
    This cross-term captures how mismatch between $\vbeta_{int}$ and $\vbeta_*$, especially when mediated by $\mA$, can amplify or dampen generalization error.
\end{itemize}

\subsection{Extension: Nonlinear Models Also Exhibit Alignment Phase Transitions}
\label{sec:nn-phase}

While our theoretical focus is on linear regression, key phenomena like  $\alpha_Z$ dependent non-monotonic alignment effects appear in nonlinear models as well. We test this by training 3-layer ReLU networks to predict $\vy$ (\Cref{eq:target-model}) given $\mX$, where we vary the alignment angle between spike $\vu$ and $\vbeta_*$ and record the generalization error. Figure~\ref{fig:nn}, shows our results for three $\alpha_Z$ values. For $\alpha_Z = 0.1$, increasing alignment with the spike is detrimental. For $\alpha_Z = 1$, alignment is beneficial, while for $\alpha_Z = 10$, alignment is detrimental again. This mirrors our theoretical findings that there is a region for beneficial alignment and a nuanced phase transition for different $\alpha_Z$ values.

\begin{figure}[t]
    \centering
    \begin{subfigure}[t]{0.32\linewidth}
        \centering
        \includegraphics[width=\linewidth]{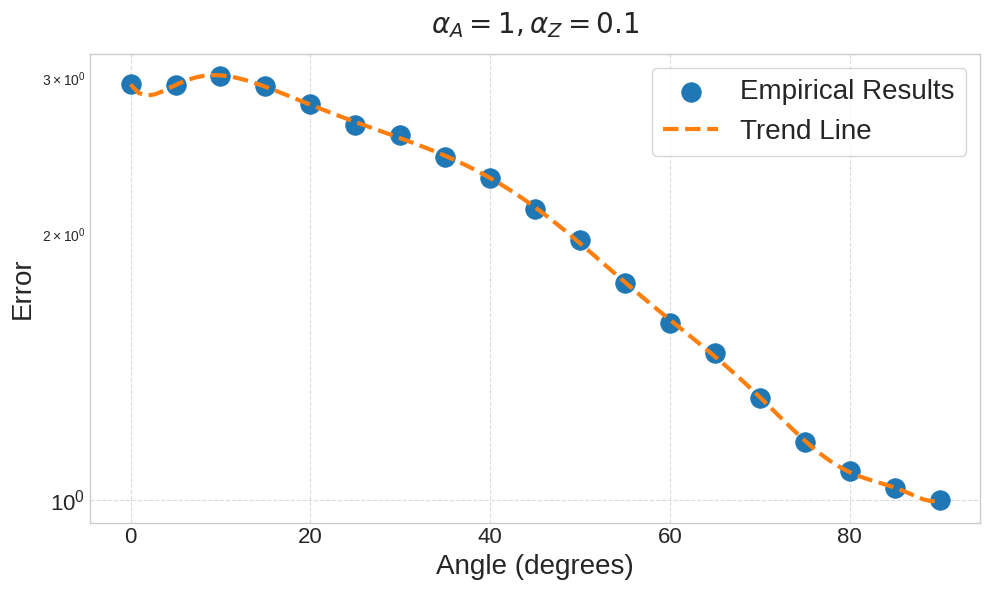}
        \caption{$\alpha_Z = 0.1$, alignment helps.}
    \end{subfigure}
    \hfill
    \begin{subfigure}[t]{0.32\linewidth}
        \centering
        \includegraphics[width=\linewidth]{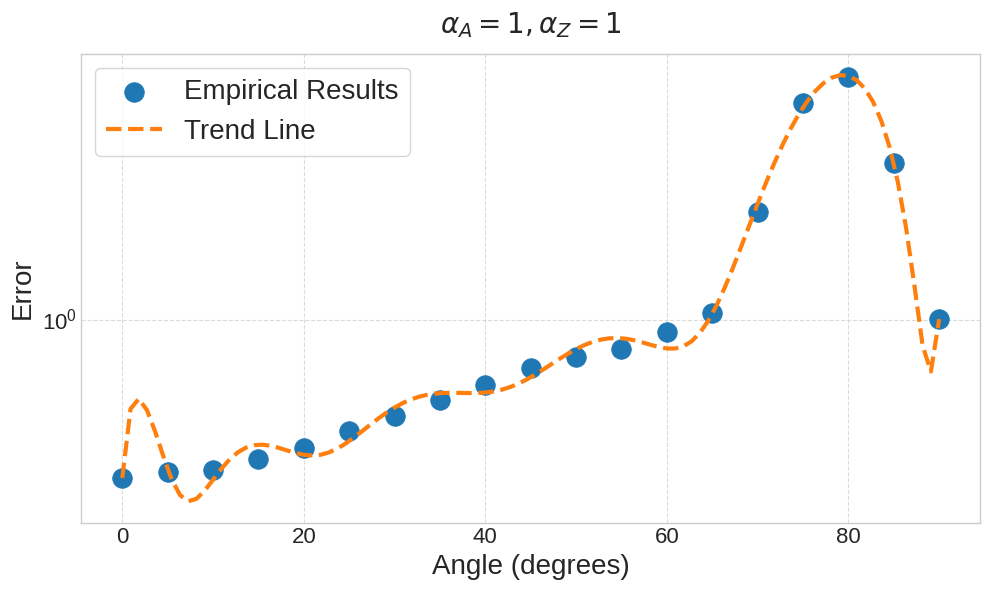}
        \caption{$\alpha_Z = 1$, mixed behavior.}
    \end{subfigure}
    \hfill
    \begin{subfigure}[t]{0.32\linewidth}
        \centering
        \includegraphics[width=\linewidth]{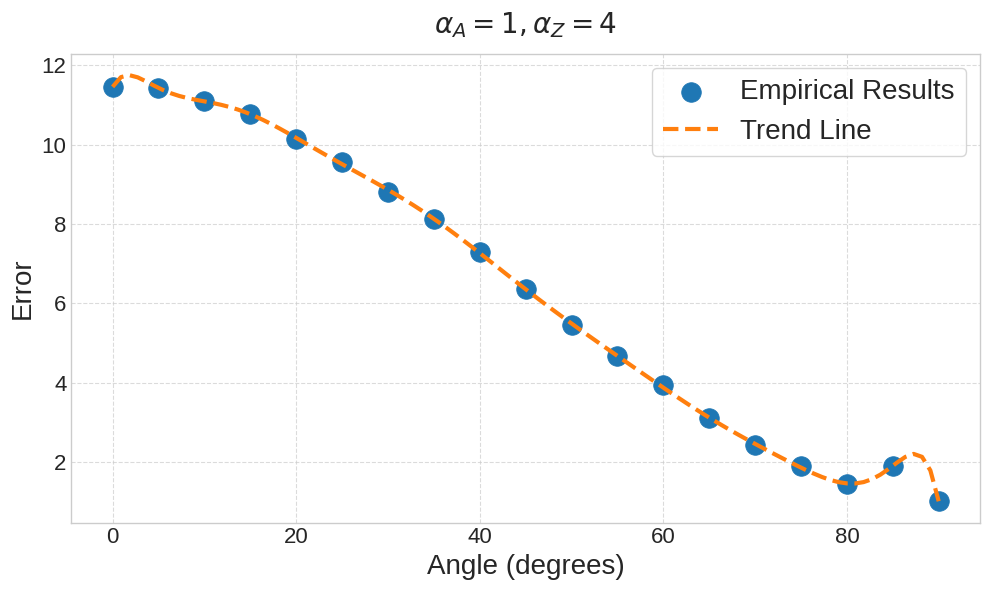}
        \caption{$\alpha_Z = 4$, alignment hurts.}
    \end{subfigure}
    \caption{
        \textbf{Alignment-phase transitions persist in deep networks.} 
        Generalization error vs. angle between spike direction $\vu$ and ground-truth parameter $\vbeta_*$ when fitting data with a 3-layer ReLU networks. The effect of alignment switches as $\alpha_Z$ increases, consistent with the phase transitions predicted by our theory. Experimental details are in Appendix\ref{app:exp}.
    }
    \label{fig:nn}
\end{figure}

\section{Conclusion}
\label{sec:conclusion}

This work provided a precise analytical characterization of the generalization error for minimum-norm interpolators in spiked covariance models. We decomposed the risk into interpretable components and comprehensively classified overfitting regimes based on spike strength, target alignment, and overparameterization. We reveal surprising phenomena, such as the potential for increasing spike strength to induce catastrophic overfitting before benign overfitting in well-specified aligned problems, and that strong target-spike alignment is not universally beneficial, especially under model misspecification. These alignment-dependent phase transitions, theoretically derived for linear models, were also empirically observed in nonlinear neural networks, suggesting broader relevance. Our results offer a more nuanced understanding of generalization in the presence of data anisotropy, challenging conventional intuitions and providing a detailed map of risk behaviors in overparameterized settings.

\bibliography{iclr2026_conference}
\bibliographystyle{plain}

\appendix

\newpage

\tableofcontents

\newpage 

\section{Notation}

\begin{table}[h]
\centering
\small
\begin{tabular}{c p{5.8cm} p{3.6cm} p{1.7cm}}
\toprule
\textbf{Symbol} & \textbf{Description / Role} & \textbf{Typical scaling / range} & \textbf{First used}\\
\midrule
$d,\,n$ & Data dimension and sample size & $d,n\!\to\!\infty$ with $c=d/n$ fixed & Sec.~2 \\
$c$     & Aspect ratio $d/n$ & $(0,\infty)$ & Sec.~2 \\
$\tau^{2}/d$ & Noise variance in ambient bulk \(A\) & $\tau^2 = \Theta(1)$  & Sec.~2 \\
$\theta^{2}$ & Spike (signal) variance & $\theta^2 = \gamma \tau^2$ (operator-norm) \newline or $\theta^{2}=d\tau^{2}$ (Frobenius) & Sec.~2 \\
$\gamma$ & Spike-to-noise ratio $\displaystyle \gamma=\theta^{2}/\tau^{2}$ (effective outlier eigenvalue) & $[0,\infty)$; critical line $\gamma=(1+\sqrt{c})^{2}$ & Sec.~2 \\
$\alpha_Z,\,\alpha_A$ & Coeffs.\ weighting spike vs.\ bulk in \emph{targets} \(y\) & $\Theta(1)$ & Eq.~(2) \\
$\tilde{\alpha}_Z,\,\tilde{\alpha}_A$ & Same coefficients for \emph{test} data (covariate shift) & $\Theta(1)$ & Sec.~3 \\
$\vbeta_\ast$ & True parameter vector & $\|\vbeta_\ast\|_{2}=1$ & Sec.~2 \\
$\vu$ & Spike direction in data covariance & $\|\vu\|_{2}=1$ & Sec.~2 \\
$\mA,\,\mZ$ & Bulk noise matrix, rank-one signal matrix & $A_{ij}\sim\mathcal N(0,\tau^{2}/d)$,\; $\mZ=\theta\,\vu \vv^{\top}$ & Sec.~2 \\
$\bm{\vvarepsilon},\tau_\varepsilon^{2}$ & Label noise and its variance & IID, $\mathcal N(0,\tau_\varepsilon^{2})$ & Sec.~2 \\
\bottomrule
\end{tabular}
\vspace{10pt}
\caption{Glossary of recurrent parameters and symbols.  All \( \Theta(1) \) constants are independent of \(n,d\).}
\label{tab:parameter-glossary}
\end{table}

\paragraph{Other Notations. } We use lowercase $a$, lowercase bold $\va$, and uppercase bold $\mA$ letters to denote scalars, vectors, and matrices respectively. We use $\|\cdot\|_2$ to denote the Euclidean norm if the argument is a vector and the operator norm if the argument is a matrix. We use $\|\cdot\|_F$ to denote the Frobenius norm. When slicing one entry from a vector or matrix, we use both $a_i$, $A_{ij}$ and $\va_i$, $\mA_{ij}$, where the latter intends to emphasize the source of the scalar. 

\section{Non-Linear Experiment}
\label{app:exp}

We used 500 data points in 750 dimensional space, with a hidden width of 1000. We used full batch gradient descent for 100 epochs with a learning rate of 1e-4. Each data point is averaged over 50 trials. Equal Frobenius norm scaling was used for the size of the spike. 

\section{Spike Recovery Case}
\label{app:spike}

We consider the special case where the goal is to recover the spike direction $\vu$. In this setting, the target $\vy$ depends only on the spike component $\mZ$, with no contribution from the noise $\mA$:
\[
    \alpha_A = \tilde{\alpha}_A = 0, \qquad \alpha_Z = \tilde{\alpha}_Z = \alpha > 0.
\]
Thus, the target $\vy$ is proportional to the signal $\mZ$ plus possible observation noise $\vvarepsilon$. 

\paragraph{Equal Operator Norm} In this regime, we have that the risk is \[
    \mathcal{R}_{c<1} = \frac{\gamma \alpha_Z^2 \tau^2}{(1-c)(\gamma+1)} (\vbeta^\top \vu)^2 + \frac{c}{1-c}\tau^2_{\varepsilon}, \qquad \mathcal{R}_{c>1} = \frac{\gamma c (c^2+\gamma) \alpha_Z^2 \tau^2}{(c-1)(\gamma+c)^2} (\vbeta^\top \vu)^2 + \frac{1}{c-1}\tau_{\varepsilon}^2
\]
Here again, we see that when $\gamma = \Theta_c(1)$, we have tempered overfitting and $\omega_c(1) \le \gamma \le o_c(c^2) $, we have catastrophic overfitting and for $\gamma = \Omega_c(c^2)$ we get tempered overfitting again. 

\paragraph{Equal Frobenius Norm}. In this regime, we have that 
\[
    R_{c<1} = \frac{\alpha_Z^2 \tau^2}{1-c} (\vbeta^\top \vu)^2 + \frac{c}{1-c}\tau^2_{\varepsilon} \quad 
    R_{c>1} = \frac{c \alpha_Z^2 \tau^2}{c-1} (\vbeta^\top \vu)^2 + \frac{1}{c-1}\tau_{\varepsilon}^2. 
\]

This generalizes the spike recovery setting studied in \cite{sonthalia2023training}, which assumed noiseless targets ($\tau_\varepsilon = 0$) and the equal Frobenius norm scaling. Our formula allows for observation noise and thus captures the more realistic case where the target $y$ itself contains randomness not aligned with the spike. Here we see that we have tempered overfitting unless $\tau^2=o(1)$, which is the case considered in \cite{sonthalia2023training}.

\newpage 

\section{Proof of Theorem~\ref{thm:alignment-risk}} \label{app:general}

\alignmentrisk*
\noindent In particular, as $n,d \to \infty$ with $d/n \to c \in (0, \infty)$, we have the following expressions for each term. \\

\noindent \textbf{Bias:} For $c < 1$, we have that the bias term is 
\begin{align*}
    \tilde{\theta}^2 \left[(\vbeta_*^\top  \vu)^2\left(\tilde{\alpha}_Z - \alpha_Z + (\alpha_Z - \alpha_A)+\frac{\tau^2}{\theta^2 + \tau^2}\right)^2 + \tau^2_\varepsilon \frac{c}{1-c}\frac{1}{d(\theta^2 + \tau^2)}\right]. 
\end{align*}
If $c > 1$, we that the bias term is 
\begin{align*}
    &\tilde{\theta}^2 (\vbeta_*^\top  \vu)^2\left(\tilde{\alpha}_Z - \alpha_Z + \left(\alpha_Z - \frac{\alpha_A}{c}\right)\frac{\tau^2c}{\theta^2 + \tau^2c}\right)^2 + \tilde{\theta}^2\left[\alpha_A^2 \frac{\|\vbeta_*\|^2}{d} \frac{c-1}{c} \frac{\theta^2\tau^2c}{(\theta^2 + \tau^2c)^2}  
    + \tau^2_\varepsilon\frac{c}{c-1}\frac{\theta^2 + \tau^2}{n(\theta^2 + \tau^2c)^2} \right]. 
\end{align*}

\noindent \textbf{Variance:} For $c < 1$, we have that the variance term is
\begin{align*}
    &\alpha_A^2\tilde{\tau}^2\|\vbeta_*\|^2 + \tilde{\tau}^2(\vbeta_*^\top \vu)^2 \left[\frac{1}{1-c} \frac{\theta^4 + \theta^2\tau^2c}{(\theta^2 + \tau^2)^2}\left(\alpha_Z - \alpha_A\right)^2 + 2\alpha_A(\alpha_Z - \alpha_A)\frac{\theta^2}{\theta^2 + \tau^2}\right] \\
    &+ \tau^2_{\varepsilon} \frac{\tilde{\tau}^2}{\tau^2}\left[\frac{c}{1-c} - \frac{\theta^2}{d(\theta^2 + \tau^2)}\, \frac{c}{1-c}\right]. 
\end{align*}
For $c > 1$, we have that the variance term is
\begin{align*}
    &\tilde{\tau}^2\|\vbeta_*\|^2\left(\frac{\alpha_A^2}{c} - \frac{\alpha_A^2}{d} \frac{\theta^2}{\theta^2 + \tau^2c} \right) + \tilde{\tau}^2(\vbeta_*^\top \vu)^2\frac{c}{(c-1)}\frac{\theta^2}{\theta^2+\tau^2c}\left(\alpha_Z - \frac{\alpha_A}{c}\right)^2 \\
    &+ \tau_{\varepsilon}^2\frac{\tilde{\tau}^2}{\tau^2}\left(\frac{1}{c-1} - \frac{\theta^2}{d (\theta^2 + \tau^2c)} \frac{c}{c-1}\right). 
\end{align*}
\noindent \textbf{Data Noise:} For all $c$, we have that
\[
    \tilde{\alpha}_A^2 \tilde{\tau}^2 \|\vbeta_*\|^2. 
\]

\noindent \textbf{Target Alignment:} For $c < 1$, we have that the alignment term is
\[
    -2 \tilde{\alpha}_A\tilde{\tau}^2\left((\alpha_Z - \alpha_A) \frac{\theta^2}{\theta^2 + \tau^2}(\vbeta_*^\top  \vu)^2 + \alpha_A \|\vbeta_*\|^2\right). 
\]
For $c > 1$, we have that the alignment term is
\[
    -2 \tilde{\alpha}_A\tilde{\tau}^2\left(\left(\alpha_Z - \frac{\alpha_A}{c}\right) \frac{\theta^2}{\theta^2 + \tau^2c}(\vbeta_*^\top  \vu)^2 + \alpha_A \|\vbeta_*\|^2\left(\frac{1}{c} - \frac{1}{d}\, \frac{\theta^2}{\theta^2 + \tau^2c}\right)\right). 
\]
\noindent \textbf{Error terms:} The largest error terms for all $c$ are: 
\[
    o(1) + O\left(\frac{1}{n}\right) = o(1). 
\]

\paragraph{Remark:} We note that the above theorem is very general and captures all of the theorems in the main text as special cases. It is worth noting that the theorem also incorporates different signal and bulk strengths for test data, namely for $\tilde{\theta}$ and $\tilde{\tau}$. 

The proof will be broken up into roughly 6 steps 
\begin{enumerate}
    \item \textbf{Rescale the problem} To apply standard results we rescale the problem. Section~\ref{sec:step0}
    \item \textbf{Decompose the error into four terms.} We shall refer to these terms as the 1) bias, 2) variance, 3) data noise, and 4) target alignment. Section~\ref{sec:step1}
    \item \textbf{Simplify the expressions.} We shall then use the result from \cite{meyer} to simplify the expression for each of the four terms. In particular, we shall express each term as the product of dependent functions of the eigenvalues of $\mX$. Section~\ref{sec:step-2} 
    \item \textbf{Random matrix theory estimate.} We then use standard results from random matrix theory such as \cite{Marenko1967DISTRIBUTIONOE, Bai2008LARGESC, baik2006eigenvalues} to obtain a closed-form formula of the building blocks for the risk. Section~\ref{sec:step3}
    \item \textbf{Bound Products}. We then show that products of our building blocks concentrate. Step 4 (Section~\ref{sec:step4}) then collects the final terms.
    \item \textbf{Undo Scaling} Step 5 (Section~\ref{sec:step5}) gives us back the correct scaling. 
\end{enumerate}

Section~\ref{sec:prob} has some generic probability lemmas that we need. 

\subsection{Step 0: Rescaling}
\label{sec:step0}

In order to better align with existing results and use them accordingly, we change our scalings for now and switch back after our derivation. That is, we divide everything by $\sqrt{d}$. Hence, we shall use 
\[
    \frac{\theta}{\sqrt{d}} \vu\vw^\top = \theta\frac{\|\vw\|}{\sqrt{d}} \vu \frac{\vw^\top}{\|\vw\|}
\]
as the spike. We shall let 
\[
    \eta^2 := \theta^2 \frac{\|\vw\|^2}{d} \quad \text{ and } \quad \vv := \frac{\vw^\top}{\|\vw\|}
\]
Here, we treat $\vv$ as fixed unit norm vector and our spike is 
\[
    \mZ_r := \eta \vu \vv^T 
\]
The $\mA$ noise after dividing by $\sqrt{d}$ is 
\[
    \mA_r := \frac{\tau}{\sqrt{d}} N
\]
where $N$ are mean zero variance 1 entries.  Here the appendix, we shall use the letter $\rho$ for $\tau$. Finally let 
\[
    \mX_r = \mZ_r + \mA_r
\]
We can note that $\vbeta_{int}$, is still the solution to 
\[
    \left\|\frac{\vy}{\sqrt{d}} - \vbeta^\top \mX_r\right\|^2, \quad \text{where } \ \frac{\vy}{\sqrt{d}} = \vbeta_*^\top(\mZ_r + \mA_r)  + \frac{\vvarepsilon}{\sqrt{d}}. 
\]
We define 
\[
    \frac{\vvarepsilon}{\sqrt{d}} =:  \vvarepsilon_r \sim \mathcal{N}\left(0,\frac{\tau^2_\varepsilon}{d}\right)  , \quad \tau^2_{\varepsilon,r} := \frac{\tau^2_\varepsilon}{d}. 
\]
Then when we want to test, we shall look at the rescaled error
\[
    \frac{1}{\tilde{n}}\left\|\vbeta_*^\top(\tilde{\alpha}_Z \tilde{\mZ}_r + \tilde{\alpha}_A \tilde{\mA}_r) - \vbeta_{int}^\top(\tilde{\mZ}_r + \tilde{\mA}_r)\right\|_F^2
\]

\begin{tcolorbox}
    \textbf{Through Steps 1 - 4, we shall drop the subscript} $\mathbf{r}$. 
\end{tcolorbox}

\subsection{Step 1: Decompose Error}
\label{sec:step1}

Using the fact that $\tilde{\mA}$ has been zero entries and is independent of $\tilde{\mZ}$, we see that we can decompose the error as follows. Again here we consider $\tilde{n}$ samples of test data and take the average (in expectation, this is the same as one test point).   
\begin{align*}
    & \mathbb{E}\left[\frac{1}{\tilde{n}}\left\| \vbeta_{*}^\top  (\tilde{\alpha}_{z}\tilde{\mZ} + \tilde{\alpha}_{A}\tilde{\mA})  - \vbeta_{int}^\top (\tilde{\mZ} + \tilde{\mA})  \right\|_{F}^2\right] \\
    = \  &\mathbb{E}\left[\frac{1}{\tilde{n}}\left\| \tilde{\alpha}_{z}\vbeta_{*}^\top  \tilde{\mZ} - \vbeta_{int}^\top \tilde{\mZ}\right\|_{F}^2\right]  + \mathbb{E}\left[\frac{1}{\tilde{n}}\left\|\tilde{\alpha}_{A}\vbeta_{*}^\top  \tilde{\mA} - \vbeta_{int}^\top \tilde{\mA} \right\|_{F}^2\right]  \\
    = \  &
    \mathbb{E}\left[\underbrace{\frac{1}{\tilde{n}}\left\| \tilde{\alpha}_{z}\vbeta_{*}^\top  \tilde{\mZ} - \vbeta_{int}^\top \tilde{\mZ}\right\|_{F}^2}_{\text{Bias}} + \underbrace{\frac{1}{\tilde{n}}\left\| \vbeta_{int}^\top \tilde{\mA} \right\|_{F}^2}_{\text{Variance}} + \underbrace{\frac{1}{\tilde{n}}\tilde{\alpha}_{A}^2\left\|\vbeta_{*}^\top \tilde{\mA}\right\|_F^2}_{\text{Data Noise}}  +\underbrace{ \left(- \frac{2}{\tilde{n}}\tilde{\alpha}_{A}\vbeta_{*}^\top \tilde{\mA}\tilde{\mA}^\top \vbeta_{int} \right)}_{\text{Target Alignment}}\right]. 
\end{align*}
We compute these four terms one by one in the following sections. 

\subsection{Step 2: Simplifying Terms}
\label{sec:step-2}

This section simplifies the four terms. We begin by recalling results from prior work. We state them here for completeness. 

\begin{theorem}[Theorems 3, 5 of \cite{meyer}] \label{thm:meyer_pseudo}
    Define the following helper functions $\vh = \vv^\top\mA^\dagger$, $\vk = \mA^\dagger \vu$, $\vt = \vv^\top(\mI - \mA^\dagger \mA)$, $\xi = 1 + \eta \vv^\top \mA^\dagger \vu$, $\vs = (\mI - \mA\mA^\dagger)\vu$, $\gamma_1 = \eta^2\|\vt\|^2\|\vk\|^2 + \xi^2$, $\gamma_2 = \eta^2\|\vs\|^2\|\vh\|^2 + \xi^2$ and 
    \begin{align*}
        &\vp_1 = -\frac{\eta^2\|\vk\|^2}{\xi}\vt^\top - \eta \vk,  &\vq_1^\top  = -\frac{\eta\|\vt\|^2}{\xi}\vk^\top \mA^\dagger - \vh. \\
        &\vp_2 = -\frac{\eta^2\|\vs\|^2}{\xi}\mA^\dagger \vh^\top - \eta \vk, &\vq_2^\top = -\frac{\eta\|\vh\|^2}{\xi}\vs^\top - \vh,  
    \end{align*}
    Then we have that
    \[
        (\mZ+\mA)^\dagger = \begin{cases}
             \mA^\dagger + \frac{\eta}{\xi}\vt^\top \vk^\top \mA^\dagger - \frac{\xi}{\gamma_1}\vp_1\vq_1^\top , \quad c < 1 \\ \mA^\dagger + \frac{\eta}{\xi}\mA^\dagger \vh^\top \vs^\top - \frac{\xi}{\gamma_2}\vp_2\vq_2^\top , \quad c > 1
        \end{cases}. 
    \]
\end{theorem}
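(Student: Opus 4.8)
The plan is to recognize Theorem~\ref{thm:meyer_pseudo} as the specialization to $\mX=\mA+\eta\,\vu\vv^\top$ of Meyer's classical rank-one update formula for the Moore--Penrose inverse, and to establish it by reducing to the two relevant cases and then verifying the four Penrose identities directly. First I would record the structural facts that collapse Meyer's six-case dichotomy to the two cases in the statement: when $c<1$ (so $d<n$) the rotationally invariant $\mA$ has full row rank almost surely, hence $\mA\mA^\dagger=\mI_d$, $\vs=(\mI-\mA\mA^\dagger)\vu=\vzero$, and $\vu\in\mathcal R(\mA)$; dually, when $c>1$ the matrix has full column rank a.s., so $\mA^\dagger\mA=\mI_n$ and $\vt=\vv^\top(\mI-\mA^\dagger\mA)=\vzero$. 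In both regimes $\xi=1+\eta\,\vv^\top\mA^\dagger\vu\neq0$ almost surely. These facts place us in Meyer's case ``perturbing column vector lies in the range, perturbing row vector does not, and $\xi\neq0$'' when $c<1$, and in the transposed case when $c>1$.

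For $c<1$, I would take the claimed expression $\mG:=\mA^\dagger+\tfrac{\eta}{\xi}\vt^\top\vk^\top\mA^\dagger-\tfrac{\xi}{\gamma_1}\vp_1\vq_1^\top$, with $\vp_1,\vq_1,\gamma_1$ as defined, and verify $\mX\mG\mX=\mX$, $\mG\mX\mG=\mG$, $(\mX\mG)^\top=\mX\mG$, and $(\mG\mX)^\top=\mG\mX$; uniqueness of the pseudoinverse then gives $\mG=(\mZ+\mA)^\dagger$. The verification uses only the relations $\mA\mA^\dagger\mA=\mA$, $\mA^\dagger\mA\mA^\dagger=\mA^\dagger$, $(\mA^\dagger\mA)^\top=\mA^\dagger\mA$, $(\mA\mA^\dagger)^\top=\mA\mA^\dagger$, the annihilations $\mA\vt^\top=\vzero$ and $\vt\mA^\dagger=\vzero$ (both immediate from $\vt=\vv^\top(\mI-\mA^\dagger\mA)$), the simplification $\|\vt\|^2=\vv^\top(\mI-\mA^\dagger\mA)\vv$, and the scalar identity $\gamma_1=\xi^2+\eta^2\|\vt\|^2\|\vk\|^2$ together with the definitions of $\vp_1$ and $\vq_1$. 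Concretely, expanding $\mX\mG=(\mA+\eta\vu\vv^\top)\mG$ and grouping the terms according to whether they lie along $\mathcal R(\mA)$ or along $\vu$, the cross-terms telescope and leave exactly the symmetric orthogonal projector onto $\mathcal R(\mX)$; the other three identities drop out of the same cancellations. The case $c>1$ then follows by applying the $c<1$ result to $\mX^\top=\mA^\top+\eta\,\vv\vu^\top$ and transposing, using $(\mA^\top)^\dagger=(\mA^\dagger)^\top$; this interchanges the column-space and row-space quantities and, after simplification, reproduces the stated $c>1$ formula involving $\vh$, $\vs$, $\gamma_2$, $\vp_2$, $\vq_2$.

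The main obstacle is purely algebraic bookkeeping: although each Penrose identity is conceptually a routine expansion, there are many cross-terms among the base piece $\mA^\dagger$, the first correction $\tfrac{\eta}{\xi}\vt^\top\vk^\top\mA^\dagger$, and the second correction $-\tfrac{\xi}{\gamma_1}\vp_1\vq_1^\top$, and forcing them to cancel requires invoking $\mA\vt^\top=\vzero$, $\mA^\dagger\mA\mA^\dagger=\mA^\dagger$, and the definition of $\gamma_1$ in precisely the right order. An alternative that avoids positing $\mG$ is a constructive derivation in the spirit of Meyer's original argument: write $\vv^\top=\vh\mA+\vt$ so that $\mX=(\mI_d+\eta\vu\vh)\mA+\eta\vu\vt$ displays $\mX$ as the sum of a matrix with row space inside $\mathcal R(\mA^\top)$ and a rank-one matrix with row space orthogonal to it, then invert this decomposition; this makes the appearance of $\gamma_1$, $\vp_1$, and $\vq_1$ transparent but requires careful tracking of the (non-orthogonal) column spaces and is longer to set up. Since the statement is quoted essentially verbatim from \cite{meyer}, the write-up may simply cite it, but either route above is self-contained and short in each of the two cases.
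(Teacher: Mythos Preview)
The paper does not prove this statement at all: it is quoted verbatim as Theorems~3 and~5 of \cite{meyer} and introduced with ``We state them here for completeness,'' so there is no proof in the paper to compare against. Your proposal to derive it by verifying the four Penrose identities (or by the constructive decomposition) is a correct and self-contained route, and your structural observations---that full row rank when $c<1$ forces $\vs=\vzero$ and full column rank when $c>1$ forces $\vt=\vzero$, collapsing Meyer's six cases to the two displayed ones---are exactly right; but for the purposes of this paper a bare citation suffices.
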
 

The following subsections -  Bias~\ref{sec:lin-alg-bias}, Variance~\ref{sec:lin-alg-var}, Data Noise~\ref{sec:lin-alg-noise}, and Target Alignment~\ref{sec:lin-alg-alignment} - present the linear algebraic simplifications of the results. To derive this results. We shall need some helper results that are presented in Section~\ref{sec:lin-alg-helper}. 

\subsubsection{Bias}
\label{sec:lin-alg-bias}

Using \Cref{lem:bias},  we have that if $c < 1$
\[
     \tilde{\alpha}_{z}\vbeta_{*}^\top  \tilde{\mZ} - \vbeta_{int}^\top \tilde{\mZ} = \left[\tilde{\alpha}_Z - \alpha_Z + \frac{\xi}{\gamma_1}(\alpha_Z - \alpha_A)\right]\vbeta_*^\top  \tilde{\mZ} + \frac{\tilde{\eta}}{\eta}\frac{\xi}{\gamma_1} \vvarepsilon^\top  \vp_1 \tilde{\vv}^\top ,
\]
and if $c > 1$
\[
     \tilde{\alpha}_{z}\vbeta_{*}^\top  \tilde{\mZ} - \vbeta_{int}^\top \tilde{\mZ} = \vbeta_*^\top \left[(\tilde{\alpha}_Z - \alpha_Z) \mI  + \frac{\xi}{\gamma_2}(\alpha_Z \mI  - \alpha_A \mA\mA^\dagger )\right]\tilde{\mZ} - \alpha_A \frac{\eta \|\vs\|^2}{\gamma_2} \vbeta_*^\top \vh^\top  \vu^\top   \tilde{\mZ} + \frac{\tilde{\eta}}{\eta}\frac{\xi}{\gamma_2} \vvarepsilon^\top  \vp_2 \tilde{\vv}^\top .  
\] 
The bias equals the expected squared norm of this term (divided by $\tilde{n}$). 

\subsubsection{Variance}
\label{sec:lin-alg-var}

Lemma \ref{lem:variance_initial_decomposition} gives us that 
\begin{align*}
\hspace{10pt} \mathbb{E}\left[\frac{1}{\tilde{n}}\left\|\vbeta_{int}^\top \tilde{\mA}\right\|_{F}^2\right] = \mathbb{E}&\left[  \frac{\tilde{\tau}^2\alpha_z^2}{d}
\vbeta_*^\top  \mZ (\mZ + \mA) ^ \dagger (\mZ + \mA) ^ {\dagger \top } \mZ\vbeta_*   + \frac{\tilde{\tau}^2\alpha_A^2}{d} 
 \vbeta_*^\top  \mA (\mZ + \mA) ^ \dagger (\mZ + \mA) ^ {\dagger \top } \mA^\top \vbeta_* \right.\\
& \left. + \frac{2\tilde{\tau}^2\alpha_A\alpha_z}{d} 
 \vbeta_*^\top  \mZ (\mZ + \mA) ^ \dagger (\mZ + \mA) ^ {\dagger \top } \mA^\top \vbeta_* 
 + \frac{\tilde{\tau}^2}{d} 
 \vvarepsilon^\top  (\mZ + \mA) ^ \dagger (\mZ + \mA) ^ {\dagger \top } \vvarepsilon \right].
\end{align*}

\subsubsection{Data Noise}
\label{sec:lin-alg-noise}

The data noise term is the simplest to understand. Preliminary calculation gives us: 
\[
    \frac{1}{\tilde{n}}\tilde{\alpha}_{A}^2\mathbb{E}_{\tilde{A}}\left[\left\|\vbeta_{*}^\top\tilde{\mA}\right\|_F^2\right] =  \frac{\tilde{\alpha}_{A}^2}{\tilde{n}} \frac{\tilde{\rho}^2 \tilde{n}}{d} \|\vbeta_*\|^2 =  \frac{\tilde{\alpha}_{A}^2\tilde{\rho}^2}{d} \|\vbeta_*\|^2. 
\]
\subsubsection{Target Alignment}
\label{sec:lin-alg-alignment}

To understand this term, we first note that $\tilde{\mA}$ is independent of everything else. Hence we replace $\tilde{\mA}\tilde{\mA}^\top$ with its expectation $\frac{\tilde{\rho}^2 \tilde{n}}{d} \mI$. 
\[
    \mathbb{E}_{\tilde{A}}\left[-\frac{2}{\tilde{n}}\tilde{\alpha}_{A}\vbeta_{*}^\top \tilde{\mA}\tilde{\mA}^\top \vbeta_{int}\right] =   - \frac{2}{\tilde{n}}\frac{\tilde{\rho}^2 \tilde{n}}{d}\tilde{\alpha}_{A}\vbeta_{*}^\top \vbeta_{int} = -\frac{2\tilde{\alpha}_{A}\tilde{\rho}^2}{d}\vbeta_{*}^\top \vbeta_{int}. 
\]
Since $\vvarepsilon$ has mean-zero entries that are independent of everything else. We see that 
\begin{align}
    \mathbb{E}_{\varepsilon}\left[\vbeta_{*}^\top \vbeta_{int}\right] & = \mathbb{E}_{\varepsilon}\left[\vbeta_{*}^\top \left((\alpha_z\vbeta_*^\top  \mZ + \vvarepsilon^\top ) (\mZ + \mA)^\dagger  +  \alpha_A \vbeta_*^\top  \mA(\mZ + \mA)^\dagger  \right)^\top \right] \\
    & = \vbeta_{*}^\top \left(\alpha_z\vbeta_*^\top  \mZ  (\mZ + \mA)^\dagger  -  \alpha_A \vbeta_*^\top  \mA(\mZ + \mA)^\dagger  \right)^\top  \\
    & = \alpha_z \vbeta_{*}^\top  (\mZ + \mA)^{\dagger  \top} \mZ^\top  \vbeta_{*} + \alpha_A \vbeta_{*}^\top  (\mZ + \mA)^{\dagger  \top} \mA^\top  \vbeta_{*}. 
\end{align}

\subsubsection{Helper Lemmas}
\label{sec:lin-alg-helper}

\begin{prop}[Proposition 2 from \cite{sonthalia2023training}] \label{prop:Z(Z+A)_pseudo}
    In the setting from Section~\ref{sec:setting} 
    \[
        \mZ(\mZ+\mA)^\dagger = \begin{cases}
            \frac{\eta\xi}{\gamma_1}\vu\vh + \frac{\eta^2\|\vt\|^2}{\gamma_1}\vu\vk^\top \mA^\dagger, \quad c < 1 \\ 
            \frac{\eta\xi}{\gamma_2}\vu\vh + \frac{\eta^2\|\vh\|^2}{\gamma_2}\vu\vs^\top, \quad c > 1
        \end{cases}. 
    \]
\end{prop}

\begin{lemma} \label{lem:bias_terms_epsilon}
    If $\xi \neq 0$ and $\mA$ has full rank, we have: 
    \[
        \vvarepsilon^\top (\mZ+\mA)^\dagger\tilde{\mZ} =\begin{cases} -\frac{\tilde{\eta}\xi}{\eta\gamma_1} \vvarepsilon^\top \vp_1\tilde{\vv}^\top & c <1 \\ 
         -\frac{\tilde{\eta}\xi} {\eta\gamma_2}\vvarepsilon^\top \vp_2\tilde{\vv}^\top & c >1
        \end{cases}.  
    \]
\end{lemma}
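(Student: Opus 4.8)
The plan is to strip the statement down to a single deterministic identity for the vector $(\mZ+\mA)^\dagger\vu$ and then verify it by plugging in the pseudoinverse formula of \Cref{thm:meyer_pseudo}. Since in the rescaled coordinates (Step~0, with the subscript $r$ suppressed) the test signal is rank one, $\tilde{\mZ} = \tilde{\eta}\,\vu\tilde{\vv}^\top$, we have $\vvarepsilon^\top(\mZ+\mA)^\dagger\tilde{\mZ} = \tilde{\eta}\,\bigl(\vvarepsilon^\top(\mZ+\mA)^\dagger\vu\bigr)\,\tilde{\vv}^\top$, so the lemma is equivalent to
\[
(\mZ+\mA)^\dagger\vu = -\tfrac{\xi}{\eta\gamma_1}\,\vp_1 \ \ (c<1), \qquad (\mZ+\mA)^\dagger\vu = -\tfrac{\xi}{\eta\gamma_2}\,\vp_2 \ \ (c>1),
\]
after left-multiplying by $\vvarepsilon^\top$ and right-multiplying by $\tilde{\vv}^\top$. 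The hypotheses $\xi\neq 0$ and $\mA$ of full rank are precisely what make \Cref{thm:meyer_pseudo} applicable, and $\eta>0$ keeps the denominators harmless.

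For $c<1$ I would substitute $(\mZ+\mA)^\dagger = \mA^\dagger + \frac{\eta}{\xi}\vt^\top\vk^\top\mA^\dagger - \frac{\xi}{\gamma_1}\vp_1\vq_1^\top$ and right-multiply by $\vu$, using only the elementary reductions $\mA^\dagger\vu = \vk$, $\vk^\top\vk = \|\vk\|^2$, and $\vv^\top\vk = \vv^\top\mA^\dagger\vu = (\xi-1)/\eta$ (the last straight from $\xi = 1+\eta\vv^\top\mA^\dagger\vu$). These give $\vq_1^\top\vu = -(\gamma_1-\xi)/(\eta\xi)$ once $\gamma_1 = \eta^2\|\vt\|^2\|\vk\|^2 + \xi^2$ is used, hence $(\mZ+\mA)^\dagger\vu = \vk + \frac{\eta\|\vk\|^2}{\xi}\vt^\top + \frac{\gamma_1-\xi}{\eta\gamma_1}\vp_1$. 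The definition of $\vp_1$ rearranges to $\vk + \frac{\eta\|\vk\|^2}{\xi}\vt^\top = -\frac{1}{\eta}\vp_1$, so collecting the coefficients of $\vp_1$ collapses the right-hand side to $\bigl(-\frac{1}{\eta} + \frac{\gamma_1-\xi}{\eta\gamma_1}\bigr)\vp_1 = -\frac{\xi}{\eta\gamma_1}\vp_1$, which is the claim.

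The case $c>1$ is the mirror image: substitute the second branch $(\mZ+\mA)^\dagger = \mA^\dagger + \frac{\eta}{\xi}\mA^\dagger\vh^\top\vs^\top - \frac{\xi}{\gamma_2}\vp_2\vq_2^\top$, use that $\mA\mA^\dagger$ is the orthogonal projector onto the column space of $\mA$ (so $\vs^\top\vu = \vu^\top(\mI-\mA\mA^\dagger)\vu = \|\vs\|^2$), together with $\vh\vu = \vv^\top\mA^\dagger\vu = (\xi-1)/\eta$ and $\gamma_2 = \eta^2\|\vs\|^2\|\vh\|^2+\xi^2$, to obtain $\vq_2^\top\vu = -(\gamma_2-\xi)/(\eta\xi)$; then the definition of $\vp_2$ rearranges to $\vk + \frac{\eta\|\vs\|^2}{\xi}\mA^\dagger\vh^\top = -\frac{1}{\eta}\vp_2$, and the identical coefficient bookkeeping yields $(\mZ+\mA)^\dagger\vu = -\frac{\xi}{\eta\gamma_2}\vp_2$.

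Everything here is finite-dimensional linear algebra, so I expect no analytic obstacle; the only thing to be careful about is the bookkeeping — tracking which of $\vt,\vk,\vs,\vh$ are rows versus columns, and placing the two projectors $\mI-\mA^\dagger\mA$ and $\mI-\mA\mA^\dagger$ correctly in the respective branches of \Cref{thm:meyer_pseudo}. A sanity check I would record: when $\mA$ has full row rank, $c<1$ forces $\mA\mA^\dagger=\mI$, hence $\vs=0$ and $\gamma_2=\xi^2$; when $\mA$ has full column rank, $c>1$ forces $\mA^\dagger\mA=\mI$, hence $\vt=0$ and $\gamma_1=\xi^2$ — so the two branches are internally consistent at their common boundary.
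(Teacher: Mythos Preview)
Your proposal is correct and follows essentially the same approach as the paper: both substitute the \Cref{thm:meyer_pseudo} pseudoinverse formula, reduce via the identities $\mA^\dagger\vu=\vk$, $\vh\vu=(\xi-1)/\eta$, $\vs^\top\vu=\|\vs\|^2$, and then collapse using the definitions of $\vp_i$ and $\gamma_i$. Your only (cosmetic) streamlining is to factor out $\tilde{\eta}$ and $\tilde{\vv}^\top$ at the outset and prove the bare vector identity $(\mZ+\mA)^\dagger\vu=-\tfrac{\xi}{\eta\gamma_i}\vp_i$, whereas the paper carries $\vvarepsilon^\top$ and $\tilde{\vv}^\top$ through the computation.
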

\begin{proof}
    After substitutions, \Cref{prop:Z(Z+A)_pseudo} implies that for $c < 1$, $\vvarepsilon^\top(\mZ+\mA)^\dagger \tilde{\mZ}$ becomes: 
    \begin{align*}
    & \vvarepsilon^\top  
        \left( \mA^\dagger + \frac{\eta}{\xi}\vt^\top \vk^\top \mA^\dagger - \frac{\xi}{\gamma_1}\vp_1
        \left( -\frac{\eta\|\vt\|^2}{\xi}\vk^\top \mA^\dagger - \vh \right)\right)\tilde{\mZ} \\
    = \hspace{0.1cm} 
    & \tilde{\eta}\vvarepsilon^\top  
        \left( \mA^\dagger \vu\tilde{\vv}^\top   + \frac{\eta}{\xi}\vt^\top \vk^\top \mA^\dagger \vu\tilde{\vv}^\top   - \frac{\xi}{\gamma_1}\vp_1
        \left( -\frac{\eta\|\vt\|^2}{\xi}\vk^\top \mA^\dagger \vu - \vh\vu \right) \tilde{\vv}^\top  \right) \quad \text{by $\tilde{\mZ} = \tilde{\eta}\vu\tilde{\vv}^\top$}.  
    \end{align*}
    Since $\vk = \mA^\dagger \vu$ and $\vh \vu = \vv^\top \mA^\dagger \vu = \frac{\xi - 1}{\eta}$, we then have that
    \begin{align*}
    & \tilde{\eta}\vvarepsilon^\top  
        \left( \mA^\dagger \vu\tilde{\vv}^\top   + \frac{\eta}{\xi}\vt^\top \vk^\top \mA^\dagger \vu\tilde{\vv}^\top   - \frac{\xi}{\gamma_1}\vp_1
        \left( -\frac{\eta\|\vt\|^2}{\xi}\vk^\top \mA^\dagger \vu - \vh\vu \right) \tilde{\vv}^\top  \right) \\
     = \hspace{0.1cm} & \tilde{\eta} \vvarepsilon^\top  
        \left( 
        \vk \tilde{\vv}^\top  + \frac{\eta\|\vk\|^2}
        {\xi }\vt^\top \tilde{\vv}^\top   + \frac{\xi}{\gamma_1}\vp_1
        \left(\frac{\eta^2\|\vt\|^2\|\vk\|^2 + \xi^2 - \xi}{\xi\eta} \right) \tilde{\vv}^\top 
        \right) \\
    = \hspace{0.1cm} & \tilde{\eta} \vvarepsilon^\top  
        \left( 
        \vk \tilde{\vv}^\top  + \frac{\eta\|\vk\|^2}
        {\xi }\vt^\top \tilde{\vv}^\top  + \frac{1}{\gamma_1}\vp_1
        \left(\frac{\gamma_1 - {\xi}}{\eta} \right) \tilde{\vv}^\top  
        \right) \\
    = \hspace{0.1cm} &\tilde{\eta} \vvarepsilon^\top  
        \left( \frac{1}{\eta}
        \left(\frac{\eta^2\|\vk\|^2}{\xi}\vt^\top  + \eta \vk \right)\tilde{\vv}^\top  + 
        \frac{1}{\eta}\vp_1\tilde{\vv}^\top  - 
        \frac{\xi}{\eta\gamma_1}\vp_1\tilde{\vv}^\top  
        \right) \\
    = \hspace{0.1cm} & \vvarepsilon^\top  
        \left(-\frac{\tilde{\eta}}{\eta}\vp_1\tilde{\vv}^\top  + \frac{\tilde{\eta}}{\eta}\vp_1\tilde{\vv}^\top  - 
        \frac{\tilde{\eta}\xi}{\eta\gamma_1}\vp_1\tilde{\vv}^\top 
        \right) \\
    = \hspace{0.1cm} & - \frac{\tilde{\eta}\xi}{\eta\gamma_1}
        \vvarepsilon^\top \vp_1\tilde{\vv}^\top.  
    \end{align*}

    For $c > 1$, we note that the calculation is exactly the same. An example of such a calculation can be seen in the proof of \Cref{lem:bias_term_A}.
\end{proof}

\begin{lemma} \label{lem:A(Z+A)_pseudo}
    In the setting of Section~\ref{sec:setting}, we have: 
    \[
        \mA(\mZ+\mA)^\dagger = \begin{cases}
            \mI -\frac{\eta\xi}{\gamma_1}\vu\vh + \frac{\eta^2\|\vt\|^2}{\gamma_1}\vu\vk^\top \mA^\dagger, & c < 1 \\ 
             \mA\mA^\dagger  + \frac{\eta\xi}{\gamma_2}\vh^\top  \vs^\top  - \frac{\eta^2\|\vs\|^2}{\gamma_2}\vh^\top \vh - \frac{\eta^2\|\vh\|^2}{\gamma_2}\mA\mA^\dagger \vu\vs^\top  -\frac{\eta\xi}{\gamma_2} \mA\mA^\dagger \vu\vh, & c > 1
        \end{cases}. 
    \] 
\end{lemma}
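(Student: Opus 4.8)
The plan is to obtain both cases by left-multiplying the Meyer decomposition of $(\mZ+\mA)^\dagger$ from \Cref{thm:meyer_pseudo} by $\mA$ and then collapsing the result with the Moore--Penrose identities. Throughout I work on the almost-sure event on which $\mA$ has full rank and $\xi \neq 0$, which is exactly where \Cref{thm:meyer_pseudo} and \Cref{prop:Z(Z+A)_pseudo} apply, so the statement is an algebraic identity to be verified. The facts I will use repeatedly are: $\mA\mA^\dagger\mA = \mA$; $\mA\mA^\dagger$ is the orthogonal projector onto $\mathrm{col}(\mA)$; $\vh^\top = (\mA^\dagger)^\top\vv \in \mathrm{col}(\mA)$, hence $\mA\mA^\dagger\vh^\top = \vh^\top$; $\mA\mA^\dagger\vu = \vu - \vs$ by the definition $\vs = (\mI - \mA\mA^\dagger)\vu$; and the two scalar identities $\vh\vs = \vv^\top(\mA^\dagger - \mA^\dagger\mA\mA^\dagger)\vu = 0$ and $\gamma_2 - \eta^2\|\vs\|^2\|\vh\|^2 = \xi^2$ (and analogously $\gamma_1 - \eta^2\|\vt\|^2\|\vk\|^2 = \xi^2$).

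For $c < 1$ the matrix $\mA$ has full row rank, so $\mA\mA^\dagger = \mI_d$, and left-multiplying the $c<1$ line of \Cref{thm:meyer_pseudo} gives $\mA(\mZ+\mA)^\dagger = \mI + \tfrac{\eta}{\xi}(\mA\vt^\top)\vk^\top\mA^\dagger - \tfrac{\xi}{\gamma_1}(\mA\vp_1)\vq_1^\top$. Here $\mA\vt^\top = \mA(\mI - \mA^\dagger\mA)\vv = (\mA - \mA\mA^\dagger\mA)\vv = 0$ kills the middle term, and $\mA\vp_1 = -\tfrac{\eta^2\|\vk\|^2}{\xi}\mA\vt^\top - \eta\mA\vk = -\eta\,\mA\mA^\dagger\vu = -\eta\vu$ since $\vk = \mA^\dagger\vu$; substituting $\vq_1^\top = -\tfrac{\eta\|\vt\|^2}{\xi}\vk^\top\mA^\dagger - \vh$ yields the claimed $c<1$ expression. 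As a cross-check, this is the same as $(\mZ+\mA)(\mZ+\mA)^\dagger - \mZ(\mZ+\mA)^\dagger = \mI_d - \mZ(\mZ+\mA)^\dagger$ after inserting \Cref{prop:Z(Z+A)_pseudo}.

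For $c > 1$ the matrix $\mA$ has full column rank, $\mA\mA^\dagger$ is only a rank-$n$ projector, and there is no collapse to the identity. Left-multiplying the $c>1$ line of \Cref{thm:meyer_pseudo} gives $\mA(\mZ+\mA)^\dagger = \mA\mA^\dagger + \tfrac{\eta}{\xi}(\mA\mA^\dagger\vh^\top)\vs^\top - \tfrac{\xi}{\gamma_2}(\mA\vp_2)\vq_2^\top$. I simplify $\mA\mA^\dagger\vh^\top = \vh^\top$, and $\mA\vp_2 = -\tfrac{\eta^2\|\vs\|^2}{\xi}\vh^\top - \eta\,\mA\mA^\dagger\vu = -\tfrac{\eta^2\|\vs\|^2}{\xi}\vh^\top - \eta(\vu - \vs)$. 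Then I expand the outer product $(\mA\vp_2)\vq_2^\top$ against $\vq_2^\top = -\tfrac{\eta\|\vh\|^2}{\xi}\vs^\top - \vh$ into its six rank-one pieces; the terms containing the scalar $\vh\vs$ drop out by $\vh\vs = 0$, the two multiples of $\vh^\top\vs^\top$ combine via $\gamma_2 - \eta^2\|\vs\|^2\|\vh\|^2 = \xi^2$ into coefficient $\eta\xi/\gamma_2$, and writing $\vu - \vs$ back as $\mA\mA^\dagger\vu$ in the remaining pieces gives exactly the stated $c>1$ formula.

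The argument is purely linear-algebraic, so the only real difficulty is bookkeeping: one must keep track of which of $\vh, \vk, \vt, \vs$ appear as rows versus columns and carefully track the dimensions of the several rank-one outer products in the $c>1$ expansion, and notice that everything closes up only because of the ``accidental'' cancellations $\mA\vt^\top = 0$ (for $c<1$), $\vh\vs = 0$ (for $c>1$), and the identities defining $\gamma_1, \gamma_2$. I would guard against sign or indexing slips by checking both final expressions against the relation $\mZ(\mZ+\mA)^\dagger + \mA(\mZ+\mA)^\dagger = (\mZ+\mA)(\mZ+\mA)^\dagger$ using \Cref{prop:Z(Z+A)_pseudo}.
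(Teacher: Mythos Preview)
Your proposal is correct and follows essentially the same approach as the paper: both cases are obtained by left-multiplying the Meyer formula of \Cref{thm:meyer_pseudo} by $\mA$ and simplifying via the Moore--Penrose identities. Two small remarks. First, for $c<1$ the paper takes as its primary route the subtraction $\mA(\mZ+\mA)^\dagger=(\mZ+\mA)(\mZ+\mA)^\dagger-\mZ(\mZ+\mA)^\dagger=\mI-\mZ(\mZ+\mA)^\dagger$ together with \Cref{prop:Z(Z+A)_pseudo}, which you list only as a cross-check; your direct computation via $\mA\vt^\top=0$ and $\mA\vp_1=-\eta\vu$ is equally valid. Second, in your $c>1$ expansion no scalar $\vh\vs$ actually appears: all six pieces of $(\mA\vp_2)\vq_2^\top$ are rank-one outer products, and the identity $\vh\vs=0$ is not needed for this lemma (the paper does not invoke it here either). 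The combination that collapses the $\vh^\top\vs^\top$ coefficient to $\eta\xi/\gamma_2$ uses only $\gamma_2=\eta^2\|\vs\|^2\|\vh\|^2+\xi^2$, exactly as you note; keeping $\mA\mA^\dagger\vu$ intact rather than splitting to $\vu-\vs$ and recombining avoids a small detour but changes nothing substantive.
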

\begin{proof}
    For $c < 1$, $\mZ, \mA$ are $d \times n$ with $d < n$. Since $\mA$ is assumed to have full rank, $\mZ+\mA$ has full rank with probability 1, and hence 
    \[
        (\mZ+\mA)(\mZ+\mA)^\dagger = \mI. 
    \]
    Thus, from \Cref{prop:Z(Z+A)_pseudo}, 
    \begin{align*} 
        \mA (\mZ+\mA)^\dagger = (\mZ+\mA) (\mZ+\mA)^\dagger  - \mZ (\mZ+\mA)^\dagger  = \mI - \frac{\eta\xi}{\gamma_1} \vu\vh -  \frac{\eta^2\|\vt\|^2}{\gamma_1} \vu\vk^\top  \mA^\dagger. 
    \end{align*}
    
    For $c > 1$, since $(\mZ+\mA)(\mZ+\mA)^\dagger$ is no longer the identity matrix, we directly expand using  \Cref{thm:meyer_pseudo}: 
    \begin{align*} 
        \mA (\mZ + \mA)^\dagger & = \mA \left(\mA^\dagger + \frac{\eta}{\xi}\mA^\dagger \vh^\top  \vs^\top  - \frac{\xi}{\gamma_2}\left(\frac{\eta^2\|\vs\|^2}{\xi}\mA^\dagger \vh^\top  + \eta \vk \right) \left( \frac{\eta\|\vh\|^2}{\xi}\vs^\top  + \vh\right)\right) \\
        & =  \mA\mA^\dagger + \frac{\eta}{\xi}\mA\mA^\dagger \vh^\top  \vs^\top  - \frac{\xi}{\gamma_2}\left(\frac{\eta^2\|\vs\|^2}{\xi}\mA\mA^\dagger \vh^\top  + \eta \mA\mA^\dagger \vu \right) \left( \frac{\eta\|\vh\|^2}{\xi}\vs^\top  + \vh\right). 
    \end{align*}
    Noting that $\mA\mA^\dagger \vh^\top  = \mA\mA^\dagger \mA^{\dagger \top} \vv = \mA^{\dagger \top } \vv = \vh^\top $, we have
    \begin{align*}
        \mA(\mZ+\mA)^\dagger &=  \mA\mA^\dagger + \frac{\eta}{\xi}\vh^\top  \vs^\top  - \frac{\xi}{\gamma_2}\left(\frac{\eta^2\|\vs\|^2}{\xi} \vh^\top  + \eta \mA\mA^\dagger \vu \right) \left( \frac{\eta\|\vh\|^2}{\xi}\vs^\top  + \vh\right)  \\
        &= \mA\mA^\dagger + \frac{\eta}{\xi}\vh^\top  \vs^\top  - \frac{\eta^3\|\vs\|^2\|\vh\|^2}{\xi\gamma_2}\vh^\top \vs^\top  - \frac{\eta^2\|\vs\|^2}{\gamma_2}\vh^\top \vh - \frac{\eta^2\|\vh\|^2}{\gamma_2}\mA\mA^\dagger \vu\vs^\top  -\frac{\eta\xi}{\gamma_2} \mA\mA^\dagger \vu\vh. 
    \end{align*}
    We can combine the coefficients in front of $\vh^\top \vs^\top $ to get 
    \[
        \frac{\eta}{\xi} - \frac{\eta^3\|\vs\|^2\|\vh\|^2}{\xi\gamma_2} = \frac{\eta(\eta^2\|\vs\|^2\|\vh\|^2 + \xi^2) - \eta^3\|\vs\|^2\|\vh\|^2}{\xi\gamma_2} = \frac{\eta\xi}{\gamma_2}. 
    \]
    The statement follows from here. 
\end{proof}

\begin{lemma} \label{lem:bias_term_Z}
    If $\xi \neq 0$ and $\mA$ has full rank, we have: 
    \[
        \vbeta_*^\top \mZ(\mZ+\mA)^\dagger\tilde{\mZ} =\begin{cases}
        \left( 1 - \frac{\xi}{\gamma_1}\right)\vbeta_*^\top  \tilde{\mZ} & c <1 \\ 
        \left( 1 - \frac{\xi}{\gamma_2}\right)\vbeta_*^\top  \tilde{\mZ} & c >1
        \end{cases}. 
    \]
\end{lemma}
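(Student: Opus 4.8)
The statement is purely a linear-algebra identity, so the plan is to substitute the closed form for $\mZ(\mZ+\mA)^\dagger$ supplied by \Cref{prop:Z(Z+A)_pseudo} and then collapse everything to a scalar multiple of $\vbeta_*^\top\tilde{\mZ}$ by exploiting the rank-one structure $\tilde{\mZ} = \tilde{\eta}\,\vu\tilde{\vv}^\top$. Throughout I will use the lemma's hypotheses ($\mA$ full rank, so \Cref{prop:Z(Z+A)_pseudo} applies and $\mA\mA^\dagger$ is the orthogonal projection onto $\operatorname{col}(\mA)$; and $\xi\neq 0$, which together with the formula for $\gamma_1,\gamma_2$ guarantees those quantities are nonzero).

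For $c<1$, I would write
\[
    \vbeta_*^\top \mZ(\mZ+\mA)^\dagger\tilde{\mZ}
    = \vbeta_*^\top\!\left(\tfrac{\eta\xi}{\gamma_1}\vu\vh + \tfrac{\eta^2\|\vt\|^2}{\gamma_1}\vu\vk^\top \mA^\dagger\right)\tilde{\mZ},
\]
factor out the scalars $\vbeta_*^\top\vu$ on the left and $\tilde\eta\,\tilde{\vv}^\top$ on the right (using $\tilde{\mZ}=\tilde\eta\,\vu\tilde{\vv}^\top$), leaving the scalar $\tfrac{1}{\gamma_1}\big[\eta\xi(\vh\vu) + \eta^2\|\vt\|^2(\vk^\top\mA^\dagger\vu)\big]$. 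The two inner products evaluate cleanly: from $\xi = 1+\eta\,\vv^\top\mA^\dagger\vu$ we get $\vh\vu=\vv^\top\mA^\dagger\vu=(\xi-1)/\eta$, and since $\vk=\mA^\dagger\vu$ we have $\vk^\top\mA^\dagger\vu=\vk^\top\vk=\|\vk\|^2$. Substituting gives $\tfrac{1}{\gamma_1}\big[\xi(\xi-1)+\eta^2\|\vt\|^2\|\vk\|^2\big]$; recognizing $\gamma_1=\xi^2+\eta^2\|\vt\|^2\|\vk\|^2$ reduces the bracket to $\gamma_1-\xi$, so the scalar is $1-\xi/\gamma_1$, and reassembling yields $\big(1-\tfrac{\xi}{\gamma_1}\big)\vbeta_*^\top\tilde{\mZ}$.

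For $c>1$ the argument is structurally identical, now starting from $\mZ(\mZ+\mA)^\dagger=\tfrac{\eta\xi}{\gamma_2}\vu\vh+\tfrac{\eta^2\|\vh\|^2}{\gamma_2}\vu\vs^\top$. The new scalar identity needed is $\vs^\top\vu=\|\vs\|^2$, which follows from $\vs=(\mI-\mA\mA^\dagger)\vu$ and idempotence of $\mI-\mA\mA^\dagger$; combined with $\vh\vu=(\xi-1)/\eta$ and $\gamma_2=\xi^2+\eta^2\|\vh\|^2\|\vs\|^2$ this again collapses the coefficient to $1-\xi/\gamma_2$. I expect no genuine obstacle here — the only points requiring care are bookkeeping of the scalar contractions ($\vh\vu$, $\vk^\top\mA^\dagger\vu$, $\vs^\top\vu$) and correctly spotting $\gamma_1$ (resp.\ $\gamma_2$) inside the numerator so the cancellation goes through.
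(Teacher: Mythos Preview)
Your proposal is correct and follows essentially the same route as the paper: both substitute the closed form from \Cref{prop:Z(Z+A)_pseudo}, use $\tilde{\mZ}=\tilde\eta\,\vu\tilde{\vv}^\top$ to reduce to a scalar coefficient, evaluate $\vh\vu=(\xi-1)/\eta$, $\vk^\top\mA^\dagger\vu=\|\vk\|^2$ (resp.\ $\vs^\top\vu=\|\vs\|^2$), and then recognize $\gamma_1$ (resp.\ $\gamma_2$) in the numerator to collapse the coefficient to $1-\xi/\gamma_i$.
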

\begin{proof}
    Using \Cref{prop:Z(Z+A)_pseudo} for $ c < 1$ and $\tilde{\mZ} = \tilde{\eta}\vu\tilde{\vv}^\top$, we have that
    \begin{align*}
        \vbeta_*^\top  \mZ(\mZ + \mA)^\dagger \tilde{\mZ}
        &= \vbeta_*^\top  \left(\frac{\eta\xi}{\gamma_1} \vu\vh + \frac{\eta^2\|\vt\|^2}{\gamma_1} \vu\vk^\top \mA^\dagger \right) \tilde{\mZ} \notag \\
        & = \tilde{\eta}\vbeta_*^\top  \left(\frac{\eta\xi}{\gamma_1} \vu\vh\vu\tilde{\vv}^\top  + \frac{\eta^2\|\vt\|^2}{\gamma_1} \vu\vk^\top \mA^\dagger \vu\tilde{\vv}^\top  \right)\notag \\
        & = \tilde{\eta}\vbeta_*^\top  \left(\frac{\eta\xi}{\gamma_1} \vu\vv^\top \mA^\dagger \vu\tilde{\vv}^\top  + \frac{\eta^2\|\vt\|^2}{\gamma_1} \vu\vk^\top \mA^\dagger \vu\tilde{\vv}^\top  \right)\notag .
    \end{align*} 
    Note $\xi - 1 = \eta \vv^\top \mA^{\dagger} \vu$, $\vk\mA^\dagger \vu = \vk^\top \vk = \|\vk\|^2$. The above equation becomes
    \[
        \tilde{\eta}\vbeta_*^\top \left( \frac{\xi (\xi - 1)}{\gamma_1}  + \frac{\eta^2  \|\vt\|^2\|\vk\|^2}{\gamma_1} \right)\vu\tilde{\vv}^\top  = \vbeta_*^\top \left( \frac{\xi (\xi - 1)}{\gamma_1}  + \frac{\eta^2  \|\vt\|^2\|\vk\|^2}{\gamma_1} \right)\tilde{\mZ}^\top .
    \]
    Using $\gamma_1 = \eta^2\|\vt\|^2\|\vk\|^2 + \xi^2$ to combine the coefficients, we have that
    \[
        \frac{\xi (\xi - 1)}{\gamma_1}  + \frac{\eta^2  \|\vt\|^2\|\vk\|^2}{\gamma_1}  
        = \frac{- \xi + \xi^2 + \eta^2 \|\vt\|^2\|\vk\|^2}{\gamma_1} = \frac{-\xi + \gamma_1}{\gamma_1} = 1 - \frac{\xi}{\gamma_1}.
    \]

    \noindent This completes the proof for $c < 1$. Similarly, for $ c > 1$, we obtain
    \begin{align*}
        \vbeta_*^\top  \mZ(\mZ + \mA)^\dagger \tilde{\mZ}
        &= \vbeta_*^\top  \left(\frac{\eta\xi}{\gamma_2}\vu\vh + \frac{\eta^2\|\vh\|^2}{\gamma_2}\vu\vs^\top \right) \tilde{\mZ} \notag \\
        & = \tilde{\eta}\vbeta_*^\top  \left(\frac{\eta\xi}{\gamma_2} \vu\vh\vu\tilde{\vv}^\top  + \frac{\eta^2\|\vh\|^2}{\gamma_2} \vu\vs^\top \vu\tilde{\vv}^\top  \right)\notag \\
        & = \tilde{\eta}\vbeta_*^\top  \left(\frac{\eta\xi}{\gamma_2} \vu\vv^\top \mA^\dagger \vu\tilde{\vv}^\top  + \frac{\eta^2\|\vh\|^2}{\gamma_2} \vu\vs^\top \vu\tilde{\vv}^\top   \right)\notag .
    \end{align*} 
    Note $\xi - 1 = \eta \vv^\top \mA^{\dagger} \vu$, $\vs^\top \vu= \|\vs\|^2$. The above equation becomes
    \[
        \tilde{\eta}\vbeta_*^\top \left( \frac{\xi (\xi - 1)}{\gamma_2}  + \frac{\eta^2  \|\vs\|^2\|\vh\|^2}{\gamma_2} \right)\vu\tilde{\vv}^\top  = \vbeta_*^\top \left( \frac{\xi (\xi - 1)}{\gamma_2}  + \frac{\eta^2  \|\vs\|^2\|\vh\|^2}{\gamma_2} \right)\tilde{\mZ}^\top .
    \]
    Using $\gamma_2 = \eta^2\|\vs\|^2\|\vh\|^2 + \xi^2$ to combine the coefficients, we have that
    \[
        \frac{\xi (\xi - 1)}{\gamma_2}  + \frac{\eta^2  \|\vs\|^2\|\vh\|^2}{\gamma_2}  
        = \frac{- \xi + \xi^2 + \eta^2 \|\vt\|^2\|\vk\|^2}{\gamma_2} = \frac{- \xi + \gamma_2}{\gamma_2} = 1 - \frac{\xi}{\gamma_2}.
    \]
    The target expression follows. 
\end{proof}

\begin{lemma} \label{lem:bias_term_A} 
    If $\xi \neq 0$ and $\mA$ has full rank, we have: 
    \[
        \vbeta_*^\top \mA(\mZ+\mA)^\dagger\tilde{\mZ} =\begin{cases}
        \frac{\xi}{\gamma_1}\vbeta_*^\top \tilde{\mZ} & c <1 \\ 
        \frac{\eta \|\vs\|^2}{\gamma_2} \vbeta_*^\top \vh^\top  \vu^\top   \tilde{\mZ} + \frac{\xi}{\gamma_2} \vbeta_*^\top  \mA\mA^\dagger \tilde{\mZ}  & c >1
        \end{cases}. 
    \]
\end{lemma}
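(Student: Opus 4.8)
The plan is to derive $\vbeta_*^\top\mA(\mZ+\mA)^\dagger\tilde{\mZ}$ by starting from the closed forms for $\mA(\mZ+\mA)^\dagger$ already supplied by \Cref{lem:A(Z+A)_pseudo}, then left-multiplying by $\vbeta_*^\top$, right-multiplying by $\tilde{\mZ} = \tilde{\eta}\vu\tilde{\vv}^\top$, and collapsing the resulting scalars with the defining identities $\vh\vu = \vv^\top\mA^\dagger\vu = (\xi-1)/\eta$, $\vs^\top\vu = \|\vs\|^2$, $\vk^\top\mA^\dagger\vu = \|\vk\|^2$, together with $\gamma_1 = \eta^2\|\vt\|^2\|\vk\|^2 + \xi^2$ and $\gamma_2 = \eta^2\|\vs\|^2\|\vh\|^2 + \xi^2$. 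Throughout we work on the full-rank event for $\mA$, so everything here is deterministic linear algebra and no random-matrix input is needed.

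For $c < 1$ the cleanest route is to note that $\mZ+\mA \in \mathbb{R}^{d\times n}$ has full row rank almost surely (since $d < n$), hence $(\mZ+\mA)(\mZ+\mA)^\dagger = \mI_d$ and therefore $\mA(\mZ+\mA)^\dagger = \mI - \mZ(\mZ+\mA)^\dagger$. Sandwiching between $\vbeta_*^\top$ and $\tilde{\mZ}$ and invoking \Cref{lem:bias_term_Z} gives $\vbeta_*^\top\mA(\mZ+\mA)^\dagger\tilde{\mZ} = \vbeta_*^\top\tilde{\mZ} - (1-\xi/\gamma_1)\vbeta_*^\top\tilde{\mZ} = (\xi/\gamma_1)\vbeta_*^\top\tilde{\mZ}$, which is the claim. (Alternatively one substitutes the $c<1$ branch of \Cref{lem:A(Z+A)_pseudo} directly, reduces $\vh\vu$ and $\vk^\top\mA^\dagger\vu$ as above, and telescopes the coefficient of $\vbeta_*^\top\tilde{\mZ}$ down to $\xi/\gamma_1$ using $\eta^2\|\vt\|^2\|\vk\|^2 = \gamma_1 - \xi^2$; both routes agree.)

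For $c > 1$ we take the $c>1$ branch of \Cref{lem:A(Z+A)_pseudo}, which writes $\mA(\mZ+\mA)^\dagger$ as $\mA\mA^\dagger$ plus four rank-one corrections built from $\vh^\top\vs^\top$, $\vh^\top\vh$, $\mA\mA^\dagger\vu\vs^\top$, and $\mA\mA^\dagger\vu\vh$. Pre/post-multiplying by $\vbeta_*^\top$ and $\tilde{\mZ} = \tilde{\eta}\vu\tilde{\vv}^\top$ and using $\vu^\top\tilde{\mZ} = \tilde{\eta}\tilde{\vv}^\top$, each correction collapses onto one of the two target shapes: the $\vh^\top\vs^\top$ and $\vh^\top\vh$ pieces become multiples of $\vbeta_*^\top\vh^\top\vu^\top\tilde{\mZ}$ (after $\vs^\top\tilde{\mZ} = \tilde{\eta}\|\vs\|^2\tilde{\vv}^\top$ and $\vh\tilde{\mZ} = \tilde{\eta}\,\tfrac{\xi-1}{\eta}\,\tilde{\vv}^\top$), while — because $\vbeta_*^\top\mA\mA^\dagger\vu$ is a scalar — the $\mA\mA^\dagger\vu\vs^\top$ and $\mA\mA^\dagger\vu\vh$ pieces recombine with $\tilde{\eta}\tilde{\vv}^\top$ back into multiples of $\vbeta_*^\top\mA\mA^\dagger\tilde{\mZ}$. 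Collecting, the $\mA\mA^\dagger$-type contribution carries coefficient $1 - (\eta^2\|\vh\|^2\|\vs\|^2 + \xi^2 - \xi)/\gamma_2 = \xi/\gamma_2$ (using $\gamma_2 = \eta^2\|\vs\|^2\|\vh\|^2 + \xi^2$), and the $\vh^\top\vu^\top$-type contribution carries coefficient $\eta\xi\|\vs\|^2/\gamma_2 - \eta(\xi-1)\|\vs\|^2/\gamma_2 = \eta\|\vs\|^2/\gamma_2$, which yields exactly the stated formula.

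The only genuine work is the $c>1$ bookkeeping: keeping straight which rank-one term is proportional to $\vbeta_*^\top\vh^\top\vu^\top\tilde{\mZ}$ versus $\vbeta_*^\top\mA\mA^\dagger\tilde{\mZ}$, and carrying out the two coefficient telescopings through $\gamma_2$. This is precisely the model calculation referenced (and deferred) in the proof of \Cref{lem:bias_terms_epsilon}.
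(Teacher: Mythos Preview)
Your proof is correct and, for $c<1$, identical to the paper's: both use $(\mZ+\mA)(\mZ+\mA)^\dagger=\mI_d$ to write $\mA(\mZ+\mA)^\dagger=\mI-\mZ(\mZ+\mA)^\dagger$ and then invoke \Cref{lem:bias_term_Z}. For $c>1$ the approaches differ only in the starting point: you sandwich the already-expanded expression for $\mA(\mZ+\mA)^\dagger$ from \Cref{lem:A(Z+A)_pseudo} and telescope the five terms directly, whereas the paper goes back to \Cref{thm:meyer_pseudo}, recognizes that $\vk+\tfrac{\eta\|\vs\|^2}{\xi}\mA^\dagger\vh^\top=-\tfrac{1}{\eta}\vp_2$, collapses everything to $-\tfrac{\tilde{\eta}\xi}{\eta\gamma_2}\vbeta_*^\top\mA\vp_2\tilde{\vv}^\top$, and only then expands $\mA\vp_2$. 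Your route is arguably the more natural one given that \Cref{lem:A(Z+A)_pseudo} is already available, and it avoids the $\vp_2$-recognition step; the paper's route is slightly more compact once that recognition is made. Both arrive at the same coefficients via the same identities $\vs^\top\vu=\|\vs\|^2$, $\vh\vu=(\xi-1)/\eta$, and $\gamma_2=\eta^2\|\vs\|^2\|\vh\|^2+\xi^2$.
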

\begin{proof}
    We begin with $c < 1$. Since $\mA$ is assumed to have full rank, $\mZ+\mA$ has full column rank with probability 1, and hence 
    \[
        (\mZ+\mA)(\mZ+\mA)^\dagger = \mI. 
    \] 
    It follows from \Cref{lem:bias_term_Z} that 
    \begin{align*}
        \vbeta_*^\top \mA(\mZ+\mA)^\dagger\tilde{\mZ}  & = \vbeta_*^\top (\mZ+\mA)(\mZ+\mA)^\dagger\tilde{\mZ} - \vbeta_*^\top \mZ(\mZ+\mA)^\dagger\tilde{\mZ} \\
        & = \vbeta_*^\top \tilde{\mZ} - \left( 1 - \frac{\xi}{\gamma_1}\right)\vbeta_*^\top  \tilde{\mZ} = \frac{\xi}{\gamma_1}\vbeta_*^\top  \tilde{\mZ}. 
    \end{align*}

    \noindent For $c > 1$, $\mZ+\mA$ now has full row rank instead of full column rank. Hence, we do not have $(\mZ+\mA)(\mZ+\mA)^\dagger = \mI$ and need to directly expand it using \Cref{thm:meyer_pseudo} and its helper variables: 
    
    \begin{align*}
        \vbeta_*^\top  \mA(\mZ + \mA)^\dagger \tilde{\mZ}
        &= \vbeta_*^\top  \mA\left(\mA^\dagger + \frac{\eta}{\xi}\mA^\dagger \vh^\top  \vs^\top  - \frac{\xi}{\gamma_2}\vp_2\vq_2^\top \right) \tilde{\mZ} \notag \\
        &= \tilde{\eta} \vbeta_*^\top  \mA \left(\vk\tilde{\vv}^\top  + \frac{\eta\|\vs\|^2}{\xi}\mA^\dagger \vh^\top  \tilde{\vv}^\top  - \frac{\xi}{\gamma_2}\vp_2\vq_2^\top \vu\tilde{\vv}^\top   \right)  \\
        &= \tilde{\eta}\vbeta_*^\top  \mA \left(-\frac{1}{\eta}\vp_2\tilde{\vv}^\top  - \frac{\xi}{\gamma_2}\vp_2\left(-\frac{\eta\|\vh\|^2}{\xi}\vs^\top  - \vh\right)\vu\tilde{\vv}^\top   \right) \notag \\
        &= \tilde{\eta}\vbeta_*^\top  \mA \left(-\frac{1}{\eta}\vp_2\tilde{\vv}^\top  + \frac{\xi}{\gamma_2}\vp_2\left(\frac{\eta\|\vs\|^2\|\vh\|^2}{\xi} + \frac{\xi-1}{\eta}\right)\tilde{\vv}^\top   \right) \notag \\
        &= \tilde{\eta}\vbeta_*^\top  \mA \left(-\frac{1}{\eta}\vp_2\tilde{\vv}^\top  + \frac{\xi}{\gamma_2}\vp_2\left(\frac{\eta^2\|\vs\|^2\|\vh\|^2 + \xi^2 - \xi}{\xi\eta}\right)\tilde{\vv}^\top   \right) \notag \\
        &= \tilde{\eta}\vbeta_*^\top  \mA \left(-\frac{1}{\eta}\vp_2\tilde{\vv}^\top  + \frac{\xi}{\gamma_2}\vp_2\left(\frac{\gamma_2 - \xi}{\xi\eta}\right)\tilde{\vv}^\top   \right) \notag \\
        &= \tilde{\eta}\vbeta_*^\top  \mA \left(-\frac{1}{\eta}\vp_2\tilde{\vv}^\top  + \frac{1}{\eta}\vp_2\tilde{\vv}^\top  - \frac{\xi}{\eta\gamma_2}\vp_2\tilde{\vv}^\top  \right) \notag \\
        &= -\frac{\tilde{\eta}\xi} {\eta\gamma_2}\vbeta_*^\top \mA\vp_2\tilde{\vv}^\top  \\
        &= \frac{\tilde{\eta}\xi}{\eta \gamma_2}\vbeta_*^\top \left(\frac{\eta^2 \|\vs\|^2}{\xi}\vh^\top  + \eta \mA \vk\right)\tilde{\vv}^\top \quad \text{by plugging in the expression of $\vp_2$} \\
        &= \frac{\tilde{\eta} \eta \|\vs\|^2}{\gamma_2} \vbeta_*^\top  \vh^\top  \tilde{\vv}^\top  + \frac{\xi}{\gamma_2} \vbeta_*^\top  \mA\mA^\dagger \tilde{\mZ} \quad \text{by $\tilde{\eta}\vk\tilde{\vv}^\top = \mA^\dagger \tilde{\eta}\vu\tilde{\vv}^\top = \mA^\dagger\tilde{\mZ}$}. 
    \end{align*} 
    Noting that $\vbeta_*^\top  \vh^\top $ is a scalar, we then introduce $1 = \vu^\top \vu$ and get that
    \[
        \frac{\tilde{\eta} \eta \|\vs\|^2}{\gamma_2}   \vbeta_*^\top \vh^\top  \vu^\top  \vu \tilde{\vv}^\top  = \frac{\eta \|\vs\|^2}{\gamma_2} \vbeta_*^\top \vh^\top  \vu^\top   \tilde{\mZ} \quad \text{since $\tilde{\eta}\vu\tilde{\vv}^\top = \tilde{\mZ}$}. 
    \]
    Thus, the final expression is 
    \[
        \frac{\eta \|\vs\|^2}{\gamma_2} \vbeta_*^\top \vh^\top  \vu^\top   \tilde{\mZ} + \frac{\xi}{\gamma_2} \vbeta_*^\top  \mA\mA^\dagger \tilde{\mZ}. 
    \]
\end{proof}

\begin{lemma}[Bias Term] \label{lem:bias}
    In the setting of \Cref{sec:setting}, we have that if $c < 1$,
    \[
         \tilde{\alpha}_{z}\vbeta_{*}^\top  \tilde{\mZ} - \vbeta_{int}^\top \tilde{\mZ} = \left[\tilde{\alpha}_Z - \alpha_Z + \frac{\xi}{\gamma_1}(\alpha_Z - \alpha_A)\right]\vbeta_*^\top  \tilde{\mZ} + \frac{\tilde{\eta}}{\eta}\frac{\xi}{\gamma_1} \vvarepsilon^\top  \vp_1 \tilde{\vv}^\top, 
    \]
    and if $c > 1$, 
    \[
         \tilde{\alpha}_{z}\vbeta_{*}^\top  \tilde{\mZ} - \vbeta_{int}^\top \tilde{\mZ} = \vbeta_*^\top \left[(\tilde{\alpha}_Z - \alpha_Z) \mI  + \frac{\xi}{\gamma_2}(\alpha_Z \mI  - \alpha_A \mA\mA^\dagger )\right]\tilde{\mZ} - \alpha_A \frac{\eta \|\vs\|^2}{\gamma_2} \vbeta_*^\top \vh^\top  \vu^\top   \tilde{\mZ} + \frac{\tilde{\eta}}{\eta}\frac{\xi}{\gamma_2} \vvarepsilon^\top  \vp_2 \tilde{\vv}^\top.   
    \]
\end{lemma}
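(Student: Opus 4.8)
The plan is to read off $\vbeta_{int}$ from its definition and expand $\vbeta_{int}^\top\tilde{\mZ}$ into the three inner products already evaluated in the helper lemmas. Working in the rescaled coordinates of Step~0 (dropping the subscript $r$), the minimum-norm interpolant of the rescaled targets $\vy_r^\top = \alpha_Z\vbeta_*^\top\mZ + \alpha_A\vbeta_*^\top\mA + \vvarepsilon^\top$ against the design $\mX = \mZ+\mA$ is $\vbeta_{int}^\top = \vy_r^\top(\mZ+\mA)^\dagger$. Multiplying on the right by $\tilde{\mZ}$,
\[
    \vbeta_{int}^\top\tilde{\mZ} = \alpha_Z\,\vbeta_*^\top\mZ(\mZ+\mA)^\dagger\tilde{\mZ} + \alpha_A\,\vbeta_*^\top\mA(\mZ+\mA)^\dagger\tilde{\mZ} + \vvarepsilon^\top(\mZ+\mA)^\dagger\tilde{\mZ},
\]
so $\tilde{\alpha}_Z\vbeta_*^\top\tilde{\mZ} - \vbeta_{int}^\top\tilde{\mZ}$ is a fixed linear combination of $\vbeta_*^\top\tilde{\mZ}$ and the three terms on the right, which are computed in \Cref{lem:bias_term_Z}, \Cref{lem:bias_term_A}, and \Cref{lem:bias_terms_epsilon} respectively. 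The hypotheses of those lemmas ($\mA$ full rank, $\xi\neq0$) hold almost surely in this model, so they apply.

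For $c<1$, substitute $\vbeta_*^\top\mZ(\mZ+\mA)^\dagger\tilde{\mZ} = (1-\xi/\gamma_1)\vbeta_*^\top\tilde{\mZ}$, $\vbeta_*^\top\mA(\mZ+\mA)^\dagger\tilde{\mZ} = (\xi/\gamma_1)\vbeta_*^\top\tilde{\mZ}$, and $\vvarepsilon^\top(\mZ+\mA)^\dagger\tilde{\mZ} = -\tfrac{\tilde{\eta}\xi}{\eta\gamma_1}\vvarepsilon^\top\vp_1\tilde{\vv}^\top$. The coefficient of $\vbeta_*^\top\tilde{\mZ}$ becomes $\tilde{\alpha}_Z - \alpha_Z(1-\xi/\gamma_1) - \alpha_A(\xi/\gamma_1) = \tilde{\alpha}_Z - \alpha_Z + \tfrac{\xi}{\gamma_1}(\alpha_Z-\alpha_A)$, and the noise term, after the sign flip coming from $-\vbeta_{int}^\top\tilde{\mZ}$, is $+\tfrac{\tilde{\eta}}{\eta}\tfrac{\xi}{\gamma_1}\vvarepsilon^\top\vp_1\tilde{\vv}^\top$, which is the first claimed identity.

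For $c>1$ the assembly is identical with $\gamma_1$ replaced by $\gamma_2$, except that \Cref{lem:bias_term_A} now returns two pieces, $\tfrac{\eta\|\vs\|^2}{\gamma_2}\vbeta_*^\top\vh^\top\vu^\top\tilde{\mZ} + \tfrac{\xi}{\gamma_2}\vbeta_*^\top\mA\mA^\dagger\tilde{\mZ}$, so that $\mA\mA^\dagger$ rather than $\mI$ multiplies $\alpha_A$ in one of the resulting terms; grouping then yields $\vbeta_*^\top\bigl[(\tilde{\alpha}_Z-\alpha_Z)\mI + \tfrac{\xi}{\gamma_2}(\alpha_Z\mI-\alpha_A\mA\mA^\dagger)\bigr]\tilde{\mZ}$ together with the leftover $-\alpha_A\tfrac{\eta\|\vs\|^2}{\gamma_2}\vbeta_*^\top\vh^\top\vu^\top\tilde{\mZ}$ and the noise term $+\tfrac{\tilde{\eta}}{\eta}\tfrac{\xi}{\gamma_2}\vvarepsilon^\top\vp_2\tilde{\vv}^\top$, which is the second claimed identity. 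There is no genuine obstacle here: the lemma is pure linear bookkeeping on top of the three component identities, and the only points needing care are recording $\vbeta_{int}$ in the \emph{rescaled} variables (so the later Step~4 scalings such as $\tau_\varepsilon^2/d$ come out correctly) and, in the $c>1$ case, not collapsing $\mA\mA^\dagger$ to $\mI$, since there $\mZ+\mA$ has full row rather than full column rank.
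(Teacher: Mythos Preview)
Your proposal is correct and follows essentially the same approach as the paper: expand $\vbeta_{int}^\top\tilde{\mZ}$ via $\vbeta_{int}^\top=(\alpha_Z\vbeta_*^\top\mZ+\alpha_A\vbeta_*^\top\mA+\vvarepsilon^\top)(\mZ+\mA)^\dagger$, invoke \Cref{lem:bias_term_Z}, \Cref{lem:bias_term_A}, and \Cref{lem:bias_terms_epsilon} for the three resulting pieces, and collect coefficients separately for $c<1$ and $c>1$. Your added remarks about the almost-sure validity of the helper-lemma hypotheses and the need not to collapse $\mA\mA^\dagger$ to $\mI$ when $c>1$ are accurate and make the bookkeeping slightly more explicit than the paper's version.
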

\begin{proof}
    To simplify the bias term, we first need the following expansion:  
\begin{align*}
    \tilde{\alpha}_{z}\vbeta_{*}^\top  \tilde{\mZ} - \vbeta_{int}^\top \tilde{\mZ}  & = \tilde{\alpha}_{z}\vbeta_{*}^\top \tilde{\mZ} - (\vbeta_*^\top (\alpha_z \mZ+ \alpha_A \mA) + \vvarepsilon^\top ) (\mZ + \mA)^\dagger \tilde{\mZ} \\
    & = \tilde{\alpha}_{z}\vbeta_{*}^\top \tilde{\mZ} - \alpha_z\vbeta_*^\top  \mZ (\mZ + \mA)^\dagger - \alpha_A \vbeta_*^\top \mA (\mZ + \mA)^\dagger \tilde{\mZ} - \vvarepsilon^\top  (\mZ + \mA)^\dagger \tilde{\mZ}. 
\end{align*} 
From Lemmas \ref{lem:bias_terms_epsilon}, \ref{lem:bias_term_Z}, \ref{lem:bias_term_A}, we get simplified expressions for $\vvarepsilon^\top  (\mZ + \mA)^\dagger \tilde{\mZ}$, $\vbeta_*^\top\mA (\mZ + \mA)^\dagger \tilde{\mZ}$, $\vbeta_*^\top\mZ (\mZ + \mA)^\dagger$ and plug them in. For $c< 1$, we get
\begin{align*}
    & \tilde{\alpha}_Z \vbeta_*^\top  \tilde{\mZ} - \alpha_Z \left(1 - \frac{\xi}{\gamma_1}\right)\vbeta_*^\top \tilde{\mZ} - \alpha_A \frac{\xi}{\gamma_1} \vbeta_*^\top  \tilde{\mZ} + \frac{\tilde{\eta}}{\eta}\frac{\xi}{\gamma_1} \vvarepsilon^\top  \vp_1 \tilde{\vv}^\top \\ 
     = \ &\left[\tilde{\alpha}_Z - \alpha_Z + \frac{\xi}{\gamma_1}(\alpha_Z - \alpha_A)\right]\vbeta_*^\top  \tilde{\mZ} + \frac{\tilde{\eta}}{\eta}\frac{\xi}{\gamma_1} \vvarepsilon^\top  \vp_1 \tilde{\vv}^\top. 
\end{align*}

\noindent On the other hand, for $c > 1$, we have 
\begin{align*}
    &\tilde{\alpha}_Z \vbeta_*^\top  \tilde{\mZ} - \alpha_Z \left(1 - \frac{\xi}{\gamma_2}\right)\vbeta_*^\top \tilde{\mZ} - \alpha_A \left[\frac{\eta \|\vs\|^2}{\gamma_2} \vbeta_*^\top \vh^\top  \vu^\top   \tilde{\mZ} + \frac{\xi}{\gamma_2} \vbeta_*^\top  \mA\mA^\dagger \tilde{\mZ}\right] + \frac{\tilde{\eta}}{\eta}\frac{\xi}{\gamma_2} \vvarepsilon^\top  \vp_2 \tilde{\vv}^\top \\ 
    = \ & \vbeta_*^\top \left[(\tilde{\alpha}_Z - \alpha_Z) \mI  + \frac{\xi}{\gamma_2}(\alpha_Z \mI  - \alpha_A \mA\mA^\dagger )\right]\tilde{\mZ} - \alpha_A \frac{\eta \|\vs\|^2}{\gamma_2} \vbeta_*^\top \vh^\top  \vu^\top   \tilde{\mZ} + \frac{\tilde{\eta}}{\eta}\frac{\xi}{\gamma_2} \vvarepsilon^\top  \vp_2 \tilde{\vv}^\top.   
\end{align*}

\end{proof}

\begin{lemma}[Squared Norms of $\vp_1$ and $\vp_2$] \label{lem:p_norms}
    Recall $\vp_1 = -\frac{\eta^2\|\vk\|^2}{\xi}\vt^\top - \eta \vk$ and $\vp_2 = -\frac{\eta^2\|\vs\|^2}{\xi}\mA^\dagger \vh - \eta \vk$.
    \begin{enumerate}
        \item $\displaystyle \|\vp_1\|^2 = \frac{\eta^2\|\vk\|^2}{\xi^2} \gamma_1.$
        \item $\|\vp_2\|^2 = \frac{\eta^4\|\vs\|^4}{\xi^2} \vh\mA^{\dagger \top} \mA^\dagger \vh^\top  + \frac{2 \eta^3\|\vs\|^2}{\xi} \vk^\top \mA^\dagger \vh^\top  + \eta^2 \|\vk\|^2$.
    \end{enumerate}
\end{lemma}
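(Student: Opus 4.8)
The plan is to prove both identities by the same elementary route: expand each $\|\vp_i\|^2$ as the sum of the two squared summands plus twice their inner product, then simplify the cross term with the Moore--Penrose identities. I will keep in mind throughout that $\vt = \vv^\top(\mI - \mA^\dagger\mA)$ and $\vh = \vv^\top\mA^\dagger$ are row vectors while $\vk = \mA^\dagger\vu$ is a column vector, so the inner products must carry the correct transposes, and that $\gamma_1 = \eta^2\|\vt\|^2\|\vk\|^2 + \xi^2$ is precisely the quantity that will reassemble at the end of part 1.

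For part 1, I would write $\vp_1 = -\tfrac{\eta^2\|\vk\|^2}{\xi}\vt^\top - \eta\vk$ and expand
\[
    \|\vp_1\|^2 = \frac{\eta^4\|\vk\|^4}{\xi^2}\|\vt\|^2 \;+\; \frac{2\eta^3\|\vk\|^2}{\xi}\,\vt\vk \;+\; \eta^2\|\vk\|^2 .
\]
The crux is that the middle term vanishes: $\vt\vk = \vv^\top(\mI - \mA^\dagger\mA)\mA^\dagger\vu$, and $(\mI - \mA^\dagger\mA)\mA^\dagger = \mA^\dagger - \mA^\dagger\mA\mA^\dagger = \vzero$ by the identity $\mA^\dagger\mA\mA^\dagger = \mA^\dagger$. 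What remains is $\tfrac{\eta^4\|\vk\|^4\|\vt\|^2}{\xi^2} + \eta^2\|\vk\|^2$; factoring out $\tfrac{\eta^2\|\vk\|^2}{\xi^2}$ leaves $\eta^2\|\vk\|^2\|\vt\|^2 + \xi^2 = \gamma_1$, which gives the claimed $\tfrac{\eta^2\|\vk\|^2}{\xi^2}\gamma_1$.

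For part 2, with $\vp_2 = -\tfrac{\eta^2\|\vs\|^2}{\xi}\mA^\dagger\vh^\top - \eta\vk$ the same expansion produces three terms, and here I expect no cancellation, since $\mA^\dagger\vh^\top$ and $\vk$ need not be orthogonal. I would record the first term as $\tfrac{\eta^4\|\vs\|^4}{\xi^2}\|\mA^\dagger\vh^\top\|^2 = \tfrac{\eta^4\|\vs\|^4}{\xi^2}\vh\mA^{\dagger\top}\mA^\dagger\vh^\top$, the last as $\eta^2\|\vk\|^2$, and the cross term as $\tfrac{2\eta^3\|\vs\|^2}{\xi}\langle\mA^\dagger\vh^\top,\vk\rangle$ where $\langle\mA^\dagger\vh^\top,\vk\rangle = \vh\mA^{\dagger\top}\mA^\dagger\vu = \vk^\top\mA^\dagger\vh^\top$ (a scalar equals its own transpose). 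Summing the three gives the stated formula verbatim.

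The main thing to watch — there is no genuine obstacle here — is the index and transpose bookkeeping that changes between the $c<1$ and $c>1$ conventions (in the $c>1$ case $\mA\mA^\dagger$ is a nontrivial $d\times d$ projector rather than the identity), together with the correct invocation of $\mA^\dagger\mA\mA^\dagger = \mA^\dagger$ and the symmetry of $\mI - \mA^\dagger\mA$. These two norm expressions then feed the random-matrix estimates in the later steps, where $\|\vp_1\|^2$ and $\|\vp_2\|^2$ enter through the $\vvarepsilon^\top\vp_i$ observation-noise contributions to the bias term.
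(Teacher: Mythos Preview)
Your proof is correct and follows essentially the same approach as the paper: both expand $\|\vp_i\|^2$ directly, use $\vt\vk=\vzero$ to kill the cross term in part 1 (you give the explicit Moore--Penrose justification $(\mI-\mA^\dagger\mA)\mA^\dagger=\vzero$, which the paper omits), and then factor $\gamma_1$; for part 2 both simply record the three terms of the expansion without cancellation. The only difference is your added commentary on transpose conventions and the downstream use of the lemma, which is accurate but not part of the paper's proof.
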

\begin{proof}
    For $\vp_1$, we have
    \begin{align*} 
        \|\vp_1\|^2 &= \left( -\frac{\eta^2\|\vk\|^2}{\xi}\vt - \eta \vk \right) \left( -\frac{\eta^2\|\vk\|^2}{\xi}\vt^\top - \eta \vk^\top \right) = \left(\frac{\eta^2\|\vk\|^2}{\xi}\right)^2 \|\vt\|^2 + 2 \frac{\eta^3\|\vk\|^2}{\xi} \vt\vk + \eta^2 \|\vk\|^2. 
    \end{align*}
    Using $\vt \vk = \bm{0}$ yields the first result, which we can further simplify as 
    \[
        \frac{\eta^2\|\vk\|^2}{\xi^2}\left(\eta^2\|\vk\|^2\|\vt\|^2 + \xi^2\right) = \frac{\eta^2\|\vk\|^2}{\xi^2} \gamma_1. 
    \]
    
    \noindent For $\vp_2$, similarly, we have
    \begin{align*} 
        \|\vp_2\|^2 &= \left(-\frac{\eta^2\|\vs\|^2}{\xi}\vh \mA^{\dagger \top} - \eta \vk^\top \right) \left(-\frac{\eta^2\|\vs\|^2}{\xi}\mA^\dagger \vh^\top  - \eta \vk\right) \\ 
        & = \frac{\eta^4\|\vs\|^4}{\xi^2} \vh\mA^{\dagger \top} \mA^\dagger \vh^\top  + \frac{2 \eta^3\|\vs\|^2}{\xi} \vk^\top \mA^\dagger \vh^\top  + \eta^2 \|\vk\|^2. 
    \end{align*}
\end{proof}

\begin{lemma}[Squared Norms of $\vq_1$ and $\vq_2$] \label{lem:q_norms}
    Let $\vq_1^\top  = -\frac{\eta\|\vt\|^2}{\xi}\vk^\top \mA^\dagger - \vh$ and $\vq_2^\top  = -\frac{\eta\|\vh\|^2}{\xi}\vs^\top  - \vh$.
    \begin{enumerate}
        \item $\displaystyle \|\vq_1\|^2 = \frac{\eta^2\|\vt\|^4}{\xi^2}\vk^\top \mA^\dagger \mA^{\dagger \top }\vk + \frac{2\eta\|\vt\|^2}{\xi}\vk^\top \mA^\dagger \vh^\top  + \|\vh\|^2.$
        \item $\|\vq_2\|^2 =  \frac{\|\vh\|^2}{\xi^2}\gamma_2. $
    \end{enumerate}
\end{lemma}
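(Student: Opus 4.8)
The plan is to prove both identities by direct expansion of the scalar $\|\vq_i\|^2 = \vq_i^\top\vq_i$, exactly mirroring the computation used for $\|\vp_1\|^2$ and $\|\vp_2\|^2$ in \Cref{lem:p_norms}. Throughout I would use that $\vh = \vv^\top\mA^\dagger$ is a row vector, $\vk = \mA^\dagger\vu$ a column vector, $\vs = (\mI - \mA\mA^\dagger)\vu$, that any $1\times 1$ expression equals its transpose, and the Moore--Penrose identities $\mA^\dagger\mA\mA^\dagger = \mA^\dagger$ and $(\mA\mA^\dagger)^\top = \mA\mA^\dagger$.

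For part (1), I would expand
\[
\|\vq_1\|^2 = \Bigl(-\frac{\eta\|\vt\|^2}{\xi}\vk^\top\mA^\dagger - \vh\Bigr)\Bigl(-\frac{\eta\|\vt\|^2}{\xi}\mA^{\dagger\top}\vk - \vh^\top\Bigr),
\]
which produces the quadratic term $\frac{\eta^2\|\vt\|^4}{\xi^2}\vk^\top\mA^\dagger\mA^{\dagger\top}\vk$, two cross terms $\frac{\eta\|\vt\|^2}{\xi}\vk^\top\mA^\dagger\vh^\top$ and $\frac{\eta\|\vt\|^2}{\xi}\vh\mA^{\dagger\top}\vk$ that are scalars and mutual transposes, hence equal, and the term $\vh\vh^\top = \|\vh\|^2$. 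Collecting these yields the claimed formula; unlike the $\vp_1$ case, there is no orthogonality relation that annihilates a cross term, so this expression is already in final form.

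For part (2), the same expansion of $\vq_2^\top\vq_2$ with $\vq_2^\top = -\frac{\eta\|\vh\|^2}{\xi}\vs^\top - \vh$ yields
\[
\|\vq_2\|^2 = \frac{\eta^2\|\vh\|^4}{\xi^2}\|\vs\|^2 + \frac{2\eta\|\vh\|^2}{\xi}\,\vh\vs + \|\vh\|^2 .
\]
The one nontrivial step is to kill the middle term: $\vh\vs = \vv^\top\mA^\dagger(\mI - \mA\mA^\dagger)\vu = \vv^\top(\mA^\dagger - \mA^\dagger\mA\mA^\dagger)\vu = 0$ by the Moore--Penrose identity, which is the analogue of the fact $\vt\vk = 0$ used in \Cref{lem:p_norms}. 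Then I would factor out $\|\vh\|^2/\xi^2$ and recognize $\eta^2\|\vh\|^2\|\vs\|^2 + \xi^2 = \gamma_2$ from the definition of $\gamma_2$, giving $\|\vq_2\|^2 = \|\vh\|^2\gamma_2/\xi^2$.

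There is no genuine obstacle here: both parts are routine bookkeeping of a two-term expansion, and the only nonalgebraic ingredient is the orthogonality $\vh\vs = 0$, immediate from $\mA^\dagger\mA\mA^\dagger = \mA^\dagger$. The only point requiring care is keeping the row/column conventions straight, so that the two cross terms in each expansion are correctly identified as equal scalars and so that $\vh\mA^{\dagger\top}\vk$ is not confused with $\vk^\top\mA^\dagger\vh^\top$ (they are transposes, hence equal here).
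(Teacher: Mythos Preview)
Your proposal is correct and essentially identical to the paper's proof: both expand $\vq_i^\top\vq_i$ directly, combine the two equal cross terms, and for $\vq_2$ invoke $\vh\vs = 0$ before factoring out $\|\vh\|^2/\xi^2$ and recognizing $\gamma_2$. Your explicit justification of $\vh\vs=0$ via $\mA^\dagger\mA\mA^\dagger=\mA^\dagger$ is slightly more detailed than the paper, which simply states ``since $\vh\vs=\bm{0}$''.
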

\begin{proof} Similar to Lemma \ref{lem:p_norms}, we directly expand the two terms: 
    \begin{align*} 
        \|\vq_1\|^2 &= \left(-\frac{\eta\|\vt\|^2}{\xi}\vk^\top \mA^\dagger - \vh\right) \left(-\frac{\eta\|\vt\|^2}{\xi}\mA^{\dagger \top }\vk - \vh^\top \right) = \frac{\eta^2\|\vt\|^4}{\xi^2}\vk^\top \mA^\dagger \mA^{\dagger \top }\vk + \frac{2\eta\|\vt\|^2}{\xi}\vk^\top \mA^\dagger \vh^\top  + \|\vh\|^2. 
    \end{align*}
    \begin{align*} 
        \|\vq_2\|^2 = \left(-\frac{\eta\|\vh\|^2}{\xi}\vs^\top  - \vh\right) \left(-\frac{\eta\|\vh\|^2}{\xi}\vs - \vh^\top \right) & = \frac{\eta^2\|\vh\|^4\|\vs\|^2}{\xi^2}  + \|\vh\|^2 \quad \text{since $\vh\vs = \bm{0}$} \\
        & = \frac{\|\vh\|^2(\eta^2\|\vh\|^2\|\vs\|^2 + \xi^2)}{\xi^2} \\ 
        & = \frac{\|\vh\|^2}{\xi^2}\gamma_2.  
    \end{align*}
\end{proof}

\begin{lemma}[Preliminary Expansion of Variance] \label{lem:variance_initial_decomposition} In the setting of \Cref{sec:setting}, we have
\begin{align*}
    \hspace{10pt} \mathbb{E}\left[\frac{1}{\tilde{n}}\left\|\vbeta_{int}^\top \tilde{\mA}\right\|_{F}^2\right] = \mathbb{E}&\left[  \frac{\tilde{\tau}^2\alpha_z^2}{d}
    \vbeta_*^\top  \mZ (\mZ + \mA) ^ \dagger (\mZ + \mA) ^ {\dagger \top } \mZ\vbeta_*   + \frac{\tilde{\tau}^2\alpha_A^2}{d} 
     \vbeta_*^\top  \mA (\mZ + \mA) ^ \dagger (\mZ + \mA) ^ {\dagger \top } \mA^\top \vbeta_* \right.\\
    & \left. + \frac{2\tilde{\tau}^2\alpha_A\alpha_z}{d} 
     \vbeta_*^\top  \mZ (\mZ + \mA) ^ \dagger (\mZ + \mA) ^ {\dagger \top } \mA^\top \vbeta_* 
     + \frac{\tilde{\tau}^2}{d} 
     \vvarepsilon^\top  (\mZ + \mA) ^ \dagger (\mZ + \mA) ^ {\dagger \top } \vvarepsilon \right].
\end{align*}

\begin{proof}
     Since $\tilde{\mA}$ is independent of the other terms, we replace $\tilde{\mA}\tilde{\mA}^\top $ with its expectation $\frac{\tilde{\tau}^2 \tilde{n}}{d} \mI$. 
    \[
       \mathbb{E}\left[\frac{1}{\tilde{n}}\left\|\vbeta_{int}^\top \tilde{\mA}\right\|_{F}^2\right] =\mathbb{E}\left[\frac{1}{\tilde{n}}\vbeta_{int}^\top \tilde{\mA}\tilde{\mA}^\top \vbeta_{int}\right] =   \frac{1}{\tilde{n}}\frac{\tilde{\tau}^2 \tilde{n}}{d}\mathbb{E}\left[\vbeta_{int}^\top \vbeta_{int}\right] = \frac{\tilde{\tau}^2}{d}\mathbb{E}\left[\|\vbeta_{int}\|^2\right]. 
    \]
    We now plug in the expression for $\vbeta_{int}$. Since $\vvarepsilon$ is a zero-mean vector and independent from other random variables, terms with only one $\vvarepsilon$ have zero expectation. A straightforward expansion gives: 
    \begin{align*}
        \frac{\tilde{\tau}^2}{d}\|\vbeta_{int}\|_F^2 & = \frac{\tilde{\tau}^2}{d}  (\vbeta_*^\top (\alpha_z \mZ+ \alpha_A \mA) + \vvarepsilon^\top ) (\mZ + \mA) ^ \dagger (\mZ + \mA) ^ {\dagger \top } (\vbeta_*^\top (\alpha_z \mZ+ \alpha_A \mA) + \vvarepsilon^\top )^\top.  \\
    \end{align*}
    After eliminating zero expectations as above, the expectation becomes: 
    \begin{align*}
        \hspace{10pt} \mathbb{E}\left[\frac{\tilde{\tau}^2}{d}\|\vbeta_{int}\|_F^2\right] = \mathbb{E}&\left[  \frac{\tilde{\tau}^2\alpha_z^2}{d}
        \vbeta_*^\top  \mZ (\mZ + \mA) ^ \dagger (\mZ + \mA) ^ {\dagger \top } \mZ\vbeta_*   + \frac{\tilde{\tau}^2\alpha_A^2}{d} 
         \vbeta_*^\top  \mA (\mZ + \mA) ^ \dagger (\mZ + \mA) ^ {\dagger \top } \mA^\top \vbeta_* \right.\\
        & \left. + \frac{2\tilde{\tau}^2\alpha_A\alpha_z}{d} 
         \vbeta_*^\top  \mZ (\mZ + \mA) ^ \dagger (\mZ + \mA) ^ {\dagger \top } \mA^\top \vbeta_* 
         + \frac{\tilde{\tau}^2}{d} 
         \vvarepsilon^\top  (\mZ + \mA) ^ \dagger (\mZ + \mA) ^ {\dagger \top } \vvarepsilon \right].
    \end{align*}
\end{proof}
\end{lemma}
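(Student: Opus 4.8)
The plan is to reduce the test‑averaged quantity $\mathbb{E}\big[\tfrac{1}{\tilde n}\|\vbeta_{int}^\top\tilde\mA\|_F^2\big]$ to an expected squared norm of the interpolator and then expand that norm through the explicit formula $\vbeta_{int}^\top=\vy^\top(\mZ+\mA)^\dagger$. All of this is carried out in the rescaled variables of \Cref{sec:step0} (dropping the subscript $r$, as declared there). The only structural facts I will use are: (i) the test noise $\tilde\mA$ is independent of the training data $(\mZ,\mA,\vvarepsilon)$ and, by \Cref{assumption:A} in rescaled form, has uncorrelated mean‑zero entries of variance $\tilde\tau^2/d$, so $\mathbb{E}[\tilde\mA\tilde\mA^\top]=\tfrac{\tilde\tau^2\tilde n}{d}\mI_d$; and (ii) the label noise $\vvarepsilon$ is mean zero and independent of $(\mZ,\mA)$ and of the test data. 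Everything else is algebra.

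First I would write $\|\vbeta_{int}^\top\tilde\mA\|_F^2=\vbeta_{int}^\top\tilde\mA\tilde\mA^\top\vbeta_{int}$ and integrate over $\tilde\mA$ only, conditioning on the training data. Since $\vbeta_{int}$ is a deterministic function of $(\mZ,\mA,\vvarepsilon)$, fact (i) gives $\mathbb{E}_{\tilde\mA}\big[\tfrac{1}{\tilde n}\vbeta_{int}^\top\tilde\mA\tilde\mA^\top\vbeta_{int}\big]=\tfrac{\tilde\tau^2}{d}\|\vbeta_{int}\|_2^2$, and hence $\mathbb{E}\big[\tfrac{1}{\tilde n}\|\vbeta_{int}^\top\tilde\mA\|_F^2\big]=\tfrac{\tilde\tau^2}{d}\,\mathbb{E}\big[\|\vbeta_{int}\|_2^2\big]$. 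Next I would substitute the minimum‑norm interpolator $\vbeta_{int}^\top=\vy^\top(\mZ+\mA)^\dagger$ together with the target model $\vy^\top=\vbeta_*^\top(\alpha_Z\mZ+\alpha_A\mA)+\vvarepsilon^\top$ (\Cref{eq:beta-int}, \Cref{eq:target-model}), so that
\[
\|\vbeta_{int}\|_2^2=\big(\alpha_Z\vbeta_*^\top\mZ+\alpha_A\vbeta_*^\top\mA+\vvarepsilon^\top\big)\,\mM\,\big(\alpha_Z\mZ^\top\vbeta_*+\alpha_A\mA^\top\vbeta_*+\vvarepsilon\big),\qquad \mM:=(\mZ+\mA)^\dagger(\mZ+\mA)^{\dagger\top},
\]
with $\mM$ symmetric positive semidefinite. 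Expanding the quadratic form yields the three ``diagonal'' terms ($\alpha_Z^2\vbeta_*^\top\mZ\mM\mZ^\top\vbeta_*$, $\alpha_A^2\vbeta_*^\top\mA\mM\mA^\top\vbeta_*$, and $\vvarepsilon^\top\mM\vvarepsilon$), the cross term $2\alpha_Z\alpha_A\vbeta_*^\top\mZ\mM\mA^\top\vbeta_*$, and two further cross terms linear in $\vvarepsilon$, namely $2\alpha_Z\vbeta_*^\top\mZ\mM\vvarepsilon$ and $2\alpha_A\vbeta_*^\top\mA\mM\vvarepsilon$.

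Finally I would take the remaining expectation. Conditioning on $(\mZ,\mA)$, which by fact (ii) is independent of $\vvarepsilon$, the two $\vvarepsilon$‑linear cross terms equal a fixed row vector times $\mathbb{E}[\vvarepsilon]=0$ and therefore drop; the five surviving terms, multiplied through by $\tilde\tau^2/d$, are exactly the four terms in the statement. I do not expect a genuine obstacle here: the argument is pure bookkeeping, and the only points that need care are the transpose/dimension conventions ($\mM$ is $n\times n$ while $\mZ,\mA$ are $d\times n$), the two independence reductions, and checking that the manipulations are legitimate (integrability is ensured by the finite‑fourth‑moment hypothesis on $\mA$, Gaussianity of $\vv$ and $\vvarepsilon$, and the fact that $\mZ+\mA$ has generic rank almost surely, so each term is finite; when $c\neq1$ the later estimates confirm the overall expectation is finite). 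The substantive work—evaluating $\mathbb{E}[\vbeta_*^\top\mZ\mM\mZ^\top\vbeta_*]$, $\mathbb{E}[\vbeta_*^\top\mA\mM\mA^\top\vbeta_*]$, $\mathbb{E}[\vvarepsilon^\top\mM\vvarepsilon]$, and the cross term via the Meyer pseudoinverse identity and Marchenko–Pastur asymptotics—is deferred to the subsequent steps and is not part of this lemma.
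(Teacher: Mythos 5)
Your proposal is correct and follows essentially the same route as the paper: reduce to $\tfrac{\tilde\tau^2}{d}\mathbb{E}[\|\vbeta_{int}\|^2]$ by integrating out $\tilde\mA$, substitute $\vbeta_{int}^\top=\vy^\top(\mZ+\mA)^\dagger$ with the target model, expand the quadratic form, and kill the $\vvarepsilon$-linear cross terms by mean-zero independence. Your added care about conditioning order and integrability is fine but not a different argument.
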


\subsection{Step 3: Random Matrix Theory Estimates}
\label{sec:step3}

To do the estimates we recall the set up. In particular, we have that 
\[
    \mZ = \eta \vu\vv^\top, \quad \text{where $\theta = \frac{\eta}{\sqrt{n}}$ and $\|\vv\| = 1$}, 
\]
and the entries of 
\[
    A_{ij} = \mathcal{N}\left(0, \frac{\rho^2}{d}\right)
\]
Recall the following definition $\vh = \vv^\top\mA^\dagger$, $\vk = \mA^\dagger \vu$, $\vt = \vv^\top(\mI - \mA^\dagger \mA)$, $\xi = 1 + \eta \vv^\top \mA^\dagger \vu$, $\vs = (\mI - \mA\mA^\dagger)\vu$, $\gamma_1 = \eta^2\|\vt\|^2\|\vk\|^2 + \xi^2$, $\gamma_2 = \eta^2\|\vs\|^2\|\vh\|^2 + \xi^2$ and 
    \begin{align*}
        &\vp_1 = -\frac{\eta^2\|\vk\|^2}{\xi}\vt^\top - \eta \vk,  &\vq_1^\top  = -\frac{\eta\|\vt\|^2}{\xi}\vk^\top \mA^\dagger - \vh. \\
        &\vp_2 = -\frac{\eta^2\|\vs\|^2}{\xi}\mA^\dagger \vh^\top - \eta \vk, &\vq_2^\top = -\frac{\eta\|\vh\|^2}{\xi}\vs^\top - \vh,  
    \end{align*}

To show that each of the four terms, bias, variance, data noise, and target alignment concentrate in the limit, we do this in two steps. 
\begin{enumerate}[label=(\alph*)]
    \item First, we compute the mean and variance for basic building blocks such as $\|\vh\|^2$ and other variables. Section~\ref{sec:rmt-basic}.
    \item Second, we provide bounds on the higher moments. Section~\ref{sec:rmt-higher}.
    \item Next, we prove bounds on the moments of $\gamma_i$. Section~\ref{sec:rmt-gamma}. 
\end{enumerate}

\subsubsection{Step 3(a): Showing that basic building blocks concentrate}
\label{sec:rmt-basic}

We begin by bounding the mean and variance. 

\begin{lemma}[Generalized version of Lemma 7 from \cite{sonthalia2023training}] \label{lem:prior_rmt} Suppose $A_{ij}$ have mean 0 and variance $\rho^2/d$, the entries are uncorrelated, have finite fourth moment, the distribution is invariant under left and right orthogonal transformation and the empirical spectral distribution of $\frac{1}{\rho^2}\mA\mA^\top$ converges to the Marchenko-Pastur law. Additionally, if $\vu$ and $\vv$ are fixed unit norm vectors. Then we have that 
\begin{enumerate}
    \item[1.\label{part:h}] $\displaystyle \mathbb{E}[\|\vh\|^2] = \begin{cases} \frac{1}{\rho^2}\frac{c^2}{1-c} & c < 1 \\ \frac{1}{\rho^2}\frac{c}{c-1} & c > 1 \end{cases} + o\left(\frac{1}{\rho^2}\right)$ and $\displaystyle \Var(\|\vh\|^2) = O\left(\frac{1}{\rho^4 n}\right).$
    \item[2.\label{part:k}] $\displaystyle \mathbb{E}[\|\vk\|^2] = \frac{1}{\rho^2}\frac{c}{1-c} + o\left(\frac{1}{\rho^2}\right)$ and $\displaystyle \Var(\|\vk\|^2) = O\left(\frac{1}{\rho^4 n}\right).$
    \item[3.\label{part:s}] $\displaystyle \mathbb{E}[\|\vs\|^2] = 1 - \frac{1}{c}$ and $\displaystyle \Var(\|\vs\|^2) = O\left(\frac{1}{d}\right).$
    \item[4.\label{part:t}] $\displaystyle \mathbb{E}[\|\vt\|^2] = 1 - c$ and $\displaystyle \Var(\|\vt\|^2) = O\left(\frac{1}{n}\right).$
    \item[5.\label{part:xi}] $\displaystyle \mathbb{E}\left[\frac{\xi}{\eta}\right] = \frac{1}{\eta}$ and $\displaystyle \Var\left(\frac{\xi}{\eta}\right) = O\left(\frac{1}{\max(n,d)} \frac{1}{\rho^2}\right).$
    \item[6.\label{part:xi-squared}] $\displaystyle \mathbb{E}\left[\frac{\xi^2}{\eta^2}\right] = \frac{1}{\eta^2} + \frac{1}{\max(n,d)} \frac{c}{\rho^2|1-c|} + o\left(\frac{1}{\max(n,d)\rho^2}\right) = \frac{1}{\eta^2} + O\left(\frac{1}{\max(n,d)\rho^2}\right)$ and 
    $\displaystyle \Var\left(\frac{\xi^2}{\eta^2}\right) = O\left(\frac{1}{\max(d,n) \rho^4}\right).$
\end{enumerate}
Note that here $\max(d, n)$, $d$, $n$ are interchangeable in the variance big-Oh terms since they only differ by an absolute constant $c$. We include the details for completion.  
\end{lemma}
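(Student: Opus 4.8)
The plan is to prove all six estimates through a single reduction. Every quantity in the lemma either is, or collapses to, a quadratic form $\vw^\top \mM\,\vw$ in which $\vw\in\{\vu,\vv\}$ is a fixed unit vector and $\mM=\mM(\mA)$ is a matrix built from the noise that is \emph{invariant in law under orthogonal conjugation} — this is inherited from the left/right orthogonal invariance of $\mA$ in Assumption~\ref{assumption:A}, and holds for $(\mA\mA^\top)^{-1}$, $\mA^\top(\mA\mA^\top)^{-2}\mA$, $(\mA^\top\mA)^{-1}$, $\mI-\mA\mA^\dagger$, and $\mI-\mA^\dagger\mA$. For such $\mM$ one writes $\mM=\mV\mD\mV^\top$ with $\mV$ Haar-uniform and independent of its spectrum $\mD$, so $\vw^\top\mM\vw\stackrel{d}{=}\vg^\top\mD\vg$ with $\vg$ uniform on the unit sphere and independent of $\mD$. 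This yields $\mathbb{E}[\vw^\top\mM\vw]=\tfrac1m\mathbb{E}[\Tr\mM]$ ($m$ the ambient dimension) and $\Var(\vw^\top\mM\vw)=\mathbb{E}_{\mD}\!\big[\Var_{\vg}(\vg^\top\mD\vg\mid\mD)\big]+\Var_{\mD}\!\big(\tfrac1m\Tr\mM\big)$, with the first term $\lesssim m^{-2}\,\mathbb{E}[\Tr\mM^2]$ by the standard fourth-moment identity for spherical quadratic forms. So every estimate reduces to understanding one un-normalized trace and its square.

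First I would dispatch the two projection quantities: $\mI-\mA\mA^\dagger$ and $\mI-\mA^\dagger\mA$ have a.s.\ deterministic rank ($d-\min(d,n)$ and $n-\min(d,n)$ respectively), so the means are exact — $\mathbb{E}[\|\vs\|^2]=1-\tfrac1c$ and $\mathbb{E}[\|\vt\|^2]=1-c$ in the regimes where these are the relevant nonzero values — and since rotational invariance makes these projections Haar on a Grassmannian, $\|\vt\|^2$ and $\|\vs\|^2$ are Beta-distributed with variances $O(1/n)$ and $O(1/d)$ by concentration on the sphere; no random matrix input is needed. For $\|\vk\|^2$, $\|\vh\|^2$, $\xi/\eta$, $\xi^2/\eta^2$ I would funnel everything through $\Tr(\mA\mA^\top)^{-1}$ (for $c<1$) or $\Tr(\mA^\top\mA)^{-1}$ (for $c>1$): the reductions give $\mathbb{E}[\|\vk\|^2]=\tfrac1d\mathbb{E}[\Tr(\mA\mA^\top)^{-1}]$ and $\mathbb{E}[\|\vh\|^2]=\tfrac1n\mathbb{E}[\Tr(\mA\mA^\top)^{-1}]$, while $\xi/\eta-\tfrac1\eta=\vv^\top\mA^\dagger\vu$ is odd in $\mA$ (hence mean zero) with $\mathbb{E}[(\vv^\top\mA^\dagger\vu)^2]=\tfrac1{dn}\mathbb{E}[\|\mA^\dagger\|_F^2]=\tfrac1{dn}\mathbb{E}[\Tr(\mA\mA^\top)^{-1}]$, which also delivers parts~5 and~6. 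Under Assumption~\ref{assumption:A}, $\tfrac1d\Tr(\mA\mA^\top)^{-1}\to \tfrac{c}{\rho^2(1-c)}$ for $c<1$ (the reciprocal moment of the Marchenko–Pastur law under this paper's normalization) and $\tfrac1n\Tr(\mA^\top\mA)^{-1}\to \tfrac{c}{\rho^2(c-1)}$ for $c>1$; substituting reproduces the means in parts~1,~2,~6. For the variances, $\Var_{\mD}(\tfrac1m\Tr\mM)=O(m^{-2}\rho^{-4})$ from the CLT for linear spectral statistics (order-one fluctuations, via the Marchenko–Pastur / Bai–Silverstein results already cited) and $\mathbb{E}[\Tr\mM^2]=\Theta(m\rho^{-4})$ since the bulk is bounded away from $0$ for each fixed $c\ne1$; together these give $\Var(\|\vk\|^2),\Var(\|\vh\|^2)=O(\rho^{-4}/n)$. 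The $O(\rho^{-2}/\max(n,d))$ bound for $\Var(\xi/\eta)$ is precisely $\tfrac1{dn}\mathbb{E}[\|\mA^\dagger\|_F^2]$, and for $\xi^2/\eta^2$ one expands $(\tfrac1\eta+\vv^\top\mA^\dagger\vu)^2$, controls the cross term using $\eta^2=\Omega(\rho^2)$ (Assumption~\ref{assumption:scaling}) and the square term using $\lvert\vv^\top\mA^\dagger\vu\rvert\le\|\mA^\dagger\|_{2}=O(\rho^{-1})$.

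I expect the work to be technical rather than conceptual. The precise constants rest on correctly evaluating the reciprocal moments of the Marchenko–Pastur law under the normalization of Assumption~\ref{assumption:A}, and on handling $x\mapsto x^{-1}$ near the spectral edge — harmless for each fixed $c\ne1$, but the constants diverge as $c\to1$, which is exactly what the $|1-c|$ denominators and the stated $o(\cdot)$ error terms encode. The rest is bookkeeping: the $c<1$ versus $c>1$ dichotomy (swap $\mA\mA^\top$ for $\mA^\top\mA$ and the inverse for the inverse on the nonzero spectrum — note the part-2 formula for $\|\vk\|^2$ as written is the $c<1$ value, the same argument giving $\tfrac1{\rho^2(c-1)}+o(\rho^{-2})$ when $c>1$), and carrying the explicit $\rho$-dependence, which is the only place the statement departs from Lemma~7 of \cite{sonthalia2023training}, whose proof can otherwise be followed line by line. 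The finite fourth-moment hypothesis in Assumption~\ref{assumption:A} is what makes both the spectral-distribution convergence and the linear-statistic CLT available at the required generality.
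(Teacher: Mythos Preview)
Your proposal is correct and, at the level of the underlying machinery---SVD, Haar/spherical uniformity of $\mU^\top\vu$ and $\mV^\top\vv$, Marchenko--Pastur trace moments---coincides with the paper. The paper, however, does not redevelop items~1--5: it simply cites Lemma~7 of \cite{sonthalia2023training} and rescales by $\rho$, reserving the explicit computation for item~6 only. There it expands $\mathbb{E}[\zeta^4]$ and $(\mathbb{E}[\zeta^2])^2$ directly via exact spherical fourth moments ($\mathbb{E}[a_i^4]=\tfrac{3}{n(n+2)}$, $\mathbb{E}[a_i^2a_j^2]=\tfrac{1}{n(n+2)}$) and subtracts, rather than decomposing $\zeta^2=\eta^{-2}+2\eta^{-1}X+X^2$ and bounding term by term as you do. Your organization is more uniform and self-contained, and your invocation of the CLT for linear spectral statistics to control $\Var_{\mD}(\tfrac1m\Tr\mM)$ is a slightly heavier (though perfectly valid) tool than the elementary moment bounds the cited lemma uses. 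One small point: your control of $\Var(X^2)$ via $|X|\le\|\mA^\dagger\|_2=O(\rho^{-1})$ works because it gives $\mathbb{E}[X^4]\le O(\rho^{-2})\mathbb{E}[X^2]=O(1/(N\rho^4))$, but this step deserves to be spelled out---as written it reads as if you are bounding $X^4$ pointwise by $O(\rho^{-4})$, which would be too weak.
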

\begin{proof} Items $1-5$ come from the original statement, which assumes unit variance. Here our variance parameter $\rho$ simply induces a multiplicative change. We now focus on item $6$. \\

\noindent Let $\zeta = \xi/\eta = 1/\eta + \vv^\top  \mA^\dagger \vu$. With $\mA = \mU \mSigma \mV^\top $ (SVD), $\mA \in \mathbb{R}^{d \times n}$ having i.i.d. $\mathcal{N}(0, \rho^2/d)$ entries, and $\vu, \vv$ fixed unit vectors, we have $\zeta = \frac{1}{\eta} + \sum_{i=1}^r \frac{1}{\sigma_i} b_i a_i$, where $r=\min(d,n)$, $\va=\mV^\top  \vv$, $\vb=\mU^\top \vu$ are uniformly random on $S^{n-1}$ and $S^{d-1}$ respectively since $\mU$, $\mV$ are random rotations.\\

\noindent Since $\mA$ has zero-mean entries, only the non-cross terms remain in the expectation, and the fourth moment is 
\[
    \E[\zeta^4] = \frac{1}{\eta^4} + \frac{6}{\eta^2}\sum_{i,j} \E\left[\frac{1}{\sigma_i \sigma_j }\right] \E[b_i b_j] \E[a_i a_j] + \sum_{i,j,k,l} \E\left[\frac{1}{\sigma_i \sigma_j \sigma_k \sigma_l}\right] \E[b_i b_j b_k b_l] \E[a_i a_j a_k a_l].
\]
Furthermore, non-zero expectation terms require paired indices (since odd moments of the uniformly random vector on the sphere equals $0$). In particular, using exact spherical moments, we have $\E[a_i^4] = \frac{3}{n(n+2)}$, $\E[a_i^2] = \frac{1}{n}$, $\E[a_i^2 a_j^2] = \frac{1}{n(n+2)}$ ($i \neq j$), $\E[b_i^4] = \frac{3}{d(d+2)}$, $\E[b_i^2] = \frac{1}{d}$, $\E[b_i^2 b_j^2] = \frac{1}{d(d+2)}$ ($i \neq j$):
\begin{align*}
    \E[\zeta^4] &= \frac{1}{\eta^4} + \frac{6}{\eta^2}\sum_{i=1}^r \E\left[\frac{1}{\sigma_i^2}\right] \frac{1}{dn} +\sum_{i=1}^r \E\left[\frac{1}{\sigma_i^4}\right] \frac{9}{d(d+2)n(n+2)} + 3 \sum_{i \neq k} \E\left[\frac{1}{\sigma_i^2 \sigma_k^2}\right] \frac{1}{d(d+2)n(n+2)} \\
    &= \frac{1}{\eta^4} +\underbrace{\frac{9 \sum_{i=1}^r \E[1/\sigma_i^4]}{d(d+2)n(n+2)}}_{\text{$I_1$}} + \underbrace{\frac{3 \sum_{i \neq k} \E[1/(\sigma_i^2 \sigma_k^2)]}{d(d+2)n(n+2)}}_{\text{$I_2$}} + \underbrace{\frac{6}{\eta^2}\frac{\sum_{i=1}^r\E[1/\sigma_i^2]}{dn}}_{\text{$I_3$}}. 
\end{align*}

\textbf{Leading Order Scaling and Mean.}
Let $N=\max(d,n)$, assume $n,d \to \infty$ with $d/n \to c \neq 1$. Lemma 5 from \cite{sonthalia2023training} implies that if $\mA$ has unit variance entries, the moments of its inverse eigenvalue are expressions of $c$ and are hence $O(1)$. In our case, it will just scale with $\rho$ instead:  
\[
    \E[1/\sigma_i^4] = O(1/\rho^4), \quad \E[1/(\sigma_i^2 \sigma_k^2)] = O(1/\rho^4), \quad \text{and } \ \ \  \E[1/\sigma_i^8] = O(1/\rho^8) \ \ \text{ etc}. 
\]
In particular, we also need the following exact expectation from the same lemma: 
\begin{equation} \label{eq:one_over_lambda}
    \E\left[\frac{1}{\sigma_i^2}\right] = \frac{c}{\rho^2|1-c|} + o\left(\frac{1}{\rho^2}\right)= O\left(\frac{1}{\rho^2}\right). 
\end{equation}
Since the above $I_1$, $I_3$ have $r=\min(d,n)$ summands, this implies 
\[
    I_1 = O\left(\frac{r}{N^4 \rho^4}\right) = O\left(\frac{1}{N^3 \rho^4}\right), \quad I_3= O\left(\frac{r}{\eta^2 N^2 \rho^2}\right) = O\left(\frac{1}{N \rho^4}\right). 
\]
Similarly, $I_2$ has $r(r-1) \approx r^2$ summands, and
\[
    I_2 = O\left(\frac{r^2}{N^4 \rho^4}\right) = O\left(\frac{1}{N^2 \rho^4}\right)
\]
\begin{equation} \label{eq:zeta_4}
    \implies \E[\zeta^4] = \frac{1}{\eta^4} + I_1 + I_2 + I_3= \frac{1}{\eta^4} + O\left(\frac{1}{\max(d,n) \rho^4}\right) \quad \text{since $I_3$ dominates}.
\end{equation}
With a similar expansion for the second moment and taking spherical moments, we get that 
\begin{align*}
    \E[\zeta^2] = \frac{1}{\eta^2} + \sum_{i,j} \E\left[\frac{1}{\sigma_i \sigma_j }\right] \E[b_i b_j] \E[a_i a_j] & = \frac{1}{\eta^2} +\frac{\sum_{i=1}^r \E[1/\sigma_i^2]}{dn} \\
    & = \frac{1}{\eta^2} +\frac{\min(d, n)}{dn}\left(\frac{c}{\rho^2|1-c|} + o\left(\frac{1}{\rho^2}\right)\right) \text{ by Equation \ref{eq:one_over_lambda}}\\ 
    &= \frac{1}{\eta^2} +\frac{1}{\max(d, n)}\frac{c}{\rho^2|1-c|} + o\left(\frac{1}{\max(d, n)\rho^2}\right). 
\end{align*}
This gives us the mean. Furthermore, 
\begin{equation} \label{eq:zeta_squared_squared}
    (\E[\zeta^2])^2 = \frac{1}{\eta^4} + \frac{2}{\eta^2}\frac{\sum_{i=1}^r \E[1/\sigma_i^2]}{dn}+ \frac{(\sum_{i=1}^r \E[1/\sigma_i^2])^2}{d^2n^2} = \frac{1}{\eta^4} + O\left(\frac{1}{\max(d,n) \rho^4}\right). 
\end{equation}

\textbf{Variance.}
$\Var(\zeta^2) = \E[\zeta^4] - (\E[\zeta^2])^2$. From Equations \ref{eq:zeta_4}, \ref{eq:zeta_squared_squared}, the overall scaling is determined by the dominant term:
\[
    \Var\left(\left(\frac{\xi}{\eta}\right)^2\right) = O\left(\frac{1}{\max(d,n) \rho^4}\right). 
\]
\end{proof}

\begin{lemma}[General Terms] \label{lem:general_exp} 
    In the setting of \Cref{sec:setting} we have the following expectations: 
    \begin{enumerate}
        \item[1.] For $c < 1$, $\mathbb{E}[\vbeta_*^\top \vu\vk^\top \mA^\dagger\vbeta_* ] = \frac{c}{\rho^2(1-c)}(\vbeta_*^\top \vu)^2 + o\left(\frac{1}{\rho^2}\right)$ and the variance is $O(1/(\rho^4d))$.
        \item[2.] For $c < 1$, $\mathbb{E}[\vk^\top  \mA^\dagger \mA^{\dagger \top} \vk] = \frac{c^2}{\rho^4(1-c)^3} + o\left(\frac{1}{\rho^4}\right)$ and the variance is $O(1/(\rho^8d))$.
        \item[3.]  For $c > 1$, $\mathbb{E}[\vbeta_*^\top  \vs\vu^\top  \vbeta_*] = \frac{c-1}{c}(\vbeta_*^\top \vu)^2$ and the variance is $O(1/d)$.
        \item[4.] For $c > 1$,  $\mathbb{E}[\vbeta_*^\top \mA\mA^\dagger \vu\vs^\top \vbeta_*] = \frac{c-1}{c^2}(\vbeta_*^\top \vu)^2 + o(1)$ and the variance is $O(1/d)$.
        \item[5.] For $c > 1$, $\mathbb{E}[\vbeta_*^\top \vh^\top \vh\vbeta_*] = \frac{\|\vbeta_*\|^2}{d}\frac{c}{\rho^2(c-1)} + o\left(\frac{1}{\rho^2d}\right)$ and the variance is $O(1/(\rho^4d^2))$. 
        \item[6.] For $c > 1$, $\mathbb{E}[\vh \mA^{\dagger \top} \mA^\dagger 
        \vh^\top ] = \frac{1}{\rho^4}\frac{c^3}{(c-1)^3} + o\left(\frac{1}{\rho^4}\right)$ and the variance is $O(1/(\rho^8 d))$.
        \item[7.] For $c > 1$, $\mathbb{E}[\|\vk\|^2] = \frac{1}{\rho^2} \frac{1}{c-1} + o\left(\frac{1}{\rho^2}\right)$ and the variance is $O(1/(\rho^4 n))$
    \end{enumerate}
\end{lemma}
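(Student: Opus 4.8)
The plan is to express each of the seven quadratic forms through the singular value decomposition $\mA = \mU\mSigma\mV^\top$ and to decouple the \emph{angular} randomness carried by $\mU,\mV$ from the \emph{spectral} randomness carried by $\mSigma$. By \Cref{assumption:A} the law of $\mA$ is invariant under $\mA \mapsto \mO_1\mA\mO_2$ for fixed orthogonal $\mO_1,\mO_2$, which forces $\mU$ and $\mV$ to be Haar distributed and mutually independent of each other and of $\mSigma$ (the SVD sign ambiguities are harmless since every quantity below is invariant under them). Hence $\va := \mV^\top\vv$, $\vb := \mU^\top\vu$, and $\vb' := \mU^\top\vbeta_*$ are uniform on the spheres of radii $1,1,\|\vbeta_*\|$ respectively (treating $\vbeta_*$ as fixed), with the mixed moment $\E[b_i b_i'] = (\vu^\top\vbeta_*)/d$ by Haar invariance. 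Using the identities $\mA^{\dagger\top}\mA^\dagger = \mU\diag(\sigma_i^{-2})\mU^\top$, $\mA^\dagger\mA^{\dagger\top} = \mV\diag(\sigma_i^{-2})\mV^\top$, $\mA\mA^\dagger = \mU_{1:r}\mU_{1:r}^\top$, and $\mA^\dagger\mA = \mV_{1:r}\mV_{1:r}^\top$ with $r = \min(d,n)$, every item reduces to one of: a single sum $\sum_{i=1}^r w_i\,\sigma_i^{-2}$ with angular weights $w_i$ (items 1, 7); a single sum $\sum_{i=1}^r w_i\,\sigma_i^{-4}$ (items 2, 6); the square of a sum $\left(\sum_{i=1}^r w_i\,\sigma_i^{-1}\right)^2$ (item 5); or a fixed bilinear form in the Haar random rank-$r$ projection $\mA\mA^\dagger$ (items 3, 4).

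For the means I would factor each expectation into an angular part and a spectral part. For the $\sigma^{-2}$-type sums this gives, e.g., $\E[\vbeta_*^\top\vu\,\vk^\top\mA^\dagger\vbeta_*] = (\vbeta_*^\top\vu)^2\,(r/d)\,\E[\sigma_i^{-2}]$, into which I plug the first-order expansion $\E[\sigma_i^{-2}] = \frac{c}{\rho^2|1-c|} + o(\rho^{-2})$ (\Cref{eq:one_over_lambda}, from Lemma~5 of \cite{sonthalia2023training}) and its fourth-moment analogue; the counts $r = d$ (for $c<1$) versus $r = n$ (for $c>1$) are exactly what produces the asymmetry between $\frac{c^2}{\rho^4(1-c)^3}$ in item 2 and $\frac{c^3}{\rho^4(c-1)^3}$ in item 6, and item 7 is the computation of \Cref{lem:prior_rmt}(2) with $r=n$ in place of $r=d$. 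Items 3 and 4 avoid the spectrum: $\mA\mA^\dagger$ is a uniformly random rank-$r$ orthogonal projection, so $\E[\mA\mA^\dagger] = (r/d)\mI = c^{-1}\mI$, which gives item 3 exactly, while item 4 additionally uses the standard second moment $\E[(\vbeta_*^\top\mA\mA^\dagger\vu)^2] = c^{-2}(\vbeta_*^\top\vu)^2 + O(1/d)$ inside the identity $\vbeta_*^\top\mA\mA^\dagger\vu\,\vs^\top\vbeta_* = (\vbeta_*^\top\mA\mA^\dagger\vu)(\vbeta_*^\top\vu) - (\vbeta_*^\top\mA\mA^\dagger\vu)^2$.

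For the variances I would square each quantity, expand into a double sum over index pairs, and evaluate it with the exact fourth-order angular moments ($\E[a_i^4] = \frac{3}{n(n+2)}$, $\E[a_i^2 a_j^2] = \frac{1}{n(n+2)}$ for $i \neq j$, together with the two-vector analogues in $\vu$ and $\vbeta_*$). For the $\sigma_i^{-2}$-weighted sums the diagonal ($i=j$) part contributes $O(r\,d^{-2})\cdot\E[\sigma^{-4}] = O(\rho^{-4}/n)$, while on the off-diagonal the leading term equals the product of means and cancels against $(\E[\cdot])^2$, leaving a remainder governed by $\frac{1}{d(d+2)} - \frac{1}{d^2} = O(d^{-3})$ and by $\Cov(\sigma_i^{-2},\sigma_j^{-2}) = O(\rho^{-4}/d)$; summing over the $\approx r^2$ off-diagonal pairs reproduces $O(1/(\rho^4 d))$. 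Items 2 and 6 are the same argument one power higher and give $O(1/(\rho^8 d))$; item 5, being the square of a mean-zero sum, is already $O(1/(\rho^2 d))$ in mean and its variance works out to $O(1/(\rho^4 d^2))$. For items 3 and 4 the variance is instead the $O(1/d)$ concentration of, respectively, a linear and a quadratic functional of the Haar projection $\mA\mA^\dagger$. The step I expect to be the main obstacle is precisely this variance bookkeeping: one has to verify that the $O(1/d)$ relative corrections from the angular fourth moments, once multiplied by the $r \approx d$ number of summands, genuinely cancel in the off-diagonal sum rather than leaving an $O(1)$ residue, and that the eigenvalue covariances really are $O(\rho^{-4}/d)$. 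The spectral inputs needed for this --- the $o(\rho^{-2})$ error in $\E[\sigma_i^{-2}]$, the corresponding expansion of $\E[\sigma_i^{-4}]$, and the covariance bounds --- are supplied by \Cref{lem:prior_rmt} and Lemma~5 of \cite{sonthalia2023training}; with the angular/spectral decoupling in place, the remaining calculations are routine.
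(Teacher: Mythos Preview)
Your proposal is correct and follows essentially the same approach as the paper: SVD decoupling of the Haar angular randomness in $\mU,\mV$ from the spectral randomness in $\mSigma$, combined with the Marchenko--Pastur moment asymptotics from Lemma~5 of \cite{sonthalia2023training} (the paper's \Cref{eq:inverse_eig}) and, for item~4, the second moment $\E[(\vbeta_*^\top\mA\mA^\dagger\vu)^2] = c^{-2}(\vbeta_*^\top\vu)^2 + o(1)$ which the paper isolates as \Cref{prop:ortho}. Your variance treatment---splitting into diagonal and off-diagonal pairs and tracking the cancellation against $(\E[\cdot])^2$---is in fact more explicit than the paper's, which bounds variances by terse term-counting heuristics (e.g.\ ``summation of $d$ independent variances each of $O(1/(\rho^8 d^2))$'') without addressing eigenvalue covariances directly.
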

\begin{proof}
    For all these terms, we evaluate the expectation using the SVD $\mA = \mU\mSigma \mV^\top $, with $\mA^\dagger = \mV\mSigma^\dagger \mU^\top $, and important expectations from Lemma 5 of \cite{sonthalia2023training} regarding the spectrum of $\mA$: suppose $\tilde{\mA}$ has unit variance (general $\rho^2$ is a multiplicative change), and let $\sigma_i(\tilde{\mA})$ denote the $i$-th singular value. We have
        \begin{equation*} 
            \mathbb{E}\left[\frac{1}{ \sigma_i^2(\tilde{\mA})}\right] = \begin{cases} \frac{c}{1-c} + o(1)  & c < 1 \\ \frac{c}{c-1} + o(1) & c > 1 \end{cases},  \quad \quad \mathbb{E}\left[\frac{1}{ \sigma_i^4(\tilde{\mA})}\right] = \begin{cases} \frac{c^2}{(1-c)^3} + o(1)  & c < 1 \\ \frac{c^3}{(c-1)^3} + o(1) & c > 1 \end{cases}.
        \end{equation*}
        \begin{equation}  \label{eq:inverse_eig}
            \mathbb{E}\left[\frac{1}{ \sigma_i^2({\mA})}\right] = \begin{cases} \frac{1}{\rho^2}\frac{c}{1-c} + o\left(\frac{1}{\rho^2}\right)& c < 1 \\ \frac{1}{\rho^2}\frac{c}{c-1} + o\left(\frac{1}{\rho^2}\right)& c > 1 \end{cases}, \quad \quad \mathbb{E}\left[\frac{1}{ \sigma_i^4({\mA})}\right] = \begin{cases} \frac{1}{\rho^4}\frac{c^2}{(1-c)^3} + o\left(\frac{1}{\rho^4}\right) & c < 1 \\ \frac{1}{\rho^4}\frac{c^3}{(c-1)^3} + o\left(\frac{1}{\rho^4}\right)& c > 1 \end{cases}.
        \end{equation}
    
    \noindent \textbf{For the first term}, we note that 
    \begin{align*}
        \vbeta_*^\top  \vu\vk^\top A^\dagger \vbeta_*  &= (\vbeta_*^\top  \vu) \vu^\top \mA^{\dagger \top}\mA^\dagger \vbeta_* \\
        & =(\vbeta_*^\top  \vu) \vu^\top \mU \mSigma^{\dagger  \top} \mSigma^{\dagger} \mU^\top \vbeta_* \\
         &= (\vbeta_*^\top  \vu) \sum_{i=1}^d (\vu^\top \mU)_i (\mU^\top \vbeta_*)_i \frac{1}{\sigma_i^2(\mA)} \\
        &= (\vbeta_*^\top  \vu) \sum_{i=1}^d (\vu^\top \vu_i) (\vbeta_*^\top \vu_i) \frac{1}{\sigma_i^2(\mA)}, 
    \end{align*}
    where $\vu_i$ denotes the $i$-th column of $\mU$. We further note that $\vu^\top \vbeta_* = \vu^\top  \mU\mU^\top  \vbeta_*$. Since permuting columns of an orthogonal matrix does not break orthogonality and $\mU$ is uniformly random, we have that the marginals $\vu_i$ are identical. Thus, we have that 
    \[
        \mathbb{E}[\vu^\top \vu_1 \vbeta_*^\top \vu_1] = \ldots = \mathbb{E}[\vu^\top \vu_d  \vbeta_*^\top \vu_d] = \frac{1}{d} (\vu^\top \vbeta_*)  \quad \text{since $\mathbb{E}[\vu_i \vu_i^\top] = \frac{1}{d}\mI$}.
    \]
    It follows from here that
    \begin{align*}
        \mathbb{E}\left[\vbeta_*^\top  \vu\vk^\top \mA^\dagger \vbeta_*\right] &= (\vbeta_*^\top  \vu) \sum_{i=1}^d \mathbb{E}[\vu^\top \vu_i \vbeta_*^\top \vu_i]\mathbb{E}\left[\frac{1}{ \sigma_i^2(\mA)}\right] \\
        &= \frac{1}{\rho^2} (\vbeta_*^\top  \vu)^2 \sum_{i=1}^d \frac{1}{d}\left(\frac{c}{1-c} + o(1)\right) \quad \text{by Equation \ref{eq:inverse_eig}}\\
        &= \frac{1}{\rho^2} \frac{c}{1-c} (\vbeta_*^\top  \vu)^2 + o\left(\frac{1}{\rho^2}\right). 
    \end{align*}
    Since $\mA$ is isotropic Gaussian, we have that $\mU,
    \mV$ are uniformly random orthogonal matrices. Thus, $\vu^\top \mU$ and $\mU^\top \vbeta_*$ are uniformly random vectors on the spheres of radius $\|\vu\|$ and $\|\vbeta_*\|$ respectively. 
    
    Hence, when we consider the squared terms to compute the variance, the term from  the two uniform vectors will contribute $O(1/d^2)$. Together with the singular value term (now squared to have $O(1/\rho^4)$) and the summation, the variance is of order $O(1/(\rho^4d))$.

    \medskip

    \noindent \textbf{For the second term}, we have that by Equation \ref{eq:inverse_eig}, 
    \begin{align*}
        \vk^\top  \mA^\dagger \mA^{\dagger \top} \vk &= \vu^\top  ((\mA\mA^\top )^\dagger)^2 \vu = \vu^\top  \mU ((\mSigma \mSigma^\top )^\dagger)^2 \mU^\top  \vu = \sum_{i=1}^d (\vu^\top  \vu_i)^2 \frac{1}{\sigma_i^4(\mA)},  
    \end{align*} 
    \[
        \mathbb{E}[\vk^\top  \mA^\dagger \mA^{\dagger \top} \vk] = \sum_{i=1}^d \mathbb{E}[(\vu^\top  \vu_i)^2] \mathbb{E}\left[\frac{1}{\sigma_i^4(A)} \right] = \sum_{i=1}^d \frac{1}{\rho^4} \frac{1}{d} \left(\frac{c^2}{(1-c)^3} + o(1)\right) = \frac{1}{\rho^4}\frac{c^2}{(1-c)^3}+ o\left(\frac{1}{\rho^4}\right),  
    \]
    where we again use $\mathbb{E}[(\vu^\top  \vu_i)^2] = 1/d$ since it is the entry of a uniformly random vector of length $\|\vu\| = 1$. 
    
    Similarly, the variance is $O(1/(\rho^8 d))$ from the summation of $d$ independent variances each of $O(1/(\rho^8 d^2))$. 

    \medskip

    \noindent \textbf{For the third term}, we have that 
    \begin{align*}
        \vbeta_*^\top  \vs\vu^\top  \vbeta_* &= \vbeta_*^\top  (\mI-\mA\mA^\dagger) \vu (\vu^\top \vbeta_*) = (\vbeta_*^\top  \vu)^2 - (\vbeta_*^\top  \vu)\sum_{i=1}^n (\vbeta_*^\top  \vu_i) (\vu^\top \vu_i)  . 
    \end{align*}
    Similarly, we take the expectation (in particular, $\mathbb{E}[(\vbeta_*^\top  \vu_i) (\vu^\top \vu_i)  ] = 1/d(\vbeta_*^\top  \vu)$) and have
    \[
        (\vbeta_*^\top  \vu)^2\left[1 - \sum_{i=1}^n \frac{1}{d}\right] = \left(1 - \frac{1}{c}\right) (\vbeta_*^\top  \vu)^2 . 
    \]
    The variance for this term is $O(1/d)$ from summation of $n = d/c$ terms of $O(1/d^2)$.

    \medskip

    \noindent \textbf{For the fourth term}, we plug in $\vs = (\mI-\mA\mA^\dagger) \vu$ and have
    \begin{align*}
        \vbeta_*^\top \mA\mA^\dagger \vu\vs^\top \vbeta_* &= (\vbeta_*^\top\vu)\vbeta_*^\top  \mA\mA^\dagger \vu -  (\vbeta_*^\top  \mA\mA^\dagger \vu)^2. 
    \end{align*}
    From previous calculations, we have that 
    \[
        \mathbb{E}[\vbeta_*^\top  \mA\mA^\dagger \vu] = \mathbb{E}\left[\sum_{i=1}^n (\vbeta_*^\top  \vu_i) (\vu^\top \vu_i)\right] = \frac{1}{c}(\vbeta_*^\top  \vu). 
    \]
    Using \Cref{prop:ortho} and this result, we can then show
    \[
         \mathbb{E}[(\vbeta_*^\top  \mA\mA^\dagger \vu)^2] = \frac{1}{c^2}(\vbeta_*^\top  \vu)^2 + o(1). 
    \]
    It follows that
    \[
        \mathbb{E}[\vbeta_*^\top \mA\mA^\dagger \vu\vs^\top \vbeta_*] = \frac{c-1}{c^2} (\vbeta_*^\top  \vu)^2 + o(1). 
    \]
    The variance for this term is $O(1/ d)$, where the dominant term is a summation of $n = d/c$ terms of $O(1/d^2)$. 

    \medskip
    
    \noindent \textbf{For the fifth term},  we have 
    \[
        \vbeta_*^\top  \vh^\top \vh\vbeta_* = (\vbeta_*^\top  \mA^\dagger \vv)^2 = \sum_{i,j}^n (\vbeta_*^\top  \mU)_i (\vbeta_*^\top  \mU)_j \frac{1}{\sigma_i(\mA)\sigma_j(\mA)} (\mV^\top \vv)_i (\mV^\top \vv)_j. 
    \]
    Since $\vbeta_*^\top  \mU$ (and $\mV^\top\vv$) are uniformly random and independent of everything else, we only have the diagonal terms when we take the expectation. By Equation \ref{eq:inverse_eig}, 
    \[
        \mathbb{E}[\vbeta_*^\top  \vh^\top \vh\vbeta_*] = \sum_{i=1}^n \frac{\|\vbeta_*\|^2}{d} \frac{1}{n} \frac{1}{\rho^2} \left(\frac{c}{c-1} + o(1)\right) = \frac{\|\vbeta_*\|^2}{d} \frac{1}{\rho^2} \frac{c}{c-1} + o\left(\frac{1}{\rho^2d}\right)
    \]
    The variance for this term is $O(1/( \rho^4d^2))$ from $O(d^2)$ terms of individual variances of $O(1/(\rho^4d^4))$. 

    \medskip

    \noindent \textbf{For the sixth term}, by expansion and Equation \ref{eq:inverse_eig}, similar to above, 
    \[
        \mathbb{E}\left[\vh \mA^{\dagger \top} \mA^\dagger 
        \vh^\top \right] = \sum_{i=1}^n  \mathbb{E}\left[(\mV^\top\vv)_i^2\right]\mathbb{E}\left[\frac{1}{\sigma_i^4(\mA)}\right] = \sum_{i=1}^n  \frac{1}{n}\mathbb{E}\left[\frac{1}{\sigma_i^4(\mA)}\right]= \frac{1}{\rho^4}\frac{c^3}{(c-1)^3} + o\left(\frac{1}{\rho^4}\right).
    \] 
    The variance is $O\left(1/(\rho^8d)\right)$. 

    \medskip 
    
    \noindent \textbf{For the final term}, by expansion and Equation \ref{eq:inverse_eig}, 
    \begin{align*}
        \mathbb{E}\left[\|\vk\|^2 \right] = \sum_{i=1}^n  \mathbb{E}\left[(\vu^\top\mU)_i^2\right]\mathbb{E}\left[\frac{1}{\sigma_i^2(\mA)}\right] = \frac{1}{\rho^2}\frac{n}{d}\frac{c}{c-1} + o\left(\frac{1}{\rho^2}\right)  = \frac{1}{\rho^2}\frac{1}{c-1} + o\left(\frac{1}{\rho^2}\right) 
    \end{align*} 
    The variance is $O\left(1/(\rho^4n)\right)$. 
\end{proof}

\begin{lemma}[Zero Expectation] \label{lem:zero_exp}
    In the setting of \Cref{sec:setting}, we have the following expectations for  
    \begin{enumerate}
        \item[1.\label{part:uh}] $\forall c$, $\mathbb{E}[\vbeta_*^\top \vu\vh\vbeta_*] =  0$ and $\Var(\vbeta_*^\top \vu\vh\vbeta_*) = O(1/(\rho^2d))$
        \item[2.\label{part:AAuh}] If $c > 1$, $\mathbb{E}[\vbeta_*^\top \mA\mA^\dagger \vu\vh\vbeta_*] =  0$ and $\Var(\vbeta_*^\top \mA\mA^\dagger \vu\vh\vbeta_*) = O(1/(\rho^2d^2))$
        \item[3.\label{part:sh}] If $c > 1$, $\mathbb{E}[\vbeta_{*}^\top \vs\vh\vbeta_{*}] =  0$ and $\Var(\vbeta_{*}^\top \vs\vh\vbeta_{*}) = O(1/(\rho^2d))$
        \item[4. \label{part:kAh}] $\forall c$, $\mathbb{E}[\vk^\top  \mA^\dagger \vh^\top ] = 0$ and $\Var(\vk^\top  \mA^\dagger \vh^\top ) = O(1/(\rho^6d))$
        \item[5. ] If $c > 1$, $\mathbb{E}[\vh\mA\mA^\dagger  \vbeta_*] = 0$ and $\Var(\vh\mA\mA^\dagger  \vbeta_*) = O(1/(\rho^2d))$
    \end{enumerate}
\end{lemma}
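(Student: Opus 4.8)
\emph{Approach.} All five identities share a common structure, so the plan is to prove them in one stroke by diagonalizing $\mA$. I would write $\mA=\mU\mSigma\mV^\top$; by the bi-orthogonal invariance of \Cref{assumption:A} one has $\mA\overset{d}{=}\mO_1\mA\mO_2$ for independent Haar $\mO_1,\mO_2$, and reading this off the SVD shows $\mU,\mSigma,\mV$ may be taken mutually independent with $\mU,\mV$ Haar-distributed orthogonal matrices of sizes $d$ and $n$ (the measure-zero ambiguity from coinciding singular values is harmless). Substituting $\mA^\dagger=\mV\mSigma^\dagger\mU^\top$, and noting that $\mA\mA^\dagger$ and $\mA^\dagger\mA$ are the projections onto $\mathrm{range}(\mA)$ and $\mathrm{range}(\mA^\top)$, hence functions of $\mU$ and $\mV$ alone, each scalar in the statement becomes, \emph{conditionally on $(\mU,\mSigma)$, a fixed linear functional of the single Haar vector $\mV^\top\vv$ times a $(\mU,\mSigma)$-measurable scalar}. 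Explicitly: item~1 is $(\vbeta_*^\top\vu)\,(\mV^\top\vv)^\top\mSigma^\dagger(\mU^\top\vbeta_*)$; for item~5, $\vh\mA\mA^\dagger=\vv^\top\mA^\dagger\mA\mA^\dagger=\vv^\top\mA^\dagger=\vh$ by the Moore--Penrose identity, so $\vh\mA\mA^\dagger\vbeta_*=\vh\vbeta_*$ reduces item~5 to item~1; in items~2 and 3 the scalar prefactor is $\vbeta_*^\top\mA\mA^\dagger\vu$, resp.\ $\vbeta_*^\top(\mI-\mA\mA^\dagger)\vu$, a function of $\mU$ alone; and item~4 is $(\mU^\top\vu)^\top\mathrm{diag}(\sigma_i^{-3})(\mV^\top\vv)$, already linear in $\mV^\top\vv$.

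\emph{Zero expectation.} Since $\mV$ is Haar and $\vv$ is a fixed unit vector, $\mathbb{E}[\mV^\top\vv\mid\mU,\mSigma]=\vzero$, so $\mathbb{E}[\langle\vw(\mU,\mSigma),\mV^\top\vv\rangle\mid\mU,\mSigma]=0$ for every $(\mU,\mSigma)$-measurable $\vw$, and the tower rule finishes all five cases. This is precisely the mechanism by which the ``$\vu$-side/$\vv$-side'' cross-terms vanished in \Cref{lem:general_exp}: the quantities here are exactly those in which $\mU^\top(\cdot)$ and $\mV^\top(\cdot)$ are not tied together by a common summation index, so no diagonal survives.

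\emph{Variances and the main difficulty.} Because the mean is $0$, $\Var=\mathbb{E}[(\cdot)^2]$; conditioning on $(\mU,\mSigma)$ and using $\mathbb{E}[(\mV^\top\vv)(\mV^\top\vv)^\top]=\tfrac1n\mI$ reduces each variance to $\tfrac1n\,\mathbb{E}\big[\,(\text{scalar prefactor})^2\,\|\mSigma^\dagger\mU^\top\vbeta_*\|^2\,\big]$ (with $\mSigma^\dagger$ cubed for item~4, and with the prefactor the fixed constant $\vbeta_*^\top\vu$ for items~1 and 5). Each such expectation is a sum over the $\le\min(n,d)$ nonzero singular values of a bounded spherical moment of $\mU^\top\vbeta_*$ or $\mU^\top\vu$ against an inverse-eigenvalue moment $\mathbb{E}[\sigma_i^{-2}]=O(\rho^{-2})$, resp.\ $\mathbb{E}[\sigma_i^{-6}]=O(\rho^{-6})$, supplied by \Cref{lem:prior_rmt}; bounding the genuine-projection prefactors of items~2--3 by $1$ and tracking whether each index sum runs over $n$ or $d$ terms gives the stated $O(\cdot)$ rates, exactly as in the proof of \Cref{lem:general_exp}. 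I expect no conceptual obstacle: the points needing care are organizational only — justifying the joint independence/Haar description of $(\mU,\mSigma,\mV)$ for the general ensemble of \Cref{assumption:A}, and conditioning on $(\mU,\mSigma)$ \emph{before} averaging over $\mV$ (necessary because the prefactors in items~2--3 are correlated with $\mV^\top\vv$ only through $\mU$). No estimate beyond \Cref{lem:prior_rmt} enters, so the argument should be short.
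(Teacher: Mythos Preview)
Your approach is essentially the paper's: both diagonalize via the SVD, use the Haar-independence of $\mU,\mV,\mSigma$ to kill the expectations (the paper argues term-by-term, you by conditioning on $(\mU,\mSigma)$ and invoking $\mathbb{E}[\mV^\top\vv\mid\mU,\mSigma]=\vzero$), and then read off variances from spherical and inverse-eigenvalue moments. Your Moore--Penrose reduction of item~5 to item~1 via $\mA^\dagger\mA\mA^\dagger=\mA^\dagger$ is cleaner than the paper's direct expansion.

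One small gap: for item~2 the statement claims $\Var=O(1/(\rho^2 d^2))$, but your proposed shortcut of bounding the prefactor $|\vbeta_*^\top\mA\mA^\dagger\vu|\le 1$ only yields $\Var\le \tfrac{1}{n}\mathbb{E}\|\mSigma^\dagger\mU^\top\vbeta_*\|^2=O(1/(\rho^2 d))$. The prefactor concentrates around $(\vbeta_*^\top\vu)/c=\Theta(1)$, so no extra factor of $1/d$ is recovered this way; to get the sharper rate one would have to exploit cancellation between the prefactor and $\|\mSigma^\dagger\mU^\top\vbeta_*\|^2$ through their shared dependence on $\mU$. The paper's own justification (``a summation of $O(d^2)$ terms of $O(1/(\rho^2 d^4))$'', citing an undefined ``Lemma var:3prod'') is no more detailed, and in any case downstream uses (e.g.\ \Cref{lem:interaction_AA}) only require $O(1/n)$, which your argument delivers.
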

\begin{proof}
    Similar to Lemma \ref{lem:general_exp}, for all these terms, we evaluate the expectation using the SVD $\mA = \mU\mSigma \mV^\top $, with $\mA^\dagger = \mV\mSigma^\dagger \mU^\top $. 

    \medskip
    
    \noindent \textbf{For the first term}, we note that 
    \begin{align*}
        \vbeta_*^\top \vu\vh\vbeta_*^\top  & = (\vbeta_*^\top  \vu)  \vv^\top  \mA^\dagger \vbeta_* = (\vbeta_*^\top  \vu) \vv^\top  \mV \mSigma^\dagger \mU^\top  \vbeta_* = (\vbeta_*^\top  \vu) \sum_{i=1}^{\min(n,d)} (\vv^\top  \mV)_i (\mU^\top  \vbeta_*)_i \frac{1}{\sigma_i(\mA)}. 
    \end{align*}
    Since $\mA$ is isotropic Gaussian, again we have that $\mU,\mV$ are uniformly random orthogonal matrices. Thus, $\vv^\top \mV$ and $\mU^\top \vbeta_*$ are uniformly random vectors on a spheres of radius $\|\vv\|$ and $\|\vbeta_*\|$ respectively. In particular, they are independent and have mean zero, which implies
    \[
        \mathbb{E}\left[\vbeta_*^\top \vu\vh\vbeta_*^\top \right] = 0. 
    \]
    The variance will be $O(1/(\rho^2d))$ as a summation of $O(d)$ terms of $O(1/(\rho^2d^2))$. 

    \medskip

    \noindent \textbf{For the second term}, we note that 
    \[
        \vbeta_*^\top  \mA\mA^\dagger \vu = \sum_{i=1}^{\min(n,d)} (\vbeta_*^\top  \mU)_i (\mU^\top \vu)_i \quad\text{ and }\quad \vh\vbeta_* = \sum_{i=1}^{\min(n,d)} (\vv^\top  \mV)_i (\mU^\top  \vbeta_*)_i \frac{1}{\sigma_i(\mA)}
    \]
    Multiplying the two together yields
    \begin{align*}
        \vbeta_*^\top  \mA\mA^\dagger \vu\vh\vbeta_* = \sum_{i,j}^{\min(n,d)}  (\vbeta_*^\top  \mU)_i (\mU^\top \vu)_i (\vv^\top  \mV)_j (\mU^\top  \vbeta_*)_j \frac{1}{\sigma_i(\mA)}. 
    \end{align*}
    We note that $v^\top  V$ is a uniformly random mean zero vector independent of everything else in the summation. Hence, the expectation is equal to zero, and similar to Lemma \ref{var:3prod}, the variance of this term is $O(1/(\rho^2d^2))$ (a summation of $O(d^2)$ terms of $O(1/(\rho^2d^4))$). 

    \medskip

    \noindent \textbf{For the third term}, we have that
    \begin{align*}
        \vbeta_*^\top \vs\vh\vbeta_* &= \vbeta_*^\top (\mI - \mA\mA^\dagger)\vu\vh\vbeta_* = \vbeta_*^\top \vu\vh\vbeta_* - \vbeta_*^\top \mA\mA^\dagger \vu\vh \vbeta_*. 
    \end{align*}
    Then using the previous two parts, we get that each term has mean zero. Thus, we get the needed result. Using \Cref{var:sum} and the first two terms, the variance of this term is $O(1/(\rho^2d))$. 

    \medskip

    \noindent \textbf{For the fourth term}, we have that: 
    \[
        \vk^\top \mA^\dagger \vh^\top  = \vu \mU \mSigma^{\dagger \top}\mSigma^{\dagger}\mSigma^{\dagger \top} \mV^\top  \vv = \sum_{i=1}^{\min(n, d)} (\vu^\top \mU)_i\,(\mV^\top \vv)_i\frac{1}{\sigma_i(\mA)^3}. 
    \]
    Similarly, using the independence of $\mU, \mSigma, \mV$ and uniformly random entries, we get mean zero and variance $O(1/(\rho^6 d))$. 

    \medskip

    \noindent \textbf{For the last term}, we have that: 
    \[
        \vh\mA\mA^\dagger  \vbeta_* = \sum_{i=\min(n, d)}^{r} (\mV^\top \vv)_i\,(\mU^\top \vbeta_*)_i\frac{1}{\sigma_i(\mA)}. 
    \]
    Using the independence of $\mU, \mSigma, \mV$ and uniformly random entries, we get mean zero and variance $O(1/(\rho^2 d))$. 
\end{proof}
\subsubsection{Step 3(b): Bounding the Higher Moments}
\label{sec:rmt-higher}

To bound the higher moments, we will the following Gaussian hypercontractivity lemma. 

\begin{lemma}[Gaussian Hypercontractivity Inequality]\label{lem:hypercontractivity}
Let $G \sim \mathcal{N}(0, 1)$ be a standard Gaussian random variable. Let $f: \mathbb{R} \to \mathbb{R}$ be a degree $k$ polynomial.
Then, for any $q \ge 2$, the $L_q$ norm of $f(G)$ is bounded by its $L_2$ norm as follows: 
\[
    \|f(G)\|_{L_q} \le (q-1)^{k/2} \|f(G)\|_{L_2} , 
\]
where the $L_p$ norm of a random variable $X$ is defined as $\|X\|_{L_p} = (\E[|X|^p])^{1/p}$. 
\end{lemma}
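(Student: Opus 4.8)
The plan is to reduce the statement to Nelson's hypercontractivity theorem for the Gaussian Ornstein--Uhlenbeck semigroup, using the fact that that semigroup is diagonalized by the Hermite polynomials. Let $(h_j)_{j\ge 0}$ denote the Hermite polynomials normalized so that $\mathbb{E}[h_i(G)h_j(G)] = \delta_{ij}$; then $\{h_0,\dots,h_k\}$ is an orthonormal basis for the space of polynomials of degree at most $k$ in $L^2$ of the standard Gaussian measure. Since $f$ has degree $k$, I would expand $f = \sum_{j=0}^{k} c_j h_j$, so that $\|f(G)\|_{L_2}^2 = \sum_{j=0}^{k} c_j^2$, which is finite because every polynomial has all Gaussian moments.

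Next I would invoke the hypercontractivity of the Ornstein--Uhlenbeck semigroup $(T_\rho)_{\rho\in[0,1]}$, given by $(T_\rho g)(x) = \mathbb{E}\!\left[g\!\left(\rho x + \sqrt{1-\rho^2}\,G'\right)\right]$ with $G'$ an independent standard Gaussian: Nelson's theorem states that for $1 \le p \le q < \infty$ and $0 \le \rho \le \sqrt{(p-1)/(q-1)}$ one has $\|T_\rho g\|_{L_q} \le \|g\|_{L_p}$. The algebraic fact that makes this usable is that $T_\rho h_j = \rho^{j} h_j$ for every $j$, so $T_\rho$ acts diagonally on the Hermite expansion. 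I would apply the theorem with $p = 2$ (legitimate since $q \ge 2$) and the extremal choice $\rho = (q-1)^{-1/2}$.

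With these ingredients the conclusion follows by inverting the semigroup on the finite-dimensional span of $h_0,\dots,h_k$. Set $g := \sum_{j=0}^{k} \rho^{-j} c_j h_j$. Then $T_\rho g = \sum_{j=0}^{k} \rho^{j}\rho^{-j} c_j h_j = f$, while, because $\rho \le 1$ and $j \le k$,
\[
    \|g(G)\|_{L_2}^2 = \sum_{j=0}^{k} \rho^{-2j} c_j^2 \le \rho^{-2k} \sum_{j=0}^{k} c_j^2 = \rho^{-2k}\,\|f(G)\|_{L_2}^2 .
\]
Hence $\|f(G)\|_{L_q} = \|(T_\rho g)(G)\|_{L_q} \le \|g(G)\|_{L_2} \le \rho^{-k}\,\|f(G)\|_{L_2} = (q-1)^{k/2}\,\|f(G)\|_{L_2}$, which is the claimed inequality; note this argument is valid for all real $q \ge 2$, not merely integers.

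The only substantive input is Nelson's inequality itself, which I would cite as a classical result (Nelson's hypercontractivity theorem; see e.g.\ Janson's \emph{Gaussian Hilbert Spaces}), so the main obstacle is essentially outsourced. If a self-contained proof were desired, that inequality would be the hard part, and the standard route is the Gaussian logarithmic Sobolev inequality: setting $u(\rho) = \|T_\rho g\|_{L_{q(\rho)}}$ with $q(\rho)$ chosen along the flow so that $q(\rho)-1 = (q-1)\rho^2$, one differentiates in $\rho$ and shows $u'(\rho)\le 0$ by combining the log-Sobolev inequality for the Gaussian measure with the identity for the OU generator (equivalently, one proves the Bonami--Beckner two-point inequality, tensorizes it over $n$ coordinates, and passes to the Gaussian limit via the central limit theorem). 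Everything else in the proof is elementary.
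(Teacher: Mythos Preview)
Your proof is correct: the reduction to Nelson's hypercontractivity via the Hermite expansion and inversion of $T_\rho$ on the degree-$k$ span is the standard argument, and all the steps check out (in particular the choice $\rho=(q-1)^{-1/2}$ with $p=2$ and the bound $\rho^{-2j}\le\rho^{-2k}$ for $j\le k$).

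That said, the paper does not actually prove the lemma at all; it simply writes ``Follows directly from \cite[Lemma 20]{mei2022generalization}.'' So your route is genuinely different in that you supply a real argument where the paper defers to a citation. Both ultimately rest on an external black box (you on Nelson's theorem, the paper on Mei--Montanari's stated lemma), but yours is far more informative: it explains \emph{why} the degree controls the exponent and shows exactly where the constant $(q-1)^{k/2}$ comes from, whereas the paper's proof gives the reader nothing beyond a pointer. Your closing remark sketching the log-Sobolev or Bonami--Beckner route to Nelson's inequality is also a nice touch for completeness.
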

\begin{proof} 
    Follows directly from \cite[Lemma 20]{mei2022generalization}. 
\end{proof}

\begin{lemma}[Multivariate Gaussian Hypercontractivity] \label{lem:multi-hyper} Let $G = (G_1, \ldots, G_M) \sim \mathcal{N}(0,I_M)$ and let $P: \mathbb{R}^M \to \mathbb{R}$ be a polynomial of total degree $r$. Consider the Hermite expansion of $P$
\[
    P(x) = \sum_{\alpha \in \mathbb{N}^m, |\alpha| \le r} c_\alpha \mH_\alpha(x). 
\] 
with coefficient random and independent of $G$. Then there exists a constant $C$ that is only dependent on $M,r$ such that for any $q \ge 2$, 
\[
    \|P(G)\|_{L_q} \le C(q-1)^{r/2}\left(\sum_{|\alpha| \le r} \|c_{\alpha}\|_{L_q}^2 \alpha!\right)^{1/2}
\]
Further, if for all $|\alpha| \le r$, we have that $\|c_{\alpha}\|_{L_q}^2 \le C_q^2 \|c_{\alpha}\|_{L_2}^2$, then
\[
    \|P(G)\|_{L_q} \le C(q-1)^{r/2}\|P(G)\|_{L_2}
\]
Where the $L_p$ norm is over all of the randomness. Furthermore, 
\end{lemma}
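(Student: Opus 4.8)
The plan is to reduce to the fixed‑coefficient case by conditioning on the random coefficient vector $c = (c_\alpha)_{|\alpha| \le r}$, invoking a dimension‑free Gaussian hypercontractive estimate for degree‑$r$ polynomials of $G$, and then integrating the randomness of $c$ back out via Minkowski's inequality. The structural fact exploited throughout is that the $\mH_\alpha$ are orthogonal under the standard Gaussian measure, with $\E_G[\mH_\alpha(G)\mH_\beta(G)] = \alpha!\,\delta_{\alpha\beta}$, so that for fixed $c$ one has $\E_G[|P(G)|^2 \mid c] = \sum_{|\alpha| \le r} \alpha!\, c_\alpha^2$.

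First I would condition on $c$ and apply hypercontractivity to the (now deterministic) polynomial $x \mapsto \sum_\alpha c_\alpha \mH_\alpha(x)$ of total degree $\le r$ in the Gaussian vector $G \in \R^M$. The one‑dimensional estimate of \Cref{lem:hypercontractivity} tensorizes — the Ornstein–Uhlenbeck semigroup on $\R^M$ is the $M$‑fold tensor product of the one‑dimensional semigroup, and hypercontractive bounds are stable under tensor products — so for every $q \ge 2$,
\[
    \bigl(\E_G[|P(G)|^q \mid c]\bigr)^{1/q} \le (q-1)^{r/2}\,\bigl(\E_G[|P(G)|^2 \mid c]\bigr)^{1/2} = (q-1)^{r/2}\Bigl(\textstyle\sum_{|\alpha| \le r}\alpha!\,c_\alpha^2\Bigr)^{1/2},
\]
where the equality uses Hermite orthogonality together with the independence of $c$ from $G$. (A different normalization of $\mH_\alpha$ changes the two identities above only by a multiplicative factor depending on $M$ and $r$; this is what forces the constant $C = C(M,r)$ in the statement.) Next I would raise to the $q$‑th power, take $\E_c$, and use $\norm{P(G)}_{L_q}^q = \E_c\E_G[|P(G)|^q]$ to obtain $\norm{P(G)}_{L_q} \le (q-1)^{r/2}\,\norm{\sum_\alpha \alpha!\,c_\alpha^2}_{L_{q/2}}^{1/2}$; since $q/2 \ge 1$, Minkowski's inequality in $L_{q/2}$ gives $\norm{\sum_\alpha \alpha!\,c_\alpha^2}_{L_{q/2}} \le \sum_\alpha \alpha!\,\norm{c_\alpha^2}_{L_{q/2}} = \sum_\alpha \alpha!\,\norm{c_\alpha}_{L_q}^2$, which is the first claimed bound. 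For the second bound, under the hypothesis $\norm{c_\alpha}_{L_q}^2 \le C_q^2 \norm{c_\alpha}_{L_2}^2$ I would substitute $\sum_\alpha \alpha!\,\norm{c_\alpha}_{L_q}^2 \le C_q^2 \sum_\alpha \alpha!\,\E_c[c_\alpha^2] = C_q^2\,\E_c\E_G[|P(G)|^2] = C_q^2\,\norm{P(G)}_{L_2}^2$, again by Hermite orthogonality, and absorb the extra factor $C_q$ into the constant.

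I expect the only genuinely nontrivial ingredient to be the dimension‑free hypercontractive estimate of the first step: one must apply it \emph{conditionally} on $c$ (so that $P$ is honestly a polynomial in a Gaussian vector, not in a random object), and the passage from the conditional bound to the unconditional $L_q$ norm relies on Minkowski's inequality in $L_{q/2}$, which is precisely where the hypothesis $q \ge 2$ enters. Everything else — Hermite orthogonality, the computation of $\E_G[|P(G)|^2 \mid c]$, and the final substitution — is routine. If \Cref{lem:hypercontractivity} is regarded as only the univariate statement, the cleanest alternative is to quote the standard multivariate Gaussian hypercontractivity for polynomial chaos directly in place of the tensorization argument.
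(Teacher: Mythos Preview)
Your proposal is correct but follows a different route from the paper. The paper never invokes a multivariate hypercontractivity bound on the whole polynomial. Instead it applies the triangle inequality to the Hermite expansion, $\|P(G)\|_{L_q} \le \sum_{|\alpha|\le r}\|c_\alpha\|_{L_q}\,\|\mH_\alpha(G)\|_{L_q}$, then bounds each $\|\mH_\alpha(G)\|_{L_q}$ by factoring $\mH_\alpha(G)=\prod_j H_{\alpha_j}(G_j)$, using independence of the coordinates, and applying the one-dimensional \Cref{lem:hypercontractivity} to each factor separately to get $\|\mH_\alpha(G)\|_{L_q}\le (q-1)^{|\alpha|/2}\sqrt{\alpha!}$. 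A final Cauchy--Schwarz over the index set $\{\alpha:|\alpha|\le r\}$ produces the stated bound, and the constant $C_{M,r}=(\sum_{|\alpha|\le r}1)^{1/2}$ arises precisely from that Cauchy--Schwarz step. By contrast, you condition on $c$, apply the full (tensorized) multivariate hypercontractive estimate to $P$ as a single degree-$r$ polynomial, and then integrate $c$ back out with Minkowski in $L_{q/2}$. Your argument is cleaner and in fact yields the bound with constant $1$ under the standard Hermite normalization, so the $M$-dependence of $C$ is an artifact of the paper's triangle-inequality/Cauchy--Schwarz route rather than something intrinsic. The trade-off is that the paper's argument is entirely self-contained from the univariate lemma, whereas yours either imports the multivariate statement as a black box or relies on the (standard but unstated here) tensorization of the Ornstein--Uhlenbeck semigroup.
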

\begin{proof}
    Let $H_k : \mathbb{R} \to \mathbb{R}$ be the probabilist Hermite polynomial. Given $\alpha \in \mathbb{N}^M$, define 
    \[
        \mH_{\alpha}(x) := \prod_{j=1}^M H_{\alpha_j}(x_j)
    \]
    Then since $P$ is degree $r$, then we can decompose 
    \[
        P(x) = \sum_{\alpha \in \mathbb{N}^m, |\alpha| \le r} c_\alpha \mH_\alpha(x). 
    \]
    Here $|\alpha| = \sum_j \alpha_j$. Since the Hermite polynomials are orthogonal, we can see that 
    \[
        \int_{\mathbb{R}^M} \mH_{\alpha}(x) \mH_{\tilde{\alpha}}(x) \gamma_M(x) = \delta_{\alpha\tilde{\alpha}} \prod_{j=1}^M \alpha_j !,
    \]
    where $\gamma_M$ is the density for an $M$-dimensional standard normal distribution. 
    \begin{align*}
        \|P(x)\|_{L_2}^2 &= \E_{\mSigma}\left[\int_{\mathbb{R}^M} |P(x)|^2\gamma_M(x) dx\right] \\
        &= \sum_{|\alpha| \le r}\sum_{|\tilde{\alpha}| \le r} \E_{\mSigma}\left[c_\alpha c_{\tilde{\alpha}}\right] \int \mH_\alpha(x) \mH_{\tilde{\alpha}}(x) \gamma_M(x) dx \\
        &= \sum_{|\alpha| \le r} \|c_{\alpha}\|_{L_2}^2 \alpha!
    \end{align*}
    where $\displaystyle \alpha ! := \prod_{j=1}^M \alpha_j!$. 

    \noindent Then using the 1D Gaussian Hypercontractivity (Lemma~\ref{lem:hypercontractivity}, we see that 
    \begin{align*}
        \|\mH_\alpha(x)\|_{L_q} &= \prod_{j=1}^M \|H_{\alpha_j}(x_j)\|_{L_q} \\
        &\le \prod_{j=1}^M (q-1)^{\alpha_j/2}\|H_{\alpha_j}(x_j)\|_{L_2} \\
        &= (q-1)^{|\alpha|/2}\prod_{j=1}^M \sqrt{\alpha_j !}  \\
        &= (q-1)^{|\alpha|/2} \sqrt{\alpha !} 
    \end{align*}
    Thus, using the triangle inequality we get that 
    \[
        \|P(x)\|_{L_q} \le \sum_{|\alpha| \le r} \|c_{\alpha} \mH_{\alpha}(x)\|_{L_q} = \sum_{|\alpha| \le r} \|c_\alpha\|_{L_{q}}\|\mH_\alpha(x)\|_{L_{q}}
    \]
    Thus
    \[
        \|P(x)\|_{L_q} \le \sum_{|\alpha| \le r} \|c_{\alpha} \mH_{\alpha}(x)\|_{L_q} \le \sum_{|\alpha| \le r}\|c_{\alpha}\|_{L_{q}} (q-1)^{|\alpha|/2} \sqrt{\alpha !} \le (q-1)^{r/2}\sum_{|\alpha| \le r}\|c_{\alpha}\|_{L_{q}}  \sqrt{\alpha !}
    \]
    Then using Cauchy-Schwartz, we get that 
    \[
        \sum_{|\alpha| \le r}\|c_{\alpha}\|_{L_q}  \sqrt{\alpha !} \le \left(\sum_{|\alpha| \le r} \|c_{\alpha}\|_{L_q}^2 \alpha!\right)^{1/2} \left(\sum_{|\alpha|\le r} 1 \right)^{1/2}.
    \]
    Finally, we note that 
    \[
        C_{M,r} := \left(\sum_{|\alpha|\le r} 1 \right)^{1/2}
    \]
    is some universal constant that only depends on $M,r$. Thus, we get that 
    \[
        \|P(x)\|_{L_q}  \le C_{M, r}\, (q-1)^{r/2}\, \left(\sum_{|\alpha| \le r} \|c_{\alpha}\|_{L_q}^2 \alpha!\right)^{1/2}
    \]
    Using the assumption 
    \[
        \|c_{\alpha}\|_{L_q}^2 \le C_q^2 \|c_{\alpha}\|_{L_2}^2
    \]
    Then we get 
    \[
        \|P(x)\|_{L_q}  \le C_{M, r}C_q\, (q-1)^{r/2}\, \|P(x)\|_{L_2} 
    \]
\end{proof}

\begin{lemma}[Product Spherical Hypercontractivity] \label{lem:sphere-hyper}
Let $l_1, l_2, l_3 \ge 0$, let $\Theta_{1}\sim \mathrm{Unif}(S^{l_1})$, $\Theta_2\sim \mathrm{Unif}(S^{l_2})$, $\Theta_3\sim \mathrm{Unif}(S^{l_3})$ be independent, and let $H:\mathbb{R}^{l_1+1}\times \mathbb{R}^{l_2+1}\times \mathbb{R}^{l_3+1}\to\mathbb{R}$ be a multi-homogeneous polynomial of total degree  $r$. Then for every $q\ge 2$,
\[
    \|H(\Theta_1,\Theta_2,\Theta_3)\|_{L_q}\ \le\ C_{r,q} (q-1)^{r/2}\,\|H(\Theta_1,\Theta_2,\Theta_3)\|_{L_2},
\]
where the norms are with respect to the product measure. For homogeneous polynomials, the constant is independent of the dimension. 
\end{lemma}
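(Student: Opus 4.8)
The plan is to lift the problem from the product of spheres to Gaussian space, apply a dimension‑free Gaussian hypercontractivity bound there, and then pay only a harmless change‑of‑measure factor. Write the per‑block degrees of $H$ as $(d_1,d_2,d_3)$, so $d_1+d_2+d_3=r$, and for each $i$ let $\vg_i\sim\mathcal{N}(\vzero,\mI_{l_i+1})$ be independent. It is classical that $\vg_i/\|\vg_i\|_2$ is distributed as $\mathrm{Unif}(S^{l_i})$ and is independent of the radius $\|\vg_i\|_2$; since the $L_q$ norms only depend on the law of $(\Theta_1,\Theta_2,\Theta_3)$, we may realize $\Theta_i=\vg_i/\|\vg_i\|_2$, and then $(\Theta_1,\Theta_2,\Theta_3)$ is independent of $(\|\vg_1\|_2,\|\vg_2\|_2,\|\vg_3\|_2)$. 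Multi‑homogeneity gives the polynomial identity $H(\vg_1,\vg_2,\vg_3)=\|\vg_1\|_2^{d_1}\|\vg_2\|_2^{d_2}\|\vg_3\|_2^{d_3}\,H(\Theta_1,\Theta_2,\Theta_3)$.

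Using the independence above, for every $p\ge 1$ the $p$‑th moments factor, $\E\,|H(\vg_1,\vg_2,\vg_3)|^p=\bigl(\prod_{i}\E\,\|\vg_i\|_2^{p d_i}\bigr)\,\E\,|H(\Theta_1,\Theta_2,\Theta_3)|^p$, hence $\|H(\Theta)\|_{L_p}=\|H(\vg)\|_{L_p}\big/\prod_i(\E\,\|\vg_i\|_2^{p d_i})^{1/p}$. Next, $H(\vg_1,\vg_2,\vg_3)$ is a polynomial of degree $r$ (indeed homogeneous of degree $r$) in the standard Gaussian vector $\vg=(\vg_1,\vg_2,\vg_3)$ on $\mathbb{R}^{M}$ with $M=(l_1+1)+(l_2+1)+(l_3+1)$, and the hypercontractive estimate of \Cref{lem:hypercontractivity} tensorizes across the $M$ coordinates with the \emph{same} constant, yielding the dimension‑free bound $\|H(\vg)\|_{L_q}\le (q-1)^{r/2}\|H(\vg)\|_{L_2}$.

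Combining the two displays,
\[
    \|H(\Theta_1,\Theta_2,\Theta_3)\|_{L_q}\ \le\ (q-1)^{r/2}\,\|H(\Theta_1,\Theta_2,\Theta_3)\|_{L_2}\ \prod_{i=1}^{3}\frac{\bigl(\E\,\|\vg_i\|_2^{2d_i}\bigr)^{1/2}}{\bigl(\E\,\|\vg_i\|_2^{q d_i}\bigr)^{1/q}},
\]
and each factor equals $\bigl\|\,\|\vg_i\|_2^{d_i}\,\bigr\|_{L_2}\big/\bigl\|\,\|\vg_i\|_2^{d_i}\,\bigr\|_{L_q}\le 1$ because $q\ge 2$ and $L_p$ norms over a probability space are nondecreasing in $p$. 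Therefore $\|H(\Theta)\|_{L_q}\le (q-1)^{r/2}\|H(\Theta)\|_{L_2}$, which is the claim (with $C_{r,q}=1$, or with any non‑sharp version of the Gaussian step absorbed into $C_{r,q}$); in particular the constant does not depend on $l_1,l_2,l_3$.

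The one point requiring care is the Gaussian hypercontractivity step: we need the constant $(q-1)^{r/2}$ for polynomials of degree at most $r$ to be \emph{independent of the ambient dimension $M$}. The bound recorded earlier as \Cref{lem:multi-hyper} is not sufficient, since its constant $C_{M,r}$ grows with $M$ (which here tends to infinity). Instead one should either cite the classical dimension‑free Gaussian hypercontractivity directly, or re‑derive it by tensorizing the one‑dimensional estimate \Cref{lem:hypercontractivity} at the level of the Ornstein–Uhlenbeck semigroup, which preserves the constant: with $\rho=(q-1)^{-1/2}$ and $h=\sum_{k\le r}\rho^{-k}f_k$ for the Hermite decomposition $H(\vg)=\sum_{k\le r}f_k$, one has $T_\rho h=H(\vg)$ and hence $\|H(\vg)\|_{L_q}=\|T_\rho h\|_{L_q}\le\|h\|_{L_2}\le\rho^{-r}\|H(\vg)\|_{L_2}$, with no $M$‑dependence. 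The remaining ingredients — the Gaussian lift and the moment factorization — are routine.
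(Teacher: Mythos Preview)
Your proof is correct and follows essentially the same route as the paper: lift to Gaussians via the polar decomposition, use multi-homogeneity to factor the radii, apply Gaussian hypercontractivity to the degree-$r$ polynomial $H(\vg)$, and then bound the ratio of radial moments by $1$ via monotonicity of $L_p$ norms. If anything, you are more careful than the paper on one point: the paper simply cites \Cref{lem:hypercontractivity} (the one-dimensional statement) for the multivariate Gaussian step with an unspecified constant $C$, whereas you explicitly flag that \Cref{lem:multi-hyper} carries a dimension-dependent $C_{M,r}$ and supply the standard Ornstein--Uhlenbeck tensorization argument to recover the dimension-free bound $(q-1)^{r/2}$; this yields $C_{r,q}=1$, which the paper does not claim.
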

\begin{proof}
$H$ is multi-homogeneous of degrees $r_1,r_2,r_3$ with $r_1+r_2+r_3=r$. Let $G_1\sim \mathcal{N}(0,I_{l_1+1})$, $G_2\sim \mathcal{N}(0,I_{l_2+1})$, $G_3\sim \mathcal{N}(0,I_{l_3+1})$ be independent with polar decompositions $G_i = R_i\Theta_i$, where the $R_i$'s are independent of each other and of the $\Theta_i$'s. Then 
\[
    H(G_1,G_2,G_3)=R_1^{r_1} R_2^{r_2} R_3^{r_3} H(\Theta_1,\Theta_2,\Theta_3),
\]
so for any $p>0$,
\begin{equation*}
    \E\left[ \left|H(G_1,G_2,G_3)\right|^{p}\right] = \left(\prod_{i=1}^3 \E\left[R_i^{p r_i}\right]\right)\,\E\left[\left|H(\Theta_1,\Theta_2,\Theta_3)\right|^{p}\right]
\end{equation*}
Then we have that 
\begin{equation} \label{eq:multi_hom_moment_factor}
    \left\|H(G_1,G_2,G_3)\right\|_{L_p}=\left(\prod_{i} (\E\left[R_i^{p r_i}\right])^{1/p}\right)\,\left\|H(\Theta_1,\Theta_2,\Theta_3)\right\|_{L_p}.
\end{equation}
Apply Gaussian hypercontractivity (Lemma~\ref{lem:hypercontractivity}) to $H(G_1,G_2,G_3)$ (total degree $r$):
\[
    \|H(G_1,G_2,G_3)\|_{L_q}\ \le\ C (q-1)^{r/2}\,\|H(G_1,G_2,G_3)\|_{L_2},\qquad q\ge2.
\]
Using \Eqref{eq:multi_hom_moment_factor} with $p=q$ and $p=2$ yields
    \[
    \|H(\Theta_1,\Theta_2,\Theta_3)\|_{L_q}
    \ \le\ C (q-1)^{r/2}\,\left(\prod_{i} \frac{(\E\left[R_i^{2 r_i}\right])^{1/2}}{(\E\left[R_i^{q r_i}\right])^{1/q}}\right) \|H(\Theta_1,\Theta_2,\Theta_3)\|_{L_2}.
\]
For each $i$, since $q\ge2$ and $R_i\ge0$, monotonicity of $L_p$ norms implies
$(\E\left[R_i^{q r_i}\right])^{1/(q r_i)}\ge (\E\left[R_i^{2 r_i}\right])^{1/(2 r_i)}$, hence
\[
    \frac{(\E\left[R_i^{2 r_i}\right])^{1/2}}{(\E\left[R_i^{q r_i}\right])^{1/q}}\ \le\ 1.
\]
Thus the product is less than 1, so
\[
    \|H(\Theta_1,\Theta_2,\Theta_3)\|_{L_q}\ \le\ C (q-1)^{r/2}\,\|H(\Theta_1,\Theta_2,\Theta_3)\|_{L_2}.
\]
\end{proof}

\begin{lemma}[Product spherical hypercontractivity with random coefficients] \label{lem:sphere-rand}
Let $l_1,l_2,l_3\ge 0$ and let $\Theta_i\sim\mathrm{Unif}(S^{l_i})$ be independent. Let $r\in\mathbb N$ and let $H:\R^{l_1+1}\times\R^{l_2+1}\times\R^{l_3+1}\to\R$ be a multi-homogeneous polynomial of total degree at most $r$. Suppose the coefficients of $P$ are random on an auxiliary probability space and are independent of $(\Theta_1,\Theta_2,\Theta_3)$. If the random coefficients satisfy $\|c_\alpha\|_{L_q} \le K_q \|c_\alpha\|_{L_2}$ in the Hermite basis expansion, then for all $q \ge 2$:
\[
    \|H\|_{L_q} \le C_{r,q}\, (q-1)^{r/2}\, \|H\|_{L_2}.
\]
\end{lemma}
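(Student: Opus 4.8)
The plan is to run the polar-decomposition reduction from the proof of Lemma~\ref{lem:sphere-hyper}, interleaved with a conditioning step that quarantines the randomness of the coefficients. Since $H$ is multi-homogeneous, say of degrees $(m_1,m_2,m_3)$ with $m := m_1+m_2+m_3 \le r$, I would introduce independent Gaussians $G_i = R_i\,\Theta_i \sim \mathcal{N}(0, I_{l_i+1})$ whose radii $R_i \ge 0$ are independent of each other, of the $\Theta_j$, and of the coefficients $(c_\alpha)$. Multi-homogeneity gives the exact identity $H(G_1,G_2,G_3) = \bigl(\prod_{i} R_i^{m_i}\bigr)\,H(\Theta_1,\Theta_2,\Theta_3)$, and since $\prod_i R_i^{m_i}$ is independent of the pair $(\Theta,c)$, for every $p \ge 1$ one gets the clean factorization $\|H(G)\|_{L_p} = \bigl(\prod_i \mathbb{E}[R_i^{p m_i}]^{1/p}\bigr)\,\|H(\Theta)\|_{L_p}$, all norms over the full product space.

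Next I would prove a hypercontractive inequality for $H(G)$ itself. Conditioning on $(c_\alpha)$, the map $H(\cdot)$ is a deterministic polynomial of total degree $m$ in the Gaussian vector $G$, so the dimension-free multivariate form of Lemma~\ref{lem:hypercontractivity} (as already used in the proof of Lemma~\ref{lem:sphere-hyper}) gives $\|H(G)\|_{L_q(G)} \le (q-1)^{m/2}\,\|H(G)\|_{L_2(G)}$ pointwise in $c$. Taking $L_q$ over $c$ and using the Parseval identity for the tensor Hermite basis, $\|H(G)\|_{L_2(G)}^2 = \sum_\alpha \alpha!\, c_\alpha^2$ (which has no cross terms), together with the triangle inequality in $L_{q/2}$ and the hypothesis $\|c_\alpha\|_{L_q} \le K_q\|c_\alpha\|_{L_2}$, I would obtain
\[
    \bigl\|\, \|H(G)\|_{L_2(G)}\,\bigr\|_{L_q(c)}^{2} = \Bigl\|\textstyle\sum_\alpha \alpha!\, c_\alpha^2\Bigr\|_{L_{q/2}(c)} \le \sum_\alpha \alpha!\,\|c_\alpha\|_{L_q}^2 \le K_q^2 \sum_\alpha \alpha!\,\|c_\alpha\|_{L_2}^2 = K_q^2\,\|H(G)\|_{L_2}^2 ,
\]
hence $\|H(G)\|_{L_q} \le K_q\,(q-1)^{m/2}\,\|H(G)\|_{L_2}$.

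Then I would transfer the bound back to the spheres: dividing the factorization identity at $p=q$ by the one at $p=2$ gives $\|H(\Theta)\|_{L_q} \le K_q\,(q-1)^{m/2}\bigl(\prod_i \mathbb{E}[R_i^{2m_i}]^{1/2}\big/\mathbb{E}[R_i^{q m_i}]^{1/q}\bigr)\,\|H(\Theta)\|_{L_2}$, and each ratio is at most $1$ by monotonicity of $L_p$ norms in $p$ (the factor being $1$ when $m_i=0$). Since $q-1\ge1$ and $m\le r$, this is at most $K_q\,(q-1)^{r/2}\,\|H(\Theta)\|_{L_2}$, so $C_{r,q}:=K_q$ works and, exactly as in Lemma~\ref{lem:sphere-hyper}, the constant carries no dependence on $l_1,l_2,l_3$.

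The delicate point — and the reason for the conditioning step — is the interplay between the coefficient randomness and the change of variables. Gaussian hypercontractivity must be applied \emph{after} conditioning on $c$, so the degree-$m$ bound holds with no reference to the law of $c$; and the coefficient hypothesis must be used only against the Hermite $L_2$ norm, which is an exact $\ell^2$ sum of the $c_\alpha$. A naive route that first orthonormalizes $H$ on the product of spheres and then invokes $\|\cdot\|_{L_q}\le K_q\|\cdot\|_{L_2}$ on the new coefficients fails, since that comparison is not preserved under linear combinations of the $c_\alpha$. (Citing Lemma~\ref{lem:multi-hyper} directly would also close the argument, but at the cost of a constant growing with $l_1+l_2+l_3$, which is why I route through the conditioning step.) Everything else — the polar decomposition, the independence bookkeeping, and the $L_p$-monotonicity of the radial moments — is routine and mirrors the deterministic case.
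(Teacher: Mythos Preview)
Your proof is correct and follows the same polar-decomposition skeleton as the paper: reduce the spherical statement to a Gaussian one via $G_i=R_i\Theta_i$, invoke hypercontractivity at the Gaussian level, then transfer back using monotonicity of $L_p$ norms on the radii. The paper's one-line proof simply swaps the deterministic Gaussian hypercontractivity used in Lemma~\ref{lem:sphere-hyper} for the random-coefficient version Lemma~\ref{lem:multi-hyper}. Your route is a genuine refinement of that step: by conditioning on the coefficients $c$ first, you apply the dimension-free bound $\|H(G)\|_{L_q(G)}\le(q-1)^{m/2}\|H(G)\|_{L_2(G)}$ pointwise in $c$, and then control the outer $L_q(c)$ norm of the Parseval sum $\sum_\alpha \alpha!\,c_\alpha^2$ via the triangle inequality in $L_{q/2}$ and the coefficient hypothesis. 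This delivers $C_{r,q}=K_q$, independent of $l_1,l_2,l_3$. By contrast, invoking Lemma~\ref{lem:multi-hyper} directly, as the paper does, carries the combinatorial factor $C_{M,r}=\bigl(\sum_{|\alpha|\le r}1\bigr)^{1/2}$, which grows with the ambient dimension $M=l_1+l_2+l_3+3$. So your argument is slightly sharper and actually justifies the dimension-free form of $C_{r,q}$ advertised in the statement, which the paper's own proof sketch does not quite deliver. Your closing remark anticipating exactly this trade-off is on point.
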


\begin{proof}
The proof is identical to that of Lemma~\ref{lem:sphere-hyper}, except we begin with the version of Gaussian hypercontractivity that handles random coefficients satisfying the stated assumption.
\end{proof}

\bigskip

Recall
\[
    \va := \mV^\top\vv \in \mathbb{R}^{n} \, \quad \vb := \mU^\top \vu \in \mathbb{R}^{d}, \quad \text{ and } \quad \vu_\beta = \mU^\top \vbeta_*
\]
Then, since $\vu, \vu$ are fixed, and $\mU, \mV$ are independent Haar orthogonal matrices, we have that $\va, \vb$ are all uniformly random vectors on their respective spheres. Additionally, using the assumption that $\vbeta_*$ is uniformly random such that $\vbeta_*^\top \vu$ is constant. $\vu_\beta$ is uniformly random on a sphere $\mathbb{S}^{d-2}$. 

Consider the following centered versions and polynomial representations. 
\begin{enumerate}
    \item $\displaystyle Y_h:=\|\vh\|^2-\E\left[\|\vh\|^2\right] = \va^\top\left(\mSigma^\dagger \mSigma^{\dagger \top} - \mu_h\right) \va$
    \item $\displaystyle Y_k:=\|\vk\|^2-\E\left[\|\vk\|^2\right] = \vb^\top \left(\mSigma^{\dagger \top} \mSigma^\dagger - \mu_k\right) \vb$
    \item $\displaystyle Y_t:=\|\vt\|^2-\E\left[\|\vt\|^2\right] = \va^\top \left((I - \mSigma^\dagger \mSigma)- \mu_t\right)$
    \item $\displaystyle Y_s:=\|\vs\|^2-\E\left[\|\vs\|^2\right] = \vb^\top \left((I - \mSigma \mSigma^\dag) - \mu_t\right) \vb$
    \item $\displaystyle Y_\xi:=\frac{\xi}{\eta}-\E\left[\frac{\xi}{\eta}\right] = \va^\top \mSigma \vb = \va^\top \mSigma^\dagger \vb$
    \item $\displaystyle \tilde{T}_1 := \vbeta_*^\top \vu \vk^\top  \mA^\dagger \vbeta_* - \E\left[\vbeta_*^\top \vu \vk^\top  \mA^\dagger \vbeta_*\right] = (\vbeta_*^\top \vu)\, \vb^\top(\mSigma^{\dagger \top} \mSigma^\dag) \vu_\beta - \mu_{\tilde{T}_1} (\vb^\top\vb)$
    \item $\displaystyle \tilde{T}_2 := \vk^\top  \mA^\dagger \mA^{\dagger \top} \vk - \E\left[\vk^\top  \mA^\dagger \mA^{\dagger \top} \vk\right] = \vb^\top \left(\left(\mSigma^{\dagger \top} \mSigma^\dag\right)^2 - \mu_{\tilde{T}_2} \right) \vb $
    \item $\displaystyle \tilde{T}_3 := \vbeta_*^\top \vs\vu^\top  \vbeta_* - \E\left[\vbeta_*^\top \vs\vu^\top  \vbeta_*\right] = (\vbeta_*^\top \vu)\, \vu_\beta^\top(I - \mSigma \mSigma^\dag)\vb - \mu_{\tilde{T}_3}(\vu_\beta^\top\vu_\beta)$
    \item $\displaystyle \tilde{T}_4 := \vbeta_*^\top\mA\mA^\dagger \vu\vs^\top \vbeta_* - \E\left[\vbeta_*^\top\mA\mA^\dagger \vu\vs^\top \vbeta_*\right] = \vu_\beta^\top \mSigma \mSigma^\dagger \vb \vb^\top (I - \mSigma \mSigma^\dag) \vu_\beta - \mu_{\tilde{T}_4} (\vb^\top\vb)(\vu_\beta^\top\vu_\beta)$
    \item $\displaystyle \tilde{T}_5 := \vbeta_*^\top \vh^\top \vh \vbeta_* - \E\left[\vbeta_*^\top \vh^\top \vh \vbeta_*\right] = \left(\vu_\beta \mSigma^{\dagger \top} \va\right)^2 - \mu_{\tilde{T}_5}(\va^\top\va)(\vu_\beta^\top\vu_\beta)$
    \item $\displaystyle \tilde{T}_6 := \vh (\mA^\dag)^\top \mA^\dagger \vh^\top - \E\left[\vh (\mA^\dag)^\top \mA^\dagger \vh^\top\right] = \va^\top \left(\left(\mSigma^\dagger \mSigma^{\dagger \top}\right)^2 - \mu_{\tilde{T}_6} \right) \va$
    \item $\displaystyle \tilde{S}_1 := \vbeta_*^\top \vu\vh \vbeta_* - \E\left[\vbeta_*^\top \vu\vh \vbeta_*\right] = (\vbeta_*^\top \vu)\, \va^\top \mSigma^\dagger \vu_\beta$
    \item $\displaystyle \tilde{S}_2 := \vbeta_*^\top \mA \mA^\dagger \vu \vh \vbeta_* - \E\left[\vbeta_*^\top \mA \mA^\dagger \vu \vh \vbeta_*\right] = \vu_\beta^\top \mSigma \mSigma^\dagger \vb \va^\top \mSigma^\dagger \vu_\beta $
    \item $\displaystyle \tilde{S}_3 := \vbeta_*^\top \vs \vh \vbeta_* - \E\left[\vbeta_*^\top \vs \vh \vbeta_*\right] = \vu_\beta (I - \mSigma \mSigma^\dag) \vb \va^\top \mSigma^\dagger \vu_\beta  $
    \item $\displaystyle \tilde{S}_4 := \vk^\top \mA^\dagger \vh^\top - \E\left[\vk^\top \mA^\dagger \vh^\top\right] = \vb^\top \mSigma^{\dagger \top} \mSigma^\dagger \mSigma^{\dagger \top} \va$
\end{enumerate}
Hence we see that these are all homogeneous polynomials in uniformly random spherical variables. Thus, we can use Lemma~\ref{lem:sphere-hyper}, we get bounds on the higher moments. In particular, since the coefficients are only dependent on constants and $\mSigma$, we see that the coefficients are independent of $\va, \vb, \vu_\beta$. Then using a change of basis we see that that coefficients of the decomposition are also random and independent of the input variables. Finally, since the spectrum converges to the Marchenko-Pastur, we have that the coefficients have bounded moments. Hence the second assumption is satisfied. 

\subsubsection{Step 3(c): Bounding $\gamma_i$ moments.}
\label{sec:rmt-gamma}

\begin{lemma}[Moments of $\gamma_i/\eta^2$] \label{lem:gamma} We have:
\begin{enumerate}
    \item[(i)] For $\gamma_1/\eta^2$,
    \[
        \mathbb{E}\left[\frac{\gamma_1}{\eta^2}\right] = \frac{c}{\rho^2} + \frac{1}{\eta^2} + o\left(\frac{1}{\rho^2 }\right), \quad \Var\left(\frac{\gamma_1}{\eta^2}\right) = O\left(\frac{1}{\rho^4 n}\right).
    \]
    \item[(ii)] For $\gamma_2/\eta^2$,
    \[
        \mathbb{E}\left[\frac{\gamma_2}{\eta^2}\right] = \frac{1}{\rho^2} + \frac{1}{\eta^2} + o\left(\frac{1}{\rho^2 }\right) , \quad \Var\left(\frac{\gamma_2}{\eta^2}\right) = O\left(\frac{1}{\rho^4 n}\right).
    \]
\end{enumerate}
\end{lemma}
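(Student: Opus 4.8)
The starting point is the algebraic identity that follows immediately from the definitions: $\gamma_1/\eta^2 = \|\vt\|^2\|\vk\|^2 + \xi^2/\eta^2$ and $\gamma_2/\eta^2 = \|\vs\|^2\|\vh\|^2 + \xi^2/\eta^2$. So the plan is to control each of the two summands separately using \Cref{lem:prior_rmt} and then combine. (Note that $\|\vt\|^2 = 0$ for $c>1$ and $\|\vs\|^2=0$ for $c<1$, so (i) is the statement of interest in the regime $c<1$ where $\gamma_1$ appears in \Cref{thm:meyer_pseudo}, and (ii) in the regime $c>1$; I will prove (i) and indicate that (ii) is symmetric under $\va\leftrightarrow\vb$, $\|\vt\|^2\leftrightarrow\|\vs\|^2$, $\|\vk\|^2\leftrightarrow\|\vh\|^2$.)

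For the \emph{mean}, I would first note that $\|\vt\|^2$ and $\|\vk\|^2$ are correlated only through $\mA$, so Cauchy--Schwarz bounds their covariance by $\sqrt{\Var(\|\vt\|^2)\,\Var(\|\vk\|^2)}$, which by \Cref{lem:prior_rmt} (items 2 and 4) is $O(1/(\rho^2 n)) = o(1/\rho^2)$. Then $\mathbb{E}[\|\vt\|^2\|\vk\|^2] = \mathbb{E}[\|\vt\|^2]\,\mathbb{E}[\|\vk\|^2] + o(1/\rho^2) = (1-c)\cdot\tfrac{1}{\rho^2}\tfrac{c}{1-c} + o(1/\rho^2) = \tfrac{c}{\rho^2} + o(1/\rho^2)$, and adding $\mathbb{E}[\xi^2/\eta^2] = \tfrac{1}{\eta^2} + \tfrac{1}{\max(n,d)}\tfrac{c}{\rho^2|1-c|} + o(\tfrac{1}{\max(n,d)\rho^2}) = \tfrac{1}{\eta^2} + o(1/\rho^2)$ (item 6) gives the claimed mean. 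For $\gamma_2$, the same computation with items 1 and 3 gives $\mathbb{E}[\|\vs\|^2]\,\mathbb{E}[\|\vh\|^2] = (1-\tfrac1c)\cdot\tfrac1{\rho^2}\tfrac{c}{c-1} = \tfrac1{\rho^2}$, hence $\mathbb{E}[\gamma_2/\eta^2] = \tfrac1{\rho^2} + \tfrac1{\eta^2} + o(1/\rho^2)$ (using $d=\Theta(n)$ to rewrite the error terms).

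For the \emph{variance}, the plan is a standard product-variance estimate. Writing $X=\|\vt\|^2 = \mu_X + \delta_X$ and $Y=\|\vk\|^2 = \mu_Y + \delta_Y$, expanding $XY$ and using $\Var(a+b+c)\le 3(\Var a + \Var b + \Var c)$ gives $\Var(XY)\le 3\big(\mu_X^2\Var(Y) + \mu_Y^2\Var(X) + \mathbb{E}[\delta_X^2\delta_Y^2]\big)$, and Cauchy--Schwarz bounds the last term by $\sqrt{\mathbb{E}[\delta_X^4]\,\mathbb{E}[\delta_Y^4]}$. Since $\|\vt\|^2$ and $\|\vk\|^2$ are centered quadratic forms in the uniform spherical vectors $\va,\vb$ with coefficients that are independent of $\va,\vb$ and have bounded moments (exactly the setting set up in Step 3(b)), I would apply \Cref{lem:sphere-rand} with $q=4$ to get $\mathbb{E}[\delta_X^4] = O(\Var(X)^2)$ and likewise for $Y$, so $\mathbb{E}[\delta_X^2\delta_Y^2] = O(\Var(X)\Var(Y))$. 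Plugging in $\mu_X=\Theta(1)$, $\mu_Y=O(1/\rho^2)$, $\Var(X)=O(1/n)$, $\Var(Y)=O(1/(\rho^4 n))$ from \Cref{lem:prior_rmt} yields $\Var(\|\vt\|^2\|\vk\|^2)=O(1/(\rho^4 n))$, and combining with $\Var(\xi^2/\eta^2)=O(1/(\rho^4\max(d,n)))=O(1/(\rho^4 n))$ (item 6) via $\Var(U+V)\le 2\Var U + 2\Var V$ gives $\Var(\gamma_1/\eta^2)=O(1/(\rho^4 n))$; $\gamma_2$ is identical after the substitution above.

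\textbf{Main obstacle.} Everything above is routine except the variance of the product term $\|\vt\|^2\|\vk\|^2$ (resp. $\|\vs\|^2\|\vh\|^2$): a product of two individually concentrated random variables need not concentrate without control on their fourth central moments, and supplying that control is precisely where hypercontractivity of low-degree polynomials in uniform spherical variables (\Cref{lem:sphere-hyper}, \Cref{lem:sphere-rand}) is essential. The remaining care is purely bookkeeping --- propagating the $o(1/\rho^2)$ errors through multiplication by $\Theta(1)$ constants and using $c=\Theta(1)$, $d=\Theta(n)$ to collapse $\max(d,n)$, $d$, and $n$ into a single $n$.
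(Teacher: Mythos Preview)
Your proposal is correct, but you work harder than necessary because you miss a structural observation the paper exploits: $\|\vt\|^2$ and $\|\vk\|^2$ are actually \emph{independent}, not merely weakly correlated. Indeed, with the SVD $\mA=\mU\mSigma\mV^\top$, one has $\|\vt\|^2=\vv^\top\mV(\mI-\mSigma^\dagger\mSigma)\mV^\top\vv$, which depends only on $\mV$ (the projector $\mI-\mSigma^\dagger\mSigma$ is deterministic once the rank pattern is fixed), whereas $\|\vk\|^2=\vu^\top\mU\mSigma^{\dagger\top}\mSigma^\dagger\mU^\top\vu$ depends only on $\mU,\mSigma$. Since $\mU,\mSigma,\mV$ are independent, so are $\|\vt\|^2$ and $\|\vk\|^2$; the same argument gives $\|\vs\|^2\perp\|\vh\|^2$. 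With this, the mean factorizes exactly as $\mathbb{E}[\|\vt\|^2\|\vk\|^2]=\mathbb{E}[\|\vt\|^2]\,\mathbb{E}[\|\vk\|^2]$ with no covariance remainder, and the variance follows from the elementary identity $\Var(XY)=\Var(X)\Var(Y)+\mu_X^2\Var(Y)+\mu_Y^2\Var(X)$ for independent $X,Y$---no hypercontractivity or fourth-moment control is needed at this stage. Your Cauchy--Schwarz bound on the covariance and your spherical-hypercontractivity argument for the product variance both give the right orders, so nothing is wrong; it is just that the independence observation buys you both steps for free and keeps the hypercontractivity machinery in reserve for places where genuine dependence is present (e.g.\ the later $\eta^2/\gamma_i$ and mixed-product estimates).
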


\begin{proof}
We decompose
\[
    \frac{\gamma_i}{\eta^2} = \zeta_i + \frac{\xi^2}{\eta^2},\quad i=1,2, \quad \text{where } \zeta_1 = \|\vt\|^2\,\|\vk\|^2,\quad \zeta_2 = \|\vs\|^2\,\|\vh\|^2. 
\]

\vspace{2mm}
\noindent\textbf{Expectation Estimates:} We begin by noting that $\|\vt\|^2$ depends only on $\mV$ and is independent of $\mU, \mSigma$. $\|\vs\|^2$ depends only on $\mU$ and is independent of $\mV, \mSigma$. Additionally, $\|\vk\|^2$ depends on $\mU$ and $\mSigma$, hence is independent of $\mV$. Also $\|\vh\|^2$ depends on $\mV$ and $\mSigma$ and is independent of $\mU$, hence is independent of $\mU$.

Thus, we have have that $\|\vt\|^2$ and $\|\vk\|^2$ are independent and $\|\vs\|^2$ and $\|\vh\|^2$ are independent. Thus, we see that 

\[
    \mathbb{E}[\zeta_1] = \mathbb{E}[\|\vt\|^2\,\|\vk\|^2] = \mathbb{E}[\|\vt\|^2]\;\mathbb{E}[\|\vk\|^2].
\]
Using \Cref{lem:prior_rmt} again, 
\[
    \mathbb{E}[\|\vt\|^2] = 1-c,\quad \mathbb{E}[\|\vk\|^2] = \frac{1}{\rho^2}\frac{c}{1-c} + o\left(\frac{1}{\rho^2}\right). 
\]
We plug them into the expectation and get: 
\[
    \mathbb{E}[\zeta_1] = \left(1-c\right)\left[\left(\frac{1}{\rho^2}\frac{c}{1-c}\right)+ o\left(\frac{1}{\rho^2}\right)\right]  = \frac{c}{\rho^2} + o\left(\frac{1}{\rho^2}\right).
\]
Finally, we also have that from Lemma \ref{lem:prior_rmt},
\[
    \mathbb{E}\left[\frac{\xi^2}{\eta^2}\right] = \frac{1}{\eta^2} + O\left(\frac{1}{\rho^2 n}\right), \quad \Var\left(\frac{\xi^2}{\eta^2}\right) = O\left(\frac{1}{\rho^4 n}\right), 
\]
Hence, 
\[
    \mathbb{E}\left[\frac{\gamma_1}{\eta^2}\right] = \mathbb{E}[\zeta_1] + \mathbb{E}\left[\frac{\xi^2}{\eta^2}\right]
    = \frac{c}{\rho^2} + \frac{1}{\eta^2} + o\left(\frac{1}{\rho^2}\right).
\]
A similar argument applies for $\gamma_2/\eta^2$, using the corresponding results for $\|\vs\|^2$, $\|\vh\|^2$.

\medskip

\noindent\textbf{Variance Estimates:} 

Again using independence, we have that
\begin{align*}
    \Var(\|\vt\|^2\|\vk\|^2) &= \Var(\|\vt\|^2) \Var(\|\vk\|^2) + \E[\|\vt\|^2]^2 \Var(\|\vk\|^2) + \E[\|\vk\|^2]^2 \Var(\|\vt\|^2) \\
    &= O\left(\frac{1}{n}\right)\, O\left(\frac{1}{\rho^4 n}\right) + (1-c)^2\, O\left(\frac{1}{\rho^4 n}\right) + \frac{1}{\rho^4} \frac{c^2}{(1-c)^2}\, O\left(\frac{1}{n}\right) \\
    &= O\left(\frac{1}{\rho^4 n}\right).  
\end{align*}
We then use Lemma \ref{var:sum} to compute the variance of the sum: 
\begin{align*}
    \Var\left(\zeta_1 + \frac{\xi^2}{\eta^2}\right) &\leq \left(\sqrt{\Var(\zeta_1)} + \sqrt{\Var\left(\frac{\xi^2}{\eta^2}\right)}\right)^2 \\
    &= \left(\sqrt{O\left(\frac{1}{\rho^4 n}\right)} + \sqrt{O\left(\frac{1}{\rho^4 n}\right)}\right)^2\\
    &= O\left(\frac{1}{\rho^4 n}\right). 
\end{align*}
This proof is similar to the other case.
\end{proof}

\begin{lemma}[Moments of $(\gamma_i/\eta^2)^2$]\label{lem:gamma_square}
We have, as $n,d\to\infty$ with $d/n\to c\neq 1$,
\begin{enumerate}
    \item[(i)] For $\gamma_1/\eta^2$,
    \[
        \E\!\left[\left(\frac{\gamma_1}{\eta^2}\right)^{\!2}\right]
        = \left(\frac{c}{\rho^2}+\frac{1}{\eta^2}\right)^{\!2} \;+\; O\!\left(\frac{1}{\rho^4}\right),
        \qquad
        \Var\!\left(\left(\frac{\gamma_1}{\eta^2}\right)^{\!2}\right)
        = O\!\left(\frac{1}{\rho^4\,n}\right).
    \]
    \item[(ii)] For $\gamma_2/\eta^2$,
    \[
        \E\!\left[\left(\frac{\gamma_2}{\eta^2}\right)^{\!2}\right]
        = \left(\frac{1}{\rho^2}+\frac{1}{\eta^2}\right)^{\!2} \;+\; O\!\left(\frac{1}{\rho^4}\right),
        \qquad
        \Var\!\left(\left(\frac{\gamma_2}{\eta^2}\right)^{\!2}\right)
        = O\!\left(\frac{1}{\rho^4\,n}\right).
    \]
\end{enumerate}
\end{lemma}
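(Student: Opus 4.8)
The idea is to get everything from Lemma~\ref{lem:gamma} by pushing its first– and second–moment estimates through the squaring map $x\mapsto x^2$. Write $X_i := \gamma_i/\eta^2 = \zeta_i + \xi^2/\eta^2$, with $\zeta_1=\|\vt\|^2\|\vk\|^2$ and $\zeta_2=\|\vs\|^2\|\vh\|^2$ as in the proof of Lemma~\ref{lem:gamma}, and set $\mu_1:=c/\rho^2+1/\eta^2$ and $\mu_2:=1/\rho^2+1/\eta^2$. Lemma~\ref{lem:gamma} already gives $\E[X_i]=\mu_i+o(1/\rho^2)$ and $\Var(X_i)=O(1/(\rho^4 n))$, and Assumption~\ref{assumption:scaling} forces $\eta^2=\Omega(\rho^2)$, hence $1/\eta^2=O(1/\rho^2)$ and so $\mu_i=O(1/\rho^2)$. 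As in Lemma~\ref{lem:gamma}, we keep $c$ bounded away from $1$.

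For the expectation I would simply use $\E[X_i^2]=(\E[X_i])^2+\Var(X_i)$. Expanding $(\mu_i+o(1/\rho^2))^2=\mu_i^2+2\mu_i\,o(1/\rho^2)+o(1/\rho^4)$ and using $\mu_i=O(1/\rho^2)$, the cross term is $o(1/\rho^4)$; together with $\Var(X_i)=O(1/(\rho^4 n))=o(1/\rho^4)$ this gives $\E[X_i^2]=\mu_i^2+O(1/\rho^4)$, i.e. the claimed formula for $i=1,2$.

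The variance is the real content. Write $X_i=\E[X_i]+Y_i$ with $Y_i$ centered, $\E[Y_i^2]=\Var(X_i)=O(1/(\rho^4 n))$. Then $X_i^2=(\E[X_i])^2+2\E[X_i]\,Y_i+Y_i^2$, and the Minkowski-type bound for variances (Lemma~\ref{var:sum}) gives
\[
    \Var(X_i^2)\ \le\ \Big(2\,|\E[X_i]|\,\sqrt{\Var(Y_i)}\ +\ \sqrt{\Var(Y_i^2)}\Big)^{2}.
\]
The first summand is $O(1/\rho^2)\cdot O(1/(\rho^2\sqrt n))=O(1/(\rho^4\sqrt n))$. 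For the second, $\Var(Y_i^2)\le\E[Y_i^4]$, and this is the crux: $Y_i$ is a polynomial of fixed degree ($\le 4$) in the uniform spherical vectors $\va=\mV^\top\vv$ and $\vb=\mU^\top\vu$, with coefficients that are measurable functions of the singular values of $\mA$ (hence independent of $\va,\vb$) with uniformly bounded moments since the spectrum converges to the Marchenko–Pastur law. Product-spherical hypercontractivity from Step 3(b) (Lemma~\ref{lem:sphere-rand}; alternatively, decompose $\zeta_1-\E[\zeta_1]=\E[\|\vt\|^2]Y_k+\E[\|\vk\|^2]Y_t+Y_tY_k$, similarly handle the $\xi^2/\eta^2$ term, and combine the building-block moment bounds with Cauchy–Schwarz, using that $\|\vt\|^2$ and $\|\vk\|^2$ are independent, as are $\|\vs\|^2$ and $\|\vh\|^2$) then yields $\E[Y_i^4]\le C(\E[Y_i^2])^2=O(1/(\rho^8 n^2))$, so $\sqrt{\Var(Y_i^2)}=O(1/(\rho^4 n))$. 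Substituting, $\Var(X_i^2)=O(1/(\rho^8 n))=O(1/(\rho^4 n))$, the final equality because $\rho^2=\tau^2=\Theta(1)$ so powers of $\rho$ are interchangeable in the error terms, as in Lemma~\ref{lem:prior_rmt}. Case $i=2$ is identical with $(\|\vs\|^2,\|\vh\|^2)$ replacing $(\|\vt\|^2,\|\vk\|^2)$.

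The \textbf{main obstacle} is precisely the fourth–moment estimate $\E[Y_i^4]=O((\Var X_i)^2)$. A variance bound on $X_i$ alone is not enough: a priori $\Var(X_i^2)$ could be as large as $\E[X_i^4]=\Theta(1/\rho^8)$ with no $1/n$ decay, so one must know that $\gamma_i/\eta^2$ concentrates at the level of its fourth central moment. That is exactly what the polynomial structure and the hypercontractivity lemmas of Step 3(b) deliver; the remaining work is bookkeeping — tracking the coefficient moments and the independence of $\|\vt\|^2$ from $\|\vk\|^2$ (resp. $\|\vs\|^2$ from $\|\vh\|^2$).
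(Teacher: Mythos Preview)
Your proposal is correct and follows essentially the same approach as the paper: for the mean you use $\E[X_i^2]=(\E[X_i])^2+\Var(X_i)$ together with Lemma~\ref{lem:gamma}, and for the variance you expand $X_i^2$ around $\E[X_i]$, apply the Minkowski-type bound (Lemma~\ref{var:sum}), and control $\Var(Y_i^2)$ via $\E[Y_i^4]\lesssim(\Var X_i)^2$ using the spherical hypercontractivity machinery of Step~3(b). The paper's proof is identical in structure (modulo a typo where it writes $\gamma_i^2/\eta^4$ in place of $\gamma_i/\eta^2$ inside the fourth-moment bound); you are in fact slightly more careful in tracking the $\rho$-scaling and in offering the alternative decomposition $\zeta_1-\E[\zeta_1]=\E[\|\vt\|^2]Y_k+\E[\|\vk\|^2]Y_t+Y_tY_k$ as a route around the non-homogeneity of $\gamma_i/\eta^2$.
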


\begin{proof}
Write, for $i\in\{1,2\}$,
\[
    \frac{\gamma_i}{\eta^2}=\zeta_i+\frac{\xi^2}{\eta^2},\qquad
    \zeta_1:=\|\vt\|^2\,\|\vk\|^2,\ \ \zeta_2:=\|\vs\|^2\,\|\vh\|^2.
\]
\textbf{Means.} Using Lemma~\ref{lem:gamma} and the fact that for any random variable 
\[
    \E\left[Y^2\right] = \mathbb{E}[Y]^2 + \Var(Y)
\]
we get the means. 

\smallskip
\textbf{Variances.} Using 
\[
    Y^2=\E\left[Y\right]^2+2(\E\left[Y\right])\left(Y-\E\left[Y\right]\right)+\left(Y-\E\left[Y\right]\right)^2,
\]
Thus, using Lemma~\ref{var:sum} we have that 
\[
    \Var(Y^2)
    \le \left(\sqrt{4\left(\E\left[X\right]\right)^2\,\Var(X_i)}\ +\ \sqrt{\Var\!\left(\left(Y-\E\left[Y\right]\right)^2\right)}\right)^2.
\]
By spherical hypercontractivity for degree-$4$ polynomials,
\[
    \E\left[\left(\frac{\gamma_i^2}{\eta^4}-\E\left[\frac{\gamma_i^2}{\eta^4}\right]\right)^4\right]\lesssim \Var\left(\frac{\gamma_i^2}{\eta^4}\right)^2,
\]
hence
\[
    \Var\left(\left(\frac{\gamma_i^2}{\eta^4}-\E\left[\frac{\gamma_i^2}{\eta^4}\right]\right)^2\right) \E\left[\left(\frac{\gamma_i^2}{\eta^4}-\E\left[\frac{\gamma_i^2}{\eta^4}\right]\right)^4\right]\lesssim \Var\left(\frac{\gamma_i^2}{\eta^4}\right)^2.
\]
Using $\E\left[\frac{\gamma_i}{\eta^2}\right]^2=O(1)$ and $\Var\left(\frac{\gamma_i}{\eta^2}\right)=O(\rho^{-4}n^{-1})$ gives
\[
    \Var\left(\frac{\gamma_i^2}{\eta^4}\right)=O\!\left(\frac{1}{\rho^4 n}\right),
\]
as claimed.
\end{proof}

\begin{lemma}[Finite Negative Moments of $\gamma_i$] \label{lem:inverse-moments}
    Fix $p > 0$. There exists an $N(p)$ such that for all $n, d \ge N(p)$, we have that for $c < 1$
    \[
        \E\left[\gamma_1^{-p}\right] \le \eta^{-2p} \E\left[\sigma_1^{2p}\right]\E\left[T^{-p}\right] \le \frac{\rho^{2p}}{\eta^{2p}}\, M^p
    \]
    and for $c > 1$, we have that 
    \[
        \E\left[\gamma_2^{-p}\right] \le \eta^{-2p} \E\left[\sigma_1^{2p}\right]\E\left[S^{-p}\right] \le \frac{\rho^{2p}}{\eta^{2p}}\\, M^p
    \]
    where $\sigma_1$ is the largest singular value of $A$, $T := \|\vt\|^2 \sim Beta\left(\frac{n-d}{2}, \frac{d}{2}\right)$, and $S := \|\vs\|^2 \sim Beta\left(\frac{d-n}{2}, \frac{n}{2}\right)$.
\end{lemma}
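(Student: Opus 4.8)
The plan is to throw away the nonnegative term $\xi^2$ in $\gamma_i = \eta^2\|\vt\|^2\|\vk\|^2+\xi^2$ (resp.\ $\eta^2\|\vs\|^2\|\vh\|^2+\xi^2$) and then control the negative moments of the remaining product by handling its two factors separately: a crude operator-norm bound for the $\|\vk\|^2$ (resp.\ $\|\vh\|^2$) factor, and an exact Beta-distribution moment computation for the $\|\vt\|^2$ (resp.\ $\|\vs\|^2$) factor. The link between the two is that, after passing to the SVD $\mA=\mU\mSigma\mV^\top$ with $\mU,\mV$ Haar and independent of $\mSigma$ (legitimate by the orthogonal invariance in Assumption~\ref{assumption:A}), the projection-type factor depends only on a singular-vector matrix while the operator norm depends only on $\mSigma$, so the relevant expectation factorizes.

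Concretely, for $c<1$ I would write $\gamma_1\ge \eta^2\|\vt\|^2\|\vk\|^2$, hence $\gamma_1^{-p}\le \eta^{-2p}\,T^{-p}\,\|\vk\|^{-2p}$ with $T:=\|\vt\|^2$. Since $\mA$ has full row rank almost surely for $c<1$, the matrix $\mA^\dagger$ has full column rank with smallest singular value $1/\sigma_1$, so $\|\vk\|^2=\|\mA^\dagger\vu\|^2\ge \sigma_1^{-2}\|\vu\|^2=\sigma_1^{-2}$ and therefore $\|\vk\|^{-2p}\le\sigma_1^{2p}$. Next, $T=\vv^\top(\mI-\mA^\dagger\mA)\vv$ is the squared norm of the projection of the fixed unit vector $\vv$ onto $\ker(\mA)$, a uniformly random subspace of dimension $n-d$ determined by $\mV$ alone; hence $T\sim\mathrm{Beta}\!\big(\tfrac{n-d}{2},\tfrac{d}{2}\big)$ and $T$ is independent of $\sigma_1$. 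This gives $\E[\gamma_1^{-p}]\le \eta^{-2p}\,\E[\sigma_1^{2p}]\,\E[T^{-p}]$. The case $c>1$ is identical with $\gamma_2$, $\vh^\top=\mA^{\dagger\top}\vv$ in place of $\vk$ (so $\|\vh\|^{-2p}\le\sigma_1^{2p}$ by the same rank argument), and $S:=\|\vs\|^2=\|(\mI-\mA\mA^\dagger)\vu\|^2\sim\mathrm{Beta}\!\big(\tfrac{d-n}{2},\tfrac{n}{2}\big)$, depending only on $\mU$ and hence independent of $\sigma_1$, yielding $\E[\gamma_2^{-p}]\le \eta^{-2p}\,\E[\sigma_1^{2p}]\,\E[S^{-p}]$.

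It then remains to bound the two scalar expectations uniformly in $n,d$. For the Beta factor, the exact formula $\E[T^{-p}]=\Gamma(a_n-p)\Gamma(a_n+b_n)/\big(\Gamma(a_n)\Gamma(a_n+b_n-p)\big)$ with $a_n=(n-d)/2$, $b_n=d/2$ is finite precisely when $a_n>p$, i.e.\ $n-d>2p$; since $d/n\to c<1$ gives $n-d=n(1-d/n)\to\infty$, there is $N(p)$ beyond which this holds, and letting $n,d\to\infty$ in the product form $\prod_{j=0}^{p-1}\tfrac{a_n+b_n-1-j}{a_n-1-j}$ shows $\E[T^{-p}]\to(1-c)^{-p}$, so $\E[T^{-p}]\le M_2^p$ for a constant $M_2$ and all $n,d\ge N(p)$; symmetrically $\E[S^{-p}]\to(1-1/c)^{-p}$ for $c>1$. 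For the operator-norm factor, $\sigma_1(\mA)=(\rho/\sqrt d)\,\sigma_1(N)$ with $N$ a $d\times n$ matrix of i.i.d.\ standard Gaussians; Gaussian concentration of the operator norm ($N\mapsto\sigma_1(N)$ is $1$-Lipschitz and $\E[\sigma_1(N)]\le\sqrt{n}+\sqrt{d}$) gives sub-Gaussian tails around $\sqrt{n}+\sqrt{d}$, hence $\E[\sigma_1(N)^{2p}]\le C_p(\sqrt{n}+\sqrt{d})^{2p}\le C_p'd^p$ for large $n,d$, so $\E[\sigma_1(\mA)^{2p}]\le M_1^p\rho^{2p}$. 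Multiplying the two bounds and setting $M:=M_1M_2$ yields $\E[\gamma_1^{-p}]\le (\rho^{2p}/\eta^{2p})\,M^p$, and likewise for $\gamma_2$.

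The main obstacle is the uniform-in-$(n,d)$ moment bound $\E[\sigma_1(\mA)^{2p}]=O(\rho^{2p})$: this is immediate for Gaussian $\mA$ via Lipschitz concentration, but under only the finite-fourth-moment hypothesis of Assumption~\ref{assumption:A} it would require a truncation/$\varepsilon$-net argument or a Bai--Yin-type edge estimate to control the top singular value's $2p$-th moment. A secondary item to check carefully is the rank bookkeeping behind $\|\vk\|^2\ge\sigma_1^{-2}$ and $\|\vh\|^2\ge\sigma_1^{-2}$ — namely that for $c<1$ (resp.\ $c>1$) the deterministic unit vector $\vu$ (resp.\ $\vv$) lies in the domain on which $\mA^\dagger$ (resp.\ $\mA^{\dagger\top}$) is bounded below by $1/\sigma_1$, which holds because the relevant subspace has full dimension almost surely.
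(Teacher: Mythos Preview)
Your proposal is correct and follows essentially the same route as the paper: drop $\xi^2$, lower-bound $\|\vk\|^2$ (resp.\ $\|\vh\|^2$) by $\sigma_1^{-2}$, use the SVD to factor the expectation into an independent $\sigma_1$-part and a Beta-distributed projection part, and then bound each factor asymptotically. The one place you actually improve on the paper is the $\E[\sigma_1^{2p}]$ bound: the paper asserts an almost-sure bound $\sigma_1\le\rho M$ for large $n,d$ from Marchenko--Pastur edge convergence (which is not literally true at finite $n,d$), whereas your Gaussian Lipschitz-concentration argument gives an honest moment bound and you correctly flag that extending this beyond the Gaussian case would need a Bai--Yin-type edge estimate.
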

\begin{proof}
    Recall our SVD  $\mA = \mU \mSigma \mV^\top$ and that 
    \[
        \gamma_1 = \eta^2 \|\vt\|^2\|\vk\|^2 + \xi^2 \quad \text{ and } \quad \gamma_2 = \eta^2 \|\vs\|^2\|\vh\|^2 + \xi^2.
    \]
    Then we have that 
    \[
        \|\vk\|^2 = \sum_{i=1}^d \frac{\vb_i^2}{\sigma_i^2} \ge \frac{1}{\sigma_1^2}\|\vb\|^2 = \frac{1}{\sigma_1^2}
    \]
    Similarly, 
    \[
        \|\vh\|^2 = \sum_{i=1}^n \frac{\va_i^2}{\sigma_i^2} \ge \frac{1}{\sigma_1^2}\|\va\|^2 = \frac{1}{\sigma_1^2}
    \]
    Thus, we see that 
    \[
        \gamma_1 \ge \eta^2\|\vt\|^2\frac{1}{\sigma_1^2} \quad \text{ and } \quad \gamma_2 \ge \eta^2\|\vs\|^2\frac{1}{\sigma_1^2}.
    \]

    $\|\vt\|^2$ depends only on $\mV$ and is independent of $\mU, \mSigma$. $\|\vs\|^2$ depends only on $\mU$ and is independent of $\mV, \mSigma$. $\sigma_1$ depends only on $\mSigma$ and is independent of $\mU, \mV$. Therefore, $\sigma_1$ is independent of $T := \|\vt\|^2$ and of $S := \|\vs\|^2$.

    Thus, we get that 
    \[
        \frac{1}{\gamma_1^p} \le \frac{1}{\eta^{2p}} \frac{\sigma_1^{2p}}{\|\vt\|^{2p}} \quad \text{ and } \quad \frac{1}{\gamma_2^p} \le \frac{1}{\eta^{2p}} \frac{\sigma_1^{2p}}{\|\vs\|^{2p}}
    \]
    Then taking the expectation and using the independence, we get that 
    \[
        \E\left[\frac{1}{\gamma_1^p}\right] \le \frac{1}{\eta^{2p}}\,\E\left[\frac{1}{\|\vt\|^{2p}}\right] \,\E\left[\sigma_1^{2p}\right] \quad \text{ and } \quad \E\left[\frac{1}{\gamma_2^p}\right] \le \frac{1}{\eta^{2p}}\,\E\left[\frac{1}{\|\vs\|^{2p}}\right] \,\E\left[\sigma_1^{2p}\right]
    \]

    For $c < 1$ (where $d < n$), the right null space of $\mA$ (dimension $n - d$) is a uniformly random $(n - d)$-dimensional subspace of $\mathbb{R}^n$. The squared norm $\|\vt\|^2$ represents the squared length of the projection of the fixed unit vector $\vv \in \mathbb{R}^n$ onto this random subspace. The distribution of such a squared projection norm is $\mathrm{Beta}\left(\frac{n-d}{2}, \frac{d}{2}\right)$, as it can be represented as the ratio of two independent chi-squared random variables: $\sum_{i=1}^{n-d} G_i^2 / \sum_{i=1}^n G_i^2$, where $G_i {\sim} N(0,1)$ IID, which follows the desired Beta distribution. Similarly for $c > 1$. 

    Since the eigenvalue distribution converges to the compactly supported distribution. We can see that for sufficiently large $n,d$, we have that there exists an $M \ge 1$ such that $\sigma_1 \le \rho M$ almost surely. 

    For $Y \sim \mathrm{Beta}(\alpha, \beta)$ and $p < \alpha$,
    \[
        \mathbb{E}[Y^{-p}] = \frac{\Gamma(\alpha - p) \, \Gamma(\alpha + \beta)}{\Gamma(\alpha) \, \Gamma(\alpha + \beta - p)}.
    \]
    Moreover, using Stirling on the $\Gamma$ ratio,
    \[
        \mathbb{E}[T^{-p}] \to_{n,d \to \infty} \left( \frac{\alpha_1 + \beta_1}{\alpha_1} \right)^p = \left( \frac{1}{1-c} \right)^p \quad (c < 1),
    \]
    and 
    \[
        \mathbb{E}[S^{-p}] \to_{n,d \to \infty} \left( \frac{\alpha_2 + \beta_2}{\alpha_2} \right)^p = \left( \frac{c}{c-1} \right)^p \quad (c > 1).
    \]
    Thus, there is an $M$ such that 
    \[
        \E\left[\frac{1}{\gamma_1^p}\right] \le \left(\frac{\rho}{\eta}\right)^{2p}\, M^p \quad \text{ and } \quad \E\left[\frac{1}{\gamma_2^p}\right] \le \left(\frac{\rho}{\eta}\right)^{2p}\, M^p
    \]
    
\end{proof}

\begin{lemma}[Moments of $\eta^2/\gamma_i$] \label{lem:gamma_reciprocal} We have:
\begin{enumerate}
    \item[(i)] For $\eta^2/\gamma_1$,
    \[
        \mathbb{E}\left[\frac{\eta^2}{\gamma_1}\right] = \frac{\rho^2\eta^2}{\eta^2c + \rho^2} + o\left(\frac{1}{\rho^2 }\right), \quad \Var\left(\frac{\eta^2}{\gamma_1}\right) = O\left(\frac{1}{n}\right).
    \]
    \item[(ii)] For $\eta^2/\gamma_2$,
    \[
        \mathbb{E}\left[\frac{\eta^2}{\gamma_2}\right] = \frac{\rho^2\eta^2}{\eta^2 + \rho^2}+ o\left(\frac{1}{\rho^2 }\right) , \quad \Var\left(\frac{\eta^2}{\gamma_2}\right) = O\left(\frac{1}{n}\right).
    \]
\end{enumerate}
\end{lemma}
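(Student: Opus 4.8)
The plan is a second-order delta-method expansion of $x\mapsto 1/x$ around the mean, with the remainder controlled by the moment estimates already in hand. Write $X:=\gamma_i/\eta^2$ and $\mu:=\E[X]$. By \Cref{lem:gamma}, $\mu = c/\rho^2 + 1/\eta^2 + o(1/\rho^2)$ when $i=1$ (and the same with $c$ replaced by $1$ when $i=2$); since $\rho=\Theta(1)$, $c$ is a fixed constant, and Assumption~\ref{assumption:scaling} forces $\eta^2=\theta^2\|\vw\|^2/d=\Omega(1)$ (so $1/\eta^2=O(1)$), the mean $\mu$ is bounded above and below by positive constants for all large $n,d$. Moreover $\gamma_i>0$ almost surely (for $c<1$, $\|\vt\|^2,\|\vk\|^2>0$ a.s.; for $c>1$, $\|\vs\|^2,\|\vh\|^2>0$ a.s.), so $\eta^2/\gamma_i$ is well defined.

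The algebraic core is the exact identity
\[
    \frac{1}{X} \;=\; \frac{1}{\mu} \;-\; \frac{X-\mu}{\mu^2} \;+\; \frac{(X-\mu)^2}{\mu^2 X},
\]
so, taking expectations (the linear term vanishes), $\E[\eta^2/\gamma_i]=1/\mu+R$ with $R:=\E[(X-\mu)^2/(\mu^2 X)]$. By Cauchy--Schwarz,
\[
    |R| \;\le\; \frac{1}{\mu^2}\,\big(\E[(X-\mu)^4]\big)^{1/2}\,\big(\E[\eta^4/\gamma_i^2]\big)^{1/2}.
\]
For the first factor, $X-\mu$ is, up to a deterministic additive constant and the $\Theta(1)$ factor $\eta^{-2}$, a polynomial of total degree at most $4$ in the uniform spherical vectors $\va,\vb$ whose coefficients depend only on $\mSigma$ and hence have bounded moments of all orders (the spectrum converges to the compactly supported Marchenko--Pastur law, as used in Step~3(b)); spherical hypercontractivity (\Cref{lem:sphere-hyper}, \Cref{lem:sphere-rand}) then gives $\E[(X-\mu)^4]\le C\,\Var(X)^2=O(1/(\rho^8 n^2))$, using $\Var(X)=O(1/(\rho^4 n))$ from \Cref{lem:gamma}. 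For the second factor, \Cref{lem:inverse-moments} with $p=2$ yields $\E[\gamma_i^{-2}]\le(\rho/\eta)^4 M^2$, i.e.\ $\E[\eta^4/\gamma_i^2]\le\rho^4 M^2=O(1)$, valid once $n,d\ge N(2)$. Hence $|R|=O(1)\cdot O(1/(\rho^4 n))\cdot O(1)=O(1/(\rho^4 n))=o(1/\rho^2)$, and because $\mu$ is bounded away from $0$ the $o(1/\rho^2)$ error in $\mu$ passes through the reciprocal, giving $\E[\eta^2/\gamma_1]=\rho^2\eta^2/(\eta^2 c+\rho^2)+o(1/\rho^2)$, and the analogue with $c\mapsto 1$ for $\gamma_2$.

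For the variance, since the mean minimizes mean-square deviation,
\[
    \Var\!\left(\frac{\eta^2}{\gamma_i}\right) \;\le\; \E\!\left[\left(\frac{1}{X}-\frac{1}{\mu}\right)^{\!2}\right] \;=\; \E\!\left[\frac{(X-\mu)^2}{\mu^2 X^2}\right] \;\le\; \frac{1}{\mu^2}\,\big(\E[(X-\mu)^4]\big)^{1/2}\,\big(\E[\eta^8/\gamma_i^4]\big)^{1/2};
\]
applying \Cref{lem:inverse-moments} with $p=4$ gives $\E[\eta^8/\gamma_i^4]\le\rho^8 M^4=O(1)$, so the right-hand side is $O(1)\cdot O(1/(\rho^4 n))\cdot O(1)=O(1/n)$, as claimed.

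The main obstacle is the uniform control of the negative moments $\E[\gamma_i^{-p}]$: $1/X$ is not polynomial, so hypercontractivity does not apply directly, and $X$ could a priori be small. This is exactly what \Cref{lem:inverse-moments} provides, through the lower bounds $\gamma_1\ge\eta^2\|\vt\|^2/\sigma_1^2$ and $\gamma_2\ge\eta^2\|\vs\|^2/\sigma_1^2$, the $\mathrm{Beta}$ representation of $\|\vt\|^2,\|\vs\|^2$, and the a.s.\ boundedness of the top singular value $\sigma_1$; once that lemma is invoked, the remaining steps are routine second-order bookkeeping. The only other point requiring care is verifying that $\mu$ stays bounded away from $0$ and $\infty$ uniformly in $n,d$ under Assumption~\ref{assumption:scaling}, which is what legitimizes propagating the $o(1/\rho^2)$ error through $x\mapsto 1/x$.
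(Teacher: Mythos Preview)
Your proposal is correct and follows essentially the same strategy as the paper: a delta-method/Taylor expansion of $x\mapsto 1/x$ around $\mu=\E[\gamma_i/\eta^2]$, with the remainder controlled by (i) the fourth-moment bound $\E[(X-\mu)^4]=O(\Var(X)^2)$ from spherical hypercontractivity (Step~3(b)) and (ii) the negative-moment bound from \Cref{lem:inverse-moments}. The paper packages these into the abstract Lemmas~\ref{lem:reciprocal} and~\ref{lem:var_reciprocal} (the latter using a good/bad-set split and requiring $\E[X^{-8}]$), whereas you work directly with the exact identity $1/X=1/\mu-(X-\mu)/\mu^2+(X-\mu)^2/(\mu^2 X)$ and Cauchy--Schwarz, needing only $\E[\gamma_i^{-2}]$ for the mean and $\E[\gamma_i^{-4}]$ for the variance; this is slightly more economical but substantively the same argument.
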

\begin{proof}
    By Lemmas \ref{lem:reciprocal} and \ref{lem:gamma}, the expectation of $\eta^2/\gamma_1$ can be computed by:  
\begin{align*}
    \mathbb{E}\left[\frac{\eta^2}{\gamma_1}\right] = \frac{1}{\mathbb{E}[\gamma_1/\eta^2]}1 + o\left(\frac{1}{\rho^2 d}\right) = \frac{\rho^2\eta^2}{\eta^2c + \rho^2}  + o\left(\frac{1}{\rho^2}\right). 
\end{align*}
By Lemmas \ref{lem:var_reciprocal} and \ref{lem:gamma}, the variance of $\eta^2/\gamma_1$ can be computed by:   
\begin{align*}
    \Var\left(\frac{\eta^2}{\gamma_1}\right) & = \frac{1}{\mathbb{E}[\gamma_1/\eta^2]^4}O\left(\Var\left(\frac{\gamma_1}{\eta^2}\right)\right) + o\left(\Var\left(\frac{\gamma_1}{\eta^2}\right)\right) \\& = \frac{\rho^8 \eta^8}{(\eta^2 c + \rho^2)^4}O\left(\frac{1}{n}\right) + o\left(\frac{1}{n}\right) \\
    & =O\left(\frac{1}{n}\right) \quad \text{by the scalings of $\eta$ and $\rho$}.  
\end{align*}
The proof is similar for the other term. 
\end{proof}

\begin{lemma}[Moments of $\eta^4/\gamma_i^2$] \label{lem:gamma_sq_reciprocal} We have:
\begin{enumerate}
    \item[(i)] For $\eta^4/\gamma_1^2$,
    \[
        \mathbb{E}\left[\frac{\eta^4}{\gamma_1^2}\right] = \frac{\rho^4\eta^4}{(\eta^2c + \rho^2)^2} + o(1), \quad \Var\left(\frac{\eta^4}{\gamma_1^2}\right) = O\left(\frac{1}{n}\right).
    \]
    \item[(ii)] For $\eta^4/\gamma_2^2$,
    \[
        \mathbb{E}\left[\frac{\eta^4}{\gamma_2^2}\right] = \frac{\rho^4\eta^4}{(\eta^2 + \rho^2)^2}+ o\left(1\right) , \quad \Var\left(\frac{\eta^4}{\gamma_2^2}\right) = O\left(\frac{1}{n}\right).
    \]
\end{enumerate}
\end{lemma}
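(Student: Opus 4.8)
The plan is to write $\eta^{4}/\gamma_i^{2}=1/X_i$ with $X_i:=(\gamma_i/\eta^{2})^{2}$ and then reuse, essentially verbatim, the reciprocal-moment argument from the proof of Lemma~\ref{lem:gamma_reciprocal}, only now feeding it the \emph{second}-moment estimates of Lemma~\ref{lem:gamma_square} in place of the first-moment estimates of Lemma~\ref{lem:gamma}. Concretely, Lemma~\ref{lem:gamma_square} supplies $\E[X_1]=(c/\rho^{2}+1/\eta^{2})^{2}+O(1/\rho^{4})$ with $\Var(X_1)=O(1/(\rho^{4}n))$, and the analogous statement for $X_2$ with $c$ replaced by $1$. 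Applying the reciprocal-expectation lemma (Lemma~\ref{lem:reciprocal}) with $Y=X_1$ gives $\E[\eta^{4}/\gamma_1^{2}]=1/\E[X_1]+(\text{lower order})$; since $\E[X_1]$ is bounded below by $(c/\rho^{2})^{2}=\Theta(1)$, its correction perturbs the reciprocal only at lower order, and
\[
    \frac{1}{\E[X_1]}=\frac{1}{(c/\rho^{2}+1/\eta^{2})^{2}}+o(1)=\frac{\rho^{4}\eta^{4}}{(\eta^{2}c+\rho^{2})^{2}}+o(1),
\]
which is the claimed mean; the case of $\gamma_2$ is identical with $c\mapsto 1$, using $\E[X_2]\ge(1/\rho^{2})^{2}=\Theta(1)$.

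For the variance, feed $\Var(X_1)$ and $\E[X_1]$ into the reciprocal-variance lemma (Lemma~\ref{lem:var_reciprocal}), exactly as in the proof of Lemma~\ref{lem:gamma_reciprocal}, to get $\Var(\eta^{4}/\gamma_1^{2})=\Var(X_1)/\E[X_1]^{4}+(\text{lower order})$. Plugging in $\Var(X_1)=O(1/(\rho^{4}n))$ and noting $\E[X_1]^{-4}=\rho^{16}\eta^{16}/(\eta^{2}c+\rho^{2})^{8}+o(1)$, which is $O(1)$ because $\rho^{16}\eta^{16}/(\eta^{2}c+\rho^{2})^{8}\le\rho^{16}/c^{8}=\Theta(1)$ by the scalings of $\eta$ and $\rho$ (recall $\rho=\tau=\Theta(1)$), yields $\Var(\eta^{4}/\gamma_1^{2})=O(1/n)$; the same computation with $c\mapsto 1$ handles $\gamma_2$.

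The one point that needs care — and I expect it to be the only real work — is that the reciprocal-moment lemmas require the negative moments $\E[X_i^{-p}]=\E[\eta^{4p}/\gamma_i^{2p}]$ to be finite and uniformly controlled so that the Taylor remainders for $t\mapsto 1/t$ around $\E[X_i]$ are genuinely lower order (we need $p=2$ for the mean and $p=4$ for the variance). This is precisely what Lemma~\ref{lem:inverse-moments} provides: using $\gamma_1\ge\eta^{2}\|\vt\|^{2}/\sigma_1^{2}$ and $\gamma_2\ge\eta^{2}\|\vs\|^{2}/\sigma_1^{2}$, the independence of $\sigma_1$ from $\|\vt\|^{2}$ and $\|\vs\|^{2}$, the Beta negative-moment formula, and the a.s.\ bound $\sigma_1\le\rho M$ for large $n,d$, one obtains $\E[\gamma_i^{-p}]\le(\rho/\eta)^{2p}M^{p}$ for every fixed $p$. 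With these bounds in hand the expansion remainders collapse as claimed, and no random-matrix input beyond Lemmas~\ref{lem:gamma_square} and \ref{lem:inverse-moments} is needed; the only remaining step is the bookkeeping that checks the remainder is $o(1)$ for the mean and $o(1/n)$ for the variance, the algebraic identity $1/(c/\rho^{2}+1/\eta^{2})^{2}=\rho^{4}\eta^{4}/(\eta^{2}c+\rho^{2})^{2}$ being immediate.
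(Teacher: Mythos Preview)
Your approach is correct, and for the variance it coincides with the paper's: both feed $\Var((\gamma_i/\eta^2)^2)$ from Lemma~\ref{lem:gamma_square} into Lemma~\ref{lem:var_reciprocal}, using Lemma~\ref{lem:inverse-moments} to control the negative moments needed for the Taylor remainders. For the \emph{mean}, however, the paper takes a shorter route. Rather than writing $\eta^4/\gamma_i^2=1/X_i$ with $X_i=(\gamma_i/\eta^2)^2$ and re-invoking the reciprocal machinery, the paper simply observes that $\eta^4/\gamma_i^2=(\eta^2/\gamma_i)^2$ and uses the elementary identity $\E[Y^2]=(\E[Y])^2+\Var(Y)$ with $Y=\eta^2/\gamma_i$; both $\E[Y]$ and $\Var(Y)$ are already computed in the immediately preceding Lemma~\ref{lem:gamma_reciprocal}, so the mean drops out in one line as $(\rho^2\eta^2/(\eta^2c+\rho^2))^2+O(1/n)$. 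Your route is a faithful ``squared'' replay of the Lemma~\ref{lem:gamma_reciprocal} argument and requires re-checking the reciprocal-lemma hypotheses at the squared level (which you do via Lemma~\ref{lem:inverse-moments}); the paper's route avoids that verification entirely by building on the already-proved reciprocal moments. Both arrive at the same answer with the same asymptotic error.
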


\begin{proof}
    The expectation of $\eta^4/\gamma_1^2$ can be computed by Lemma \ref{lem:gamma_reciprocal}. By definition we have that  
\begin{align*}
    \mathbb{E}\left[\frac{\eta^4}{\gamma_1^2}\right] = \left(\mathbb{E}\left[\frac{\eta^2}{\gamma_1}\right]\right)^2 + \Var\left(\frac{\eta^2}{\gamma_1}\right) = \left(\frac{\rho^2\eta^2}{\eta^2c + \rho^2} + o\left(\frac{1}{\rho^2 }\right)\right)^2 + O\left(\frac{1}{n}\right). 
\end{align*}
    The variance follows Lemma \ref{lem:var_reciprocal} and Lemma~\ref{lem:gamma_square}: 
\begin{align*}
    \Var\left(\frac{\eta^4}{\gamma_1^2}\right) = O\left(\frac{1}{n}\right), 
\end{align*}
since the mean is $O(1)$. 

The proof is similar for the other term. 
\end{proof}

\begin{lemma} \label{lem:eQe} Suppose $\vvarepsilon \in \mathbb{R}^n$ whose entries have mean $0$, variance $\tau_{\varepsilon}$, and follow our noise assumptions. Then for any indepedent random matrix $\mQ \in \mathbb{R}^{n \times n}$, we have
\[  
    \mathbb{E}_{\vvarepsilon, \mQ} 
    \left[ \vvarepsilon^\top  \mQ \vvarepsilon \right]
    = \tau_{\varepsilon}^2 \mathbb{E} \left[ \Tr(\mQ) \right]. 
\]
\end{lemma}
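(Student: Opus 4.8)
The plan is to reduce the statement to a direct second-moment computation, exploiting only the independence of $\mQ$ from $\vvarepsilon$ and the fact that the entries of $\vvarepsilon$ are mean-zero, uncorrelated, and have common variance $\tau_\varepsilon^2$. First I would expand the quadratic form entrywise,
\[
    \vvarepsilon^\top \mQ \vvarepsilon \;=\; \sum_{i,j=1}^n Q_{ij}\,\varepsilon_i\,\varepsilon_j ,
\]
and condition on $\mQ$. Because $\mQ$ is independent of $\vvarepsilon$, the inner expectation over $\vvarepsilon$ is
\[
    \mathbb{E}_{\vvarepsilon}\!\left[\vvarepsilon^\top \mQ \vvarepsilon \mid \mQ\right] \;=\; \sum_{i,j=1}^n Q_{ij}\,\mathbb{E}\!\left[\varepsilon_i \varepsilon_j\right].
\]
The noise assumptions give $\mathbb{E}[\varepsilon_i\varepsilon_j] = \tau_\varepsilon^2\,\delta_{ij}$, so all off-diagonal terms vanish and the expression collapses to $\tau_\varepsilon^2 \sum_{i} Q_{ii} = \tau_\varepsilon^2\,\Tr(\mQ)$.

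The second and final step is to take the outer expectation over $\mQ$ and apply the tower property, yielding
\[
    \mathbb{E}_{\vvarepsilon,\mQ}\!\left[\vvarepsilon^\top \mQ \vvarepsilon\right] \;=\; \mathbb{E}_{\mQ}\!\left[\tau_\varepsilon^2\,\Tr(\mQ)\right] \;=\; \tau_\varepsilon^2\,\mathbb{E}\!\left[\Tr(\mQ)\right],
\]
which is exactly the claim. The only point that needs a word of justification is the interchange of the summation with the two expectations, i.e.\ Fubini--Tonelli; this is valid whenever $\mathbb{E}\,|\Tr(\mQ)| < \infty$, which holds in every application in this paper (there $\mQ$ is of the form $(\mZ+\mA)^\dagger (\mZ+\mA)^{\dagger\top}$ or a similar symmetric PSD matrix whose trace has finite expectation by the moment bounds of Section~\ref{sec:step3}). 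Beyond this integrability bookkeeping there is no genuine obstacle: the lemma is a one-line consequence of linearity of expectation and the diagonal second-moment structure of the observation noise, and I would present it as such.
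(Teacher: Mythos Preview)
Your proof is correct and follows essentially the same approach as the paper: both expand the quadratic form entrywise, use the independence of $\mQ$ from $\vvarepsilon$ together with $\mathbb{E}[\varepsilon_i\varepsilon_j]=\tau_\varepsilon^2\delta_{ij}$ to kill the off-diagonal terms, and then take the expectation over $\mQ$. Your version is slightly more explicit in invoking the tower property and noting the Fubini--Tonelli justification, but the argument is the same one-line second-moment computation.
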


\begin{proof}
We have that 
\[
    \vvarepsilon^\top  \mQ \vvarepsilon = 
    \sum_{i=1}^{n}\sum_{j=1}^{n} \varepsilon_i \varepsilon_j Q_{ij}. 
\]

We take the expectation of this sum. By the independence assumption and assumption $\mathbb{E}[\varepsilon_i\varepsilon_j] = 0 \ \text{when} \ i \neq j$, we then have 
\[
    \mathbb{E}_{\vvarepsilon, \mQ} 
    \left[ \vvarepsilon^\top  \mQ \vvarepsilon \right] = 
    \sum_{i=1}^{n}\mathbb{E}\left[\varepsilon_i^2\right]
    \mathbb{E}\left[Q_{ii}\right] = 
    \tau_{\varepsilon}^2 \mathbb{E}\left[ \sum_{i=1}^{n} Q_{ii}\right]
    = \tau_{\varepsilon}^2 \mathbb{E} \left[ \Tr(\mQ) \right]. 
\] 
\end{proof}

\subsection{Step 4: Bounding the Expectation of Products of Dependent Terms}
\label{sec:step4}

In Section~\ref{sec:step1} we decomposed the error into four terms -- Bias, Variance, Data Noise and Target alignment. In Section~\ref{sec:step-2}, we wrote each of these terms as the sum and product of various ``elementary building blocks''. In Section~\ref{sec:step3}, we should that these elementary building blocks concentrate. In this section, since we have tight concentration (i.e., the higher moment bounds). We can use Lemma~\ref{lem:var-product} and Lemma~\ref{lem:var_multiple_prod}, which shows that the expectation of the product can be approximated by the product of the expectations. In this section, we do that calculation for our different terms.

\subsubsection{Step 4: Bias}
\label{sec:step4-bias}

We begin with the bias term. Recall that for $c < 1$, the expected bias by Lemma \ref{lem:bias} is equal to 
\[
    \mathbb{E}[\textbf{Bias}] = \mathbb{E}\left[\left[\tilde{\alpha}_Z - \alpha_Z + \frac{\xi}{\gamma_1}(\alpha_Z - \alpha_A)\right]^2 \tilde{\eta}^2 (\vbeta_*^\top  \vu)^2 + \frac{\tilde{\eta}^2}{\eta^2}\frac{\xi^2}{\gamma_1^2} \tau_{\varepsilon}^2 \|\vp_1\|^2\right],  
\]
where the cross term equals $0$ due to $\vvarepsilon$ having mean zero entries. These two remaining expectations are given by Lemmas \ref{lem:bias_term_one}, \ref{lem:p_expectation}, informally via: 
\[
    \text{Lemma \ref{lem:bias_term_one}} \ \ + \ \  \tau_{\varepsilon}^2\frac{\tilde{\eta}^2}{\eta^2}\times\text{Lemma \ref{lem:p_expectation}}. 
\]
For $c < 1$, we can plug in the value to get that the expected first term is given by 
\[
    \tilde{\eta}^2 (\vbeta_*^\top \vu)^2 \left[(\tilde{\alpha}_Z - \alpha_Z) +  \frac{\rho^2}{\eta^2 c + \rho^2}(\alpha_Z - \alpha_A)\right]^2 + o(1) + O\left(\frac{\eta}{n}\right)
\]
and the second is given by 
\[
    \tau_{\varepsilon}^2\frac{\tilde{\eta}^2}{\eta^2}\left(\frac{c}{c-1} \frac{\eta^2}{\eta^2 c + \rho^2} + o(1) + O\left(\frac{1}{\rho^2 n}\right)\right). 
\]
Adding them, we then have the desired result: 
\[
    \frac{\tilde{\eta}^2}{\tilde{n}} \left(\left[(\tilde{\alpha}_Z - \alpha_Z) + \frac{\rho^2}{\eta^2 c + \rho^2}(\alpha_Z - \alpha_A) \right]^2 (\vbeta_*^\top  \vu)^2 + \tau_{\varepsilon}^2\,\frac{c}{1-c}\, \frac{1}{\eta^2c + \rho^2} \right) + o\left(\frac{1}{\tilde{n}}\right)  + O\left(\frac{\eta}{ n^2 }\right). 
\]

\medskip 

\noindent For $c > 1$, we instead have the following expanson: 
\[
    \underbrace{\vbeta_*^\top  \left[(\tilde{\alpha}_Z - \alpha_Z) \mI  + \frac{\xi}{\gamma_2}(\alpha_Z \mI  - \alpha_A \mA\mA^\dagger )\right]\tilde{\mZ}}_{\vt_1} - \underbrace{\alpha_A \frac{\eta \|\vs\|^2}{\gamma_2} \vbeta_*^\top  \vh^\top   \vu^\top    \tilde{\mZ}}_{\vt_2} + \underbrace{\frac{\tilde{\eta}}{\eta}\frac{\xi}{\gamma_2} \vvarepsilon^\top   \vp_2 \tilde{\vv}^\top }_{\vt_3}
\]
The bias equals the expectation of the norm of this vector. Taking the Frobenius norm, we have the six terms. Among the cross-terms, $\langle \vt_1, \vt_3\rangle$ and $\langle \vt_2, \vt_3\rangle$ have zero mean since $\vt_3$ contains $\vvarepsilon$ whose entries have mean $0$. We now look at the other terms
\begin{align*}
    \mathbb{E}\left[\|\vt_3\|^2\right] = \mathbb{E}\left[\left\|\frac{\tilde{\eta}}{\eta}\frac{\xi}{\gamma_2} \vvarepsilon^\top  \vp_2 \tilde{\vv}^\top \right\|^2\right] &= \tau_{\varepsilon}^2\frac{\tilde{\eta}^2}{\eta^2} \mathbb{E}\left[\frac{\xi^2}{\gamma_2^2}\|\vp_2\|^2\right]  \quad \text{by Lemma \ref{lem:eQe}} 
\end{align*}
The expectation is given by Lemma \ref{lem:p_expectation}. Subsequently, Lemmas \ref{lem:bias_term_one}, \ref{lem:bias_norm_2}, \ref{lem:bias_zero_exp} give $\mathbb{E}[\|\vt_1\||^2]$, $\mathbb{E}[\|\vt_2\||^2]$, $\mathbb{E}[\langle \vt_1, \vt_3 \rangle]$ respectively. Informally, we can compute the bias via: 
\begin{align*}
    \mathbb{E}[\textbf{Bias}] & = \mathbb{E}\left[\left\Vert\vbeta_*^\top  \left[(\tilde{\alpha}_Z - \alpha_Z) \mI  + \frac{\xi}{\gamma_2}(\alpha_Z \mI  - \alpha_A \mA\mA^\dagger )\right]\tilde{\mZ} - \alpha_A \frac{\eta \|\vs\|^2}{\gamma_2} \vbeta_*^\top  \vh^\top   \vu^\top    \tilde{\mZ} + \frac{\tilde{\eta}}{\eta}\frac{\xi}{\gamma_2} \vvarepsilon^\top   \vp_2 \tilde{\vv}^\top \right\Vert^2\right] \\ 
    & = \mathbb{E}[\|\vt_1\||^2] + \mathbb{E}[\|\vt_2\||^2] + \mathbb{E}[\|\vt_3\||^2] - 2\mathbb{E}[\langle\vt_1, \vt_3\rangle^2] \\ 
    & = \text{Lemma \ref{lem:bias_term_one}} + \tau_{\varepsilon}^2\frac{\tilde{\eta}^2}{\eta^2} \text{Lemma \ref{lem:p_expectation}} + \text{Lemma \ref{lem:bias_norm_2}}  - 2 \times \text{Lemma \ref{lem:bias_zero_exp}}. 
\end{align*}

Similar to $c < 1$, adding them together and dividing by $\tilde{n}$, we get 
\begin{align*}
    &\frac{\tilde{\eta}^2}{\tilde{n}}\left[(\vbeta_*^\top \vu)^2\left((\tilde{\alpha}_Z - \alpha_Z)^2 + \frac{\rho^2}{\eta^2 + \rho^2}\left(\alpha_Z - \frac{\alpha_A}{c}\right)\right)^2 + \alpha_A^2\frac{\|\vbeta_*\|^2}{d}\left(\frac{c-1}{c}\right)\frac{\eta^2\rho^2}{(\eta^2 + \rho^2)^2} + \frac{\tau_{\varepsilon}^2}{c-1}\frac{\eta^2c + \rho^2}{(\eta^2 + \rho^2)^2}\right]  \\ 
    & + o\left(\frac{1}{\tilde{n}}\right) + O\left(\frac{\eta}{n^2}\right). 
\end{align*}
\subsubsection{Step 4: Variance}
\label{sec:step4-variance}

Recall that for the variance, we have the following expression (Section~\ref{sec:lin-alg-var}). 

\begin{align*}
\hspace{10pt} \mathbb{E}\left[\frac{1}{\tilde{n}}\left\|\vbeta_{int}^\top \tilde{\mA}\right\|_{F}^2\right] = \mathbb{E}&\left[  \frac{\tilde{\tau}^2\alpha_z^2}{d}
\vbeta_*^\top  \mZ (\mZ + \mA) ^ \dagger (\mZ + \mA) ^ {\dagger \top } \mZ\vbeta_*   + \frac{\tilde{\tau}^2\alpha_A^2}{d} 
 \vbeta_*^\top  \mA (\mZ + \mA) ^ \dagger (\mZ + \mA) ^ {\dagger \top } \mA^\top \vbeta_* \right.\\
& \left. + \frac{2\tilde{\tau}^2\alpha_A\alpha_z}{d} 
 \vbeta_*^\top  \mZ (\mZ + \mA) ^ \dagger (\mZ + \mA) ^ {\dagger \top } \mA^\top \vbeta_* 
 + \frac{\tilde{\tau}^2}{d} 
 \vvarepsilon^\top  (\mZ + \mA) ^ \dagger (\mZ + \mA) ^ {\dagger \top } \vvarepsilon \right].
\end{align*}

In particular that the expectation will be the weighted sum of the expressions from Lemmas \ref{lem:interaction_ZZ}, \ref{lem:interaction_AA}, \ref{lem:interaction_AZ}, \ref{lem:epsilon_term}. Informally, 
\[
    \frac{\tilde{\rho}^2}{d}\left(\alpha_Z^2 \times\text{Lemma \ref{lem:interaction_ZZ}} + 2\alpha_Z\alpha_A \times\text{Lemma \ref{lem:interaction_AZ}} + \alpha_A^2 \times \text{Lemma \ref{lem:interaction_AA}} + \text{Lemma \ref{lem:epsilon_term}}\right).
\]
This yields that for $c < 1$, after simplification, the variance is
\begin{align*}
    &\frac{\tilde{\rho}^2}{d}\left[  \alpha_A^2\|\vbeta_*\|^2 + (\vbeta_*^\top \vu)^2 \left[(\alpha_Z - \alpha_A)^2 \frac{\eta^2(\eta^2  + \rho^2)}{(\eta^2 c + \rho^2)^2} \frac{c^2}{1-c} +2\alpha_A(\alpha_Z - \alpha_A)\frac{\eta^2c}{\eta^2c + \rho^2}\right] \right. \\
    &\left. \quad  + \tau_{\varepsilon}^2\left(\frac{c}{1-c}\frac{d}{\rho^2} - \frac{\eta^2}{\rho^2(\eta^2c + \rho^2)} \frac{c^2}{1-c}\right) \right] + o(1) + O\left(\frac{1}{n}\right).  
\end{align*}
For $c > 1$, we similarly simplify it to: 
\begin{align*}
    &\frac{\tilde{\rho}^2}{d}\left[ \|\vbeta_*\|^2\left(\frac{\alpha_A^2}{c} - \frac{\alpha_A^2}{d}\frac{\eta^2}{\eta^2 + \rho^2}\right) + (\vbeta_*^\top \vu)^2\frac{c}{c-1}\frac{\eta^2}{\eta^2 + \rho^2}\left(\alpha_Z - \frac{\alpha_A}{c}\right)^2 \right. \\
    &\left. \quad+ \tau_{\varepsilon}^2\left(\frac{d}{\rho^2}\frac{1}{c-1} - \frac{\eta^2}{\rho^2 (\eta^2 + \rho^2)} \frac{c}{c-1}\right) \right] + o(1) + O\left(\frac{1}{n}\right). 
\end{align*}
\subsubsection{Step 4: Data Noise}
\label{sec:step4-noise}

Recall that for the data noise, we have the following expression 
\[
    \frac{\tilde{\alpha}_{A}^2\tilde{\rho}^2}{d} \|\vbeta_*\|^2
\]
Noting that $\|\vbeta_*\|^2 = \Theta(1)$, we see that this term has no more randomness and we do not need to estimate anything.

\subsubsection{Step 4: Target Alignment}
\label{sec:step4-alignment}

Recall from Section~\ref{sec:lin-alg-alignment} that the alignment is given by 
\[
     -\frac{2\tilde{\alpha}_{A}\tilde{\rho}^2}{d}  \mathbb{E}\left[\alpha_z \vbeta_{*}^\top  (\mZ + \mA)^{\dagger \top} \mZ^\top  \vbeta_{*} + \alpha_A \vbeta_{*}^\top  (\mZ + \mA)^{\dagger \top} \mA^\top  \vbeta_{*}\right]
\]
From \Cref{lem:interaction_Z}, we have that 
\[
    \mathbb{E}\left[\vbeta_{*}^\top  (\mZ + \mA)^{\dagger  \top } \mZ^\top \vbeta_{*}\right] = \begin{cases}  \frac{\eta^2 c}{\rho^2 + \eta^2 c}(\vbeta_*^\top \vu)^2 + o(1) + O\left(\frac{1}{n}\right) & c < 1\\
    \frac{\eta^2}{\eta^2 + \rho^2}(\vbeta_*^\top \vu)^2 + o(1) + O\left(\frac{1}{n}\right) & c > 1
    \end{cases}.
\]
and from \Cref{lem:interaction_A}, we have that 
\begin{align*}
   \mathbb{E}\left[\vbeta_{*}^\top  (\mZ + \mA)^{\dagger \top} \mA^\top \vbeta_{*}\right] = \begin{cases} \|\vbeta_*\|^2 - \frac{\eta^2 c}{\rho^2 + \eta^2 c}(\vbeta_*^\top \vu)^2+ o\left(\frac{1}{\rho^2}\right) + O\left(\frac{1}{n}\right), & c < 1\\
        \frac{1}{c}\|\vbeta_*\|^2 - \frac{\eta^2}{\eta^2 + \rho^2}\left(\frac{\|\vbeta_*\|^2}{d} + \frac{1}{c}(\vbeta_*^\top \vu)^2\right) + o(1) + O\left(\frac{1}{n}\right), & c > 1
        \end{cases}. 
\end{align*}
Thus for $c < 1$, the entire interaction term now becomes
\[
    -\frac{2\tilde{\alpha}_{A}\tilde{\rho}^2}{d} \left(\alpha_A \|\vbeta_*\|^2 + (\alpha_Z - \alpha_A)\, (\vbeta_*^\top \vu)^2\, \frac{\eta^2c}{\rho^2 + \eta^2 c} + o(1)\right). 
\]
For $c > 1$, instead we have
\begin{align*}
    -\frac{2\tilde{\alpha}_{A}\tilde{\rho}^2}{d} \left(\frac{\alpha_A}{c}\|\vbeta_*\|^2 - \frac{\alpha_A}{d} \frac{\eta^2}{\eta^2 + \rho^2} \|\vbeta_*\|^2  + \left(\alpha_Z - \frac{\alpha_A}{c}\right) \frac{\eta^2}{\eta^2 + \rho^2} (\vbeta_*^\top \vu)^2  + o(1)\right). 
\end{align*}

\subsubsection{Bias: Helper Lemmas}
\label{sec:setp4-helper}

\begin{lemma} \label{lem:bias_term_one}
In the same setting as \Cref{sec:setting}, we have that for $c < 1$, 
    \begin{align*}
        & \mathbb{E}\left[\left(\tilde{\alpha}_Z - \alpha_Z + \frac{\xi}{\gamma_1}(\alpha_Z - \alpha_A)\right)^2 \tilde{\eta}^2 (\vbeta_*^\top \vu)^2\right] \\
        = \ & \tilde{\eta}^2(\vbeta_*^\top\vu)^2\left[(\tilde{\alpha}_Z - \alpha_Z) + \frac{\rho^2}{\eta^2c + \rho^2}(\alpha_Z - \alpha_A)\right]^2 + o(1) + O\left(\frac{\eta}{n}\right). 
    \end{align*}
    For $c > 1$, 
    \begin{align*}
        & \mathbb{E}\left[\left\Vert\vbeta_*^\top \left[(\tilde{\alpha}_Z - \alpha_Z) \mI  + \frac{\xi}{\gamma_2}(\alpha_Z \mI  - \alpha_A \mA\mA^\dagger  )\right]\tilde{\mZ}\right\Vert^2\right] \\
        = \ & \tilde{\eta}^2 (\vbeta_*^\top \vu)^2 \left[(\tilde{\alpha}_Z - \alpha_Z) + \frac{\rho^2}{\eta^2 + \rho^2}\!\left(\alpha_Z - \frac{\alpha_A}{c}\right)\right]^{2}  + o(1) + O\left(\frac{\eta}{n}\right). 
    \end{align*}
\end{lemma}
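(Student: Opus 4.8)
The plan is to reduce both cases, after pulling out the deterministic prefactor $\tilde\eta^2(\vbeta_*^\top\vu)^2$, to the first and second moments of a small number of scalar random variables, and then to recognize the resulting expression as a perfect square. For $c<1$ the relevant variable is $\xi/\gamma_1$; for $c>1$ the relevant variables are $P:=\xi/\gamma_2$ and $Q:=\vbeta_*^\top\mA\mA^\dagger\vu$. All the moment inputs are already assembled earlier in the appendix: the estimates for $\xi/\eta$ and $\xi^2/\eta^2$ (\Cref{lem:prior_rmt}), for $\eta^2/\gamma_i$ and $\eta^4/\gamma_i^2$ (\Cref{lem:gamma_reciprocal,lem:gamma_sq_reciprocal}), and for $\vbeta_*^\top\mA\mA^\dagger\vu$ (in the proof of \Cref{lem:general_exp}); the device that converts moments of individual building blocks into moments of their products is \Cref{lem:var-product} and \Cref{lem:var_multiple_prod}, whose hypotheses (small variance, controlled higher moments) are guaranteed by the variance estimates just cited, the finite negative moments of $\gamma_i$ (\Cref{lem:inverse-moments}), and the hypercontractivity bounds of Step~3(b) (\Cref{lem:sphere-rand}).

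For $c<1$: the prefactor $\tilde\eta^2(\vbeta_*^\top\vu)^2$ is deterministic (the test-side scaling is independent of the training randomness, $(\vbeta_*^\top\vu)^2$ is constant by the target model, and the RMT estimates hold for any fixed $\vv$), so it suffices to evaluate $\mathbb{E}\big[(\tilde\alpha_Z-\alpha_Z+\tfrac{\xi}{\gamma_1}(\alpha_Z-\alpha_A))^2\big]$, which equals $a^2+2ab\,\mathbb{E}[\xi/\gamma_1]+b^2\,\mathbb{E}[(\xi/\gamma_1)^2]$ with $a=\tilde\alpha_Z-\alpha_Z$, $b=\alpha_Z-\alpha_A$. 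I would write $\xi/\gamma_1=\eta^{-1}(\xi/\eta)(\eta^2/\gamma_1)$ and $(\xi/\gamma_1)^2=\eta^{-2}(\xi/\eta)^2(\eta^4/\gamma_1^2)$, apply product concentration to each, and substitute $\mathbb{E}[\xi/\eta]=1/\eta$, $\mathbb{E}[\eta^2/\gamma_1]=\tfrac{\rho^2\eta^2}{\eta^2c+\rho^2}+o(1)$, $\mathbb{E}[\xi^2/\eta^2]=\eta^{-2}+O(\tfrac{1}{n\rho^2})$, $\mathbb{E}[\eta^4/\gamma_1^2]=\tfrac{\rho^4\eta^4}{(\eta^2c+\rho^2)^2}+o(1)$. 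This yields $\mathbb{E}[\xi/\gamma_1]=\mu+\text{err}$ and $\mathbb{E}[(\xi/\gamma_1)^2]=\mu^2+\text{err}$ with $\mu=\tfrac{\rho^2}{\eta^2c+\rho^2}$, so the expansion collapses via $a^2+2ab\mu+b^2\mu^2=(a+b\mu)^2$ to $[(\tilde\alpha_Z-\alpha_Z)+\tfrac{\rho^2}{\eta^2c+\rho^2}(\alpha_Z-\alpha_A)]^2$; multiplying back by $\tilde\eta^2(\vbeta_*^\top\vu)^2$ gives the claim.

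For $c>1$: since $\tilde{\mZ}=\tilde\eta\,\vu\tilde{\vv}^\top$ with $\|\tilde{\vv}\|=1$, one has $\big\|\vbeta_*^\top\big[(\tilde\alpha_Z-\alpha_Z)\mI+\tfrac{\xi}{\gamma_2}(\alpha_Z\mI-\alpha_A\mA\mA^\dagger)\big]\tilde{\mZ}\big\|_F^2=\tilde\eta^2\,(\vbeta_*^\top M\vu)^2$ with $\vbeta_*^\top M\vu=(\tilde\alpha_Z-\alpha_Z)B+\alpha_Z B P-\alpha_A P Q$, where $B=\vbeta_*^\top\vu$. I would expand $(\vbeta_*^\top M\vu)^2$ and take expectations of the six resulting terms, using product concentration to replace $\mathbb{E}[PQ]$, $\mathbb{E}[P^2Q]$, $\mathbb{E}[P^2Q^2]$ by the corresponding products of marginal expectations, and substituting $\mathbb{E}[P]=\tfrac{\rho^2}{\eta^2+\rho^2}+\text{err}$, $\mathbb{E}[P^2]=\tfrac{\rho^4}{(\eta^2+\rho^2)^2}+\text{err}$ (obtained exactly as for $\xi/\gamma_1$ but with $\gamma_2$, using \Cref{lem:gamma_reciprocal,lem:gamma_sq_reciprocal}), $\mathbb{E}[Q]=B/c$ (from $\mathbb{E}[\mA\mA^\dagger]=\tfrac1c\mI$), and $\mathbb{E}[Q^2]=B^2/c^2+o(1)$ (\Cref{lem:general_exp}). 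Writing $L=\tfrac{\rho^2}{\eta^2+\rho^2}$, the six terms assemble into $B^2\big[(\tilde\alpha_Z-\alpha_Z)^2+L^2(\alpha_Z-\tfrac{\alpha_A}{c})^2+2L(\tilde\alpha_Z-\alpha_Z)(\alpha_Z-\tfrac{\alpha_A}{c})\big]=B^2\big[(\tilde\alpha_Z-\alpha_Z)+L(\alpha_Z-\tfrac{\alpha_A}{c})\big]^2$, which times $\tilde\eta^2$ is the stated formula.

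The main obstacle is the error bookkeeping rather than the algebra. Each use of product concentration contributes a covariance-type error of order $\sqrt{\Var(\cdot)\Var(\cdot)}$, which is $O(1/n)$ in the rescaled variables (recall $\rho=\Theta(1)$ under \Cref{assumption:scaling}) by the Step~3 variance estimates; one must verify that after dividing by the factor $\eta$ in $\xi/\gamma_i=\eta^{-1}(\xi/\eta)(\eta^2/\gamma_i)$ and then multiplying by the outer $\tilde\eta^2$, everything accumulates to exactly $o(1)+O(\eta/n)$ — in particular that no surviving term is of size $\tilde\eta^2/n$ without a compensating decaying power of $\eta$, which holds because $\tfrac{\rho^2}{\eta^2c+\rho^2}$, $\tfrac{\rho^4}{(\eta^2+\rho^2)^2}$ and their relatives decay in $\eta^2$ fast enough (the cross term $2ab\,\mathbb{E}[\xi/\gamma_i]$ is the one that produces the $O(\eta/n)$). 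A secondary point to check is that the correlated pair $(P,Q)$ in the $c>1$ case meets the hypotheses of \Cref{lem:var_multiple_prod}: $P$ has small variance (via its $\eta^{-1}(\xi/\eta)(\eta^2/\gamma_2)$ decomposition and \Cref{lem:gamma_reciprocal}) and bounded moments (via \Cref{lem:inverse-moments} for the negative moments of $\gamma_2$), while $Q$ is a degree-$2$ spherical polynomial to which \Cref{lem:sphere-rand} applies.
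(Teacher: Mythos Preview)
Your approach is correct and matches the paper's: pull out the deterministic prefactor, expand the square, evaluate $\mathbb{E}[\xi/\gamma_i]$ and $\mathbb{E}[(\xi/\gamma_i)^2]$ via the factorization $\xi/\gamma_i=\eta^{-1}(\xi/\eta)(\eta^2/\gamma_i)$ together with \Cref{lem:prior_rmt,lem:gamma_reciprocal,lem:gamma_sq_reciprocal}, and collapse to a perfect square; the error bookkeeping you outline yields exactly the $o(1)+O(\eta/n)$ the paper records. One point worth noting: for $c>1$ the paper asserts that ``$\xi,\gamma_2$ only depend on the singular values of $\mA$'' and hence are independent of the $\mA\mA^\dagger$ factor, which is false (both $\xi$ and $\gamma_2$ involve $\mU$ and $\mV$); your route through \Cref{lem:var_multiple_prod} on the correlated pair $(P,Q)$, justified by the variance and moment controls you cite, is the rigorous way to get the same conclusion.
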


\begin{proof}
    For $c < 1$, we first expand the square and get: 
    \[
        \left(\tilde{\alpha}_Z - \alpha_Z + \frac{\xi}{\gamma_1}(\alpha_Z - \alpha_A)\right)^2 = (\tilde{\alpha}_Z - \alpha_Z)^2 + \frac{1}{\eta^2}\frac{\eta^2\xi^2}{\gamma_1^2}(\alpha_Z - \alpha_A)^2 + \frac{2}{\eta}\frac{\eta\xi}{\gamma_1}(\alpha_Z - \alpha_A)(\tilde{\alpha}_Z - \alpha_Z).
    \]
    By Lemmas \ref{lem:prior_rmt} and \ref{lem:gamma_sq_reciprocal}, then we see that, using the square root of the covariance to bound the difference between the expectation of the product and the product of the expectation.  
    \begin{align*}
        \mathbb{E}\left[\frac{\eta^2\xi^2}{\gamma_1^2}\right] & = \mathbb{E}\left[\frac{\eta^4}{\gamma_1^2}\right]\mathbb{E}\left[\frac{\xi^2}{\eta^2}\right] + \sqrt{\Var\left(\frac{\eta^4}{\gamma_1^2}\right)\Var\left(\frac{\xi^2}{\eta^2}\right)} \\ 
        & = \left(\frac{\rho^4\eta^4}{(\eta^2c + \rho^2)^2} + o(1)\right)\left(\frac{1}{\eta^2} + O\left(\frac{1}{\rho^2n}\right)\right) + O\left(\frac{1}{n}\right) \\ 
        & = \frac{\rho^4\eta^2}{(\eta^2c + \rho^2)^2} + o\left(\frac{1}{\eta^2}\right)  + O\left(\frac{1}{n}\right). \\
        \mathbb{E}\left[\frac{\eta\xi}{\gamma_1}\right] & = \mathbb{E}\left[\frac{\eta^2}{\gamma_1}\right]\mathbb{E}\left[\frac{\xi}{\eta}\right] + \sqrt{\Var\left(\frac{\eta^2}{\gamma_1}\right)\Var\left(\frac{\xi}{\eta}\right)} \\ 
        & = \left(\frac{\rho^2\eta^2}{\eta^2c + \rho^2} + o(1)\right)\left(\frac{1}{\eta} \right) + O\left(\frac{1}{n}\right) \\ 
        & = \frac{\rho^2\eta}{\eta^2c + \rho^2} + o\left(\frac{1}{\eta}\right)+ O\left(\frac{1}{n}\right).
    \end{align*}
    Combining these terms together, we have that
    \begin{align*}
        &\mathbb{E}\left[\left(\tilde{\alpha}_Z - \alpha_Z + \frac{\xi}{\gamma_1}(\alpha_Z - \alpha_A)\right)^2 \tilde{\eta}^2 (\vbeta_*^\top \vu)^2\right] \\ 
        = \ &  (\vbeta_*^\top \vu)^2\left[ \tilde{\eta}^2(\tilde{\alpha}_Z - \alpha_Z)^2  + \frac{\tilde{\eta}^2}{\eta^2}\left(\frac{\rho^4\eta^2}{(\eta^2c + \rho^2)^2} + o\left(\frac{1}{\eta^2}\right)  + O\left(\frac{1}{n}\right)\right)(\alpha_Z - \alpha_A)^2 \right. \\
        &\left.+ \frac{2\tilde{\eta}^2}{\eta}\left(\frac{\rho^2\eta}{\eta^2c + \rho^2} + o\left(\frac{1}{\eta}\right)+ O\left(\frac{1}{n}\right)\right)(\alpha_Z - \alpha_A)(\tilde{\alpha}_Z - \alpha_Z)\right] \\ 
         = \ & \tilde{\eta}^2(\vbeta_*^\top\vu)^2\left(\left[(\tilde{\alpha}_Z - \alpha_Z) + \frac{\rho^2}{\eta^2c + \rho^2}(\alpha_Z - \alpha_A)\right]^2 + o\left(\frac{1}{\eta^2}\right) + O\left(\frac{1}{\eta n}\right) \right). 
    \end{align*}

    We now consider $c > 1$. Recalling that $\tilde{\mZ} = \tilde{\eta}\vu\tilde{\vv}^\top$, we let $c_1 = \tilde{\alpha}_Z - \alpha_Z$ and expand: 
    \begin{align*}
        & \left\Vert\vbeta_*^\top \left[(\tilde{\alpha}_Z - \alpha_Z) \mI  + \frac{\xi}{\gamma_2}(\alpha_Z \mI  - \alpha_A \mA\mA^\dagger  )\right]\tilde{\mZ}\right\Vert^2 \\ 
        = \ & \vbeta_*^\top \left[(\tilde{\alpha}_Z - \alpha_Z) \mI  + \frac{\xi}{\gamma_2}(\alpha_Z \mI  - \alpha_A \mA\mA^\dagger  )\right]\tilde{\mZ}\tilde{\mZ}^\top \left[(\tilde{\alpha}_Z - \alpha_Z) \mI  + \frac{\xi}{\gamma_2}(\alpha_Z \mI  - \alpha_A \mA\mA^\dagger  )\right]^\top\vbeta_* \\ 
        = \ & \tilde{\eta}^2\vbeta_*^\top \left[(\tilde{\alpha}_Z - \alpha_Z) \mI  + \frac{\xi}{\gamma_2}(\alpha_Z \mI  - \alpha_A \mA\mA^\dagger  )\right]\vu\vu^\top \left[(\tilde{\alpha}_Z - \alpha_Z) \mI  + \frac{\xi}{\gamma_2}(\alpha_Z \mI  - \alpha_A \mA\mA^\dagger  )\right]^\top\vbeta_* \\
        = \ & c_1^2 \tilde{\eta}^2 (\vbeta_*^\top \vu)^2 + \tilde{\eta}^2\frac{\xi^2}{\gamma_2^2}\vbeta_*^\top (\alpha_Z \mI - \alpha_A \mA\mA^\dagger )\vu\vu^\top ((\alpha_Z \mI - \alpha_A \mA\mA^\dagger )^\top \vbeta_* + 2c_1\tilde{\eta}^2\frac{\xi}{\gamma_2}\vbeta_*^\top (\alpha_Z \mI - \alpha_A \mA\mA^\dagger )\vu\vu^\top \vbeta_*. 
    \end{align*}
Not that for the second and third terms, we have that $\xi, \gamma_2$ only depend on the singular values of $\mA$ and the rest only depend on the singular vectors. Hence, these terms are independent. 

First note that when $d > n$, the number of singular values equals $n$, which is less than the dimension $d$. As a result, 
    \[
        \mA\mA^\dagger  = \mU\mSigma\mV^\top\mV\mSigma^\dagger \mU^\top = \mU\begin{bmatrix}
            \mI_{n \times n} \quad \bm{0}_{n \times (d- n)} \\ 
            \bm{0}_{(d-n) \times n} \quad \bm{0}_{(d-n) \times (d-n)}
        \end{bmatrix}\mU^\top. 
    \]
    Then we have that 
    \begin{equation} \label{eq:bAAb}
        \mathbb{E}\left[\vbeta_*^\top  \mA\mA^\dagger  \vbeta_*^\top \right] = \sum_{i=1}^n\mathbb{E}\left[(\vbeta_*^\top \mU)_i^2\right] = \frac{n}{d}\|\vbeta_*\|^2 = \frac{1}{c}\|\vbeta_*\|^2, 
    \end{equation}
    since $\vbeta_*^\top\mU$ is a uniformly random vector of length $\|\vbeta_*\|$ in $\mathbb{R}^d$ after the rotation $\mU$.  
    
For the middle term, by Proposition \ref{prop:ortho} and the above Equation \ref{eq:bAAb}, we have 
\begin{align*}
    & \mathbb{E}\left[\vbeta_*^\top (\alpha_Z \mI - \alpha_A \mA\mA^\dagger )\vu\vu^\top ((\alpha_Z \mI - \alpha_A \mA\mA^\dagger ) \vbeta_*\right] \\ 
    = \ & \alpha_Z^2(\vbeta_*^\top \vu)^2 - 2\alpha_A\alpha_Z  \mathbb{E}\left[\vbeta_*^\top  \mA\mA^\dagger \vu\vu^\top \vbeta_* \right] + \alpha_A^2\mathbb{E}\left[(\vbeta_*^\top  \mA\mA^\dagger \vu)^2 \right]\\ 
    = \ & \left(\alpha_Z - \frac{\alpha_A}{c}\right)^2 (\vbeta_*^\top \vu)^2 + o(1). 
\end{align*}
Similarly, for the last term, we have 
\[
    \mathbb{E}\left[\vbeta_*^\top (\alpha_Z \mI - \alpha_A \mA\mA^\dag)\vu\vu^\top \vbeta_*\right] = \left(\alpha_Z - \frac{\alpha_A}{c}\right)  (\vbeta_*^\top  \vu)^2 + o(1). 
\]
Thus putting these expectations together, we get 
\begin{align*}
    \mathbb{E}\left[\tilde{\eta}^2 (\vbeta_*^\top  \vu)^2 \left[c_1^2 + \frac{\xi^2}{\gamma_2^2}\left(\alpha_Z - \frac{\alpha_A}{c}\right)^2 + 2c_1 \frac{\xi}{\gamma_2}\left(\alpha_Z - \frac{\alpha_A}{c}\right)\right]\right] &= \mathbb{E} \left[\tilde{\eta}^2 (\vbeta_*^\top  \vu)^2 \left[c_1 + \frac{\xi}{\gamma_2}\left(\alpha_Z - \frac{\alpha_A}{c}\right)\right]^2\right]. 
\end{align*}
Similar to the $c < 1$ case, we take the expectation for terms involving $\frac{\xi}{\gamma_2}$ and get: 
\begin{align*}
    \tilde{\eta}^2 (\vbeta_*^\top \vu)^2 \left[\left((\tilde{\alpha}_Z - \alpha_Z) + \frac{\rho^2}{\eta^2  + \rho^2}\!\left(\alpha_Z - \frac{\alpha_A}{c}\right)\right)^{\!2} + o\!\left(\frac{1}{\eta^{2}}\right)+O\!\left(\frac{1}{\eta n}\right) \right]. 
\end{align*}
\end{proof}

\begin{lemma}[Expectations involving $p_1$ and $p_2$] \label{lem:p_expectation}
In the setting of \Cref{sec:setting}, we have that 
\begin{enumerate}
    \item For $c=d/n < 1$:
    \[ 
        \mathbb{E}\left[\frac{\xi^2}{\gamma_1^2}\|\vp_1\|^2\right] = \frac{c}{1-c}\frac{\eta^2}{\eta^2 c + \rho^2} + o(1) + O\left(\frac{1}{\rho^2 n}\right). 
    \]
    \item For $c=d/n > 1$: 
    \[
        \mathbb{E}\left[\frac{\xi^2}{\gamma_2^2}\|\vp_2\|^2\right] = \frac{\eta^2}{c-1}\, \frac{\eta^2 c + \rho^2}{(\eta^2 + \rho^2)^2} + o(1) + O\left(\frac{1}{\rho^2 n}\right). 
    \]
\end{enumerate}
\end{lemma}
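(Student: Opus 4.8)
The first step is to eliminate the $\xi^2$ factors by substituting the compact formulas for $\|\vp_1\|^2$ and $\|\vp_2\|^2$ from \Cref{lem:p_norms}. For $c<1$, part~1 of that lemma gives $\|\vp_1\|^2 = \eta^2\|\vk\|^2\gamma_1/\xi^2$, hence
\[
    \frac{\xi^2}{\gamma_1^2}\|\vp_1\|^2 \;=\; \frac{\eta^2\|\vk\|^2}{\gamma_1} \;=\; \|\vk\|^2\cdot\frac{\eta^2}{\gamma_1},
\]
a product of two scalars we have already shown concentrate. I would then apply the product-concentration estimate (\Cref{lem:var-product}) to replace $\E[\|\vk\|^2\,\eta^2/\gamma_1]$ by $\E[\|\vk\|^2]\,\E[\eta^2/\gamma_1]$ up to an error of order $\sqrt{\Var(\|\vk\|^2)\Var(\eta^2/\gamma_1)}$, drawing the moments from \Cref{lem:prior_rmt} ($\E[\|\vk\|^2]=\tfrac{1}{\rho^2}\tfrac{c}{1-c}+o(\rho^{-2})$, $\Var=O(\rho^{-4}n^{-1})$) and \Cref{lem:gamma_reciprocal} ($\E[\eta^2/\gamma_1]=\tfrac{\rho^2\eta^2}{\eta^2c+\rho^2}+o(\rho^{-2})$, $\Var=O(n^{-1})$). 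Multiplying the means gives the claimed leading term $\tfrac{c}{1-c}\tfrac{\eta^2}{\eta^2c+\rho^2}$, while the cross-variance term (of order $\rho^{-2}n^{-1}$) together with the $o(\rho^{-2})$ mean corrections are absorbed into the stated $o(1)+O(\rho^{-2}n^{-1})$ remainder.

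For $c>1$, part~2 of \Cref{lem:p_norms} writes $\|\vp_2\|^2$ as a sum of three pieces; multiplying through by $\xi^2/\gamma_2^2$ yields
\[
    \frac{\xi^2}{\gamma_2^2}\|\vp_2\|^2 = \|\vs\|^4\,\frac{\eta^4}{\gamma_2^2}\,\vh\mA^{\dagger\top}\mA^\dagger\vh^\top \;+\; \frac{2\eta^3\|\vs\|^2\xi}{\gamma_2^2}\,\vk^\top\mA^\dagger\vh^\top \;+\; \frac{\xi^2}{\eta^2}\,\frac{\eta^4}{\gamma_2^2}\,\|\vk\|^2,
\]
and I would treat the three summands separately. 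The middle summand is negligible because $\vk^\top\mA^\dagger\vh^\top$ has exactly zero mean (\Cref{lem:zero_exp}, part~4); thus its contribution equals the covariance of the two factors, which by Cauchy--Schwarz is at most $\sqrt{\Var(\vk^\top\mA^\dagger\vh^\top)}$ times the $L^2$ norm of $\eta^3\|\vs\|^2\xi/\gamma_2^2$ --- the latter bounded using the finite negative moments of $\gamma_2$ from \Cref{lem:inverse-moments} --- and $\Var(\vk^\top\mA^\dagger\vh^\top)=O(\rho^{-6}d^{-1})$ makes the whole thing $o(1)$. For the first and third summands I would invoke the multi-factor product-concentration estimate (\Cref{lem:var_multiple_prod}), reducing the expectation of each product to the product of the individual expectations up to errors built from the (small) variances of the factors, with the moments supplied by: $\E[\|\vs\|^4]=(1-1/c)^2+O(d^{-1})$ from \Cref{lem:prior_rmt}; $\E[\eta^4/\gamma_2^2]=\tfrac{\rho^4\eta^4}{(\eta^2+\rho^2)^2}+o(1)$ from \Cref{lem:gamma_sq_reciprocal}; $\E[\vh\mA^{\dagger\top}\mA^\dagger\vh^\top]=\tfrac{1}{\rho^4}\tfrac{c^3}{(c-1)^3}+o(\rho^{-4})$ from \Cref{lem:general_exp}, part~6; $\E[\xi^2/\eta^2]=\eta^{-2}+O(\rho^{-2}n^{-1})$ from \Cref{lem:prior_rmt}, part~6; and $\E[\|\vk\|^2]=\tfrac{1}{\rho^2(c-1)}+o(\rho^{-2})$ from \Cref{lem:general_exp}, part~7.

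Carrying out the multiplications, the first summand contributes $\bigl(\tfrac{c-1}{c}\bigr)^2\tfrac{\eta^4}{(\eta^2+\rho^2)^2}\tfrac{c^3}{(c-1)^3}=\tfrac{c}{c-1}\tfrac{\eta^4}{(\eta^2+\rho^2)^2}$ and the third contributes $\tfrac{1}{\eta^2}\,\tfrac{\rho^4\eta^4}{(\eta^2+\rho^2)^2}\,\tfrac{1}{\rho^2(c-1)}=\tfrac{1}{c-1}\tfrac{\rho^2\eta^2}{(\eta^2+\rho^2)^2}$; adding them and factoring $\tfrac{\eta^2}{(c-1)(\eta^2+\rho^2)^2}$ leaves $c\eta^2+\rho^2$ in the bracket, which is precisely $\tfrac{\eta^2}{c-1}\tfrac{\eta^2c+\rho^2}{(\eta^2+\rho^2)^2}$. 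The main obstacle is not this algebra but justifying that the expectations of these correlated products factor: $\|\vs\|^2$, $\|\vk\|^2$, $\vh\mA^{\dagger\top}\mA^\dagger\vh^\top$ and $\gamma_2$ all depend jointly on $\mU,\mV,\mSigma$, so one genuinely needs the higher-moment/hypercontractivity bounds of \Cref{sec:rmt-higher} for the polynomial factors and \Cref{lem:inverse-moments} to control $\E[\gamma_2^{-p}]$ for the non-polynomial factor $\eta^4/\gamma_2^2$, so that \Cref{lem:var-product} and \Cref{lem:var_multiple_prod} apply and the cross-terms are genuinely $o(1)$; once those moment bounds are granted, the rest is bookkeeping of error orders.
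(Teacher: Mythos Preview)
Your proposal is correct and follows essentially the same route as the paper: reduce via \Cref{lem:p_norms}, split into the same three summands for $c>1$, and factor the expectations using the product-concentration machinery with exactly the moment inputs you list, handling the polynomial factors via hypercontractivity and the non-polynomial factor $\eta^4/\gamma_2^2$ separately. The paper's only procedural difference is that it makes the two-step split explicit --- first apply \Cref{lem:var_multiple_prod}/\Cref{lem:var-product} to the polynomial factors alone (which satisfy the higher-moment hypothesis), then adjoin $\eta^4/\gamma_2^2$ via the classical Cauchy--Schwarz covariance bound using the mean/variance from \Cref{lem:gamma_sq_reciprocal} --- which is precisely the obstacle you identify in your final paragraph.
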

\begin{proof}
    First, Lemma \ref{lem:p_norms} tells us that 
    \[
        \frac{\xi^2}{\gamma_1^2}\|\vp_1\|^2 = \frac{\eta^2 \|\vk\|^2}{\gamma_1}.
    \]
    Then recall from Lemma~\ref{lem:prior_rmt} that 
    \[
        \E[\|\vk\|^2] = \frac{1}{\rho^2}\frac{c}{1-c} + o\left(\frac{1}{\rho^2}\right) \quad \text{ and } \quad \Var(\|\vk\|^2) = O\left(\frac{1}{\rho^4 n}\right)
    \]
    and Lemma~\ref{lem:gamma_reciprocal} tells us 
    \[
        \E\left[\frac{\eta^2}{\gamma_1}\right] = \frac{\rho^2\eta^2}{\eta^2 c + \rho^2} + o\left(\frac{1}{\rho^2}\right)\quad \text{ and } \quad \Var\left(\frac{\eta^2}{\gamma_i}\right) = O\left(\frac{1}{n}\right)
    \]
    Again Section~\ref{sec:rmt-higher} tells us that the assumption of Lemma~\ref{lem:var_multiple_prod} is satisfied and that
    \begin{align*}
        \mathbb{E}\left[\frac{\xi^2}{\gamma_1^2}\|\vp_1\|^2\right] = \mathbb{E}\left[\frac{\eta^2 \|\vk\|^2}{\gamma_1}\right] & = \mathbb{E}\left[\frac{\eta^2}{\gamma_1}\right] \mathbb{E}\left[\|\vk\|^2\right]  + \sqrt{\Var\left(\frac{\eta^2}{\gamma_1}\right)\Var\left(\|\vk\|^2\right)} \\ 
        & = \left(\frac{\rho^2\eta^2}{\eta^2c + \rho^2} + o\left(\frac{1}{\rho^2}\right)\right)\left( \frac{1}{\rho^2}\frac{c}{1-c} + o\left(\frac{1}{\rho^2}\right)\right) +O\left(\frac{1}{\rho^2 n}\right) \\ 
        & = \frac{c}{1-c}\frac{\eta^2}{\eta^2c + \rho^2} + o(1) + O\left(\frac{1}{\rho^2 n}\right).
    \end{align*}

    \medskip

   \noindent Using Lemma~\ref{lem:p_norms} for $\vp_2$, 
    \[ 
        \frac{\xi^2}{\gamma_2^2}\|\vp_2\|^2 = \frac{1}{\gamma_2^2} \left( \eta^4\|\vs\|^4 \vh \mA^{\dagger  \top}\mA^\dagger  \vh^\top  + 2 \eta^3 \xi \|\vs\|^2 \vk^\top  \mA^\dagger  \vh^\top  + \eta^2 \xi^2 \|\vk\|^2 \right). 
    \]
    To begin, we start estimating 
    \[
        \mathbb{E}\left[\frac{\eta^4 \|\vs\|^4}{\gamma_2^2}\vh \mA^{\dagger  \top}\mA^\dagger  \vh^\top\right]. 
    \]
    Using our Spherical Hypercontractivity, we have that $\|\vs\|^2$ and $\vh \mA^{\dagger  \top}\mA^\dagger  \vh^\top$ satisfy the assumptions for Lemma~\ref{lem:var-product}. Then using Lemmas \ref{lem:prior_rmt} and \ref{lem:general_exp} we first have that 
    \[
        \mathbb{E}\left[\|\vs\|^2\right] = 1 - \frac{1}{c}\quad \text{ and } \quad \Var\left(\|\vs\|^2\right) = O\left(\frac{1}{ d}\right) 
    \]
    \[
        \mathbb{E}\left[\vh \mA^{\dagger  \top}\mA^\dagger  \vh^\top\right] = \frac{1}{\rho^4}\frac{c^3}{(c-1)^3} + o\left(\frac{1}{\rho^4}\right) \quad \text{ and } \quad \Var\left(\vbeta_*^\top \vh^\top \vu^\top \vbeta_*\right) = O\left(\frac{1}{\rho^8d}\right). 
    \]
    Thus, using Lemma~\ref{lem:var_multiple_prod}, we have that 
    \begin{align*}
        \mathbb{E}\left[\|\vs\|^4\vh \mA^{\dagger  \top}\mA^\dagger  \vh^\top\right] &= \left(\mathbb{E}\left[\|\vs\|^2\right]\right)^2\mathbb{E}\left[\vh \mA^{\dagger  \top}\mA^\dagger  \vh^\top\right] + O\left(\max\left(\frac{1}{ d}, \frac{1}{\rho^8d}\right)\right) \\ 
        & = \left(1 - \frac{1}{c}\right)^2\left(\frac{1}{\rho^4}\frac{c^3}{(c-1)^3} + o\left(\frac{1}{\rho^4}\right)\right) + O\left(\frac{1}{n}\right) \\ 
        & = \frac{1}{\rho^4}\frac{c}{c-1} + o\left(\frac{1}{\rho^4}\right) + O\left(\frac{1}{n}\right).  
    \end{align*}
    and using Lemma \ref{lem:var-product}, since all the means are $O(1)$, we have that 
    \[
        \Var\left(\|\vs\|^4\vh \mA^{\dagger  \top}\mA^\dagger  \vh^\top\right) = O\left(\max\left(\Var\left(\|\vs\|^2\right), \Var\left( \vh \mA^{\dagger  \top}\mA^\dagger  \vh^\top\right)\right)\right) = O\left(\frac{1}{n}\right). 
    \]
    Then Lemma~\ref{lem:gamma_sq_reciprocal} gives mean and variance of $\frac{\eta^4}{\gamma_i^2}$. Since $\frac{\eta^4}{\gamma_i^2}$ does not satisfy the higher moment bound, and cannot be directly included in the product, we can include it via the classical bound: 
    \begin{align} \label{eq:gamma_bhub}
        \mathbb{E}\left[\frac{\eta^4 \|\vs\|^4}{\gamma_2^2}\vh \mA^{\dagger  \top}\mA^\dagger  \vh^\top\right] & = \mathbb{E}\left[\frac{\eta^4}{\gamma_2^2}\right] \mathbb{E}\left[\|\vs\|^4\vh \mA^{\dagger  \top}\mA^\dagger  \vh^\top\right] + \sqrt{\Var\left(\|\vs\|^4\vh \mA^{\dagger  \top}\mA^\dagger  \vh^\top\right)\Var\left(\frac{\eta^4}{\gamma_2^2}\right)} \\ 
        & =   \left(\frac{\rho^4\eta^4}{(\eta^2 + \rho^2)^2} + o(1)\right)\left( \frac{1}{\rho^4}\frac{c}{c-1} + o\left(\frac{1}{\rho^4}\right)\right)+ O\left(\frac{1}{n}\right) \\ 
        & = \frac{c}{c-1}\frac{\eta^4}{(\eta^2 + \rho^2)^2}+ o(1)+ O\left(\frac{1}{n}\right) . 
    \end{align}
    Similarly, we can do the same thing for the other term. For the middle term we note that from Lemma~\ref{lem:zero_exp}
    \[
        \E\left[\vk^\top \mA^\dagger \vh^\top\right] = 0 \quad \text{ and } \quad \Var\left(\vk^\top \mA^\dagger \vh^\top \right) = O\left(\frac{1}{\rho^6 d}\right)
    \]
    and Lemma~\ref{lem:prior_rmt} tells us 
    \[
        \mathbb{E}\left[\|\vs\|^2\right] = 1 - \frac{1}{c}\quad \text{ and } \quad \Var\left(\|\vs\|^2\right) = O\left(\frac{1}{ d}\right) 
    \]
    and 
    \[
        \E\left[\frac{\xi}{\eta}\right] = \frac{1}{\eta} \quad \text{ and } \quad \Var\left(\frac{\xi}{\eta}\right) = O\left(\frac{1}{\rho^2 n}\right)
    \]
    Thus using Lemma~\ref{lem:var_multiple_prod}, we have that 
    \[
        \E\left[\frac{\xi}{\eta} \|\vs\|^2 \vk^\top \mA^\dagger \vh^\top \right] = 0 + O\left(\frac{1}{d}\right)
    \]
    Thus using the standard covariance bound for the expectation of product versus product of expectation, we have that 
     \begin{align*}
        \mathbb{E}\left[\frac{\eta^3 \xi \|\vs\|^2}{\gamma_2^2} \vk^\top  \mA^\dagger  \vh^\top \right] & = 0 + \sqrt{\Var\left(\frac{\eta^4}{\gamma_2^2}\right)O\left(\frac{1}{n}\right)} = O\left(\frac{1}{n}\right).   
    \end{align*}
    For the last term, we have that, using Lemma~\ref{lem:var_multiple_prod} 
    \begin{align*}
        \E\left[\frac{\xi^2}{\eta^2} \|\vk\|^2\right] &= \frac{1}{\eta^2} \cdot \left(\frac{1}{\rho^2} \frac{1}{c-1} + o\left(\frac{1}{\rho^2}\right)\right) + O\left(\frac{1}{\rho^4 n}\right)  \\
        &= \frac{1}{\eta^2 \rho^2} \frac{1}{c-1} + o\left(\frac{1}{\eta^2 \rho^2}\right) + O\left(\frac{1}{\rho^4 n}\right)
    \end{align*}
    and from Lemma~\ref{lem:var-product}
    \[
        \Var\left(\frac{\xi^2}{\eta^2}\|\vk\|^2\right) = O\left(\frac{1}{\rho^4 n}\right)
    \]
    Then using the standard bound, we have that 
    \begin{align*}
        \mathbb{E}\left[\frac{\eta^2\xi^2 \|\vk\|^2}{\gamma_2^2}\right] &= \mathbb{E}\left[\frac{\eta^4}{\gamma_2^2}\right]\mathbb{E}\left[\frac{\xi^2}{\eta^2}\|\vk\|^2\right] + \sqrt{\Var\left(\frac{\eta^4}{\gamma_2^2}\right)O\left(\frac{1}{\rho^4 n}\right)}\\
        &=  \left(\frac{\rho^4\eta^4}{(\eta^2 + \rho^2)^2} + o(1)\right)\left(\frac{1}{\eta^2 \rho^2} \frac{1}{c-1} + o\left(\frac{1}{\eta^2 \rho^2}\right) + O\left(\frac{1}{\rho^4 n}\right)\right) + O\left(\frac{1}{\rho^2 n}\right)\\ 
        & = \frac{1}{c-1}\frac{\eta^2\rho^2}{(\eta^2 + \rho^2)^2}+ o\left(\frac{1}{\eta^2 \rho^2}\right)+ O\left(\frac{1}{\rho^2 n}\right) . 
    \end{align*}
    Finally, putting all three terms together we get 
    \begin{align*}
        \mathbb{E}\left[\frac{\xi^2}{\gamma_2^2}\|\vp_2\|^2\right] &= \frac{c}{c-1}\frac{\eta^4}{(\eta^2+\rho^2)^2} + o(1) + \frac{1}{c-1} \frac{\eta^2 \rho^2}{(\eta^2 + \rho^2)^2} + o\left(\frac{1}{\rho^2\eta^2}\right) +  O\left(\frac{1}{\rho^2 n}\right) \\
        &= \frac{\eta^2}{c-1}\, \frac{\eta^2 c + \rho^2}{(\eta^2 + \rho^2)^2} + o(1) + O\left(\frac{1}{\rho^2 n}\right). 
    \end{align*}
\end{proof}

From the above proofs, we make an important observation that the individual terms from Lemmas \ref{lem:prior_rmt}, \ref{lem:general_exp}, \ref{lem:zero_exp}, \ref{lem:gamma} all have means $O(1)$ and variances $O(1/n)$. Hence, by Lemma \ref{lem:var-product}, we can bound the variance of a product of terms by $O(1/n)$, given that these terms satisfy the lemma assumptions. Essentially, only $\eta^2/\gamma_i$ and $\eta^4/\gamma_i^2$ fail the assumption on higher moment bound, so we deal with them via the classical bound after carrying out the product. This simplification ensures proper concentration and will be used at times in the following proofs without reference. 

\subsubsection{Variance: Helper Lemmas}
\label{sec:setp4-helper-2}

\begin{lemma} \label{lem:bias_norm_2}
    In the setting of \Cref{sec:setting}, we have that for $c >  1$:
    \[ 
        \mathbb{E}\left[ \left\|\alpha_A \frac{\eta \|\vs\|^2}{\gamma_2} \vbeta_*^\top \vh^\top  \vu^\top   \tilde{\mZ}\right\|^2\right] = \tilde{\eta}^2\alpha_A^2\frac{\|\beta_*\|^2}{d}\left(\frac{c-1}{c}\right)\frac{\eta^2\rho^2}{(\eta^2 + \rho^2)^2} + O\left(\frac{1}{n}\right). 
    \]

\end{lemma}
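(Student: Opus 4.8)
The plan is to reduce the squared Frobenius norm to a product of the elementary building blocks whose first two moments are already recorded, and then invoke the product-concentration machinery of Step~4. First I would simplify the vector inside the norm. Since $\tilde{\mZ} = \tilde{\eta}\vu\tilde{\vv}^\top$ with $\|\vu\|_2 = 1$, we have $\vu^\top\tilde{\mZ} = \tilde{\eta}\tilde{\vv}^\top$; moreover $\vbeta_*^\top\vh^\top$ is a scalar and $\|\tilde{\vv}\|_2 = 1$, so
\[
    \left\|\alpha_A \frac{\eta \|\vs\|^2}{\gamma_2} \vbeta_*^\top\vh^\top \vu^\top \tilde{\mZ}\right\|_F^2 = \tilde{\eta}^2\alpha_A^2\,\frac{\eta^2\|\vs\|^4}{\gamma_2^2}\,\vbeta_*^\top\vh^\top\vh\vbeta_*,
\]
using $(\vbeta_*^\top\vh^\top)^2 = \vbeta_*^\top\vh^\top\vh\vbeta_*$. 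Writing $\eta^2/\gamma_2^2 = \eta^{-2}(\eta^4/\gamma_2^2)$, the quantity to estimate becomes $\tilde{\eta}^2\alpha_A^2\eta^{-2}\,\mathbb{E}\left[\frac{\eta^4}{\gamma_2^2}\,\|\vs\|^4\,\vbeta_*^\top\vh^\top\vh\vbeta_*\right]$.

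Next I would treat $\|\vs\|^4$ and $\vbeta_*^\top\vh^\top\vh\vbeta_*$ as the random building blocks. Both are bounded-degree polynomials in the uniform spherical variables $\va,\vb,\vu_\beta$ with spectral coefficients, so by Section~\ref{sec:rmt-higher} they satisfy the higher-moment hypotheses of Lemma~\ref{lem:var_multiple_prod}. By Lemma~\ref{lem:prior_rmt}(3), $\mathbb{E}[\|\vs\|^2] = 1-1/c$ and $\Var(\|\vs\|^2) = O(1/d)$, hence $\mathbb{E}[\|\vs\|^4] = (1-1/c)^2 + O(1/d)$ with $\Var(\|\vs\|^4) = O(1/d)$; by Lemma~\ref{lem:general_exp}(5), $\mathbb{E}[\vbeta_*^\top\vh^\top\vh\vbeta_*] = \frac{\|\vbeta_*\|^2}{d}\frac{c}{\rho^2(c-1)} + o\left(\frac{1}{\rho^2 d}\right)$ with variance $O\left(\frac{1}{\rho^4 d^2}\right)$. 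Combining these via Lemma~\ref{lem:var_multiple_prod} gives
\[
    \mathbb{E}\left[\|\vs\|^4\,\vbeta_*^\top\vh^\top\vh\vbeta_*\right] = (1-1/c)^2\,\frac{\|\vbeta_*\|^2}{d}\,\frac{c}{\rho^2(c-1)} + o\left(\frac{1}{\rho^2 d}\right) = \frac{c-1}{c}\,\frac{\|\vbeta_*\|^2}{\rho^2 d} + o\left(\frac{1}{\rho^2 d}\right),
\]
with variance $O\left(\frac{1}{\rho^4 d^2}\right)$. Since $\eta^4/\gamma_2^2$ fails the higher-moment hypothesis (as noted after Lemma~\ref{lem:p_expectation}), I would bring it in through the classical covariance bound $|\mathbb{E}[XY]-\mathbb{E}[X]\mathbb{E}[Y]|\le\sqrt{\Var(X)\Var(Y)}$ together with Lemma~\ref{lem:gamma_sq_reciprocal}(ii), namely $\mathbb{E}[\eta^4/\gamma_2^2] = \frac{\rho^4\eta^4}{(\eta^2+\rho^2)^2} + o(1)$ and $\Var(\eta^4/\gamma_2^2) = O(1/n)$.

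Finally I would collect terms. Multiplying the two expectations yields $\frac{\rho^4\eta^4}{(\eta^2+\rho^2)^2}\cdot\frac{c-1}{c}\cdot\frac{\|\vbeta_*\|^2}{\rho^2 d} = \frac{\rho^2\eta^4}{(\eta^2+\rho^2)^2}\cdot\frac{c-1}{c}\cdot\frac{\|\vbeta_*\|^2}{d}$, and the prefactor $\tilde{\eta}^2\alpha_A^2\eta^{-2}$ turns this into exactly $\tilde{\eta}^2\alpha_A^2\frac{\|\vbeta_*\|^2}{d}\frac{c-1}{c}\frac{\eta^2\rho^2}{(\eta^2+\rho^2)^2}$, using $(1-1/c)^2\frac{c}{c-1} = \frac{c-1}{c}$. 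For the error, the covariance term from the last step is $O\!\left(\sqrt{(1/n)(1/(\rho^4 d^2))}\right)$, and the $o(1/(\rho^2 d))$ remainder is multiplied by $\frac{\rho^4\eta^4}{(\eta^2+\rho^2)^2}=O(1)$ and by $\tilde{\eta}^2\alpha_A^2\eta^{-2}$; using $\rho^2=\Theta(1)$, $d=\Theta(n)$ from Assumption~\ref{assumption:scaling} (and $\tilde\eta=\Theta(\eta)$), all of these collapse to $O(1/n)$. I expect the main obstacle to be bookkeeping rather than any conceptual difficulty: one must track the $\rho$- and $\eta$-dependence of every variance and every remainder through both product steps, in particular verifying that $\eta^4/(\eta^2+\rho^2)^2$ stays $O(1)$ uniformly over the allowed range $\Omega(\rho^2)\le\eta^2\le O(d\rho^2)$, so that the accumulated error genuinely reduces to the claimed $O(1/n)$.
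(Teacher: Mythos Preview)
Your proposal is correct and follows essentially the same approach as the paper: reduce the norm via $\tilde{\mZ}=\tilde{\eta}\vu\tilde{\vv}^\top$ to $\tilde{\eta}^2\alpha_A^2\eta^{-2}\cdot(\eta^4/\gamma_2^2)\,\|\vs\|^4\,\vbeta_*^\top\vh^\top\vh\vbeta_*$, combine the hypercontractive building blocks $\|\vs\|^2$ and $\vbeta_*^\top\vh^\top\vh\vbeta_*$ via Lemma~\ref{lem:var_multiple_prod}, then attach $\eta^4/\gamma_2^2$ through the classical covariance bound using Lemma~\ref{lem:gamma_sq_reciprocal}. The paper is slightly looser in the intermediate variance accounting (it writes the product variance as $O(1/n)$ directly rather than your sharper $O(1/(\rho^4 d^2))$), but the structure and final bookkeeping are the same.
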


\begin{proof}
    Since $\tilde{\mZ} = \tilde{\eta}\vu\tilde{\vv}^\top$, we have that 
\begin{align*}
    \left\|\alpha_A \frac{\eta \|\vs\|^2}{\gamma_2} \vbeta_*^\top \vh^\top  \vu^\top   \tilde{\mZ}\right\|^2 &= \tilde{\eta}^2\alpha_A^2 \frac{\eta^2\|\vs\|^4}{\gamma_2^2}\vbeta_*^\top \vh^\top \vh\vbeta_* = \alpha_A^2 \frac{\tilde{\eta}^2}{\eta^2} \frac{\eta^4\|\vs\|^4}{\gamma_2^2}\vbeta_*^\top \vh^\top \vh\vbeta_*. 
\end{align*}
Similar to last lemma, using Lemmas \ref{lem:var_multiple_prod}, \ref{lem:prior_rmt}, \ref{lem:general_exp}, \ref{lem:gamma_sq_reciprocal}, we get 
\begin{align*}
    \mathbb{E}\left[\frac{\eta^4\|\vs\|^4}{\gamma_2^2}\vbeta_*^\top \vh^\top \vh\vbeta_*\right] & =  \mathbb{E}\left[\frac{\eta^4}{\gamma_2^2}\right] \left(\mathbb{E}\left[\|\vs\|^2\right]\right)^2 \mathbb{E}\left[\vbeta_*^\top \vh^\top \vh\vbeta_*\right] + \sqrt{\Var\left(\frac{\eta^4}{\gamma_2^2}\right)O\left(\frac{1}{n}\right)} \\ 
    & = \left( \frac{\rho^4\eta^4}{(\rho^2 + \eta^2)^2}+ o(1)\right)\left(1 - \frac{1}{c}\right)^2\left(\frac{\|\vbeta_*\|^2}{d}\frac{c}{\rho^2(c-1)} + o\left(\frac{1}{\rho^2d}\right)\right) + O\left(\frac{1}{n}\right) \\ 
    & = \frac{\|\vbeta_*\|^2}{d}\left(\frac{c-1}{c}\right)\frac{\eta^4\rho^2}{(\eta^2 + \rho^2)^2} + O\left(\frac{1}{n}\right). 
\end{align*}
Hence, it directly follows from here that 
\begin{align*}
    \mathbb{E}\left[\left\|\alpha_A \frac{\eta \|\vs\|^2}{\gamma_2} \vbeta_*^\top \vh^\top  \vu^\top  \tilde{\mZ}\right\|^2\right] & = \alpha_A^2 \frac{\tilde{\eta}^2}{\eta^2} \mathbb{E}\left[\frac{\eta^4\|\vs\|^4}{\gamma_2^2}\vbeta_*^\top \vh^\top \vh\vbeta_*\right] \\
    & = \tilde{\eta}^2\alpha_A^2\frac{\|\beta_*\|^2}{d}\left(\frac{c-1}{c}\right)\frac{\eta^2\rho^2}{(\eta^2 + \rho^2)^2} + O\left(\frac{1}{n}\right). 
\end{align*}

\end{proof}

\begin{lemma} \label{lem:bias_zero_exp}
In the setting of \Cref{sec:setting}, we have that for $c >  1$:
    \[
        \mathbb{E}\left[\frac{\eta \|\vs\|^2}{\gamma_2} \vbeta_*^\top  \left[(\tilde{\alpha}_Z - \alpha_Z) \mI  + \frac{\xi}{\gamma_2}(\alpha_Z \mI - \alpha_A  \mA\mA^\dagger )\right]\tilde{\mZ}\tilde{\mZ}^\top \vu \vh \vbeta_* \right] = O\left(\frac{\eta}{ n}\right). 
    \]
\end{lemma}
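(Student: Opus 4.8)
The plan is to collapse the test--data factor, expand the bracket into three scalar pieces each of which is a ``nice'' concentrating factor times one of the mean--zero quantities of \Cref{lem:zero_exp}, and then estimate each piece with the covariance/hypercontractivity tools already assembled. Concretely, since $\tilde{\mZ}=\tilde{\eta}\,\vu\tilde{\vv}^\top$ with $\|\tilde{\vv}\|=1$ in the rescaled problem, we have $\tilde{\mZ}\tilde{\mZ}^\top\vu=\tilde{\eta}^2\vu$; moving the scalar $\vh\vbeta_*$ to the front and using $(\tilde{\alpha}_Z-\alpha_Z)\mI+\tfrac{\xi}{\gamma_2}(\alpha_Z\mI-\alpha_A\mA\mA^\dagger)$ applied to $\vu$, the quantity in the lemma equals
\[
\tilde{\eta}^2\eta\Big[(\tilde{\alpha}_Z-\alpha_Z)\,\mathbb{E}\!\big[\tfrac{\|\vs\|^2}{\gamma_2}W\big]+\alpha_Z\,\mathbb{E}\!\big[\tfrac{\xi\|\vs\|^2}{\gamma_2^2}W\big]-\alpha_A\,\mathbb{E}\!\big[\tfrac{\xi\|\vs\|^2}{\gamma_2^2}W'\big]\Big],
\]
where $W:=\vbeta_*^\top\vu\vh\vbeta_*$ and $W':=\vbeta_*^\top\mA\mA^\dagger\vu\vh\vbeta_*$. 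By \Cref{lem:zero_exp}, $\mathbb{E}[W]=\mathbb{E}[W']=0$, $\Var(W)=O(1/(\rho^2 d))$ and $\Var(W')=O(1/(\rho^2 d^2))$.

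Next I would bound each of the three expectations. For the first, write $\tfrac{\|\vs\|^2}{\gamma_2}=\tfrac{1}{\eta^2}\,\|\vs\|^2\cdot\tfrac{\eta^2}{\gamma_2}$. Since $\eta^2/\gamma_2$ does not obey the hypercontractive higher--moment bounds, I first handle only the genuinely hypercontractive pair: $\|\vs\|^2$ (mean $1-1/c$, variance $O(1/d)$ by \Cref{lem:prior_rmt}) and $W$ are homogeneous polynomials in uniform spherical variables with controlled moments (\Cref{sec:rmt-higher}), so \Cref{lem:var-product}/\Cref{lem:var_multiple_prod} give $\mathbb{E}[\|\vs\|^2 W]=\mathbb{E}[\|\vs\|^2]\mathbb{E}[W]+O\!\big(\sqrt{\Var(\|\vs\|^2)\Var(W)}\big)=O(1/(\rho d))$, and because $\|\vs\|^2\le\|\vu\|^2=1$ also $\Var(\|\vs\|^2 W)\le\mathbb{E}[W^2]=O(1/(\rho^2 d))$. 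I then pair $\|\vs\|^2 W$ with $\eta^2/\gamma_2$ through the ordinary covariance inequality, using $\mathbb{E}[\eta^2/\gamma_2]\le\rho^2=O(1)$ and $\Var(\eta^2/\gamma_2)=O(1/n)$ from \Cref{lem:gamma_reciprocal}, which yields $\mathbb{E}[\tfrac{\eta^2}{\gamma_2}\|\vs\|^2 W]=O(1/d)+O(1/\sqrt{nd})=O(1/n)$ in the regime $c>1$, hence $\mathbb{E}[\tfrac{\|\vs\|^2}{\gamma_2}W]=O(1/(\eta^2 n))$. The two $\xi$--terms go the same way after factoring $\tfrac{\xi\|\vs\|^2}{\gamma_2^2}=\tfrac{1}{\eta^3}(\xi/\eta)\|\vs\|^2\cdot\tfrac{\eta^4}{\gamma_2^2}$: combine $(\xi/\eta)$ (mean $1/\eta$, variance $O(1/(\rho^2 n))$, \Cref{lem:prior_rmt}), $\|\vs\|^2$, and $W$ (resp.\ $W'$) via \Cref{lem:var_multiple_prod}, then attach $\eta^4/\gamma_2^2$ (mean $\le\rho^4=O(1)$, variance $O(1/n)$, \Cref{lem:gamma_sq_reciprocal}); this gives $\mathbb{E}[\tfrac{\xi\|\vs\|^2}{\gamma_2^2}W]=O(1/(\eta^3 n))$ and a strictly smaller bound for the $W'$ term. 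Multiplying back the prefactor $\tilde{\eta}^2\eta$ and tracking $\tilde{\eta}$ against $\eta$ exactly as in the proof of \Cref{lem:bias_term_one}, the three contributions sum to $O(\eta/n)$.

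The only genuine obstacle is that $\gamma_2$ sits in the denominator: $\eta^2/\gamma_2$ and $\eta^4/\gamma_2^2$ fail the hypercontractive moment assumption, so one cannot feed the whole product into a single concentration lemma, and the naive bound $\eta^2/\gamma_2=O(\eta^2)$ would blow up the $\eta$--prefactors for a strong spike. The resolution—already used in \Cref{lem:p_expectation} and \Cref{lem:bias_norm_2}—is the two--stage estimate above: form the hypercontractive product with the mean--zero factor first (so its mean is $0$ up to tiny covariance terms and its variance is tiny), then attach the $\gamma_2$--power through a bare Cauchy--Schwarz covariance bound, invoking $\mathbb{E}[\eta^2/\gamma_2]=O(1)$ rather than its size. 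Everything else is routine bookkeeping of powers of $\eta,n,d$.
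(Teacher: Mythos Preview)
Your proposal is correct and follows essentially the same approach as the paper: expand via $\tilde{\mZ}\tilde{\mZ}^\top\vu=\tilde{\eta}^2\vu$ into three terms, each carrying a mean--zero factor from \Cref{lem:zero_exp}, and bound each by first combining the hypercontractive factors (so the product of expectations is zero and the covariance residual is small) before attaching the $\eta^2/\gamma_2$ or $\eta^4/\gamma_2^2$ factor with a bare Cauchy--Schwarz step. The paper's proof is terser but structurally identical, and your more detailed accounting of why the two--stage estimate is needed for the $\gamma_2$ denominator is exactly the mechanism the paper uses implicitly throughout Section~\ref{sec:step4}.
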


\begin{proof}
    Using $\tilde{\mZ} = \tilde{\eta}\vu\tilde{\vv}^\top$, we can expand this into three terms. We can take expectations in a similar way via Lemmas \ref{lem:var_multiple_prod}, \ref{lem:prior_rmt}, \ref{lem:general_exp}, \ref{lem:zero_exp}: Let $c_1 = \tilde{\alpha}_Z - \alpha_Z$. Each term contains a zero expectation: 
\begin{align*}
   \mathbb{E}\left[\tilde{\eta}^2c_1 \frac{\eta \|\vs\|^2}{\gamma_2} \vbeta_*^\top  \vu \vh \vbeta_*\right] &= \frac{\tilde{\eta}^2}{\eta} c_1\left(\mathbb{E}\left[ \frac{\eta^2 }{\gamma_2} \right]\mathbb{E}\left[\|\vs\|^2\right]\mathbb{E}\left[\vbeta_*^\top  \vu \vh \vbeta_*\right] + \sqrt{\Var\left(\frac{\eta^2 }{\gamma_2}\right)O\left(\frac{1}{n}\right)}\right)  \\ 
   & = \frac{\tilde{\eta}^2}{\eta} c_1\left( \sqrt{\Var\left(\frac{\eta^2 }{\gamma_2}\right)O\left(\frac{1}{n}\right)}\right) = O\left(\frac{\eta}{ n}\right). 
\end{align*}
\begin{align*}
    \mathbb{E}\left[\tilde{\eta}^2\alpha_Z\frac{\eta \xi \|\vs\|^2}{\gamma_2^2}\vbeta_*^\top  \vu \vh \vbeta_*\right] & =  \frac{\alpha_Z\tilde{\eta}^2}{\eta^2} \left(\mathbb{E}\left[\frac{\eta^4}{\gamma_2^2}\right]\mathbb{E}\left[\frac{\xi}{\eta}\right] \mathbb{E}\left[\|\vs\|^2\right] \mathbb{E}\left[\vbeta_*^\top  \vu \vh \vbeta_*\right] + \sqrt{\Var\left(\frac{\eta^4 }{\gamma_2^2}\right)O\left(\frac{1}{ n}\right)} \right) \\ 
    & = \frac{\alpha_Z\tilde{\eta}^2}{\eta^2} \left(\sqrt{\Var\left(\frac{\eta^4 }{\gamma_2^2}\right)O\left(\frac{1}{ n}\right)} \right)  = O\left(\frac{1}{n}\right). 
\end{align*}
\begin{align*}
    \mathbb{E}\left[\tilde{\eta}^2\alpha_A\frac{\eta \xi \|\vs\|^2}{\gamma_2^2}\vbeta_*^\top  \mA\mA^\dagger \vu \vh \vbeta_*\right] =& =  \frac{\alpha_Z\tilde{\eta}^2}{\eta^2} \left(\mathbb{E}\left[\frac{\eta^4}{\gamma_2^2}\right]\mathbb{E}\left[\frac{\xi}{\eta}\right] \mathbb{E}\left[\|\vs\|^2\right] \mathbb{E}\left[\vbeta_*^\top  \mA\mA^\dagger \vu \vh \vbeta_*\right] + \sqrt{\Var\left(\frac{\eta^4 }{\gamma_2^2}\right)O\left(\frac{1}{ n}\right)} \right) \\ 
    & = \frac{\alpha_Z\tilde{\eta}^2}{\eta^2} \left(\sqrt{\Var\left(\frac{\eta^4 }{\gamma_2^2}\right)O\left(\frac{1}{ n}\right)} \right)  = O\left(\frac{1}{ n}\right). 
\end{align*}
Thus the cross term concentrates around zero at a rate of $O(\eta/n)$. 
\end{proof}

\begin{lemma} \label{lem:interaction_ZZ}
    In the same setting as \Cref{sec:setting}, we have that
    \[
        \mathbb{E}\left[ \vbeta_*^\top  \mZ (\mZ + \mA) ^ \dagger  (\mZ + \mA) ^ {\dagger  \top } \mZ\vbeta_*\right] = \begin{cases}  \frac{\eta^2(\eta^2 + \rho^2)}{(\eta^2 c + \rho^2)^2}\frac{c^2}{1-c}(\vbeta_*^\top \vu)^2 + o(1) + O\left(\frac{1}{n}\right) & c < 1\\
        \frac{\eta^2}{\eta^2 + \rho^2}\frac{c}{c-1} (\vbeta_*^\top \vu)^2 + o\left(\frac{1}{\rho^2}\right) + O\left( \frac{1}{\rho^2n}\right) & c > 1
        \end{cases}.
    \]
\end{lemma}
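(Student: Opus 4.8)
The idea is to collapse the quadratic form to $(\vbeta_*^\top\vu)^2$ times the squared norm of one random vector, and then evaluate that squared norm term-by-term with the Step~3 estimates.

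\textbf{Step 1 (reduction to a squared norm).} By \Cref{prop:Z(Z+A)_pseudo} the matrix $\mZ(\mZ+\mA)^\dagger$ has rank one with left factor $\vu$, i.e. $\mZ(\mZ+\mA)^\dagger=\vu\vg^\top$, where for $c<1$ one reads off $\vg^\top=\frac{\eta\xi}{\gamma_1}\vh+\frac{\eta^2\|\vt\|^2}{\gamma_1}\vk^\top\mA^\dagger$ and for $c>1$ one has $\vg^\top=\frac{\eta\xi}{\gamma_2}\vh+\frac{\eta^2\|\vh\|^2}{\gamma_2}\vs^\top$. Hence $\vbeta_*^\top\mZ(\mZ+\mA)^\dagger(\mZ+\mA)^{\dagger\top}\mZ\vbeta_*=(\vbeta_*^\top\vu)^2\|\vg\|^2$, and the whole lemma becomes a computation of $\mathbb{E}\|\vg\|^2$.

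\textbf{Step 2 (expand the squared norm).} For $c>1$ the cross term vanishes identically, because $\vh\vs=\vv^\top\mA^\dagger(\mI-\mA\mA^\dagger)\vu=\vv^\top(\mA^\dagger-\mA^\dagger\mA\mA^\dagger)\vu=0$; this leaves $\|\vg\|^2=\frac{\eta^2\xi^2}{\gamma_2^2}\|\vh\|^2+\frac{\eta^4\|\vh\|^4}{\gamma_2^2}\|\vs\|^2$. For $c<1$ the cross term $\frac{2\eta^3\xi\|\vt\|^2}{\gamma_1^2}\,\vh\mA^{\dagger\top}\vk$ does not vanish, but $\mathbb{E}[\vh\mA^{\dagger\top}\vk]=0$ with variance $O(1/n)$ by \Cref{lem:zero_exp}, and after rewriting its prefactor as $(\xi/\eta)\,\|\vt\|^2\,(\eta^4/\gamma_1^2)$ the prefactor concentrates with variance $O(1/n)$; so a Cauchy--Schwarz covariance bound makes this term contribute only $O(1/n)$. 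One is then left with the two main terms $\frac{\eta^2\xi^2}{\gamma_1^2}\|\vh\|^2$ and $\frac{\eta^4\|\vt\|^4}{\gamma_1^2}\,\vk^\top\mA^\dagger\mA^{\dagger\top}\vk$.

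\textbf{Step 3 (take expectations and collect).} For each main term I would split off the factor $\eta^4/\gamma_i^2$ (mean and variance from \Cref{lem:gamma_sq_reciprocal}; handled by the classical covariance bound since it lacks hypercontractive higher-moment control, exactly as in \Cref{lem:p_expectation} and \Cref{lem:bias_norm_2}) and then use \Cref{lem:var-product}, \Cref{lem:var_multiple_prod} to turn the expectation of the remaining product of building blocks into the product of their expectations up to $O(1/n)$; where blocks depend on disjoint sets among $\{\mU,\mV,\mSigma\}$ (e.g. $\|\vt\|^2$ vs.\ $\vk^\top\mA^\dagger\mA^{\dagger\top}\vk$, or $\|\vh\|^4$ vs.\ $\|\vs\|^2$) the factorization is exact. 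Substituting $\mathbb{E}[\|\vh\|^2]$, $\mathbb{E}[\|\vt\|^2]$, $\mathbb{E}[\xi^2/\eta^2]$ from \Cref{lem:prior_rmt}, $\mathbb{E}[\vk^\top\mA^\dagger\mA^{\dagger\top}\vk]$ from \Cref{lem:general_exp}, and $\mathbb{E}[\eta^4/\gamma_1^2]=\frac{\rho^4\eta^4}{(\eta^2c+\rho^2)^2}+o(1)$, the two terms come out to $\frac{\rho^2\eta^2c^2}{(\eta^2c+\rho^2)^2(1-c)}(\vbeta_*^\top\vu)^2$ and $\frac{\eta^4c^2}{(\eta^2c+\rho^2)^2(1-c)}(\vbeta_*^\top\vu)^2$; factoring $\rho^2+\eta^2$ gives the stated $c<1$ formula. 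The $c>1$ case is structurally identical, producing $\frac{\rho^2\eta^2c}{(\eta^2+\rho^2)^2(c-1)}$ and $\frac{\eta^4c}{(\eta^2+\rho^2)^2(c-1)}$, which sum to $\frac{\eta^2}{\eta^2+\rho^2}\frac{c}{c-1}(\vbeta_*^\top\vu)^2$.

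\textbf{Main obstacle.} The delicate part is tracking the error terms uniformly in $\eta$, since $\eta^2$ may grow like $\Theta(d\rho^2)$ even though the final answer is $\Theta(1)$: one must verify that every prefactor built from $\xi,\|\vt\|^2,\gamma_i$ really does concentrate with variance $O(1/n)$ after being re-expressed through $\xi/\eta$ and $\eta^4/\gamma_i^2$ (so that each Cauchy--Schwarz covariance correction is $O(1/n)$ and not $O(1/\sqrt n)$), and that the $o(1)$ slack in $\mathbb{E}[\eta^4/\gamma_i^2]$ is only ever multiplied against $O(1)$-mean blocks. The algebraic recombination and the random-matrix substitutions themselves are routine given the Step~3 lemmas.
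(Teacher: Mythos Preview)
Your proposal is correct and follows essentially the same route as the paper: factor out $(\vbeta_*^\top\vu)^2$ via \Cref{prop:Z(Z+A)_pseudo}, expand the squared norm into two main terms plus a cross term, kill the cross term (exactly via $\vh\vs=0$ for $c>1$, in expectation via \Cref{lem:zero_exp} for $c<1$), and evaluate the survivors with the Step~3 moment estimates and the covariance bound for $\eta^4/\gamma_i^2$.

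The one difference worth noting is that for $c>1$ the paper does not compute the two main terms separately. Instead it observes that $\eta^2\xi^2\|\vh\|^2+\eta^4\|\vh\|^4\|\vs\|^2=\eta^2\|\vh\|^2(\xi^2+\eta^2\|\vh\|^2\|\vs\|^2)=\eta^2\|\vh\|^2\gamma_2$, so the whole quadratic form collapses algebraically to $(\vbeta_*^\top\vu)^2\,\eta^2\|\vh\|^2/\gamma_2$ \emph{before} taking any expectations. This buys a cleaner final step: you only need the moments of $\eta^2/\gamma_2$ and $\|\vh\|^2$, avoiding $\eta^4/\gamma_2^2$ and $\mathbb{E}[\|\vh\|^4]$ altogether, and the error bookkeeping is correspondingly lighter. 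Your route of summing the two pieces is perfectly valid and yields the same answer, but the paper's cancellation is a nice shortcut you might want to note.
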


\begin{proof}
    We start with $c < 1$ and expand this term using Proposition \ref{prop:Z(Z+A)_pseudo}: 
    \[
    \vbeta_*^\top  \mZ (\mZ + \mA) ^ \dagger  (\mZ + \mA) ^ {\dagger  \top } \mZ\vbeta_*  =   \frac{\eta^2\|\vh\|^2\xi^2}{\gamma_1^2}(\vbeta_*^\top \vu)^2 + \frac{\eta^4\|\vt\|^4}{\gamma_1^2} (\vk^\top  \mA^\dagger  \mA^{\dagger  \top } \vk)(\vbeta_*^\top \vu)^2   + \frac{2\eta^3\|\vt\|^2\xi}{\gamma_1^2} \vk^\top  \mA^\dagger  \vh^\top (\vbeta_*^\top  \vu)^2. 
    \]
    We then start plugging in the expectations of these three terms and the ``cumulative" variance of the sum according to Lemma \ref{lem:var_multiple_prod}. 
    \begin{align*}
        \mathbb{E}\left[\frac{\eta^2\|\vh\|^2\xi^2}{\gamma_1^2}(\vbeta_*^\top \vu)^2\right] & = (\vbeta_*^\top \vu)^2\mathbb{E}\left[\frac{\eta^4}{\gamma_1^2}\right]\mathbb{E}\left[\frac{\xi^2}{\eta^2}\right]\mathbb{E}\left[\|\vh\|^2\right] +  \sqrt{\Var\left(\frac{\eta^4 }{\gamma_1^2}\right)O\left(\frac{1}{n}\right)} \\ 
        & = (\vbeta_*^\top \vu)^2 \left(\frac{\rho^4\eta^4}{(\eta^2c + \rho^2)^2} + o(1)\right)\left(\frac{1}{\eta^2} + O\left(\frac{1}{\rho^2n}\right)\right)\left( \frac{1}{\rho^2}\frac{c^2}{1-c} + o\left(\frac{1}{\rho^2}\right)\right) + O\left(\frac{1}{n}\right)  \\
        & = \frac{\eta^2\rho^2}{(\eta^2c + \rho^2)^2}\frac{c^2}{1-c}(\vbeta_*^\top \vu)^2 + o(1)+ O\left(\frac{1}{n}\right). 
    \end{align*}
    \begin{align*}
        \mathbb{E}\left[\frac{\eta^4\|\vt\|^4}{\gamma_1^2} (\vk^\top  \mA^\dagger  \mA^{\dagger  \top } \vk)(\vbeta_*^\top \vu)^2\right] & = (\vbeta_*^\top \vu)^2\mathbb{E}\left[\frac{\eta^4}{\gamma_1^2}\right]\left(\mathbb{E}\left[\|\vt\|^2\right]\right)^2\mathbb{E}\left[\vk^\top  \mA^\dagger  \mA^{\dagger  \top } \vk\right] + \sqrt{\Var\left(\frac{\eta^4 }{\gamma_1^2}\right)O\left(\frac{1}{n}\right)} \\ 
        & = (\vbeta_*^\top \vu)^2 \left(\frac{\rho^4\eta^4}{(\eta^2c + \rho^2)^2} + o(1)\right)(1-c)^2\left( \frac{1}{\rho^4}\frac{c^2}{(1-c)^3} + o\left(\frac{1}{\rho^4}\right)\right) + O\left(\frac{1}{n}\right)  \\
        & =  \frac{\eta^4}{(\eta^2c + \rho^2)^2}\frac{c^2}{1-c}(\vbeta_*^\top \vu)^2 + o(1) + O\left(\frac{1}{n}\right).
    \end{align*}
    and
    \begin{align*}
        \mathbb{E}\left[\frac{\eta^3\|\vt\|^2\xi}{\gamma_1^2} \vk^\top  \mA^\dagger  \vh^\top (\vbeta_*^\top  \vu)^2\right] & = (\vbeta_*^\top \vu)^2 \left(\mathbb{E}\left[\frac{\eta^2}{\gamma_1} \right]\right)^2\mathbb{E}\left[\frac{\xi}{\eta} \right]\mathbb{E}\left[\|\vt\|^2\right] \mathbb{E}\left[ \vk^\top  \mA^\dagger  \vh^\top \right] + O\left(\frac{1}{n}\right) = O\left(\frac{1}{n}\right). 
    \end{align*}
    Now we have the expectations and errors for the three terms. Combining them yields the Lemma statement. 

    \medskip
    
    \noindent For $c > 1$, we recall that $\vh\vs = \bm{0}$, and Proposition \ref{prop:Z(Z+A)_pseudo} implies 
    \begin{align*}
        \vbeta_*^\top  \mZ (\mZ + \mA) ^ \dagger  (\mZ + \mA) ^ {\dagger  \top } \mZ\vbeta_* & =   \frac{\eta^2\|\vh\|^2\xi^2}{\gamma_2^2}(\vbeta_*^\top \vu)^2 + \frac{\eta^4\|\vh\|^4\|\vs\|^2}{\gamma_2^2} (\vbeta_*^\top \vu)^2   + \frac{2\eta^3\|\vh\|^2\xi}{\gamma_2^2} \vbeta_*^\top  \vu \vh\vs \vu^\top \vbeta_* \\
        & = \left(\frac{\eta^2\|\vh\|^2(\xi^2 + \eta^2\|\vh\|^2\|\vs\|^2)} {\gamma_2^2}\right)(\vbeta_*^\top \vu)^2 \\
        & =\left(\frac{\eta^2\|\vh\|^2\gamma_2} {\gamma_2^2}\right)(\vbeta_*^\top \vu)^2 \\
        & = \frac{\eta^2\|\vh\|^2} {\gamma_2}(\vbeta_*^\top \vu)^2. 
    \end{align*}
Hence, we can take expectation: 
   \begin{align*}
        \mathbb{E}[\vbeta_*^\top  \mZ (\mZ + \mA) ^ \dagger  (\mZ + \mA) ^ {\dagger  \top } \mZ\vbeta_*] & = \mathbb{E}\left[\frac{\eta^2} {\gamma_2}\right]\mathbb{E}\left[\|\vh\|^2\right](\vbeta_*^\top \vu)^2 + O\left(\frac{1}{n}\right) \\
        &=\frac{\eta^2}{\eta^2 + \rho^2}\frac{c}{c-1} (\vbeta_*^\top \vu)^2 + o(1) + O\left( \frac{1}{n}\right). 
    \end{align*}
\end{proof}

\begin{lemma} \label{lem:interaction_AA}
    In the same setting as \Cref{sec:setting}, we have that, 
    \[
        \mathbb{E}\left[\vbeta_*^\top  \mA (\mZ + \mA) ^ \dagger  (\mZ + \mA) ^ {\dagger  \top } \mA\vbeta_*\right] = \begin{cases}  \|\vbeta_*\|^2 + \frac{\eta^2(\eta^2 + \rho^2)}{(\eta^2 c + \rho^2)^2}\frac{c^2}{1-c}(\vbeta_*^\top \vu)^2 - \frac{2\eta^2c}{\eta^2c + \rho^2}(\vbeta_*^\top \vu)^2+ o(1) + O\left(\frac{1}{n}\right) & c < 1\\
        \frac{\|\vbeta_*\|^2}{c} - \frac{\eta^2}{\eta^2 + \rho^2}\left(\frac{\|\vbeta_*\|^2}{d} - \frac{(\vbeta_*^\top \vu)^2}{c(c-1)}\right) + o(1) + O\left(\frac{1}{n}\right) & c > 1
        \end{cases}. 
    \]
\end{lemma}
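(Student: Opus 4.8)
The plan is to reduce the target quantity to a single squared norm and then substitute the closed forms for $\mA(\mZ+\mA)^\dagger$ from \Cref{lem:A(Z+A)_pseudo}. Observe that $\mA(\mZ+\mA)^\dagger(\mZ+\mA)^{\dagger\top}\mA^\top=\bigl(\mA(\mZ+\mA)^\dagger\bigr)\bigl(\mA(\mZ+\mA)^\dagger\bigr)^\top$, so the quantity to estimate is $\mathbb{E}\bigl[\bigl\|\vbeta_*^\top\mA(\mZ+\mA)^\dagger\bigr\|^2\bigr]$. The two regimes are treated separately, in parallel with the two halves of \Cref{lem:interaction_ZZ}.

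\textbf{Case $c<1$.} Since $(\mZ+\mA)(\mZ+\mA)^\dagger=\mI$ here, $\mA(\mZ+\mA)^\dagger=\mI-\mZ(\mZ+\mA)^\dagger$, hence
\[
  \bigl\|\vbeta_*^\top\mA(\mZ+\mA)^\dagger\bigr\|^2=\|\vbeta_*\|^2-2\,\vbeta_*^\top\mZ(\mZ+\mA)^\dagger\vbeta_*+\vbeta_*^\top\mZ(\mZ+\mA)^\dagger(\mZ+\mA)^{\dagger\top}\mZ^\top\vbeta_*,
\]
where the cross term is folded using that $\vbeta_*^\top\mZ(\mZ+\mA)^\dagger\vbeta_*$ is a scalar. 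The last summand is exactly the expectation of \Cref{lem:interaction_ZZ}. For the middle summand I would substitute $\mZ(\mZ+\mA)^\dagger=\tfrac{\eta\xi}{\gamma_1}\vu\vh+\tfrac{\eta^2\|\vt\|^2}{\gamma_1}\vu\vk^\top\mA^\dagger$ from \Cref{prop:Z(Z+A)_pseudo}: the $\vu\vh$ piece gives $\tfrac{\eta\xi}{\gamma_1}(\vbeta_*^\top\vu)(\vh\vbeta_*)$, which has vanishing mean because $\vbeta_*^\top\vu\vh\vbeta_*$ does (\Cref{lem:zero_exp}), while the other piece, $\tfrac{\eta^2\|\vt\|^2}{\gamma_1}\vbeta_*^\top\vu\vk^\top\mA^\dagger\vbeta_*$, has mean equal to the product of $\mathbb{E}[\eta^2/\gamma_1]$ (\Cref{lem:gamma_reciprocal}), $\mathbb{E}[\|\vt\|^2]$ (\Cref{lem:prior_rmt}) and $\mathbb{E}[\vbeta_*^\top\vu\vk^\top\mA^\dagger\vbeta_*]$ (\Cref{lem:general_exp}), up to $O(1/n)$ by \Cref{lem:var_multiple_prod}. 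This yields $\mathbb{E}[\vbeta_*^\top\mZ(\mZ+\mA)^\dagger\vbeta_*]=\tfrac{\eta^2c}{\eta^2c+\rho^2}(\vbeta_*^\top\vu)^2+o(1)+O(1/n)$, and combining the three contributions gives the stated $c<1$ expression.

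\textbf{Case $c>1$.} Now $(\mZ+\mA)(\mZ+\mA)^\dagger$ is only a projection, so I would substitute the five-term formula for $\mA(\mZ+\mA)^\dagger$ from \Cref{lem:A(Z+A)_pseudo} and contract on the left with $\vbeta_*^\top$. Writing $\vg:=\vbeta_*^\top\mA\mA^\dagger$ and the scalars $a:=\vh\vbeta_*$, $b:=\vbeta_*^\top\mA\mA^\dagger\vu$, this becomes $\vbeta_*^\top\mA(\mZ+\mA)^\dagger=\vg+\mu_1\vs^\top-\mu_2\vh$ with $\mu_1=\tfrac{\eta\xi a-\eta^2\|\vh\|^2 b}{\gamma_2}$ and $\mu_2=\tfrac{\eta^2\|\vs\|^2 a+\eta\xi b}{\gamma_2}$. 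On taking the squared norm, the three cross terms vanish by the pseudoinverse identities already used inside \Cref{lem:A(Z+A)_pseudo}: $\vg\vs=\vbeta_*^\top(\mA\mA^\dagger-(\mA\mA^\dagger)^2)\vu=0$, $\vs^\top\vh^\top=(\vh\vs)^\top=0$, and $\vg\vh^\top=\vbeta_*^\top\mA\mA^\dagger\vh^\top=\vbeta_*^\top\vh^\top=a$, leaving $\|\vg\|^2+\mu_1^2\|\vs\|^2+\mu_2^2\|\vh\|^2-2\mu_2a$. Using $\gamma_2=\xi^2+\eta^2\|\vh\|^2\|\vs\|^2$, the middle pair collapses to $\tfrac{\eta^2}{\gamma_2}(a^2\|\vs\|^2+b^2\|\vh\|^2)$, so
\[
  \bigl\|\vbeta_*^\top\mA(\mZ+\mA)^\dagger\bigr\|^2=\vbeta_*^\top\mA\mA^\dagger\vbeta_*+\frac{\eta^2b^2\|\vh\|^2-\eta^2a^2\|\vs\|^2-2\eta\xi ab}{\gamma_2}.
\]
I would then take expectations term by term: $\mathbb{E}[\vbeta_*^\top\mA\mA^\dagger\vbeta_*]=\|\vbeta_*\|^2/c$ (as in the proof of \Cref{lem:bias_term_one}); $\mathbb{E}[a^2]=O(1/(\rho^2d))$ by \Cref{lem:general_exp} and $\mathbb{E}[ab]=0$ by \Cref{lem:zero_exp}, so the $a^2\|\vs\|^2/\gamma_2$ piece contributes exactly $-\tfrac{\eta^2}{\eta^2+\rho^2}\tfrac{\|\vbeta_*\|^2}{d}$ and the $ab$ piece only $o(1)$; and $\mathbb{E}[b^2\|\vh\|^2]=\mathbb{E}[b^2]\mathbb{E}[\|\vh\|^2]$ (the $\mU$-block and the $(\mV,\mSigma)$-block are independent), together with $\mathbb{E}[\eta^2/\gamma_2]$ (\Cref{lem:gamma_reciprocal}), $\mathbb{E}[\|\vh\|^2]$ (\Cref{lem:prior_rmt}), and $\mathbb{E}[b^2]=\tfrac{1}{c^2}(\vbeta_*^\top\vu)^2+o(1)$, gives $\tfrac{\eta^2}{\eta^2+\rho^2}\tfrac{(\vbeta_*^\top\vu)^2}{c(c-1)}$. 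Throughout, products are reduced to products of expectations by \Cref{lem:var_multiple_prod}, with the factor $\eta^2/\gamma_2$ (which fails the higher-moment hypothesis) attached last via a Cauchy--Schwarz covariance bound, exactly as in \Cref{lem:p_expectation}. Summing reproduces the stated $c>1$ formula.

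The main obstacle I anticipate is the $c>1$ bookkeeping: correctly enumerating which cross terms vanish, carrying out the $\gamma_2$-identity collapse cleanly, and—crucially—not discarding the genuinely $O(1/d)$ piece $\eta^2a^2\|\vs\|^2/\gamma_2$, which survives precisely because $\mathbb{E}[a^2]$ and $\mathbb{E}[\eta^2/\gamma_2]$ have reciprocal orders in $\rho$, and which is the source of the $\tfrac{\eta^2}{\eta^2+\rho^2}\tfrac{\|\vbeta_*\|^2}{d}$ term in the statement. The $c<1$ half is routine once \Cref{lem:interaction_ZZ} is available.
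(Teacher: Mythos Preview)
Your proposal is correct and follows essentially the same route as the paper. For $c<1$ you use $\mA(\mZ+\mA)^\dagger=\mI-\mZ(\mZ+\mA)^\dagger$ to reduce directly to $\|\vbeta_*\|^2$, the $\mZ\mZ$ term from \Cref{lem:interaction_ZZ}, and the cross term; the paper instead plugs in the explicit five-term formula for $\mA(\mZ+\mA)^\dagger$ from \Cref{lem:A(Z+A)_pseudo}, but the resulting six summands are term-for-term identical to yours. For $c>1$ both you and the paper substitute the formula from \Cref{lem:A(Z+A)_pseudo}, kill the cross terms via the projection identities $\vg\vs=0$, $\vh\vs=0$, $\mA\mA^\dagger\vh^\top=\vh^\top$, collapse the quadratic pieces using $\gamma_2=\xi^2+\eta^2\|\vs\|^2\|\vh\|^2$, and arrive at exactly the same four-term simplified expression $\vbeta_*^\top\mA\mA^\dagger\vbeta_*-\tfrac{\eta^2\|\vs\|^2}{\gamma_2}a^2+\tfrac{\eta^2\|\vh\|^2}{\gamma_2}b^2-\tfrac{2\eta\xi}{\gamma_2}ab$; your $\mu_1,\mu_2$ packaging is just a cleaner way to carry out the same algebra. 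The expectations are then computed identically via \Cref{lem:prior_rmt}, \Cref{lem:general_exp}, \Cref{lem:zero_exp}, \Cref{lem:gamma_reciprocal}, \Cref{prop:ortho}, and \Cref{lem:var_multiple_prod}, with the same Cauchy--Schwarz device to absorb the $\eta^2/\gamma_2$ factor.
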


\begin{proof} We use similar expansions that follow from Lemma \ref{lem:A(Z+A)_pseudo}.
    \begin{align*}
    \vbeta_*^\top  \mA (\mZ + \mA) ^ \dagger  (\mZ + \mA) ^ {\dagger  \top } \mA\vbeta_* &=  \|\vbeta_*\|^2 + \frac{\eta^2\|\vh\|^2\xi^2}{\gamma_1^2}(\vbeta_*^\top \vu)^2 + \frac{\eta^4\|\vt\|^4}{\gamma_1^2} (\vk^\top  \mA^\dagger  \mA^{\dagger  \top } \vk)(\vbeta_*^\top \vu)^2  \\
     &+ \frac{2\eta^3\|\vt\|^2\xi}{\gamma_1^2} (\vbeta_*^\top  \vu)^2 \vk^\top  \mA^\dagger  \vh^\top - \frac{2\eta^2\|\vt\|^2}{\gamma_1} \vbeta_*^\top  \vu\vk^\top  \mA^\dagger  \vbeta_* - \frac{2\eta\xi}{\gamma_1} \vbeta_*^\top  \vu\vh\vbeta_* . 
\end{align*}
Lemma \ref{lem:interaction_ZZ} gives the expectation of the first four terms:  
\begin{align*}
    \|\vbeta_*^2\|^2 + \frac{\eta^2(\eta^2 + \rho^2)}{(\eta^2 c + \rho^2)^2}\frac{c^2}{1-c}(\vbeta_*^\top \vu)^2 + o(1) + O\left(\frac{1}{n}\right). 
\end{align*}
We have done the following expectations in Equations \ref{eq:gamma_bhub}, \ref{eq:t_norm_bAkub}:  
\begin{align*}
    \mathbb{E}\left[\frac{\eta\xi}{\gamma_1} \vbeta_*^\top  \vu\vh\vbeta_*\right] = O\left(\frac{1}{n}\right), \quad \mathbb{E}\left[\frac{\eta^2\|\vt\|^2}{\gamma_1} \vbeta_*^\top  \vu\vk^\top \mA^\dagger \vbeta_*\right] = \frac{\eta^2c}{\eta^2c +\rho^2} + o(1) + O\left(\frac{1}{n}\right). 
\end{align*}
Combining these results yields the lemma statement.

\medskip

\noindent For $c > 1$, with $\vh\vs = \bm{0}$, $\vs^\top \mA\mA^\dagger   = \bm{0}$, $\vh\mA\mA^\dagger  = \vh$, we have the following expansion by Lemma \ref{lem:A(Z+A)_pseudo}: 
\begin{align*}
    \vbeta_*^\top  \mA (\mZ + \mA) ^ \dagger  (\mZ + \mA) ^ {\dagger  \top } \mA\vbeta_* & = \vbeta_*^\top \mA\mA^\dagger  \vbeta_* + \frac{\eta^2\|\vs\|^2\xi^2}{\gamma_2^2}\vbeta_*^\top \vh^\top \vh\vbeta_* +\frac{\eta^4\|\vs\|^4\|\vh\|^2}{\gamma_2^2}\vbeta_*^\top \vh^\top \vh\vbeta_*  \\
     &+ \frac{\eta^4\|\vh\|^4\|\vs\|^2}{\gamma_2^2}\vbeta_*^\top \mA\mA^\dagger  \vu\vu^\top \mA\mA^\dagger  \vbeta_* + \frac{\eta^2\|\vh\|^2\xi^2}{\gamma_2^2}\vbeta_*^\top \mA\mA^\dagger  \vu\vu^\top \mA\mA^\dagger  \vbeta_* \\
    & - \frac{2\eta^2\|\vs\|^2}{\gamma_2}\vbeta_*^\top  \vh^\top \vh\vbeta_* - \frac{2\eta\xi}{\gamma_2}{\vbeta_*^\top \mA\mA^\dagger  \vu\vh\vbeta_*}\\
    & - \frac{2\eta^3\|\vs\|^2\|\vh\|^2\xi}{\gamma_2^2}\vbeta_*^\top \mA\mA^\dagger  \vu\vh\vbeta_* + \frac{2\eta^3\|\vs\|^2\|\vh\|^2\xi}{\gamma_2^2}\vbeta_*^\top \mA\mA^\dagger  \vu\vh\vbeta_*. 
\end{align*}
We can combine the coefficients as: 
\begin{align*}
    \frac{\eta^2\|\vs\|^2\xi^2}{\gamma_2^2} + \frac{\eta^4\|\vs\|^4\|\vh\|^2}{\gamma_2^2} - \frac{2\eta^2\|\vs\|^2}{\gamma_2}  = \frac{\eta^2\|\vs\|^2(\eta^2\|\vs\|^2\|\vh\|^2 + \xi^2) - 2\eta^2\|\vs\|^2\gamma_2}{\gamma_2^2} = -\frac{\eta^2\|\vs\|^2}{\gamma_2}, 
\end{align*}
\begin{align*}
    \frac{\eta^4\|\vh\|^4\|\vs\|^2}{\gamma_2^2} + \frac{\eta^2\|\vh\|^2\xi^2}{\gamma_2^2} = \frac{\eta^2\|\vh\|^2(\eta^2\|\vs\|^2\|\vh\|^2 + \xi^2)}{\gamma_2^2} = \frac{\eta^2\|\vh\|^2\gamma_2}{\gamma_2^2} = \frac{\eta^2\|\vh\|^2}{\gamma_2}. 
\end{align*}
Then we have that: 
\begin{align*}
    & \vbeta_*^\top  \mA (\mZ + \mA) ^ \dagger  (\mZ + \mA) ^ {\dagger  \top } \mA\vbeta_* \\
    = \ &\vbeta_*^\top \mA\mA^\dagger  \vbeta_* - \frac{\eta^2\|\vs\|^2}{\gamma_2}\vbeta_*^\top \vh^\top \vh\vbeta_* +\frac{\eta^2\|\vh\|^2}{\gamma_2}\vbeta_*^\top \mA\mA^\dagger  \vu\vu^\top \mA\mA^\dagger  \vbeta_* -  \frac{2\eta\xi}{\gamma_2}{\vbeta_*^\top \mA\mA^\dagger  \vu\vh\vbeta_*}. 
\end{align*}
Recall from Equation \ref{eq:bAAb} that $\mathbb{E}[\vbeta_*^\top \mA\mA^\dagger  \vbeta_*] = \|\vbeta_*\|^2/c$. We then proceed similarly with the other expectations using Lemmas \ref{lem:prior_rmt}, \ref{lem:general_exp}, \ref{lem:zero_exp}, \ref{lem:gamma_reciprocal}: 
\begin{align*}
    \mathbb{E}\left[\frac{\eta^2\|\vs\|^2}{\gamma_2}\vbeta_*^\top \vh^\top \vh\vbeta_*\right] & =  \mathbb{E}\left[\frac{\eta^2}{\gamma_2}\right] \mathbb{E}\left[\|\vs\|^2\right]  \mathbb{E}\left[\vbeta_*^\top \vh^\top \vh\vbeta_*\right] + \sqrt{\Var\left(\frac{\eta^2}{\gamma_2}\right)O\left(\frac{1}{n}\right)} \\ 
    & = \left(\frac{\rho^2\eta^2}{\eta^2 + \rho^2}+ o\left(\frac{1}{\rho^2 }\right)\right)\left(1 - \frac{1}{c}\right)\left(\frac{\|\vbeta_*\|^2}{d}\frac{c}{\rho^2(c-1)} + o\left(\frac{1}{d\rho^2}\right)\right) + O\left(\frac{1}{n}\right) \\
    & = \frac{\|\vbeta_*\|^2}{d}\frac{\eta^2}{\eta^2 + \rho^2} + o\left(\frac{1}{d}\right) + O\left(\frac{1}{n}\right). 
\end{align*}
\begin{align*}
    \mathbb{E}\left[\frac{\eta^2\|\vh\|^2}{\gamma_2}(\vbeta_*^\top \mA\mA^\dagger  \vu)^2\right] & =  \mathbb{E}\left[\frac{\eta^2}{\gamma_2}\right] \mathbb{E}\left[\|\vh\|^2\right] \mathbb{E}\left[(\vbeta_*^\top \mA\mA^\dagger  \vu)^2\right] + \sqrt{\Var\left(\frac{\eta^2}{\gamma_2}\right)O\left(\frac{1}{n}\right)} \\ 
    & = \left(\frac{\rho^2\eta^2}{\eta^2 + \rho^2}+ o\left(\frac{1}{\rho^2 }\right)\right)\left(\frac{1}{\rho^2}\frac{c}{c-1}+ o\left(\frac{1}{\rho^2 }\right)\right)\left(\frac{1}{c^2}(\vbeta_*^\top \vu)^2+ o\left(1\right)\right)+  O\left(\frac{1}{n}\right) \\ 
    & = \frac{\eta^2}{\eta^2 + \rho^2}\frac{(\vbeta_*^\top \vu)^2}{c(c-1)} + o(1)+  O\left(\frac{1}{n}\right). 
\end{align*}
\begin{align} \label{eq:gamma_bAAuhb}
    \mathbb{E}\left[\frac{\eta\xi}{\gamma_2}{\vbeta_*^\top \mA\mA^\dagger  \vu\vh\vbeta_*}\right] &  = \mathbb{E}\left[\frac{\eta^2}{\gamma_2}\right]\mathbb{E}\left[\frac{\xi}{\eta}\right]\mathbb{E}\left[{\vbeta_*^\top \mA\mA^\dagger  \vu\vh\vbeta_*}\right] + \sqrt{\Var\left(\frac{\eta^2}{\gamma_2}\right)O\left(\frac{1}{n}\right)} \notag \\ 
    & = 0 + O\left(\frac{1}{n}\right). 
\end{align}
We combine these results to produce the lemma statement.  
\end{proof}

\begin{lemma} \label{lem:interaction_AZ}
    In the same setting as \Cref{sec:setting}, we have that 
    \[
        \mathbb{E}\left[\vbeta_*^\top  \mZ (\mZ + \mA) ^ \dagger  (\mZ + \mA) ^ {\dagger  \top } \mA\vbeta_*\right] = \begin{cases}  -\left(\frac{\eta^2(\eta^2 + \rho^2)}{(\eta^2c + \rho^2)^2}\frac{c^2}{1-c} - \frac{\eta^2c}{\eta^2c + \rho^2}\right)(\vbeta_*^\top \vu)^2 + o(1) + O\left(\frac{1}{n}\right), & c < 1\\
        -\frac{\eta^2}{\eta^2 + \rho^2}\frac{1}{c-1}(\vbeta_*^\top \vu)^2 + o(1) + O\left(\frac{1}{n}\right), & c > 1
        \end{cases}
    \]
\end{lemma}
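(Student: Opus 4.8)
The plan is to avoid multiplying out the full sextuple product and instead use a Moore–Penrose identity that reduces this mixed $\mZ$–$\mA$ quantity to two expectations that have already been computed. Write $\mX := \mZ+\mA$. Since $\mX\mX^\dagger$ is an orthogonal projector, it is symmetric, so $\mX^{\dagger\top}\mX^\top = (\mX\mX^\dagger)^\top = \mX\mX^\dagger$, and hence $\mX^\dagger\mX^{\dagger\top}\mX^\top = \mX^\dagger\mX\mX^\dagger = \mX^\dagger$. Substituting $\mA^\top = \mX^\top - \mZ^\top$ gives, for every realization with $\mA$ of full rank (both $c<1$ and $c>1$, since this never uses $\mX\mX^\dagger=\mI$),
\[
  \vbeta_*^\top\mZ\mX^\dagger\mX^{\dagger\top}\mA^\top\vbeta_*
  \;=\; \vbeta_*^\top\mZ\mX^\dagger\vbeta_* \;-\; \vbeta_*^\top\mZ\mX^\dagger\mX^{\dagger\top}\mZ^\top\vbeta_*.
\]

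First I would take expectations of both sides. The second term on the right is exactly $\mathbb{E}[\vbeta_*^\top\mZ\mX^\dagger\mX^{\dagger\top}\mZ^\top\vbeta_*]$, whose closed form is \Cref{lem:interaction_ZZ}. For the first term, $\vbeta_*^\top\mZ\mX^\dagger\vbeta_*$ is a scalar, so it equals its transpose $\vbeta_*^\top\mX^{\dagger\top}\mZ^\top\vbeta_*$, and its expectation is precisely \Cref{lem:interaction_Z} (the value quoted in \Cref{sec:step4-alignment}). Plugging in the two closed forms and simplifying the coefficient of $(\vbeta_*^\top\vu)^2$ — for $c<1$, $\tfrac{\eta^2 c}{\eta^2 c+\rho^2} - \tfrac{\eta^2(\eta^2+\rho^2)}{(\eta^2 c+\rho^2)^2}\tfrac{c^2}{1-c}$; for $c>1$, $\tfrac{\eta^2}{\eta^2+\rho^2}\bigl(1-\tfrac{c}{c-1}\bigr) = -\tfrac{\eta^2}{\eta^2+\rho^2}\tfrac{1}{c-1}$ — reproduces the stated formula, with the error terms $o(1)+O(1/n)$ simply inherited from the two invoked lemmas (no new product-concentration argument is needed).

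As a fallback that mirrors the proofs of \Cref{lem:interaction_ZZ,lem:interaction_AA}, one could instead substitute the expansion of $\mZ\mX^\dagger$ from \Cref{prop:Z(Z+A)_pseudo} and of $\mA\mX^\dagger$ from \Cref{lem:A(Z+A)_pseudo}, multiply out, and evaluate each resulting monomial in $\vh,\vk,\vt,\vs,\xi,\gamma_i$ via the moment estimates of \Cref{lem:prior_rmt,lem:general_exp,lem:zero_exp,lem:gamma_reciprocal,lem:gamma_sq_reciprocal} together with the product-concentration bounds \Cref{lem:var-product,lem:var_multiple_prod}; the identities $\gamma_1 = \eta^2\|\vt\|^2\|\vk\|^2+\xi^2$ and $\gamma_2 = \eta^2\|\vs\|^2\|\vh\|^2+\xi^2$ collapse several coefficients just as in those earlier proofs.

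The step I expect to require care is not the computation but the pseudoinverse bookkeeping in the short route: one must verify that $\mX^{\dagger\top}\mX^\top = \mX\mX^\dagger$ (projector symmetry) and that $\mX^\dagger\mX\mX^\dagger = \mX^\dagger$ are applied in the right order, and in particular that nothing secretly assumed $\mX$ square or $\mX\mX^\dagger=\mI$. Once that is checked there is essentially no further obstacle, since the result is immediate from \Cref{lem:interaction_Z} and \Cref{lem:interaction_ZZ}. If the brute-force route is taken instead, the main nuisance is identifying the cross-terms carrying an odd power of a Haar-random spherical factor — e.g.\ $\vk^\top\mA^\dagger\vh^\top$ or $\vbeta_*^\top\vu\,\vh\vbeta_*$ — which vanish in expectation exactly as handled by \Cref{lem:zero_exp}.
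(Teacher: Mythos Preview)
Your primary route is correct and is genuinely shorter than the paper's. The paper proves this lemma by the method you list as a fallback: it substitutes the closed forms of $\mZ\mX^\dagger$ and $\mA\mX^\dagger$ from \Cref{prop:Z(Z+A)_pseudo} and \Cref{lem:A(Z+A)_pseudo}, multiplies $(\mZ\mX^\dagger)(\mA\mX^\dagger)^\top$ out term by term, and then observes a posteriori that the resulting five (for $c<1$) or several (for $c>1$, after using $\vh\vs=\bm 0$, $\vs^\top\mA=\bm 0$) monomials are exactly the ones already evaluated in the proofs of \Cref{lem:interaction_ZZ} and \Cref{lem:interaction_Z}. Your use of the Moore--Penrose identities $(\mX\mX^\dagger)^\top=\mX\mX^\dagger$ and $\mX^\dagger\mX\mX^\dagger=\mX^\dagger$, together with $\mA^\top=\mX^\top-\mZ^\top$, performs this recognition \emph{before} expanding, collapsing the computation to a one-line subtraction of those two lemmas that works uniformly for $c<1$ and $c>1$. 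What your shortcut buys is that no new product-concentration argument is needed and no separate $c>1$ simplification step is required; what the paper's route buys is that it stays within the same mechanical template as \Cref{lem:interaction_ZZ,lem:interaction_AA}, so nothing new has to be checked. Your caution about the projector bookkeeping is well placed but the identities you wrote are exactly the standard Moore--Penrose axioms and hold for arbitrary $\mX$, so there is no hidden full-rank or squareness assumption.
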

\begin{proof}
For $c < 1$, we expand it using  \Cref{prop:Z(Z+A)_pseudo}, \Cref{lem:A(Z+A)_pseudo}. Note that all of the relevant expectations have been evaluated in the proofs of Lemmas \ref{lem:interaction_ZZ}, \ref{lem:interaction_AA}, 
    \begin{align*}
    \vbeta_*^\top  \mZ (\mZ + \mA) ^ \dagger  (\mZ + \mA) ^ {\dagger  \top } \mA\vbeta_* & =   \frac{\eta\xi}{\gamma_1}\vbeta_*^\top \vu\vh\vbeta_* + \frac{\eta^2\|\vt\|^2}{\gamma_1}\vbeta_*^\top \vu\vk^\top \mA^\dagger  \vbeta_* - \frac{2\eta^3\|\vt\|^2\xi}{\gamma_1^2}(\vbeta_*^\top \vu)^2 \vh\mA^{\dagger  \top }\vk \\ 
     &  - \frac{\eta^4\|\vt\|^4}{\gamma_1^2}(\vk^\top \mA^\dagger  \mA^{\dagger  \top }\vk)(\vbeta_*^\top \vu)^2  - \frac{\eta^2\|\vh\|^2\xi^2}{\gamma_1^2}(\vbeta_*^\top \vu)^2. 
\end{align*}
The expectation of the last three terms is given by Lemma \ref{lem:interaction_ZZ}. The first two expectations come from Equations \ref{eq:gamma_bhub}, \ref{eq:t_norm_bAkub} respectively. We can plug them in and compute the expectation: 
\begin{align*}
     \mathbb{E}\left[\vbeta_*^\top  \mZ (\mZ + \mA) ^ \dagger  (\mZ + \mA) ^ {\dagger  \top } \mA\vbeta_*\right] = -\left(\frac{\eta^2(\eta^2 + \rho^2)}{(\eta^2c + \rho^2)^2}\frac{c^2}{1-c} - \frac{\eta^2c}{\eta^2c + \rho^2}\right)(\vbeta_*^\top \vu)^2 + o(1) + O\left(\frac{1}{n}\right). 
\end{align*}
For $c> 1$, again with $\vh\vs = \bm{0}$ and $\vs^\top \mA = \bm{0}$, $\vbeta_*^\top  \mZ (\mZ + \mA) ^ \dagger  (\mZ + \mA) ^ {\dagger  \top } \mA\vbeta_*$ becomes: 
\begin{align*}
     & \vbeta_*^\top  \frac{\eta\xi}{\gamma_2}\vu\vh\left(\mA\mA^\dagger   + \frac{\eta\xi}{\gamma_2}\vs\vh - \frac{\eta^2\|\vs\|^2}{\gamma_2}\vh^\top \vh - \frac{\eta^2\|\vh\|^2}{\gamma_2}\vs\vu^\top \mA\mA^\dagger   -\frac{\eta\xi}{\gamma_2} \vh^\top \vu^\top \mA\mA^\dagger  \right)\vbeta_*\\
     & +  \vbeta_*^\top  \frac{\eta^2\|\vh\|^2}{\gamma_2}\vu\vs^\top \left(\mA\mA^\dagger   + \frac{\eta\xi}{\gamma_2}\vs\vh - \frac{\eta^2\|\vs\|^2}{\gamma_2}\vh^\top \vh - \frac{\eta^2\|\vh\|^2}{\gamma_2}\vs\vu^\top \mA\mA^\dagger   -\frac{\eta\xi}{\gamma_2} \vh^\top \vu^\top \mA\mA^\dagger  \right)\vbeta_* \\
     = \ & \vbeta_*^\top  \left[\frac{\eta\xi}{\gamma_2}\vu\vh\mA\mA^\dagger   - \frac{\eta^3\xi\|\vs\|^2\|\vh\|^2}{\gamma_2^2}\vu\vh - \frac{\eta^2\|\vh\|^2\xi^2}{\gamma_2^2} \vu\vu^\top \mA\mA^\dagger  \right]\vbeta_*\\
     &\ +  \vbeta_*^\top  \left[ \frac{\eta^3\|\vh\|^2\|\vs\|^2\xi}{\gamma_2^2}\vu\vh  - \frac{\eta^4\|\vh\|^4\|\vs\|^2}{\gamma_2^2}\vu\vu^\top \mA\mA^\dagger   \right]\vbeta_* \\
     =\ & \vbeta_*^\top  \left[\frac{\eta\xi}{\gamma_2}\vu\vh\mA\mA^\dagger   - \frac{\eta^2\|\vh\|^2\xi^2}{\gamma_2^2} \vu\vu^\top \mA\mA^\dagger   - \frac{\eta^4\|\vh\|^4\|\vs\|^2}{\gamma_2^2}\vu\vu^\top \mA\mA^\dagger  \right]\vbeta_*\\
     =\ & (\vbeta_*^\top \vu)\left(\frac{\eta\xi}{\gamma_2} \vh\mA\mA^\dagger  \vbeta_* - \frac{\eta^2\|\vh\|^2}{\gamma_2} \vu^\top \mA\mA^\dagger  \vbeta_*\right) \quad \text{since $\gamma_2 = \eta^2\|\vs\|^2\|\vh\|^2 + \xi^2$}. 
\end{align*}
We need to evaluate two following expectations. Similar to $c < 1$, 
\[
    \mathbb{E}\left[\frac{\eta\xi}{\gamma_2} \vh\mA\mA^\dagger  \vbeta_*\right] = O\left(\frac{1}{n}\right). 
\]
\begin{align*}
    \mathbb{E}\left[ \frac{\eta^2\|\vh\|^2}{\gamma_2} \vu^\top \mA\mA^\dagger  \vbeta_* \right] & =  \mathbb{E}\left[\frac{\eta^2}{\gamma_2}\right] \mathbb{E}\left[\|\vh\|^2\right] \mathbb{E}\left[\vbeta_*^\top \mA\mA^\dagger  \vu\right] + \sqrt{\Var\left(\frac{\eta^2}{\gamma_2}\right)O\left(\frac{1}{n}\right)} \\ 
    & = \left(\frac{\rho^2\eta^2}{\eta^2 + \rho^2}+ o\left(\frac{1}{\rho^2 }\right)\right)\left(\frac{1}{\rho^2}\frac{c}{c-1}+ o\left(\frac{1}{\rho^2 }\right)\right)\left(\frac{1}{c}(\vbeta_*^\top \vu)\right)+  O\left(\frac{1}{n}\right) \\ 
    & = \frac{\eta^2}{\eta^2 + \rho^2}\frac{(\vbeta_*^\top \vu)}{c-1} + o(1)+  O\left(\frac{1}{n}\right). 
\end{align*}
Finally, we have that: 
    \[
        \mathbb{E}\left[\vbeta_*^\top  \mZ (\mZ + \mA) ^ \dagger  (\mZ + \mA) ^ {\dagger  \top } \mA\vbeta_*\right] = -\frac{\eta^2}{\eta^2 + \rho^2}\frac{1}{c-1}(\vbeta_*^\top \vu)^2 + o(1) + O\left(\frac{1}{n}\right). 
    \]
\end{proof}
\begin{lemma} \label{lem:epsilon_term}
    In the same setting as \Cref{sec:setting}, we have that, 
    \[
        \mathbb{E}\left[\vvarepsilon^\top  (\mZ + \mA)^{\dagger } (\mZ + \mA)^{\dagger  \top } \vvarepsilon\right] = \begin{cases}  \tau_{\varepsilon}^2\left(\frac{cd}{\rho^2(1-c)} - \frac{\eta^2}{\rho^2(\eta^2c + \rho^2)}\frac{c^2}{1-c}\right) + o\left(\frac{n}{\rho^2}\right) + O\left(\frac{1}{\rho^2n}\right), & c < 1\\
       \tau_{\varepsilon}^2\left(\frac{d}{\rho^2(c-1)} - \frac{\eta^2}{\rho^2(\eta^2 + \rho^2)}\frac{c}{c-1}\right) + o\left(\frac{n}{\rho^2}\right) + O\left(\frac{1}{\rho^2n}\right), & c > 1
        \end{cases}
    \]
\end{lemma}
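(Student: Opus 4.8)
The plan is to peel off the observation noise with an independence argument, reduce the claim to computing the expected squared Frobenius norm of the pseudoinverse $(\mZ+\mA)^\dagger$, and then evaluate that via the explicit low‑rank update formula for the pseudoinverse together with the random‑matrix estimates already assembled in \Cref{sec:step3}. Concretely, since $\vvarepsilon$ is independent of $\mZ$ and $\mA$ and has mean‑zero entries of variance $\tau_\varepsilon^2$, applying \Cref{lem:eQe} with $\mQ=(\mZ+\mA)^\dagger(\mZ+\mA)^{\dagger\top}$ gives
\[
\mathbb{E}\!\left[\vvarepsilon^\top (\mZ+\mA)^\dagger (\mZ+\mA)^{\dagger\top}\vvarepsilon\right] \;=\; \tau_\varepsilon^2\,\mathbb{E}\!\left[\Tr\!\big((\mZ+\mA)^\dagger(\mZ+\mA)^{\dagger\top}\big)\right] \;=\; \tau_\varepsilon^2\,\mathbb{E}\!\left[\big\|(\mZ+\mA)^\dagger\big\|_F^2\right],
\]
so it suffices to compute $\mathbb{E}\big[\|(\mZ+\mA)^\dagger\|_F^2\big]$ in the rescaled model, uniformly over the operator‑norm and Frobenius scalings of \Cref{assumption:scaling}.

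Next I would extract the leading ``bulk'' contribution. Using \Cref{thm:meyer_pseudo}, write $(\mZ+\mA)^\dagger = \mA^\dagger + \tfrac{\eta}{\xi}\vt^\top\vk^\top\mA^\dagger - \tfrac{\xi}{\gamma_1}\vp_1\vq_1^\top$ for $c<1$ (and the analogue in $\vs,\vh,\gamma_2,\vp_2,\vq_2$ for $c>1$), expand $\|\cdot\|_F^2=\Tr(\cdot\,(\cdot)^\top)$, and simplify the resulting self‑ and cross‑terms with the Moore--Penrose/orthogonality relations ($\vt\mA^\dagger=\vzero$, $\vt\vk=0$, $\vh\vs=0$, $\mA\mA^\dagger\vh^\top=\vh^\top$, $\mA\mA^\dagger=\mI$ in the relevant regime), exactly as in \Cref{prop:Z(Z+A)_pseudo} and \Cref{lem:A(Z+A)_pseudo}. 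This yields $\|(\mZ+\mA)^\dagger\|_F^2 = \|\mA^\dagger\|_F^2 + (\text{corrections})$, where each correction is a product of the elementary variables $\|\vh\|^2,\|\vk\|^2,\|\vt\|^2,\|\vs\|^2,\xi,\gamma_i$ and the quadratic forms $\vk^\top\mA^\dagger\mA^{\dagger\top}\vk,\ \vh\mA^{\dagger\top}\mA^\dagger\vh^\top,\ \vk^\top\mA^\dagger\vh^\top$ (the $\vp_i,\vq_i$ norms being reduced first via \Cref{lem:p_norms} and \Cref{lem:q_norms}). Because $\mA$ is rotationally invariant, $\mathbb{E}[\|\mA^\dagger\|_F^2]$ equals the number of nonzero singular values times $\mathbb{E}[\sigma_1(\mA)^{-2}]$, so by \Cref{eq:inverse_eig} it is $d\big(\tfrac{1}{\rho^2}\tfrac{c}{1-c}+o(\tfrac{1}{\rho^2})\big)=\tfrac{cd}{\rho^2(1-c)}+o(\tfrac{n}{\rho^2})$ for $c<1$ and $n\big(\tfrac{1}{\rho^2}\tfrac{c}{c-1}+o(\tfrac{1}{\rho^2})\big)=\tfrac{d}{\rho^2(c-1)}+o(\tfrac{n}{\rho^2})$ for $c>1$, precisely the leading terms claimed.

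Then I would evaluate the corrections. Every factor appearing in them has its mean (an explicit function of $c,\rho,\eta$) and an $O(1/n)$‑type variance recorded in \Cref{lem:prior_rmt}, \Cref{lem:general_exp}, \Cref{lem:zero_exp}, \Cref{lem:gamma_reciprocal}, and \Cref{lem:gamma_sq_reciprocal}, and \Cref{sec:step3} verifies the higher‑moment hypotheses needed to invoke the product‑concentration bounds \Cref{lem:var-product} and \Cref{lem:var_multiple_prod}. Using those lemmas in the same way as in the proofs of \Cref{lem:p_expectation}, \Cref{lem:interaction_ZZ}, and \Cref{lem:interaction_AA}, the expectation of each product becomes the product of the expectations up to an $O(1/(\rho^2 n))$ error; only $\eta^2/\gamma_i$ and $\eta^4/\gamma_i^2$ fail the higher‑moment condition and are inserted last via the classical covariance bound. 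Collecting the surviving terms yields the spike correction $-\tfrac{\eta^2}{\rho^2(\eta^2 c+\rho^2)}\tfrac{c^2}{1-c}$ for $c<1$ and $-\tfrac{\eta^2}{\rho^2(\eta^2+\rho^2)}\tfrac{c}{c-1}$ for $c>1$, with total error $o(n/\rho^2)+O(1/(\rho^2 n))$.

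The main obstacle is ensuring this spike correction is genuinely $O(1/\rho^2)$ rather than of the same order $\Theta(n/\rho^2)$ as the leading term. In the strong‑spike (Frobenius) regime $\eta^2=\Theta(d\rho^2)$, individual pieces such as $\tfrac{\eta^4\|\vt\|^4}{\gamma_1^2}\,\vk^\top\mA^\dagger\mA^{\dagger\top}\vk$ are themselves of order $n/\rho^2$, and they cancel against the cross‑terms only after the coefficients are combined using $\gamma_1=\eta^2\|\vt\|^2\|\vk\|^2+\xi^2$ and $\gamma_2=\eta^2\|\vs\|^2\|\vh\|^2+\xi^2$, just as in \Cref{lem:A(Z+A)_pseudo} and \Cref{lem:interaction_AA}. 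Getting this bookkeeping right --- equivalently, writing $\Tr\big((\mZ+\mA)^\dagger(\mZ+\mA)^{\dagger\top}\big)=\Tr\big(((\mZ+\mA)(\mZ+\mA)^\top)^{-1}\big)$ for $c<1$ (the transposed version for $c>1$) and applying Sherman--Morrison--Woodbury to this rank‑two update of $\mA\mA^\top$, which exposes the cancellation at the level of $2\times2$ matrices --- is the crux of the argument.
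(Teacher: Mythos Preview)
Your proposal is correct and follows essentially the same route as the paper: expand $(\mZ+\mA)^\dagger$ via the Meyer formula (\Cref{thm:meyer_pseudo}), reduce the quadratic form in $\vvarepsilon$ to a trace via \Cref{lem:eQe}, kill terms using the orthogonality relations $\vt\mA^\dagger=\vzero$, $\vh\vs=0$, etc., combine coefficients with $\gamma_i=\eta^2\|\cdot\|^2\|\cdot\|^2+\xi^2$ to effect the cancellation you correctly flagged as the crux, and evaluate the four surviving pieces with the RMT estimates of \Cref{sec:step3}. The only cosmetic differences are that the paper applies \Cref{lem:eQe} term-by-term after expanding rather than globally upfront, and carries out the coefficient combination purely algebraically rather than framing it as Sherman--Morrison--Woodbury.
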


\begin{proof}
    For $c < 1$, we first expand this term using Theorem \ref{thm:meyer_pseudo}:  
\begin{align*}
    \vvarepsilon^\top  (\mZ + \mA) ^\dagger 
    ({\mZ} + {\mA})^{\dagger  \top } {\vvarepsilon}
     =\  & {\vvarepsilon}^\top  
    \left( {\mA}^\dagger  + \frac{\eta}{{\xi}}{\vt}^\top {\vk}^\top {\mA}^\dagger  - \frac{{\xi}}{\gamma_1}\vp_1\vq_1^\top  \right) 
    \left( {\mA}^\dagger  + \frac{\eta}{{\xi}}{\vt}^\top {\vk}^\top {\mA}^\dagger  - \frac{{\xi}}{\gamma_1}\vp_1\vq_1^\top   \right)^\top 
    {\vvarepsilon}\\
     =\ & {\vvarepsilon}^\top  {\mA}^\dagger  {\mA}^{\dagger  \top }{\vvarepsilon} + 
    \frac{2\eta}{{\xi}} {\vvarepsilon}^\top   {\mA}^\dagger  {\mA}^{\dagger  \top } {\vk} {\vt}{\vvarepsilon} - 
    \frac{2{\xi}}{\gamma_1}{\vvarepsilon}^\top  {\mA}^\dagger  \vq_1 \vp_1^\top  {\vvarepsilon}  \\
     \ & + \frac{\eta^2}{{\xi}^2} \left({\vk}^\top {\mA}^{\dagger } {\mA}^{\dagger  \top }{\vk}\right)
    {\vvarepsilon}^\top {\vt}^\top  {\vt} {\vvarepsilon} - \frac{2\eta}{\gamma_1}{\vvarepsilon}^\top  {\vt}^\top {\vk}^\top  {\mA}^\dagger  \vq_1 \vp_1^\top  {\vvarepsilon} +
    \frac{{\xi}^2}{\gamma_1^2}{\vvarepsilon}^\top  \vp_1 \vq_1^\top  \vq_1 \vp_1^\top  {\vvarepsilon}
\end{align*}
Note that Lemma \ref{lem:eQe} and the fact that $\vt\mA^\dagger  = \bm{0}$ imply that the second term has zero expectation: 
\[
    \mathbb{E}_{\vvarepsilon}\left[ \frac{2\eta}{{\xi}}
    {\vvarepsilon}^\top   {\mA}^\dagger  {\mA}^{\dagger  \top } {\vk} {\vt}{\varepsilon}\right] = \frac{2\eta\tau_{\varepsilon}^2}{{\xi}}\vt{\mA}^\dagger  {\mA}^{\dagger  \top } {\vk}  = 0. 
\]
Simiarly, we will later use: 
\[
    \mathbb{E}_{\vvarepsilon}\left[ {\vvarepsilon}^\top  {\mA}^\dagger  {\vh}^\top  \vt {\vvarepsilon}\right] = \tau_{\varepsilon}^2 \vt{\mA}^\dagger \vh^\top   = 0, \quad \mathbb{E}_{\vvarepsilon}\left[ {\vvarepsilon}^\top  \vt^\top \vk^\top {\vvarepsilon}\right] = \tau_{\varepsilon}^2 Tr(\vt^\top \vk^\top ) = \tau_{\varepsilon}^2 Tr(\vk\vt) = 0.
\]
Note that these equalities are exact without taking the expectation over other sources of randomness besides $\vvarepsilon$. 

We now expand the other terms one by one and compute their expectations along the way. We start by eliminating zero expectations and taking expectations w.r.t. $\vvarepsilon$ using Lemma \ref{lem:eQe}. 

\begin{align*}
    \mathbb{E}\left[\frac{\eta^2}{{\xi}^2}
    \left({\vk}^\top  {\mA}^{\dagger } {\mA}^{\dagger  \top }{\vk}\right)
    {\vvarepsilon}^\top {\vt}^\top  {\vt} {\vvarepsilon}\right] &  = \mathbb{E}\left[\frac{\eta^2\|\vt\|^2\tau_{\varepsilon}^2}{{\xi}^2}
    {\vk}^\top  {\mA}^{\dagger } {\mA}^{\dagger  \top }{\vk}\right].  \\ \\ 
     \mathbb{E}\left[- \frac{2{\xi}}{\gamma_1}{\vvarepsilon}^\top  {\mA}^\dagger  \vq_1 \vp_1^\top  {\vvarepsilon} \right]
    &=  \mathbb{E}\left[- \frac{2{\xi}}{\gamma_1}{\vvarepsilon}^\top  {\mA}^\dagger  
    \left( \frac{\eta\|{\vt}\|^2}{{\xi}}{\mA}^{\dagger  \top } {\vk}+ {\vh}^\top \right) 
    \left( \frac{\eta^2\|{\vk}\|^2}{{\xi}}{\vt} + \eta{\vk}^\top  \right) {\vvarepsilon} \right]\\
    &= \mathbb{E}\left[-\frac{2\eta^3\|{\vt}\|^2\|{\vk}\|^2}{\gamma_1{\xi}}
    {\vvarepsilon}^\top  {\mA}^\dagger  {\mA}^{\dagger  \top } {\vk} {\vt} {\vvarepsilon} - 
    \frac{2\eta^2\|{\vt}\|^2}{\gamma_1}
    {\vvarepsilon}^\top  {\mA}^\dagger  {\mA}^{\dagger  \top } {\vk} {\vk}^\top  {\vvarepsilon} \right. \\
    &  \ \ \left.  \quad \quad  - \frac{2\eta^2\|{\vk}\|^2}{\gamma_1}
    {\vvarepsilon}^\top  {\mA}^\dagger  {\vh}^\top  {\vt} {\vvarepsilon}   - 
    \frac{2\eta\xi}{\gamma_1}
    {\vvarepsilon}^\top  {\mA}^\dagger  {\vh}^\top  {\vk}^\top  {\vvarepsilon}\right] \\ 
    & = \mathbb{E}\left[- 
    \frac{2\eta^2\|{\vt}\|^2 \tau_{\varepsilon}^2}{\gamma_1}
     {\vk}^\top  {\mA}^\dagger  {\mA}^{\dagger  \top } {\vk} - 
    \frac{2\eta\xi\tau_{\varepsilon}^2}{\gamma_1}
    {\vk}^\top   {\mA}^\dagger  {\vh}^\top\right].    \\ \\ 
    \mathbb{E}\left[- \frac{2\eta}{\gamma_1}{\vvarepsilon}^\top  {\vt}^\top {\vk}^\top  {\mA}^\dagger  \vq_1 \vp_1^\top  {\vvarepsilon} \right]
    &= \mathbb{E}\left[ - \frac{2\eta}{\gamma_1}
    {\vvarepsilon}^\top  {\vt}^\top {\vk}^\top  {\mA}^\dagger  
    \left( \frac{\eta\|{\vt}\|^2}{{\xi}}{\mA}^{\dagger  \top } {\vk} + {\vh}^\top \right) 
    \left( \frac{\eta^2\|{\vk}\|^2}{{\xi}}{\vt} + \eta{\vk}^\top  \right) {\vvarepsilon}  \right]\\
    &=  \mathbb{E}\left[ -\frac{2\eta^4 \|{\vt}\|^2 \| {\vk} \|^2}{\gamma_1{\xi}^2}  
    \left({\vk}^\top  {\mA}^{\dagger } {\mA}^{\dagger  \top }{\vk}\right)
    {\vvarepsilon}^\top {\vt}^\top  {\vt} {\vvarepsilon}   -\frac{2\eta^3\|{\vk}\|^2}{\gamma_1{\xi}}
    ({\vk}^\top  {\mA}^\dagger  {\vh}^\top ) {\vvarepsilon}^\top  {\vt}^\top  {\vt} {\vvarepsilon}  \right.\\ 
    & \ \ \left. \quad \quad -
    \frac{2\eta^3\|{\vt}\|^2}{\gamma_1{\xi}}
    \left({\vk}^\top  {\mA}^{\dagger } {\mA}^{\dagger  \top }{\vk}\right)
    {\vvarepsilon}^\top {\vt}^\top {\vk}^\top {\vvarepsilon}- 
    \frac{2\eta^2}{\gamma_1}
    ({\vk}^\top  {\mA}^\dagger  {\vh}^\top ){\vvarepsilon}^\top  {\vt}^\top  {\vk}^\top  {\vvarepsilon} \right] \\ 
    & = \ \mathbb{E} \left[-\frac{2\eta^4 \|{\vt}\|^4 \| {\vk} \|^2 \tau_{\varepsilon}^2}{\gamma_1{\xi}^2}  
    {\vk}^\top  {\mA}^{\dagger } {\mA}^{\dagger  \top }{\vk}  -\frac{2\eta^3\|{\vk}\|^2\|{\vt}\|^2\tau_{\varepsilon}^2}{\gamma_1{\xi}}
    {\vk}^\top  {\mA}^\dagger  {\vh}^\top \right]. \\ 
\end{align*}
By the squared norms in Lemmas \ref{lem:p_norms}, \ref{lem:q_norms}, and Lemma \ref{lem:eQe},  
\begin{align*}
    \mathbb{E}\left[\frac{{\xi}^2}{\gamma_1^2}{\vvarepsilon}^\top  \vp_1 \vq_1^\top  \vq_1 \vp_1^\top  {\vvarepsilon}\right] & = \frac{{\xi}^2\tau_{\varepsilon}^2}{\gamma_1^2}\|\vp_1\|^2\|\vq_1\|^2 \\ 
    & =\frac{{\xi}^2\tau_{\varepsilon}^2}{\gamma_1^2}\left(\frac{\eta^2\|\vk\|^2}{\xi^2} \gamma_1\right)\left(\frac{\eta^2\|\vt\|^4}{\xi^2}\vk \mA^\dagger  \mA^{\dagger  \top }\vk + \frac{2\eta\|\vt\|^2}{\xi}\vk^\top \mA^\dagger  \vh^\top  + \|\vh\|^2\right) \\ 
    & =\frac{\tau_{\varepsilon}^2}{\gamma_1}\left(\eta^2\|\vk\|^2 \right)\left(\frac{\eta^2\|\vt\|^4}{\xi^2}\vk \mA^\dagger  \mA^{\dagger  \top }\vk + \frac{2\eta\|\vt\|^2}{\xi}\vk^\top \mA^\dagger  \vh^\top  + \|\vh\|^2\right) \\ 
    & = \tau_{\varepsilon}^2\left(\frac{\eta^4\|\vt\|^4\|\vk\|^2}{\gamma_1\xi^2}\vk\mA^\dagger  \mA^{\dagger  \top }\vk + \frac{2\eta^3\|\vt\|^2\|\vk\|^2}{\gamma_1\xi}\vk^\top \mA^\dagger  \vh^\top  + \frac{\eta^2\|\vk\|^2\|\vh\|^2}{\gamma_1}\right)
\end{align*}
We combine like terms and simplify the coefficients. For the term $\vk^\top \mA^\dagger  \mA^{\dagger  \top }\vk$, 
\begin{align*}
    \tau_{\varepsilon}^2\left(\frac{\eta^4\|\vt\|^4\|\vk\|^2}{\gamma_1\xi^2} -\frac{2\eta^4 \|{\vt}\|^4 \| {\vk} \|^2 }{\gamma_1{\xi}^2} - 
    \frac{2\eta^2\|{\vt}\|^2}{\gamma_1} + \frac{\eta^2\|\vt\|^2}{\xi^2}\right) & = \tau_{\varepsilon}^2\eta^2\|\vt\|^2\left(\frac{\eta^2\|\vt\|^2\|\vk\|^2}{\gamma_1\xi^2} -\frac{2\eta^2 \|{\vt}\|^2 \| {\vk} \|^2 }{\gamma_1{\xi}^2} - 
    \frac{2}{\gamma_1} + \frac{1}{\xi^2}\right) \\
    & = \tau_{\varepsilon}^2\eta^2\|\vt\|^2\left(-\frac{\gamma_1 - \xi^2 }{\gamma_1{\xi}^2} - 
    \frac{2}{\gamma_1} + \frac{1}{\xi^2}\right) \\
    & = \tau_{\varepsilon}^2\eta^2\|\vt\|^2\left(-\frac{\gamma_1 - \xi^2 }{\gamma_1{\xi}^2} - 
    \frac{2\xi^2}{\gamma_1\xi^2} + \frac{\gamma_1}{\gamma_1\xi^2}\right) \\ 
    & = -\tau_{\varepsilon}^2\frac{\eta^2\|\vt\|^2}{\gamma_1}. 
\end{align*}
For the term $\vk^\top \mA^\dagger  \vh^\top $, 
\begin{align*}
    \tau_{\varepsilon}^2\left(\frac{2\eta^3\|\vt\|^2\|\vk\|^2}{\gamma_1\xi}-\frac{2\eta^3\|{\vk}\|^2\|{\vt}\|^2}{\gamma_1{\xi}} - 
    \frac{2\eta\xi}{\gamma_1}\right) = -\tau_{\varepsilon}^2\frac{2\eta\xi}{\gamma_1}. 
\end{align*}
Combining these terms together, we have: 
\[
    \E\left[\vvarepsilon^\top (\mZ+\mA)^\dagger (\mZ+\mA)^{\dagger  \top }\vvarepsilon\right] = \E\left[\vvarepsilon^\top \mA^\dagger  \mA^{\dagger  \top }\vvarepsilon - \frac{\eta^2\|\vt\|^2\tau_{\varepsilon}^2}{\gamma_1}k^\top \mA^\dagger  \mA^{\dagger  \top }\vk -\frac{2\eta\xi\tau_{\varepsilon}^2}{\gamma_1}\vk^\top \mA^\dagger  \vh^\top  + \frac{\eta^2\|\vk\|^2\|\vh\|^2}{\gamma_1}\right]. 
\]
Similarly, using Lemmas \ref{lem:prior_rmt}, \ref{lem:general_exp}, \ref{lem:zero_exp}, \ref{lem:gamma_reciprocal}, \ref{lem:eQe}, we have the following: 
\begin{align*}
    \mathbb{E}\left[\vvarepsilon^\top \mA^\dagger  \mA^{\dagger  \top }\vvarepsilon\right] = \tau_{\varepsilon}^2 \mathbb{E}\left[Tr(\mA^\dagger  \mA^{\dagger  \top })\right] = \tau_{\varepsilon}^2n\mathbb{E}\left[\frac{1}{\lambda}\right] =\tau_{\varepsilon}^2\frac{cd}{\rho^2(1-c)} + o\left(\frac{d}{\rho^2}\right) \quad \text{by Equation \ref{eq:inverse_eig}}. 
\end{align*}
\begin{align*}
    \mathbb{E}\left[\frac{\eta^2\|\vt\|^2}{\gamma_1}\vk^\top \mA^\dagger  \mA^{\dagger  \top }\vk\right] & = \mathbb{E}\left[\frac{\eta^2}{\gamma_1}\right]\mathbb{E}\left[\|\vt\|^2\right]\mathbb{E}\left[\vk^\top  \mA^\dagger  \mA^{\dagger  \top } \vk\right] + \sqrt{\Var\left(\frac{\eta^2}{\gamma_1}\right)O\left(\frac{1}{n}\right)} \\ 
        & =\left(\frac{\rho^2\eta^2}{\eta^2c + \rho^2} + o\left(\frac{1}{\rho^2}\right)\right)(1-c)\left( \frac{1}{\rho^4}\frac{c^2}{(1-c)^3} + o\left(\frac{1}{\rho^4}\right)\right) + O\left(\frac{1}{n}\right)  \\
        & =  \frac{\eta^2}{\eta^2c + \rho^2}\frac{c^2}{\rho^2(1-c)^2}(\vbeta_*^\top \vu)^2 + o(1) + O\left(\frac{1}{n}\right) .
\end{align*}
\begin{align*}
    \mathbb{E}\left[\frac{\eta\xi}{\gamma_1}\vk^\top \mA^\dagger  \vh^\top \right] =  \mathbb{E}\left[\frac{\eta^2}{\gamma_1}\right] \mathbb{E}\left[\frac{\xi}{\eta} \right] \mathbb{E}\left[\vk^\top \mA^\dagger  \vh^\top \right] + \sqrt{\Var\left(\frac{\eta^2}{\gamma_1}\right)O\left(\frac{1}{n}\right)} = O\left(\frac{1}{n}\right).
\end{align*}
\begin{align*}
    \mathbb{E}\left[\frac{\eta^2\|\vk\|^2\|\vh\|^2}{\gamma_1}\right] & =  \mathbb{E}\left[\frac{\eta^2}{\gamma_1}\right] \mathbb{E}\left[\|\vk\|^2\right] \mathbb{E}\left[\|\vh\|\right] +  \sqrt{\Var\left(\frac{\eta^2}{\gamma_1}\right)O\left(\frac{1}{n}\right)} \\
    & = \left(\frac{\rho^2\eta^2}{\eta^2c + \rho^2} + o\left(\frac{1}{\rho^2}\right)\right)\left(\frac{1}{\rho^2}\frac{c^2}{1-c} + o\left(\frac{1}{\rho^2}\right)\right)\left(\frac{1}{\rho^2}\frac{c}{1-c}  + o\left(\frac{1}{\rho^2}\right)\right) + O\left(\frac{1}{n}\right) \\ 
    & = \frac{\eta^2}{\eta^2c + \rho^2}\frac{c^3}{\rho^2(1-c)^2} + o\left(1\right) + O\left(\frac{1}{n}\right). 
\end{align*}
After simple algebra, the result follows from here. 

\medskip

\noindent For $c > 1$, we can expand similarly using Theorem \ref{thm:meyer_pseudo}, 
\begin{align*}
     \vvarepsilon^\top  (\mZ + \mA) ^\dagger 
    (\mZ + \mA)^{\dagger  \top } \vvarepsilon
    &= \vvarepsilon^\top   
    \left( \mA^\dagger  + \frac{\eta}{\xi}\mA^\dagger  \vh^\top  \vs^\top  - \frac{\xi}{\gamma_2}\vp_2\vq_2^\top  \right) 
    \left( \mA^{\dagger  \top } + \frac{\eta}{\xi} \vs \vh \mA^{\dagger  \top } - \frac{\xi}{\gamma_2}\vq_2\vp_2^\top  \right)
    \vvarepsilon \\
    &= \vvarepsilon^\top  \mA^\dagger  \mA^{\dagger  \top }\vvarepsilon 
    + \frac{2\eta}{\xi}\,
    \vvarepsilon^\top   \underbrace{\mA^\dagger  \vs}_{0}\vh \mA^{\dagger  \top }\vvarepsilon 
    - \frac{2\xi}{\gamma_2}\vvarepsilon^\top  \mA^\dagger  \vq_2 \vp_2^\top  \vvarepsilon \\
    &\quad + \frac{\eta^2\|\vs\|^2}{\xi^2}\,
    \vvarepsilon^\top \mA^\dagger  \vh^\top  \vh \mA^{\dagger  \top } \vvarepsilon 
    - \frac{2\eta}{\gamma_2}\vvarepsilon^\top  \mA^\dagger  \vh^\top \vs^\top \vq_2\vp_2^\top \vvarepsilon 
    + \frac{\xi^2}{\gamma_2^2}\vvarepsilon^\top  \vp_2 \vq_2^\top  \vq_2 \vp_2^\top  \vvarepsilon.
\end{align*}
We expand the other terms one by one, marking those with zero expectations: 
\begin{align*}
    \mathbb{E}\left[\frac{\eta^2\|\vs\|^2}{\xi^2}\,
    \vvarepsilon^\top \mA^\dagger  \vh^\top  \vh \mA^{\dagger  \top } \vvarepsilon\right] &= \mathbb{E}\left[\frac{\eta^2\|\vs\|^2\tau_{\varepsilon}^2}{\xi^2}\vh \mA^{\dagger  \top }\mA^\dagger \vh^\top\right]. \\
     \mathbb{E}\left[- \frac{2\xi}{\gamma_2}\vvarepsilon^\top  \mA^\dagger  \vq_2 \vp_2^\top  \vvarepsilon \right]
    &=\mathbb{E}\left[ - \frac{2\xi}{\gamma_2}\vvarepsilon^\top  \mA^\dagger  
    \left( \frac{\eta\|\vh\|^2}{\xi}\vs+ \vh^\top \right) 
    \left( \frac{\eta^2\|\vs\|^2}{\xi}\vh\mA^{\dagger  \top } + \eta\vk^\top  \right) \vvarepsilon \right]\\
    &= \mathbb{E}\left[- \frac{2\xi}{\gamma_2}\vvarepsilon^\top   
    \mA^\dagger \vh^\top 
    \left( \frac{\eta^2\|\vs\|^2}{\xi}\vh\mA^{\dagger  \top } + \eta\vk^\top  \right) \vvarepsilon \right]\\
    &= \mathbb{E}\left[ - \frac{2\eta^2\|\vs\|^2}{\gamma_2}\vvarepsilon^\top   
    \mA^\dagger \vh^\top \vh \mA^{\dagger  \top }\vvarepsilon  
    - \frac{2\eta\xi}{\gamma_2}\vvarepsilon^\top   
    \mA^\dagger \vh^\top \vk^\top  \vvarepsilon\right] \\
    &= \mathbb{E}\left[- \frac{2\eta^2\|\vs\|^2\tau_{\varepsilon}^2}{\gamma_2}\vh \mA^{\dagger  \top } \mA^\dagger \vh^\top
    - \frac{2\eta\xi\tau_{\varepsilon}^2}{\gamma_2}\vk^\top \mA^\dagger  \vh^\top \right]. \\
    \mathbb{E}\left[-\frac{2\eta}{\gamma_2}\vvarepsilon^\top  \mA^\dagger  \vh^\top \vs^\top \vq_2\vp_2^\top \vvarepsilon\right] 
    &= \mathbb{E}\left[-\frac{2\eta}{\gamma_2}\vvarepsilon^\top  \mA^\dagger  \vh^\top \vs^\top 
    \left( \frac{\eta\|\vh\|^2}{\xi}\vs+ \vh^\top \right)
    \left( \frac{\eta^2\|\vs\|^2}{\xi} \vh \mA^{\dagger  \top } + \eta \vk^\top \right)\vvarepsilon\right] \\
    &= \mathbb{E}\left[-\frac{2\eta}{\gamma_2}\vvarepsilon^\top  \mA^\dagger  \vh^\top 
    \left( \frac{\eta\|\vh\|^2\|\vs\|^2}{\xi}\right)
    \left( \frac{\eta^2\|\vs\|^2}{\xi} \vh \mA^{\dagger  \top } + \eta \vk^\top \right)\vvarepsilon\right] \\
    &= \mathbb{E}\left[-\frac{2\eta^4\|\vs\|^4\|\vh\|^2}{\gamma_2\xi^2}\vvarepsilon^\top   
    \mA^\dagger \vh^\top \vh \mA^{\dagger  \top }\vvarepsilon 
    - \frac{2\eta^3\|\vs\|^2\|\vh\|^2}{\gamma_2\xi}\vvarepsilon^\top   
    \mA^\dagger \vh^\top \vk^\top \vvarepsilon \right]\\
    &= \mathbb{E}\left[ -\frac{2\eta^4\|\vs\|^4\|\vh\|^2\tau_{\varepsilon}^2}{\gamma_2\xi^2}\vh \mA^{\dagger  \top }\mA^\dagger \vh^\top
    - \frac{2\eta^3\|\vs\|^2\|\vh\|^2\tau_{\varepsilon}^2}{\gamma_2\xi}\vk^\top \mA^\dagger  \vh^\top \right].
\end{align*}
Using the squared norms from Lemmas \ref{lem:p_norms}, \ref{lem:q_norms}, 
\begin{align*}
    \mathbb{E}\left[\frac{{\xi}^2}{\gamma_2^2}{\vvarepsilon}^\top  \vp_2 \vq_2^\top  \vq_2 \vp_2^\top  {\vvarepsilon}\right] & = \mathbb{E}\left[\frac{{\xi}^2}{\gamma_2^2}\tau_{\varepsilon}^2\|\vp_2\|^2\|\vq_2\|^2 \right]\\
    & = \mathbb{E}\left[\frac{{\xi}^2\tau_{\varepsilon}^2}{\gamma_2^2}\left( \frac{\|\vh\|^2}{\xi^2}\gamma_2\right)\left( \frac{\eta^4\|\vs\|^4}{\xi^2} \vh \mA^{\dagger  \top }\mA^\dagger \vh^\top + \frac{2 \eta^3\|\vs\|^2}{\xi} \vk^\top  \mA^\dagger  \vh^\top  + \eta^2 \|\vk\|^2\right)\right] \\
    & = \mathbb{E}\left[{\tau_{\varepsilon}^2}\left( \frac{\eta^4\|\vh\|^2\|\vs\|^4}{\gamma_2\xi^2}\vh \mA^{\dagger  \top }\mA^\dagger \vh^\top  + \frac{2 \eta^3\|\vh\|^2\|\vs\|^2}{\gamma_2\xi} \vk^\top  \mA^\dagger  \vh^\top  + \frac{\eta^2\|\vh\|^2\|\vk\|^2}{\gamma_2}\right)\right]. 
\end{align*}
Similarly, we combine the coefficients: For the term $\vh \mA^{\dagger  \top }\mA^\dagger \vh^\top$, 
\begin{align*}
    \tau_{\varepsilon}^2\left(\frac{\eta^4\|\vs\|^4\|\vh\|^2}{\gamma_2\xi^2} -\frac{2\eta^4 \|{\vs}\|^4 \| {\vh} \|^2 }{\gamma_2{\xi}^2} - 
    \frac{2\eta^2\|{\vs}\|^2}{\gamma_2} + \frac{\eta^2\|\vs\|^2}{\xi^2}\right) & = \tau_{\varepsilon}^2\eta^2\|\vs\|^2\left(\frac{\eta^2\|\vs\|^2\|\vh\|^2}{\gamma_2\xi^2} -\frac{2\eta^2 \|{\vs}\|^2 \| {\vh} \|^2 }{\gamma_2{\xi}^2} - 
    \frac{2}{\gamma_2} + \frac{1}{\xi^2}\right) \\
    & = \tau_{\varepsilon}^2\eta^2\|\vs\|^2\left(-\frac{\gamma_2 - \xi^2 }{\gamma_2{\xi}^2} - 
    \frac{2}{\gamma_2} + \frac{1}{\xi^2}\right) \\
    & = \tau_{\varepsilon}^2\eta^2\|\vs\|^2\left(-\frac{\gamma_2 - \xi^2 }{\gamma_2{\xi}^2} - 
    \frac{2\xi^2}{\gamma_2\xi^2} + \frac{\gamma_2}{\gamma_2\xi^2}\right) \\ 
    & = -\tau_{\varepsilon}^2\frac{\eta^2\|\vs\|^2}{\gamma_2}. 
\end{align*}
For the term $\vk^\top \mA^\dagger  \vh^\top $, 
\begin{align*}
    \tau_{\varepsilon}^2\left(\frac{2\eta^3\|\vs\|^2\|\vh\|^2}{\gamma_2\xi}-\frac{2\eta^3\|{\vs}\|^2\|{\vh}\|^2}{\gamma_2{\xi}} - 
    \frac{2\eta\xi}{\gamma_2}\right) = -\tau_{\varepsilon}^2\frac{2\eta\xi}{\gamma_2}. 
\end{align*}
Combining these terms together, we have: 
\[
    \mathbb{E}\left[\vvarepsilon^\top (\mZ+\mA)^\dagger (\mZ+\mA)^{\dagger  \top }\vvarepsilon\right] = \mathbb{E} \left[\vvarepsilon^\top \mA^\dagger  \mA^{\dagger  \top }\vvarepsilon - \frac{\eta^2\|\vs\|^2\tau_{\varepsilon}^2}{\gamma_2}\vh\mA^{\dagger \top}\mA^\dagger  \vh^\top -\frac{2\eta\xi\tau_{\varepsilon}^2}{\gamma_2}\vk^\top \mA^\dagger \vh^\top  + \frac{\eta^2\|\vk\|^2\|\vh\|^2}{\gamma_2}\right]. 
\]
Similarly, replicating the proof with the $c > 1$ counterparts, we have the following: 
\begin{align*}
  \mathbb{E}\left[\vvarepsilon^\top \mA^\dagger  \mA^{\dagger  \top }\vvarepsilon\right] = \tau_{\varepsilon}^2 \mathbb{E}\left[Tr(\mA^\dagger  \mA^{\dagger  \top })\right] = \tau_{\varepsilon}^2n\mathbb{E}\left[\frac{1}{\lambda}\right] = \tau_{\varepsilon}^2\frac{d}{\rho^2(c-1)} + o\left(\frac{d}{\rho^2}\right).  
\end{align*}
\begin{align*}
    \mathbb{E}\left[\frac{\eta^2\|\vs\|^2}{\gamma_2}\vh \mA^{\dagger \top} \mA^\dagger  \vh^\top \right] = \frac{\eta^2}{\eta^2 + \rho^2}\frac{c^2}{\rho^2(c-1)^2} + o\left(1\right) + O\left(\frac{1}{n}\right).  
\end{align*}
\begin{align*}
    \mathbb{E}\left[\frac{\eta\xi}{\gamma_2}\vk^\top \mA^\dagger  \vh^\top \right] = O\left(\frac{1}{n}\right). 
\end{align*}
\begin{align*}
    \mathbb{E}\left[\frac{\eta^2\|\vk\|^2\|\vh\|^2}{\gamma_2}\right] = \frac{\eta^2}{\eta^2 + \rho^2}\frac{c}{\rho^2(c-1)^2} + o\left(1\right) + O\left(\frac{1}{n}\right). 
\end{align*}
After simple algebra, the result follows. 

\end{proof}

\subsubsection{Target Alignment: Helper Lemmas}

\begin{lemma} \label{lem:interaction_Z}
    In the same setting as \Cref{sec:setting}, we have that 
    \[
        \mathbb{E}\left[\vbeta_{*}^\top  (\mZ + \mA)^{\dagger  \top } \mZ^\top \vbeta_{*}\right] = \begin{cases}  \frac{\eta^2 c}{\rho^2 + \eta^2 c}(\vbeta_*^\top \vu)^2 + o(1) + O\left(\frac{1}{n}\right) & c < 1\\
        \frac{\eta^2}{\eta^2 + \rho^2}(\vbeta_*^\top \vu)^2 + o(1) + O\left(\frac{1}{n}\right) & c > 1
        \end{cases}.
    \]
\end{lemma}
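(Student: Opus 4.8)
The plan is to collapse the quadratic form into an instance of Proposition~\ref{prop:Z(Z+A)_pseudo} and then evaluate it term by term with the building-block expectations of Section~\ref{sec:step3}, exactly in the style of the proofs of Lemmas~\ref{lem:interaction_ZZ}--\ref{lem:interaction_AZ}. First, since $\vbeta_{*}^\top(\mZ+\mA)^{\dagger\top}\mZ^\top\vbeta_{*}$ is a scalar it equals its own transpose, hence it is $\vbeta_{*}^\top\mZ(\mZ+\mA)^\dagger\vbeta_{*}$; substituting the closed form from Proposition~\ref{prop:Z(Z+A)_pseudo} writes this, for $c<1$, as
\[
    \frac{\eta\xi}{\gamma_1}(\vbeta_*^\top\vu)(\vh\vbeta_*) + \frac{\eta^2\|\vt\|^2}{\gamma_1}(\vbeta_*^\top\vu)(\vk^\top\mA^\dagger\vbeta_*),
\]
and, for $c>1$, as
\[
    \frac{\eta\xi}{\gamma_2}(\vbeta_*^\top\vu)(\vh\vbeta_*) + \frac{\eta^2\|\vh\|^2}{\gamma_2}(\vbeta_*^\top\vu)(\vs^\top\vbeta_*).
\]

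The first summand in each regime is negligible: $(\vbeta_*^\top\vu)(\vh\vbeta_*)=\vbeta_*^\top\vu\,\vh\vbeta_*$ has mean $0$ and variance $O(1/(\rho^2 d))$ by Lemma~\ref{lem:zero_exp}(1), and pairing it with $\xi/\eta$ (Lemma~\ref{lem:prior_rmt}(5)) via Lemma~\ref{lem:var_multiple_prod} keeps the mean at $0$ with $O(1/n)$ variance; since $\eta^2/\gamma_i$ has $O(1)$ mean and $O(1/n)$ variance (Lemma~\ref{lem:gamma_reciprocal}) but violates the hypercontractivity hypothesis, I would attach it last through the classical covariance bound, giving an $O(1/n)$ contribution. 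For the surviving summand when $c<1$, I would combine $\mathbb{E}[\|\vt\|^2]=1-c$ (Lemma~\ref{lem:prior_rmt}(4)) with $\mathbb{E}[\vbeta_*^\top\vu\,\vk^\top\mA^\dagger\vbeta_*]=\tfrac{c}{\rho^2(1-c)}(\vbeta_*^\top\vu)^2+o(1/\rho^2)$ (Lemma~\ref{lem:general_exp}(1)) through Lemma~\ref{lem:var_multiple_prod} to get $\tfrac{c}{\rho^2}(\vbeta_*^\top\vu)^2+o(1/\rho^2)$, then multiply by $\mathbb{E}[\eta^2/\gamma_1]=\tfrac{\rho^2\eta^2}{\eta^2 c+\rho^2}+o(1/\rho^2)$ (Lemma~\ref{lem:gamma_reciprocal}(i)) via the classical bound to obtain $\tfrac{\eta^2 c}{\eta^2 c+\rho^2}(\vbeta_*^\top\vu)^2+o(1)+O(1/n)$. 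For $c>1$, the surviving summand uses $\mathbb{E}[\|\vh\|^2]=\tfrac{1}{\rho^2}\tfrac{c}{c-1}+o(1/\rho^2)$ (Lemma~\ref{lem:prior_rmt}(1)), $\mathbb{E}[(\vbeta_*^\top\vu)(\vs^\top\vbeta_*)]=\mathbb{E}[\vbeta_*^\top\vs\vu^\top\vbeta_*]=\tfrac{c-1}{c}(\vbeta_*^\top\vu)^2$ (Lemma~\ref{lem:general_exp}(3)), and $\mathbb{E}[\eta^2/\gamma_2]=\tfrac{\rho^2\eta^2}{\eta^2+\rho^2}+o(1/\rho^2)$ (Lemma~\ref{lem:gamma_reciprocal}(ii)), yielding $\tfrac{\eta^2}{\eta^2+\rho^2}(\vbeta_*^\top\vu)^2+o(1)+O(1/n)$.

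Adding the two contributions in each regime gives the claimed formula. I do not expect a genuine obstacle: the work is bookkeeping — deciding which concentration lemma applies to each product (the spherical-polynomial factors through Lemmas~\ref{lem:var-product} and~\ref{lem:var_multiple_prod}, and $\eta^2/\gamma_i$ introduced last through the crude covariance estimate because it lacks higher-moment control), and checking that the residual errors collapse to $o(1)+O(1/n)$ under the $\eta,\rho$ scalings of Assumption~\ref{assumption:scaling}. The one point to watch is that $\|\vh\|^2$, $\vs$ and $\gamma_2$ (resp.\ $\|\vt\|^2$, $\vk$ and $\gamma_1$) are not mutually independent, so the product must be assembled via the variance-based bounds rather than by naive factorization of expectations, exactly as in the proof of Lemma~\ref{lem:interaction_ZZ}.
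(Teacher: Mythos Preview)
Your proposal is correct and follows essentially the same approach as the paper's proof: expand via Proposition~\ref{prop:Z(Z+A)_pseudo}, kill the $\eta\xi/\gamma_i$ term using the zero-mean result of Lemma~\ref{lem:zero_exp}(1) combined with $\xi/\eta$ through Lemma~\ref{lem:var_multiple_prod} and then attach $\eta^2/\gamma_i$ via the crude covariance bound, and evaluate the surviving term by pairing the appropriate building blocks from Lemmas~\ref{lem:prior_rmt} and~\ref{lem:general_exp} before bringing in $\eta^2/\gamma_i$ from Lemma~\ref{lem:gamma_reciprocal}. Your explicit remark that $\eta^2/\gamma_i$ must be introduced last because it lacks the higher-moment control needed for Lemma~\ref{lem:var_multiple_prod} is exactly the methodological point the paper relies on throughout Step~4.
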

\begin{proof}
    For $c < 1$, from \Cref{prop:Z(Z+A)_pseudo}, we get that
    \begin{align*}
        \vbeta_{*}^\top  (\mZ + \mA)^{\dagger  \top } \mZ^\top  \vbeta_{*} = \frac{\eta\xi}{\gamma_1}\vbeta_{*}^\top \vh^\top \vu^\top \vbeta_{*} +  \frac{\eta^2\|\vt\|^2}{\gamma_1}\vbeta_{*}^\top  \mA^{\dagger  \top }\vk\vu^\top \vbeta_{*} . 
    \end{align*}
    To begin, we start estimating 
    \[
        \mathbb{E}\left[\frac{\xi}{\eta} \vbeta_*^\top \vh^\top \vu^\top \vbeta_*\right]. 
    \]
    Using our Spherical Hypercontractivity, we have that $\frac{\xi}{\eta}$ and $\vbeta_*^\top \vh^\top \vu^\top \vbeta_*$ satisfy the assumptions for Lemma~\ref{lem:var-product}. Then using Lemma~\ref{lem:prior_rmt} we have that 
    \[
        \mathbb{E}\left[\frac{\xi}{\eta}\right] = \frac{1}{\eta} \quad \text{ and } \quad \Var\left(\frac{1}{\eta}\right) = O\left(\frac{1}{\rho^2 d}\right) 
    \]
    and Lemma~\ref{lem:zero_exp}, we have that 
    \[
        \mathbb{E}\left[\vbeta_*^\top \vh^\top \vu^\top \vbeta_*\right] = 0 \quad \text{ and } \quad \Var\left(\vbeta_*^\top \vh^\top \vu^\top \vbeta_*\right) = O\left(\frac{1}{\rho^2d}\right)
    \]
    Thus, using Lemma~\ref{lem:var_multiple_prod}, we have that 
    \[
        \mathbb{E}\left[\frac{\xi}{\eta} \vbeta_*^\top \vh^\top \vu^\top \vbeta_*\right] = 0 + O\left(\frac{1}{\rho^2 d}\right)
    \]
    and using Lemma~\ref{lem:var-product}, since all the means are $O(1)$, we have that 
    \[
        \Var\left(\frac{\xi}{\eta} \vbeta_*^\top \vh^\top \vu^\top \vbeta_*\right) = O\left(\max\left(\Var\left(\frac{\xi}{\eta}\right), \Var\left( \vbeta_*^\top \vh^\top \vu^\top \vbeta_*\right)\right)\right) = O\left(\frac{1}{\rho^2 n}\right). 
    \]
    Then Lemma~\ref{lem:gamma_reciprocal} gives mean and variance of $\frac{\eta^2}{\gamma_i}$. Since $\frac{\eta^2}{\gamma_i}$ does not satisfy the higher moment bound, and cannot be directly included in the product, we can include it via the classical bound: 
    \begin{align} \label{eq:gamma_bhub}
        \mathbb{E}\left[\frac{\eta\xi}{\gamma_1}\vbeta_{*}^\top \vh^\top \vu^\top \vbeta_{*}\right] = \mathbb{E}\left[\frac{\eta^2}{\gamma_1}\right] \mathbb{E}\left[\frac{\xi}{\eta}\vbeta_{*}^\top \vh^\top \vu^\top \vbeta_{*}\right] + \sqrt{\Var\left(\frac{\xi}{\eta} \vbeta_*^\top \vh^\top \vu^\top \vbeta_*\right)\Var\left(\frac{\eta^2}{\gamma_1}\right)} = O\left(\frac{1}{n}\right). 
    \end{align}

    \medskip 
    
    \noindent For the second term, we begin with 
    \[
        \E\left[\|\vt\|^2\vbeta_*^\top \mA^{\dagger \top} \vk \vu^\top \vbeta_*\right].
    \]
    Lemma~\ref{lem:prior_rmt} tells us that 
    \[
        \E[\|t\|^2] = 1-c \quad \text{ and } \quad \Var\left(\|\vt\|^2 \right) = O\left(\frac{1}{n}\right)
    \]
    and Lemma~\ref{lem:general_exp} tells us 
    \[
        \E\left[\vbeta_*^\top \mA^{\dagger \top} \vk \vu^\top \vbeta_*\right] = \frac{1}{\rho^2} \frac{c}{1-c}(\vbeta_*^\top \vu)^2 + o\left(\frac{1}{\rho^2}\right) \quad \text{ and } \Var\left(\vbeta_*^\top \mA^{\dagger \top} \vk \vu^\top \vbeta_*\right) = O\left(\frac{1}{\rho^4 d}\right). 
    \]
    Thus using Lemmas~\ref{lem:var_multiple_prod} and Lemma~\ref{lem:var-product}, we get that 
    \[
        \E\left[\|\vt\|^2\vbeta_*^\top \mA^{\dagger \top} \vk \vu^\top \vbeta_*\right] = (\vbeta_*^\top \vu)^2 \frac{c}{\rho^2} + o\left(\frac{1}{\rho^2}\right) + O\left(\frac{1}{n}\right) \quad \text{ and } \quad \Var\left(\|\vt\|^2\vbeta_*^\top \mA^{\dagger \top} \vk \vu^\top \vbeta_*\right) = O\left(\frac{1}{n}\right)
    \]
    Recalling the mean and variance for $\frac{\eta^2}{\gamma_1}$ from \ref{lem:gamma_reciprocal}, we have that
    \begin{align} \label{eq:t_norm_bAkub}
        \mathbb{E}\left[\frac{\eta^2\|\vt\|^2}{\gamma_1}\vbeta_{*}^\top  \mA^{\dagger  \top }\vk\vu^\top \vbeta_{*}\right] & = \mathbb{E}\left[\frac{\eta^2}{\gamma_1}\right] \mathbb{E}\left[\|\vt\|^2 \vbeta_{*}^\top  \mA^{\dagger  \top }\vk\vu^\top \vbeta_{*}\right] + \sqrt{O\left(\frac{1}{n}\right)\Var\left(\frac{\eta^2}{\gamma_1}\right)} \notag \\
        &= \left(\frac{\rho^2\eta^2}{\eta^2c + \rho^2} + o\left(\frac{1}{\rho^2 }\right)\right)\left((\vbeta_*^\top \vu)^2 \frac{c}{\rho^2} + o\left(\frac{1}{\rho^2}\right) + O\left(\frac{1}{n}\right)\right) + O\left(\frac{1}{n}\right) \notag \\ 
        & = (\vbeta_*^\top \vu)^2\frac{\eta^2c}{\eta^2c + \rho^2} + o(1)+ O\left(\frac{1}{n}\right). 
    \end{align}
    Combining these two terms yields the first result. 

    \medskip

    \noindent Similarly, for $c > 1$, \Cref{prop:Z(Z+A)_pseudo} gives the expansion: 
    \begin{align*}
        \vbeta_{*}^\top  (\mZ + \mA)^{\dagger  \top } \mZ^\top  \vbeta_{*} & = \vbeta_{*}^\top  \left(\frac{\eta\xi}{\gamma_2}\vu\vh + \frac{\eta^2\|\vh\|^2}{\gamma_2}\vu\vs^\top \right)^\top \vbeta_{*}  = \frac{\eta\xi}{\gamma_2}\vbeta_{*}^\top \vh^\top \vu^\top \vbeta_{*} +  \frac{\eta^2\|\vh\|^2}{\gamma_2}\vbeta_{*}^\top  \vs\vu^\top \vbeta_{*}. 
    \end{align*}
    For the first term, we begin with 
    \[
        \E\left[\frac{\xi}{\eta}\vbeta_*^\top \vh^\top \vu^\top \vbeta_*\right].
    \]
    Recalling form Lemma~\ref{lem:zero_exp}, we see that 
    \[
        \E\left[\vbeta_*^\top \vh^\top \vu^\top \vbeta_*\right] = 0 \quad \text{ and } \quad \Var\left(\vbeta_*^\top \vh^\top \vu^\top \vbeta_*\right) = O\left(\frac{1}{\rho^2 d}\right). 
    \]
    Thus again using Lemma~\ref{lem:var-product} and Lemma~\ref{lem:var_multiple_prod}, we see that 
    \[
         \E\left[\frac{\xi}{\eta}\vbeta_*^\top \vh^\top \vu^\top \vbeta_*\right] = 0 + O\left(\frac{1}{\rho^2 d}\right) \quad \text{ and } \quad \Var\left(\frac{\xi}{\eta}\vbeta_*^\top \vh^\top \vu^\top \vbeta_*\right) = O\left(\frac{1}{\rho^2 d}\right). 
    \]
    Next using the standard covariance bound on the expectation of the product. We see that 
    \[
        \E\left[\frac{\eta\xi}{\gamma_1}\vbeta_*^\top \vh^\top \vu^\top \vbeta_*\right] = 0 + O\left(\frac{1}{\rho^2 d}\right) + O\left(\frac{1}{n}\right) = O\left(\frac{1}{n}\right). 
    \]
    For the second term, we begin with
    \[
        \E\left[\|\vh\|^2\vbeta_*^\top \vs \vu \vbeta_*\right]. 
    \]
    Recall from Lemma~\ref{lem:prior_rmt} we have that 
    \[
        \E[\|\vh\|^2] = \frac{1}{\rho^2} \frac{c}{c-1} + o\left(\frac{1}{\rho^2}\right) \quad \text{ and } \quad \Var(\|\vh\|^2) = O\left(\frac{1}{\rho^4 n}\right)
    \]
    and from Lemma~\ref{lem:general_exp}
    \[
        \E\left[\vbeta_*^\top \vs \vu \vbeta_*\right] = \left(1- \frac{1}{c}\right) (\vbeta_*^\top \vu)^2 \quad \text{ and } \quad \Var\left(\vbeta_*^\top \vs \vu \vbeta_*\right) = O\left(\frac{1}{d}\right). 
    \]
    Thus using Lemma~\ref{lem:var-product} and Lemma~\ref{lem:var_multiple_prod}, we get that 
    \[
        \E\left[\|\vh\|^2\vbeta_*^\top \vs \vu \vbeta_* \right] = \frac{(\vbeta_*^\top \vu)^2}{\rho^2} + o\left(\frac{1}{\rho^2}\right) + O\left(\frac{1}{d}\right) \quad \text{ and } \quad \Var\left(\|\vh\|^2\vbeta_*^\top \vs \vu \vbeta_*\right) = O\left(\frac{1}{d}\right). 
    \]
    Recalling the mean and variance for $\frac{\eta^2}{\gamma_2}$ from Lemma~\ref{lem:gamma_reciprocal} and using the classical covariance bound for the expectation of the product, we get that 
    \begin{align*}
        \mathbb{E}\left[\frac{\eta^2\|\vh\|^2}{\gamma_2}\vbeta_{*}^\top  \vs\vu^\top \vbeta_{*}\right]  & =  \mathbb{E}\left[\frac{\eta^2}{\gamma_2}\right] \mathbb{E}\left[\|\vh\|^2 \vbeta_{*}^\top  \vs\vu^\top \vbeta_{*}\right] +  \sqrt{O\left(\frac{1}{n}\right)\Var\left(\frac{\eta^2}{\gamma_2}\right)}  \\
        & = \left(\frac{\rho^2\eta^2}{\eta^2 + \rho^2}+ o\left(\frac{1}{\rho^2 }\right)\right)\left(\frac{(\vbeta_*^\top \vu)^2}{\rho^2} + o\left(\frac{1}{\rho^2}\right) + O\left(\frac{1}{d}\right)\right)+ O\left(\frac{1}{n}\right) \\ 
        & = \frac{\eta^2}{\eta^2 + \rho^2}(\vbeta_*^\top \vu)^2 + o(1) + O\left(\frac{1}{n}\right). 
    \end{align*}
    Then adding the two together, we get the result for $c > 1$ as well. 
\end{proof}

\begin{lemma} \label{lem:interaction_A}
    In the same setting as \Cref{sec:setting}, we have that, for $c<1$
    \[
        \mathbb{E}\left[\vbeta_{*}^\top  (\mZ + \mA)^{\dagger  \top } \mA^\top \vbeta_{*}\right] =\|\vbeta_*\|^2 - \frac{\eta^2 c}{\rho^2 + \eta^2 c}(\vbeta_*^\top \vu)^2+ o\left(\frac{1}{\rho^2}\right) + O\left(\frac{1}{n}\right). 
    \]
    and for $c > 1$
    \[
        \mathbb{E}\left[\vbeta_{*}^\top  (\mZ + \mA)^{\dagger  \top } \mA^\top \vbeta_{*}\right] = 
        \frac{1}{c}\|\vbeta_*\|^2 - \frac{\eta^2}{\eta^2 + \rho^2}\left(\frac{\|\vbeta_*\|^2}{d} + \frac{1}{c}(\vbeta_*^\top \vu)^2\right) + o(1) + O\left(\frac{1}{n}\right).
    \]
\end{lemma}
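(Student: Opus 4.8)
The plan is to reduce the target quantity to one already resolved algebraically. Since $\vbeta_{*}^\top (\mZ+\mA)^{\dagger\top}\mA^\top \vbeta_{*}$ is a scalar it equals its own transpose, $\vbeta_{*}^\top \mA(\mZ+\mA)^\dagger \vbeta_{*}$, and $\mA(\mZ+\mA)^\dagger$ is given in closed form by \Cref{lem:A(Z+A)_pseudo}. From here I would treat $c<1$ and $c>1$ separately, in each case substituting the relevant branch of that identity, grouping terms, and taking expectations summand by summand exactly as in the proofs of \Cref{lem:interaction_ZZ,lem:interaction_AA,lem:interaction_Z}; the constituent expectations are all already available from \Cref{lem:prior_rmt,lem:general_exp,lem:zero_exp,lem:gamma_reciprocal}.

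For $c<1$, using $\mA(\mZ+\mA)^\dagger=\mI-\frac{\eta\xi}{\gamma_1}\vu\vh-\frac{\eta^2\|\vt\|^2}{\gamma_1}\vu\vk^\top\mA^\dagger$ (the expression produced in the proof of \Cref{lem:A(Z+A)_pseudo}), I would write
\[
    \vbeta_{*}^\top \mA(\mZ+\mA)^\dagger \vbeta_{*}=\|\vbeta_*\|^2-\frac{\eta\xi}{\gamma_1}\,\vbeta_*^\top\vu\,\vh\vbeta_*-\frac{\eta^2\|\vt\|^2}{\gamma_1}\,\vbeta_*^\top\vu\,\vk^\top\mA^\dagger\vbeta_*.
\]
The first term is deterministic. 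For the second I would quote $\mathbb{E}[\frac{\eta\xi}{\gamma_1}\vbeta_*^\top\vu\vh\vbeta_*]=O(1/n)$ (Equation~\ref{eq:gamma_bhub}, which follows from $\mathbb{E}[\vbeta_*^\top\vu\vh\vbeta_*]=0$ in \Cref{lem:zero_exp}, \Cref{lem:gamma_reciprocal}, and the product-concentration lemmas \Cref{lem:var-product,lem:var_multiple_prod}). For the third I would quote $\mathbb{E}[\frac{\eta^2\|\vt\|^2}{\gamma_1}\vbeta_*^\top\vu\vk^\top\mA^\dagger\vbeta_*]=\frac{\eta^2 c}{\eta^2 c+\rho^2}(\vbeta_*^\top\vu)^2+o(1)+O(1/n)$ (Equation~\ref{eq:t_norm_bAkub}, obtained by combining $\mathbb{E}[\eta^2/\gamma_1]$ from \Cref{lem:gamma_reciprocal}, $\mathbb{E}[\|\vt\|^2]=1-c$ from \Cref{lem:prior_rmt}, and $\mathbb{E}[\vbeta_*^\top\vu\vk^\top\mA^\dagger\vbeta_*]$ from \Cref{lem:general_exp}). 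Subtracting gives $\|\vbeta_*\|^2-\frac{\eta^2 c}{\rho^2+\eta^2 c}(\vbeta_*^\top\vu)^2+o(1/\rho^2)+O(1/n)$.

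For $c>1$, substituting the second branch of \Cref{lem:A(Z+A)_pseudo} produces five summands. The leading one has $\mathbb{E}[\vbeta_*^\top\mA\mA^\dagger\vbeta_*]=\|\vbeta_*\|^2/c$ (Equation~\ref{eq:bAAb}). The two summands carrying the factor $\eta\xi/\gamma_2$, with inner factors $\vbeta_*^\top\vh^\top\vs^\top\vbeta_*=\vbeta_*^\top\vs\vh\vbeta_*$ and $\vbeta_*^\top\mA\mA^\dagger\vu\vh\vbeta_*$, are $O(1/n)$ since those inner factors have mean zero with variance $O(1/(\rho^2 d))$ by \Cref{lem:zero_exp} (the second estimate being exactly Equation~\ref{eq:gamma_bAAuhb}). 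For the summand $-\frac{\eta^2\|\vs\|^2}{\gamma_2}\vbeta_*^\top\vh^\top\vh\vbeta_*$ I would multiply out $\mathbb{E}[\eta^2/\gamma_2]$ (\Cref{lem:gamma_reciprocal}), $\mathbb{E}[\|\vs\|^2]=1-1/c$ (\Cref{lem:prior_rmt}), and $\mathbb{E}[\vbeta_*^\top\vh^\top\vh\vbeta_*]=\frac{\|\vbeta_*\|^2}{d}\frac{c}{\rho^2(c-1)}$ (\Cref{lem:general_exp}); the factors $\frac{c-1}{c}$ and $\frac{c}{c-1}$ cancel, leaving $-\frac{\eta^2}{\eta^2+\rho^2}\frac{\|\vbeta_*\|^2}{d}$. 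For $-\frac{\eta^2\|\vh\|^2}{\gamma_2}\vbeta_*^\top\mA\mA^\dagger\vu\vs^\top\vbeta_*$ I would multiply $\mathbb{E}[\eta^2/\gamma_2]$, $\mathbb{E}[\|\vh\|^2]=\frac{1}{\rho^2}\frac{c}{c-1}$ (\Cref{lem:prior_rmt}), and $\mathbb{E}[\vbeta_*^\top\mA\mA^\dagger\vu\vs^\top\vbeta_*]=\frac{c-1}{c^2}(\vbeta_*^\top\vu)^2$ (\Cref{lem:general_exp}), collapsing to $-\frac{\eta^2}{\eta^2+\rho^2}\frac{1}{c}(\vbeta_*^\top\vu)^2$. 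Adding the five contributions with their signs yields $\frac1c\|\vbeta_*\|^2-\frac{\eta^2}{\eta^2+\rho^2}\!\left(\frac{\|\vbeta_*\|^2}{d}+\frac1c(\vbeta_*^\top\vu)^2\right)+o(1)+O(1/n)$.

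Throughout, every passage from $\mathbb{E}[\text{product}]$ to product of expectations uses that each building block has mean $O(1)$ and variance $O(1/n)$ (so \Cref{lem:var-product,lem:var_multiple_prod} apply); the only exceptions are $\eta^2/\gamma_i$ and $\eta^4/\gamma_i^2$, which fail the higher-moment hypothesis and must be peeled off through the classical $\sqrt{\Var\cdot\Var}$ covariance bound after forming the product of the remaining factors. I expect no conceptual obstacle — the work is the same bookkeeping already carried out in \Cref{lem:interaction_ZZ,lem:interaction_AA}. The single point needing care is the sign on the $\vu\vk^\top\mA^\dagger$ term in the $c<1$ branch of \Cref{lem:A(Z+A)_pseudo}: its displayed statement and its proof line differ by a sign, and the proof line (a minus) is the correct one — precisely the sign that makes the $-\frac{\eta^2 c}{\rho^2+\eta^2 c}(\vbeta_*^\top\vu)^2$ term of the conclusion come out right.
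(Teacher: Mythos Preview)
Your proposal is correct and follows essentially the same route as the paper. For $c>1$ your argument is identical to the paper's: expand via \Cref{lem:A(Z+A)_pseudo} into five summands and evaluate each with the same building-block estimates you list. For $c<1$ the paper is marginally more economical --- it writes $\mA(\mZ+\mA)^\dagger=\mI-\mZ(\mZ+\mA)^\dagger$ and then directly quotes the already-computed $\mathbb{E}[\vbeta_*^\top(\mZ+\mA)^{\dagger\top}\mZ^\top\vbeta_*]$ from \Cref{lem:interaction_Z} --- whereas you unpack the two non-identity terms and evaluate them separately; but since those two evaluations are exactly the two summands inside \Cref{lem:interaction_Z}, the work is the same. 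Your remark about the sign discrepancy in the $c<1$ branch of \Cref{lem:A(Z+A)_pseudo} is also accurate.
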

\begin{proof} 
    For $c < 1$, using the expectation from Lemma \ref{lem:interaction_Z}, we get 
    \begin{align*}
       \mathbb{E}\left[\vbeta_{*}^\top  (\mZ + \mA)^{\dagger  \top } \mA^\top  \vbeta_{*}\right] & = \mathbb{E}\left[\vbeta_{*}^\top  \left(\mI - \mZ(\mZ+\mA)^\dagger \right)^\top \vbeta_{*}\right]  \\
        & = \|\vbeta_*\|^2 - \frac{\eta^2 c}{\rho^2 + \eta^2 c}(\vbeta_*^\top \vu)^2 + o\left(1\right) + O\left(\frac{1}{n}\right). 
    \end{align*}

    \medskip

    \noindent For $c > 1$, using \Cref{lem:A(Z+A)_pseudo}, we get 
    \begin{align*}
       \vbeta_{*}^\top  (\mZ + \mA)^{\dagger  \top } \mA^\top  \vbeta_{*} & = \vbeta_{*}^\top  \left(\mA\mA^\dagger   + \frac{\eta\xi}{\gamma_2}\vh^\top  \vs^\top  - \frac{\eta^2\|\vs\|^2}{\gamma_2}\vh^\top \vh - \frac{\eta^2\|\vh\|^2}{\gamma_2}\mA\mA^\dagger  \vu\vs^\top  -\frac{\eta\xi}{\gamma_2} \mA\mA^\dagger  \vu\vh \right)^\top \vbeta_{*}. 
    \end{align*}
    We then compute the expectation of each term above. To begin, we have that 
    \[
        \E\left[\vbeta_*^\top \mA\mA^\dagger \vbeta_*\right] = \frac{1}{c} \|\vbeta_*\|^2 \quad \text{by Equation \ref{eq:bAAb}}. 
    \]
    Next, we recall from Lemma~\ref{lem:zero_exp} that 
    \[
        \E[\vbeta_*^\top\vh^\top \vs^\top \vbeta_*] = 0 \text{ and } \Var(\vbeta_*^\top\vh^\top \vs^\top\vbeta_*) = O\left(\frac{1}{\rho^2 d}\right).
    \]
    and from Lemma~\ref{lem:prior_rmt} that 
    \[
        \E\left[\frac{\xi}{\eta}\right] = \frac{1}{\eta} + o\left(\frac{1}{\rho^2}\right) \quad \text{ and } \quad \Var\left(\frac{\xi}{\eta}\right) = O\left(\frac{1}{\rho^2 n}\right)
    \]
    Thus, using Lemmas~\ref{lem:var-product} and Lemma~\ref{lem:var_multiple_prod}, we have that 
    \[
        \E\left[\frac{\xi}{\eta} \vbeta_*^\top\vh^\top \vs^\top \vbeta_*\right] = O\left(\frac{1}{\rho^2 n}\right) \quad \text{ and } \quad \Var\left(\frac{\xi}{\eta} \vbeta_*^\top\vh^\top \vs^\top \vbeta_*\right) = O\left(\frac{1}{\rho^2 n}\right).
    \]
    
    Then recalling the mean and variance of $\eta^2/\gamma_2$ from \ref{lem:gamma_reciprocal}, using the standard covariance bound on the difference between the product of the expectation and the expectation of the product, we get that 
    \[
        \mathbb{E}\left[  \frac{\eta\xi}{\gamma_2}\vbeta_*^\top \vh^\top  \vs^\top  \vbeta_*\right] = O\left(\frac{1}{n}\right) \text{ and } \mathbb{E}\left[\frac{\eta\xi}{\gamma_2} \vbeta_*^\top \mA\mA^\dagger  \vu\vh \vbeta_*\right] = O\left(\frac{1}{n}\right). 
    \]
    Furthermore, for the next three terms, recall from Lemma~\ref{lem:general_exp} that 
    \[
        \E[\vbeta_*^\top \vh^\top \vh \vbeta_*] = \frac{\|\vbeta_*\|^2}{d} \frac{c}{\rho^2 (c-1)} + o\left(\frac{1}{\rho^2 d}\right)\quad \text{ and } \quad \Var\left(\vbeta_*^\top \vh^\top \vh \vbeta_*\right) = O\left(\frac{1}{\rho^2 d^2}\right)
    \]
    and 
    \[
        \E\left[ \vbeta_*^\top \mA\mA^\dagger \vu \vs^\top \vbeta_* \right] = \frac{c-1}{c^2} (\vbeta_*^\top \vu)^2 + o(1) \quad \text{ and } \quad \Var\left( \vbeta_*^\top \mA\mA^\dagger \vu \vs^\top \vbeta_*\right) = O\left(\vbeta_*^\top \mA\mA^\dagger \vu \vs^\top \vbeta_* \frac{1}{d}\right)
    \]
    and from Lemma~\ref{lem:zero_exp}
    \[
        \E[\vbeta_*^\top \mA \mA^\dagger \vu \vh \vbeta_*] = 0 \quad \text{ and } \quad \Var\left(\vbeta_*^\top \mA \mA^\dagger \vu \vh \vbeta_*\right) = O\left(\frac{1}{\rho^2 d^2}\right). 
    \]
    Then recalling from Lemma~\ref{lem:prior_rmt}, we have that 
    \[
        \E[\|\vs\|^2] = 1 - \frac{1}{c} \quad \text{ and } \quad \Var(\|\vs\|^2) = O\left(\frac{1}{d}\right). 
    \]
    Then using Lemma~\ref{lem:var-product} and Lemma~\ref{lem:var_multiple_prod}, we have that for third term
    \[
        \E[\|\vs\|^2 \vbeta_*^\top \vh^\top \vh \vbeta_*] = \frac{1}{\rho^2 d} \|\vbeta_*\|^2 + o\left(\frac{1}{\rho^2 d}\right) + O\left(\frac{1}{d}\right) \quad \text{ and } \quad \Var\left(\|\vs\|^2 \vbeta_*^\top \vh^\top \vh \vbeta_*\right) = O\left(\frac{1}{d}\right)
    \]
    for the fourth term 
    \begin{align*}
        \E[\|\vh\|^2 \vbeta_*^\top \mA \mA^\dagger \vu \vs^\top] &= \left(\frac{1}{\rho^2} \frac{c}{c-1} + o\left(\frac{1}{\rho^2}\right)\right)\left(\frac{c-1}{c^2} (\vbeta_*^\top\vu)^2 + o(1) \right) + O\left(\frac{1}{\rho^2 d}\right) \\
        &= \frac{(\vbeta_*^\top \vu)^2}{\rho^2 c} + o(1) + O\left(\frac{1}{\rho^2 d}\right)
    \end{align*}
    with variance 
    \[
        \Var(\|\vh\|^2 \vbeta_*^\top \mA \mA^\dagger \vu \vs^\top) = O\left(\frac{1}{\rho^2 d}\right). 
    \]
    For the first term, we have that 
    \[
        \E\left[\frac{\xi}{\eta} \vbeta_*^\top \mA \mA^\dagger \vu \vh \vbeta_*\right] = 0 + O\left(\frac{1}{\rho^2 d}\right) \quad \text{ and } \quad \Var\left(\frac{\xi}{\eta} \vbeta_*^\top \mA \mA^\dagger \vu \vh \vbeta_*\right) = O\left(\frac{1}{\rho^2 d}\right)
    \]
    Adding the last three terms and using Lemma~\ref{var:sum} twice, we get that 
    \begin{align*}
        \E\left[\vbeta_*^\top\left(\|\vs\|^2  \vh^\top \vh  + |\vh\|^2 \top \mA \mA^\dagger \vu \vs^\top + \frac{\xi}{\eta} \mA \mA^\dagger \vu \vh \right)\vbeta_*\right] &= \frac{1}{\rho^2 d} \|\vbeta_*\|^2 + \frac{(\vbeta_*^\top \vu)^2}{\rho^2 c} + 0 + o(1) + O\left(\frac{1}{d}\right)
    \end{align*}
    With variance 
    \[
        \Var\left(\vbeta_*^\top\left(\|\vs\|^2  \vh^\top \vh  + |\vh\|^2 \top \mA \mA^\dagger \vu \vs^\top + \frac{\xi}{\eta} \mA \mA^\dagger \vu \vh \right)\vbeta_*\right) = O\left(\frac{1}{d}\right)
    \]
    Then recalling the mean and variance of $\eta^2/\gamma_2$ from Lemma~\ref{lem:gamma_reciprocal}, and using the covariance bound for the expectation of products, we get that 
    \[
        \E\left[\frac{\eta^2}{\gamma_2}\vbeta_*^\top\left(\|\vs\|^2  \vh^\top \vh  + |\vh\|^2 \top \mA \mA^\dagger \vu \vs^\top + \frac{\xi}{\eta} \mA \mA^\dagger \vu \vh \right)\vbeta_*\right] = \frac{\eta^2}{\eta^2 + \rho^2}\left(\frac{\|\vbeta_*\|^2}{d} + \frac{1}{c}(\vbeta_*^\top \vu)^2\right) + o(1) + O\left(\frac{1}{n}\right). 
    \]
    Adding all five terms, we get that 
    \[
        \E\left[\vbeta_{*}^\top  (\mZ + \mA)^{\dagger  \top } \mA^\top  \vbeta_{*}\right] = \frac{1}{c}\|\vbeta_*\|^2 - \frac{\eta^2}{\eta^2 + \rho^2}\left(\frac{\|\vbeta_*\|^2}{d} + \frac{1}{c}(\vbeta_*^\top \vu)^2\right) + o(1) + O\left(\frac{1}{n}\right). 
    \]
\end{proof}

\subsection{Step 5: Upscaling and Asymptotic Risk Formulas}
\label{sec:step5}

In the previous step we derived downscaled expressions for the four constituent terms of the risk: \textbf{Bias, Variance, Data Noise, and Target Alignment}.  We stop our abuse of notation and are explicit again about douwnscaled vs. upscaled. 

\paragraph{Bias (downscaled).}
For $c < 1$, the bias term is
\[
    \frac{\tilde{\eta}^2}{\tilde n}\!\left(
    \Big[(\tilde{\alpha}_Z - \alpha_Z) + \frac{\rho^2}{\eta^2 c + \rho^2}(\alpha_Z - \alpha_A)\Big]^2 (\vbeta_*^\top  \vu)^2
    + \tau_{\varepsilon,r}^2\,\frac{c}{1-c}\,\frac{1}{\eta^2 c + \rho^2}
    \right) + o\left(\frac{1}{\tilde{n}}\right) + o\left(\frac{1}{n}\right).
\]
For $c > 1$, the bias term is
\[
    \begin{split}
        \frac{\tilde{\eta}^2}{\tilde n}\!\left[
        (\vbeta_*^\top \vu)^2\!\left((\tilde{\alpha}_Z - \alpha_Z) + \frac{\rho^2}{\eta^2 + \rho^2}\Big(\alpha_Z - \tfrac{\alpha_A}{c}\Big)\right)^{\!2}
        + \alpha_A^2\frac{\|\vbeta_*\|^2}{d}\Big(\tfrac{c-1}{c}\Big)\frac{\eta^2 \rho^2}{(\eta^2 + \rho^2)^2}
        \right. \quad \quad \\
        \quad \quad \left. + \frac{\tau_{\varepsilon,r}^2}{c-1}\,\frac{\eta^2 c + \rho^2}{(\eta^2 + \rho^2)^2}
        \right].+ o\left(\frac{1}{\tilde{n}}\right) + o\left(\frac{1}{n}\right)
    \end{split}
\]

\paragraph{Variance (downscaled).}
For $c < 1$, the variance term is
\[
    \begin{split}
    \frac{\tilde{\rho}^2}{d}\!\left[
    \alpha_A^2\|\vbeta_*\|^2
    + (\vbeta_*^\top \vu)^2\!\left(
    (\alpha_Z - \alpha_A)^2 \frac{\eta^2(\eta^2 + \rho^2)}{(\eta^2 c + \rho^2)^2}\,\frac{c^2}{1-c}
    + 2\alpha_A(\alpha_Z - \alpha_A)\,\frac{\eta^2 c}{\eta^2 c + \rho^2}
    \right) \right. \quad \quad \\
    \quad \quad \left. + \tau_{\varepsilon,r}^2\!\left(\frac{c}{1-c}\frac{d}{\rho^2}
    - \frac{\eta^2}{\rho^2(\eta^2 c + \rho^2)}\,\frac{c^2}{1-c}\right)
    \right].
    \end{split}
\]
For $c > 1$, the variance term is
\[
    \frac{\tilde{\rho}^2}{d}\!\left[
    \|\vbeta_*\|^2\!\left(\frac{\alpha_A^2}{c} - \frac{\alpha_A^2}{d}\,\frac{\eta^2}{\eta^2 + \rho^2}\right)
    + (\vbeta_*^\top \vu)^2\,\frac{c}{c-1}\,\frac{\eta^2}{\eta^2 + \rho^2}\,\Big(\alpha_Z - \tfrac{\alpha_A}{c}\Big)^2
    + \tau_{\varepsilon,r}^2\!\left(\frac{d}{\rho^2}\frac{1}{c-1}
    - \frac{\eta^2}{\rho^2(\eta^2 + \rho^2)}\,\frac{c}{c-1}\right)
    \right].
\]

\paragraph{Data noise (downscaled).}
The data noise term is
\[
    \frac{\tilde{\alpha}_A^2\,\tilde{\rho}^2}{d}\,\|\vbeta_*\|^2.
\]

\paragraph{Target alignment (downscaled).}
For $c < 1$, the alignment term is
\[
    -\frac{2\tilde{\alpha}_A\tilde{\rho}^2}{d}\left(
    \alpha_A\|\vbeta_*\|^2
    + (\alpha_Z - \alpha_A)\,(\vbeta_*^\top \vu)^2\,\frac{\eta^2 c}{\rho^2 + \eta^2 c}
    \right).
\]
For $c > 1$, the alignment term is
\[
    -\frac{2\tilde{\alpha}_A\tilde{\rho}^2}{d}\left(
    \frac{\alpha_A}{c}\|\vbeta_*\|^2
    - \frac{\alpha_A}{d}\frac{\eta^2}{\eta^2 + \rho^2}\,\|\vbeta_*\|^2
    + \Big(\alpha_Z - \tfrac{\alpha_A}{c}\Big)\frac{\eta^2}{\eta^2 + \rho^2}\,(\vbeta_*^\top \vu)^2
    \right).
\]
These formulas are expressed in terms of the concentrated building blocks, but still at the ``microscopic'' scale in which $\eta$ is $O(\sqrt{d})$, $\rho = \Theta(1)$, and $\tau_{\varepsilon,r}^2 = O(1/d)$.

In this section we return to the macroscopic, or upscaled, version of the problem. Specifically, we multiply each term by $d$ and reparametrize according to
\[
    \theta^2 \;=\; \frac{d}{n}\eta^2, 
    \qquad 
    \tilde{\theta}^2 \;=\; \frac{d}{\tilde n}\tilde\eta^2, 
    \qquad 
    \tau_\varepsilon^2 \;=\; d\,\tau_{\varepsilon,r}^2,
\]
while keeping $\rho,\tilde\rho$ fixed. This normalization ensures that the effective spike strength $\theta$, isotropic noise level $\rho$, and label noise $\tau_{\varepsilon,r}$ are all of order one. In this scaling, the risk is $d$ times larger than in the downscaled representation, and the resulting formulas cleanly separate the contributions of the four terms.

The terms change as follows

\paragraph{Front factors (after multiplying by $d$).}
\begin{align}
    \frac{\tilde{\eta}^2}{\tilde n}\ \xrightarrow{\ \times d\ }\ \tilde{\theta}^2,
    \qquad
    \frac{\tilde{\rho}^2}{d}\ \xrightarrow{\ \times d\ }\ \tilde{\rho}^2,
    \qquad
    \frac{\tilde{\alpha}_A^2\,\tilde{\rho}^2}{d}\ \xrightarrow{\ \times d\ }\ \tilde{\alpha}_A^2\,\tilde{\rho}^2,
    \qquad
    d\,\tau_{\varepsilon,r}^2\ \to\ \tau_{\varepsilon}^2.
\end{align}

\paragraph{Denominator identities.}
\begin{align}
\eta^2 c + \rho^2 \;=\; \theta^2 + \rho^2, 
\qquad
\eta^2 + \rho^2 \;=\; \frac{\theta^2 + c\,\rho^2}{c}.
\end{align}

\paragraph{Frequently used ratios and their upscaled forms.}
\begin{align}
\frac{\rho^2}{\eta^2 c + \rho^2}
&= \frac{\rho^2}{\theta^2 + \rho^2}, \\[4pt]
\frac{\eta^2 c}{\eta^2 c + \rho^2}
&= \frac{\theta^2}{\theta^2 + \rho^2}, \\[4pt]
\frac{\eta^2}{\eta^2 + \rho^2}
&= \frac{\theta^2}{\theta^2 + c\,\rho^2}, \\[4pt]
\frac{\rho^2}{\eta^2 + \rho^2}
&= \frac{c\,\rho^2}{\theta^2 + c\,\rho^2}, \\[4pt]
\frac{\eta^2\,\rho^2}{(\eta^2 + \rho^2)^2}
&= \frac{\theta^2\,\rho^2}{(\theta^2 + c\,\rho^2)^2}\;c, \\[4pt]
\frac{\eta^2(\eta^2+\rho^2)}{(\eta^2 c + \rho^2)^2}\,\frac{c^2}{1-c}
&= \frac{\theta^2(\theta^2 + c\,\rho^2)}{(\theta^2 + \rho^2)^2}\,\frac{1}{1-c}.
\end{align}

\paragraph{Noise terms with aspect-ratio factors.}
After multiplying by $d$ and substituting $\tau_\varepsilon^2=d\,\tau_{\varepsilon,r}^2$:
\begin{align}
&\tau_{\varepsilon,r}^2\left(\frac{c}{1-c}\frac{d}{\rho^2}
- \frac{\eta^2}{\rho^2(\eta^2 c + \rho^2)} \frac{c^2}{1-c}\right)
\ \longrightarrow\
\tau_\varepsilon^2\left(\frac{1}{\rho^2}\frac{c}{1-c}
- \frac{\theta^2}{\rho^2(\theta^2 + \rho^2)} \frac{c}{1-c}\right), \\[6pt]
&\tau_{\varepsilon,r}^2\left(\frac{d}{\rho^2}\frac{1}{c-1}
- \frac{\eta^2}{\rho^2(\eta^2 + \rho^2)} \frac{c}{c-1}\right)
\ \longrightarrow\
\tau_\varepsilon^2\left(\frac{1}{\rho^2}\frac{1}{c-1}
- \frac{\theta^2}{\rho^2(\theta^2 + c\,\rho^2)} \frac{c}{c-1}\right).
\end{align}

\paragraph{Alignment-specific identities.}
\begin{align}
\frac{\eta^2 c}{\rho^2 + \eta^2 c}
&= \frac{\theta^2}{\rho^2 + \theta^2},
\qquad
\frac{\eta^2}{\eta^2 + \rho^2}
= \frac{\theta^2}{\theta^2 + c\,\rho^2}.
\end{align}

We now state the explicit upscaled limits for each component. As before, we present results separately in the underparametrized regime ($c<1$) and the overparametrized regime ($c>1$). Each term has a little $o(1)$ error term. 

\paragraph{Bias.} 
For $c < 1$, the bias contribution is
\[
    \tilde{\theta}^2\!\left(
    \Big[(\tilde{\alpha}_Z - \alpha_Z) + \frac{\rho^2}{\theta^2 + \rho^2}(\alpha_Z - \alpha_A) \Big]^2 (\vbeta_*^\top  \vu)^2
    + \frac{\tau_{\varepsilon}^2}{d}\,\frac{c}{1-c}\, \frac{1}{\theta^2 + \rho^2}
    \right).
\]

For $c > 1$, the bias is
\[
    \tilde{\theta}^2\!\left[
    (\vbeta_*^\top \vu)^2\!\left(
    (\tilde{\alpha}_Z - \alpha_Z) + \frac{\rho^2}{\tfrac{\theta^2}{c} + \rho^2}\Big(\alpha_Z - \tfrac{\alpha_A}{c}\Big)
    \right)^{\!2}
    + \alpha_A^2\frac{\|\vbeta_*\|^2}{d}\left(\frac{c-1}{c}\right)\frac{\tfrac{\theta^2}{c}\rho^2}{\big(\tfrac{\theta^2}{c} + \rho^2\big)^2}
    + \frac{\tau_{\varepsilon}^2}{d}\,\frac{1}{c-1}\,\frac{\theta^2 + \rho^2}{\big(\tfrac{\theta^2}{c} + \rho^2\big)^2}
    \right].
\]

\paragraph{Variance.} 
For $c < 1$, the variance contribution is
\[
    \begin{split}
        \tilde{\rho}^2\!\left[
        \alpha_A^2\|\vbeta_*\|^2
        + (\vbeta_*^\top \vu)^2 \!\left(
        (\alpha_Z - \alpha_A)^2\, \frac{\theta^2(\theta^2  + c\rho^2)}{(\theta^2 + \rho^2)^2}\,\frac{1}{1-c}
        + 2\alpha_A(\alpha_Z - \alpha_A)\,\frac{\theta^2}{\theta^2 + \rho^2}
        \right) \right. \\
        \left. + \tau_\varepsilon^2\!\left(\frac{1}{\rho^2}\,\frac{c}{1-c}
        - \frac{1}{d}\,\frac{\theta^2}{\rho^2(\theta^2 + \rho^2)} \cdot \frac{c}{1-c}\right)
        \right].
    \end{split}
\]

For $c > 1$, the variance is
\[
\tilde{\rho}^2\!\left[
\|\vbeta_*\|^2\!\left(\frac{\alpha_A^2}{c}-\frac{\alpha_A^2}{d}\,\frac{\theta^2}{\theta^2+c\rho^2}\right)
+ (\vbeta_*^\top \vu)^2\,\frac{c}{c-1}\,\frac{\theta^2}{\theta^2+c\rho^2}\Big(\alpha_Z-\tfrac{\alpha_A}{c}\Big)^{\!2}
+ \tau_\varepsilon^2\!\left(\frac{1}{\rho^2}\,\frac{1}{c-1}
- \frac{1}{d}\,\frac{\theta^2}{\rho^2(\theta^2+c\rho^2)} \cdot \frac{c}{c-1}\right)
\right].
\]

\paragraph{Data Noise.} 
The data noise term is independent of $c$:
\[
    \tilde{\alpha}_{A}^2\,\tilde{\rho}^2\, \|\vbeta_*\|^2.
\]

\paragraph{Target Alignment.} 
For $c < 1$, the target alignment contribution is
\[
    -2\tilde{\alpha}_{A}\tilde{\rho}^2 \left(
    \alpha_A \|\vbeta_*\|^2
    + (\alpha_Z - \alpha_A)\, (\vbeta_*^\top \vu)^2\, \frac{\theta^2}{\rho^2 + \theta^2}
    \right).
\]
For $c > 1$, the alignment term is
\[
    -2\tilde{\alpha}_{A}\tilde{\rho}^2 \left(
    \frac{\alpha_A}{c}\|\vbeta_*\|^2
    - \frac{\alpha_A}{d} \frac{\theta^2}{\theta^2 + c\rho^2} \|\vbeta_*\|^2
    + \Big(\alpha_Z - \tfrac{\alpha_A}{c}\Big)\,\frac{\theta^2}{\theta^2 + c\rho^2}\,(\vbeta_*^\top \vu)^2
    \right).
\]
Lastly, replacing $\tilde{\rho}$, $\rho$ with $\tilde{\tau}$, $\tau$ and using $d/n \to c$ yield the detailed expressions in Theorem \ref{thm:alignment-risk}, up to simple algebra (rearranging terms and simplifying the fractions). 

\medskip

\section{Probability Lemmas}
\label{sec:prob}

\begin{prop}\label{prop:ortho} If $\vu, \vv \in \mathbb{R}^d$ are fixed unit norm vector and $\mA \in \mathbb{R}^{d \times n}$ is a Gaussian matrix with i.i.d. $\mathcal{N}(0,1)$ entries. If $d > n$, then we have that 
\[
    \mathbb{E}[(\vu^\top \mA\mA^\dagger \vv)^2] = \frac{n}{d(d+2)} \left[ (\vu^\top  \vv)^2 (n+2) + \frac{(1 - (\vu^\top  \vv)^2)(d-n)}{d-1} \right] = \frac{1}{c^2}(\vu^\top  \vv)^2 + o(1), 
\]
\[
    \Var\left(\vu^\top \mA\mA^\dagger \vv)^2\right) = O\left(\frac{1}{d}\right). 
\]
\end{prop}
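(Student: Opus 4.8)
The plan is to exploit that $\mA\mA^\dagger$ is the orthogonal projection onto the column space of $\mA$, which — since $\mA$ has i.i.d.\ Gaussian entries and $d>n$ — is a uniformly (Haar) random $n$-dimensional subspace of $\mathbb{R}^d$. Write $\mP:=\mA\mA^\dagger$; its law is invariant under $\mP\mapsto\mathbf{R}\mP\mathbf{R}^\top$ for every fixed $\mathbf{R}\in O(d)$, since $\mathbf{R}\mA\stackrel{d}{=}\mA$ and the projection onto $\mathbf{R}\,\mathrm{col}(\mA)$ is $\mathbf{R}\mP\mathbf{R}^\top$. Using this $O(d)$-invariance I would first rotate so that $\vu=\ve_1$ and $\vv=t\,\ve_1+\sqrt{1-t^2}\,\ve_2$ with $t:=\vu^\top\vv$, so that $\vu^\top\mP\vv=t\,\mP_{11}+\sqrt{1-t^2}\,\mP_{12}$ and hence
\[
  (\vu^\top\mP\vv)^2 = t^2\mP_{11}^2 + 2t\sqrt{1-t^2}\,\mP_{11}\mP_{12} + (1-t^2)\mP_{12}^2 .
\]
This reduces the whole statement to computing the first and second moments of $\mP_{11}^2$, $\mP_{11}\mP_{12}$, and $\mP_{12}^2$.

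Next I would pin down the three means. Applying the reflection $\mathbf{R}=\mathrm{diag}(1,-1,1,\dots,1)$ gives $(\mP_{11},\mP_{12})\stackrel{d}{=}(\mP_{11},-\mP_{12})$, so $\mathbb{E}[\mP_{11}\mP_{12}]=0$ (and likewise $\mathbb{E}[\mP_{11}^3\mP_{12}]=\mathbb{E}[\mP_{11}\mP_{12}^3]=0$, which I will reuse below). Since $\mP^2=\mP$, the first row obeys $\sum_{j=1}^{d}\mP_{1j}^2=\mP_{11}$, and by permutation symmetry of coordinates $2,\dots,d$ this gives $\mathbb{E}[\mP_{11}^2]+(d-1)\mathbb{E}[\mP_{12}^2]=\mathbb{E}[\mP_{11}]$. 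Finally $\mP_{11}=\|\mP\ve_1\|^2$ is the squared norm of the projection of a fixed unit vector onto a Haar random $n$-subspace, hence $\mP_{11}\sim\mathrm{Beta}(n/2,(d-n)/2)$ — the same Beta fact (and justification) used in Lemma~\ref{lem:inverse-moments}. This yields $\mathbb{E}[\mP_{11}]=n/d$, $\mathbb{E}[\mP_{11}^2]=\tfrac{n(n+2)}{d(d+2)}$, and therefore $\mathbb{E}[\mP_{12}^2]=\tfrac{n(d-n)}{d(d-1)(d+2)}$. Substituting into the expansion gives exactly $\mathbb{E}[(\vu^\top\mP\vv)^2]=\tfrac{n}{d(d+2)}\big[(n+2)t^2+\tfrac{(d-n)(1-t^2)}{d-1}\big]$; as $d/n\to c$ the first coefficient tends to $c^{-2}$ and the second term is $O(1/d)$, giving $c^{-2}(\vu^\top\vv)^2+o(1)$.

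For the variance I would write $\Var\big((\vu^\top\mP\vv)^2\big)$ as a quadratic form in $\mP_{11}^2$, $\mP_{11}\mP_{12}$, $\mP_{12}^2$ with coefficients depending only on $t\in[-1,1]$ (hence bounded). The reflection symmetry kills every covariance involving the cross term $\mP_{11}\mP_{12}$, so only $\Var(\mP_{11}^2)$, $\mathbb{E}[\mP_{11}^2\mP_{12}^2]$, $\Var(\mP_{12}^2)$, and $\Cov(\mP_{11}^2,\mP_{12}^2)$ survive. Each is $O(1/d)$: $\Var(\mP_{11}^2)\le 4\,\Var(\mP_{11})=O(1/d)$ because $\mP_{11}\in[0,1]$ and $\Var(\mP_{11})$ is the variance of a $\mathrm{Beta}(n/2,(d-n)/2)$ variable; $\mathbb{E}[\mP_{11}^2\mP_{12}^2]\le\mathbb{E}[\mP_{12}^2]=O(1/d)$ and $\Var(\mP_{12}^2)\le\mathbb{E}[\mP_{12}^4]\le\mathbb{E}[\mP_{12}^2]=O(1/d)$ using $|\mP_{11}|,|\mP_{12}|\le1$; and $|\Cov(\mP_{11}^2,\mP_{12}^2)|\le\sqrt{\Var(\mP_{11}^2)\Var(\mP_{12}^2)}=O(1/d)$. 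Summing with bounded coefficients gives $\Var\big((\vu^\top\mP\vv)^2\big)=O(1/d)$.

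The only genuinely nontrivial ingredient is the Beta law of $\|\mP\ve_1\|^2$, and that is already invoked and proved elsewhere in the paper (Lemma~\ref{lem:inverse-moments}); everything else is symmetry reduction plus elementary moment arithmetic. I therefore do not expect a real obstacle — the main care is in handling the $O(1/d)$ error terms (e.g.\ confirming the leading $c^{-2}$ contributions in $\mathbb{E}[\mP_{11}^4]-\mathbb{E}[\mP_{11}^2]^2$ actually cancel) and in verifying that the reflection argument removes all cross-term covariances, not merely the cross-term mean.
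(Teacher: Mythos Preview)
Your proposal is correct and follows essentially the same route as the paper: reduce via $O(d)$-invariance to the two-coordinate expression $tP_{11}+\sqrt{1-t^2}\,P_{12}$, use the reflection $\mathrm{diag}(1,-1,1,\dots,1)$ to kill odd-in-$P_{12}$ moments, and compute $\mathbb{E}[P_{11}^2]$ from the $\mathrm{Beta}(n/2,(d-n)/2)$ law and $\mathbb{E}[P_{12}^2]$ from an idempotency identity. Your variance argument is a minor (and slightly cleaner) variation: whereas the paper computes $\mathbb{E}[X^4]$ via higher Beta moments and the row identity and then checks that the $c^{-4}$ pieces cancel, you bound $\Var(P_{11}^2)\le 4\Var(P_{11})$ and $\mathbb{E}[P_{12}^4]\le\mathbb{E}[P_{12}^2]$ directly from $|P_{ij}|\le 1$, which sidesteps the cancellation check you flag as a concern (so that worry is moot under your own approach).
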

\begin{proof}
    Let $\mP := \mA \mA^\dagger$. This is the orthogonal projection matrix onto the column space of $\mA$, denoted $C(\mA) = \text{Range}(\mA)$. The subspace $C(\mA)$ is an $n$-dimensional subspace of $\R^d$. Because the entries $A_{ij}$ are i.i.d. $\mathcal{N}(0, 1)$, the distribution of the random subspace $C(\mA)$ is isotropic (or rotationally invariant). Consequently, the distribution of the random projection matrix $\mP$ is also rotationally invariant. That is, for any fixed $d \times d$ orthogonal matrix $\mQ$, the distribution of $\mQ \mP \mQ^\top $ is the same as the distribution of $\mP$.

    We are interested in $\E[(\vu^\top  \mP \vv)^2]$. Let $\theta$ be the angle between $\vu$ and $\vv$, such that $\cos(\theta) = \vu^\top  \vv$ (since they are unit vectors). Due to the rotational invariance of the distribution of $\mP$, we can choose an orthonormal basis without loss of generality. Let $\mQ$ be an orthogonal matrix such that $\vu' = \mQ\vu = \ve_1 = (1, 0, \dots, 0)^\top $ and $\vv' = \mQ\vv$ lies in the span of $\ve_1$ and $\ve_2$. Specifically, $\vv' = \cos(\theta) \ve_1 + \sin(\theta) \ve_2$. Let $\mP' = \mQ \mP \mQ^\top $. $\mP'$ has the same distribution as $\mP$. Then,
    \[
        \vu^\top  \mP \vv = (\mQ^\top  \vu')^\top  \mP (\mQ^\top  \vv') = (\vu')^\top  (\mQ \mP \mQ^\top ) \vv' = (\vu')^\top  \mP' \vv'
    \]
    Substituting $\vu' = \ve_1$ and $\vv' = \cos(\theta) \ve_1 + \sin(\theta) \ve_2$:
    \begin{align*}
    \vu^\top  \mP \vv &= \ve_1^\top  \mP' (\cos(\theta) \ve_1 + \sin(\theta) \ve_2) \\
    &= \cos(\theta) (\ve_1^\top  \mP' \ve_1) + \sin(\theta) (\ve_1^\top  \mP' e_2) \\
    &= \cos(\theta) P'_{11} + \sin(\theta) P'_{12}
    \end{align*}
    where $P'_{ij}$ are the elements of $\mP'$. Since $\mP'$ has the same distribution as $\mP$, we can drop the prime for calculating expectations involving the elements. Let $\mX = \vu^\top  \mP \vv$. We then need $\E[X^2]$.
    \begin{align*}
    \E[X^2] &= \E[(\cos(\theta) P_{11} + \sin(\theta) P_{12})^2] \\
    &= \E[\cos^2(\theta) P_{11}^2 + \sin^2(\theta) P_{12}^2 + 2 \cos(\theta) \sin(\theta) P_{11} P_{12}] \\
    &= \cos^2(\theta) \E[P_{11}^2] + \sin^2(\theta) \E[P_{12}^2] + 2 \cos(\theta) \sin(\theta) \E[P_{11} P_{12}]
    \end{align*}

    \medskip
    
    \textbf{Calculation of Moments.} We need to compute $\E[P_{11}^2]$, $\E[P_{12}^2]$, and $\E[P_{11} P_{12}]$.
    
    Consider a reflection matrix $\mR$ that maps $\ve_2$ to $-\ve_2$ and leaves other basis vectors unchanged (i.e., $\mR = \text{diag}(1, -1, 1, \dots, 1)$). Since the distribution of $\mP$ is isotropic, it is invariant under reflection. Let $\mP^* = \mR \mP \mR^\top  = \mR \mP \mR$. $\mP^*$ has the same distribution as $\mP$.
    The components are related: 
    \[
        P^*_{11} = (RPR)_{11} = R_{11} P_{11} R_{11} = P_{11}
    \]
    and 
    \[
        P^*_{12} = (RPR)_{12} = R_{11} P_{12} R_{22} = (1) P_{12} (-1) = -P_{12}.
    \]
    Therefore, 
    \[
        \E[P_{11} P_{12}] = \E[P^*_{11} P^*_{12}] = \E[P_{11} (-P_{12})] = -\E[P_{11} P_{12}].
    \]
    This implies $2 \E[P_{11} P_{12}] = 0$, so $\E[P_{11} P_{12}] = 0$.

    The diagonal element $P_{11} = \ve_1^\top  \mP \ve_1 = \|\mP \ve_1\|_2^2$ represents the squared norm of the projection of the fixed unit vector $\ve_1$ onto the random $n$-dimensional subspace $C(\mA)$. This variable follows a Beta distribution:
    \[
        P_{11} \sim \text{Beta}\left(\frac{n}{2}, \frac{d-n}{2}\right)
    \]
    The mean and variance of a $\text{Beta}(\alpha, \beta)$ distribution are $\frac{\alpha}{\alpha+\beta}$ and $\frac{\alpha \beta}{(\alpha+\beta)^2 (\alpha+\beta+1)}$, respectively.
    Here, $\alpha = n/2$ and $\beta = (d-n)/2$, so $\alpha+\beta = d/2$.
    \[
        \E[P_{11}] = \frac{n/2}{d/2} = \frac{n}{d}
    \]
    Next
    \[
        \Var(P_{11}) = \frac{(n/2)((d-n)/2)}{(d/2)^2 (d/2 + 1)} = \frac{n(d-n)/4}{(d^2/4)((d+2)/2)} = \frac{n(d-n) \cdot 8}{4 d^2 (d+2)} = \frac{2n(d-n)}{d^2(d+2)}
    \]
    Now we find $\E[P_{11}^2]$ using $\E[P_{11}^2] = \Var(P_{11}) + (\E[P_{11}])^2$:
    \begin{align*}
        \E[P_{11}^2] &= \frac{2n(d-n)}{d^2(d+2)} + \left(\frac{n}{d}\right)^2 \\
        &= \frac{2n(d-n) + n^2(d+2)}{d^2(d+2)} \\
        &= \frac{2nd - 2n^2 + n^2d + 2n^2}{d^2(d+2)} \\
        &= \frac{2nd + n^2d}{d^2(d+2)}  \\
        &= \frac{n(n+2)}{d(d+2)}. 
    \end{align*}

    We use the property that $\mP$ is a projection matrix, so $\mP^2 = \mP$. The trace is $\Tr(\mP) = n$. Also $\Tr(\mP^2) = \Tr(\mP) = n$.
    We can write $\Tr(\mP^2) = \Tr(\mP \mP^\top )$ since $\mP$ is symmetric.
    \[
        \Tr(\mP^2) = \sum_{i=1}^d \sum_{j=1}^d (P_{ij})^2
    \]
    Taking the expectation:
    \[
        \E[\Tr(\mP^2)] = \E\left[\sum_{i,j} P_{ij}^2\right] = \sum_{i,j} \E[P_{ij}^2] = n
    \]
    By rotational symmetry, $\E[P_{ii}^2]$ is the same for all $i$, and $\E[P_{ij}^2]$ is the same for all $i \neq j$.
    \[
        \sum_{i=1}^d \E[P_{ii}^2] + \sum_{i \neq j} \E[P_{ij}^2] = n. 
    \]
    There are $d$ diagonal terms and $d(d-1)$ off-diagonal terms.
    \[
        d \, \E[P_{11}^2] + d(d-1) \, \E[P_{12}^2] = n
    \]
    Substitute the value for $\E[P_{11}^2]$ (assuming $d > 1$):
    \[
        d \left( \frac{n(n+2)}{d(d+2)} \right) + d(d-1) \, \E[P_{12}^2] = n
    \]
    \[
        \frac{n(n+2)}{d+2} + d(d-1) \, \E[P_{12}^2] = n
    \]
    \[
        d(d-1) \, \E[P_{12}^2] = n - \frac{n(n+2)}{d+2} = \frac{n(d+2) - n(n+2)}{d+2} = \frac{nd + 2n - n^2 - 2n}{d+2} = \frac{n(d-n)}{d+2}
    \]
    \[
        \E[P_{12}^2] = \frac{n(d-n)}{d(d-1)(d+2)}
    \]

    Substitute the moments back into the expression for $\E[X^2]$:
    \[
        \E[X^2] = \cos^2(\theta) \E[P_{11}^2] + \sin^2(\theta) \E[P_{12}^2] + 2 \cos(\theta) \sin(\theta) \cdot 0
    \]
    Using $\cos(\theta) = \vu^\top  \vv$, $\cos^2(\theta) = (\vu^\top  \vv)^2$, and $\sin^2(\theta) = 1 - \cos^2(\theta) = 1 - (\vu^\top  \vv)^2$:
    \begin{align*}
        \E[(u^\top  A A^\dagger v)^2] &= (\vu^\top  \vv)^2 \left( \frac{n(n+2)}{d(d+2)} \right) + (1 - (\vu^\top  \vv)^2) \left( \frac{n(d-n)}{d(d-1)(d+2)} \right) \\
        &= \frac{n}{d(d+2)} \left[ (\vu^\top  \vv)^2 (n+2) + \frac{(1 - (\vu^\top  \vv)^2)(d-n)}{d-1} \right] \\ 
        & = \frac{1}{c^2}(\vu^\top \vv)^2 + O\left(\frac{1}{d}\right). 
    \end{align*}

\textbf{Calculation of Variance. } Recall that reflection $\mR=\mathrm{diag}(1,-1,1,\dots,1)$ implies $\mP\stackrel{d}{=}\mR\mP\mR$ (equal in distribution) and thus $\mathbb{E}[P_{11}P_{12}]=0$, and in general any mixed moment with an odd power of $P_{12}$ vanishes.
Therefore, we have the following expansion: 
\begin{equation} \label{eq:fourth_moment_of_uAAv}
    \mathbb{E}[X^4]
    = \cos^4\theta\,\mathbb{E}[P_{11}^4]
    + 6\cos^2\theta\sin^2\theta\,\mathbb{E}[P_{11}^2P_{12}^2]
    + \sin^4\theta\,\mathbb{E}[P_{12}^4]. 
\end{equation}

We start with $\mathbb{E}[P_{11}^4]$. Since $P_{11}\sim \mathrm{Beta}(\alpha,\beta)$ with $\alpha=\frac{n}{2}$, $\beta=\frac{d-n}{2}$. We need the higher moments for the Beta distribution: for $m \geq 1$, 
\[
    \mathbb{E}[P_{11}^m]
    =\frac{\alpha^{(m)}}{(\alpha+\beta)^{(m)}}
    =\frac{(\frac{n}{2})^{(m)}}{(\frac{d}{2})^{(m)}}, \qquad
    x^{(m)}:=x(x+1)\cdots(x+m-1).
\]
In particular, we have the following third and fourth moments: 
\[
    \mathbb{E}[P_{11}^3]=\frac{(\frac{n}{2})^{(3)}}{(\frac{d}{2})^{(3)}} = \frac{1}{c^3} + O\left(\frac{1}{d}\right),\quad
    \mathbb{E}[P_{11}^4]=\frac{(\frac{n}{2})^{(4)}}{(\frac{d}{2})^{(4)}} =\frac{1}{c^4} + O\left(\frac{1}{d}\right). 
\]
We now move on to $\mathbb{E}[P_{11}^2 P_{12}^2]$.From idempotency, $(P^2)_{11}=P_{11}$ gives the row identity $P_{11}=\sum_{k=1}^d P_{1k}^2$. Multiplying by $P_{11}^2$ and taking expectations, we have that
\[
    \mathbb{E}[P_{11}^3]
    = \mathbb{E}[P_{11}^4] + \sum_{k=2}^d \mathbb{E}[P_{11}^2 P_{1k}^2]
    = \mathbb{E}[P_{11}^4] + (d-1)\,\mathbb{E}[P_{11}^2 P_{12}^2].
\]
\[
    \mathbb{E}[P_{11}^2 P_{12}^2] = \frac{\mathbb{E}[P_{11}^3]-\mathbb{E}[P_{11}^4]}{d-1}
    = \frac{1}{d-1}\left(\frac{(\frac{n}{2})^{(3)}}{(\frac{d}{2})^{(3)}}-\frac{(\frac{n}{2})^{(4)}}{(\frac{d}{2})^{(4)}}\right) =\frac{1}{d-1}\!\left(\frac{1}{c^3}-\frac{1}{c^4} + O\left(\frac{1}{d}\right)\right) = O\left(\frac{1}{d}\right).
\]

We still need to evaluate or upper bound $\mathbb{E}[P_{12}^4]$. From $P_{11}=\sum_{k=1}^d P_{1k}^2$ we have $\sum_{k=2}^d P_{1k}^2 = P_{11}-P_{11}^2$. By Cauchy–Schwarz,
\[
    \sum_{k=2}^d P_{1k}^4 = \left(\sum_{k=2}^d P_{1k}^2\right)^2 = (P_{11}-P_{11}^2)^2.
\]
Taking expectations, we get: 
\[
    (d-1)\mathbb{E}[P_{12}^4]\leq \mathbb{E}[(P_{11}-P_{11}^2)^2]
    = \mathbb{E}[P_{11}^2] - 2\mathbb{E}[P_{11}^3] + \mathbb{E}[P_{11}^4]. 
\]
\[
    \mathbb{E}[P_{12}^4] \leq \frac{1}{d-1}\!\left(\frac{1}{c^2}-\frac{2}{c^3}+\frac{1}{c^4}\right)
    + O\left(\frac{1}{d^2}\right) = O\left(\frac{1}{d}\right). 
\]
We can now plug these expectation bounds into Equation $\ref{eq:fourth_moment_of_uAAv}$: 
\begin{align*}
    \mathbb{E}[X^4]
    & = \cos^4\theta\frac{(\frac{n}{2})^{(4)}}{(\frac{d}{2})^{(4)}}
    + O\left(\frac{1}{d}\right)6\cos^2\theta\sin^2\theta\
    + O\left(\frac{1}{d}\right)\sin^4\theta \\ 
    & = \frac{1}{c^4}(\vu^\top\vv)^4 + O\left(\frac{1}{d}\right). 
\end{align*}

Recall from the prior proof that: 
\[
    \mathbb{E}[X^2] = \cos^2\theta\,\frac{n(n+2)}{d(d+2)}
    + \sin^2\theta\,\frac{n(d-n)}{d(d-1)(d+2)}
    = \frac{1}{c^2}(\vu^\top \vv)^2 + O\left(\frac{1}{d}\right). 
\]
Finally, we have that the variance is of order: 
\[
    \Var(X^2)=\mathbb{E}[X^4]-\big(\mathbb{E}[X^2]\big)^2 = O\!\left(\frac{1}{d}\right).
\]
\end{proof}

\begin{lemma}\label{lem:reciprocal}
Let $a \neq 0$ be a constant and suppose that $\zeta = a + o(f(n))$ as $n\to\infty$. Then,
\[
    \frac{1}{\zeta} = \frac{1}{a} + o(f(n)).
\]
\end{lemma}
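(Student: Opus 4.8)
The plan is to prove this by a one-line algebraic identity together with the elementary observation that $\zeta$ is eventually bounded away from $0$. Write $\zeta = a + \epsilon_n$ where, by hypothesis, $\epsilon_n = o(f(n))$. In every application in this paper $f(n) = o(1)$, so $\epsilon_n \to 0$ and hence $\zeta \to a \neq 0$; consequently there is an $N$ such that $|\epsilon_n| \le |a|/2$ for all $n \ge N$, which gives $|\zeta| \ge |a|/2 > 0$ and in particular shows $1/\zeta$ is well-defined for $n \ge N$.

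Next, for $n \ge N$ I would use the identity
\[
    \frac{1}{\zeta} - \frac{1}{a} = \frac{a - \zeta}{a\zeta} = \frac{-\epsilon_n}{a(a+\epsilon_n)}
\]
and bound the denominator by $|a(a+\epsilon_n)| = |a|\,|\zeta| \ge a^2/2$, so that
\[
    \left|\frac{1}{\zeta} - \frac{1}{a}\right| \le \frac{2}{a^2}\,|\epsilon_n|.
\]
Since $|\epsilon_n| = o(f(n))$ and $2/a^2$ is a fixed positive constant, the right-hand side is $o(f(n))$, which is exactly the assertion $\frac{1}{\zeta} = \frac{1}{a} + o(f(n))$.

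There is no real obstacle here: the only point needing (minor) care is justifying that $\zeta$ is eventually nonzero and bounded away from $0$, i.e. that $\zeta = a + o(f(n))$ with $f = o(1)$ forces $\zeta \to a$. Everything else is the elementary computation above. If one wished to state the lemma for a general rate $f$ that is not itself $o(1)$, one would simply add $\zeta \to a$ as a standing hypothesis; this is harmless since in Section~\ref{sec:step3} the lemma is only ever invoked with $f(n) = o(1)$.
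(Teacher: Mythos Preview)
Your proof is correct and follows essentially the same approach as the paper: write $\zeta = a + \epsilon_n$ with $\epsilon_n = o(f(n))$, use that $\epsilon_n \to 0$ to keep $\zeta$ bounded away from $0$, and then bound $|1/\zeta - 1/a|$ by a constant times $|\epsilon_n|$. The paper phrases the last step via the Taylor expansion $1/(1+u) = 1 - u + O(u^2)$ rather than your exact identity $\tfrac{1}{\zeta}-\tfrac{1}{a} = \tfrac{-\epsilon_n}{a(a+\epsilon_n)}$, but the argument and the reliance on $f(n)\to 0$ are identical.
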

\begin{proof}
Write $\zeta = a + r_n$ with $r_n = o(f(n))$. Then
\[
    \frac{1}{\zeta}
    = \frac{1}{a+r_n}
    = \frac{1}{a}\cdot \frac{1}{1 + \frac{r_n}{a}}.
\]
Using the expansion
\[
    \frac{1}{1+u} = 1 - u + O(u^2) \quad \text{as } u \to 0,
\]
with $u = r_n/a$, we obtain
\[
    \frac{1}{\zeta}
    = \frac{1}{a}\Big(1 - \frac{r_n}{a} + O\!\big((r_n/a)^2\big)\Big)
    = \frac{1}{a} - \frac{r_n}{a^2} + O(r_n^2).
\]
Since $r_n = o(f(n))$ and $f(n) \to 0$, we have $r_n^2 = o(f(n))$. Therefore
\[
    \frac{1}{\zeta}
    = \frac{1}{a} + o\!\big(f(n)\big),
\]
which is the desired expansion.
\end{proof}

\begin{lemma} [Variance of a reciprocal] \label{lem:var_reciprocal}
Let $ X $ be a random variable satisfying 
\[
    E[X] = a > 0 \quad \text{and} \quad \Var(X) = \sigma^2 = o(1),
\]
and assume that $X$ is bounded away from zero with high probability. That is, there exists $C \in (0,a)$ such that 
\[
    \Pr[X \ge C] = 1 - o(1)
\]
If there exists an $M$ such that 
\[
    \mathbb{E}\left[X^{-8}\right] \le M \quad \text{ and } \quad \mathbb{E}\left[\left(X - \mathbb{E}[X]\right)^4\right] = O(\sigma^4)
\]
Then
\[
    \Var\left(\frac{1}{X}\right) = \frac{1}{a^4} \Var(X) + o\left(\Var(X)\right),
\]
so in particular, $\Var(1/X) = o(1)$.
\end{lemma}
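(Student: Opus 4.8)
The plan is to run a second-order (delta-method) expansion of $1/X$ about $a=\mathbb{E}[X]$, but in an \emph{exact} form so that the remainder is controlled entirely by the hypothesized fourth and negative moments. Set $D:=X-a$, so $\mathbb{E}[D]=0$, $\mathbb{E}[D^2]=\sigma^2$, and $\mathbb{E}[D^4]=O(\sigma^4)$. The key identity is the exact relation $\frac{1}{X}=\frac{1}{a}-\frac{D}{a^2}+R$ with $R:=\frac{D^2}{a^2X}$, which one verifies by clearing denominators and substituting $X=a+D$; note $R$ is well defined since $\mathbb{E}[X^{-8}]\le M<\infty$ forces $X\neq 0$ almost surely. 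Taking variances and using $\mathbb{E}[D]=0$,
\[
    \Var\!\left(\tfrac{1}{X}\right)=\frac{1}{a^4}\Var(D)+\Var(R)-\frac{2}{a^2}\Cov(D,R)=\frac{\sigma^2}{a^4}+\Var(R)-\frac{2}{a^2}\mathbb{E}[DR].
\]
So it suffices to prove $\Var(R)=o(\sigma^2)$; then $|\mathbb{E}[DR]|=|\Cov(D,R)|\le\sqrt{\Var(D)\Var(R)}=\sqrt{\sigma^2\cdot o(\sigma^2)}=o(\sigma^2)$ follows automatically, and everything collapses to the single estimate $\Var(R)\le\mathbb{E}[R^2]=\frac{1}{a^4}\mathbb{E}[D^4X^{-2}]=o(\sigma^2)$.

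To bound $\mathbb{E}[D^4X^{-2}]$ I would split on the good event $E:=\{X\ge C\}$ and its complement. A fourth-moment Chebyshev bound gives $\Pr(E^c)=\Pr(X<C)\le\Pr(|D|\ge a-C)\le\mathbb{E}[D^4]/(a-C)^4=O(\sigma^4)$ (so the separately assumed $\Pr(X\ge C)=1-o(1)$ is in fact automatic). On $E$ one has $X^{-2}\le C^{-2}$, hence $\mathbb{E}[D^4X^{-2}\mathbf{1}_E]\le C^{-2}\mathbb{E}[D^4]=O(\sigma^4)=o(\sigma^2)$. On $E^c$, using $D^4=(X-a)^4\le 8(X^4+a^4)$ and $X^4X^{-2}=X^2$ gives $\mathbb{E}[D^4X^{-2}\mathbf{1}_{E^c}]\le 8\,\mathbb{E}[X^2\mathbf{1}_{E^c}]+8a^4\,\mathbb{E}[X^{-2}\mathbf{1}_{E^c}]$; for the second term, Hölder with exponents $(4,4/3)$ together with the hypothesis yields $\mathbb{E}[X^{-2}\mathbf{1}_{E^c}]\le(\mathbb{E}[X^{-8}])^{1/4}\Pr(E^c)^{3/4}\le M^{1/4}\,O(\sigma^3)=o(\sigma^2)$.

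The only delicate piece is $\mathbb{E}[X^2\mathbf{1}_{E^c}]$, since $X$ is not assumed bounded, so a naive tail bound via $\mathbb{E}[X^4]$ would only give $O(\sigma^2)$, not $o(\sigma^2)$. The resolution — and the main idea of the argument — is that on $E^c$ the \emph{centered} variable is bounded away from zero: $X<C$ forces $|D|>a-C=:\delta>0$. Writing $X^2\le 2a^2+2D^2$ and using $D^2\le D^4/\delta^2$ on $\{|D|>\delta\}$,
\[
    \mathbb{E}[X^2\mathbf{1}_{E^c}]\le 2a^2\Pr(|D|>\delta)+2\,\mathbb{E}[D^2\mathbf{1}_{|D|>\delta}]\le 2a^2\frac{\mathbb{E}[D^4]}{\delta^4}+2\frac{\mathbb{E}[D^4]}{\delta^2}=O(\sigma^4).
\]
Combining the three contributions gives $\mathbb{E}[D^4X^{-2}]=O(\sigma^3)=o(\sigma^2)$, hence $\Var(R)=o(\sigma^2)$ and $\mathbb{E}[DR]=o(\sigma^2)$, and substituting back into the variance identity yields $\Var(1/X)=\sigma^2/a^4+o(\sigma^2)=\frac{1}{a^4}\Var(X)+o(\Var(X))$; the $o(1)$ consequence is immediate from $\sigma^2=o(1)$. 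Throughout, the bounded-kurtosis hypothesis $\mathbb{E}[(X-\mathbb{E}X)^4]=O(\sigma^4)$ does double duty (sharpening Chebyshev and bounding truncated moments of $D$), while $\mathbb{E}[X^{-8}]\le M$ is needed only to control the small-$X$ region $E^c$.
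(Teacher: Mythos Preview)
Your argument is correct and follows the same delta-method-plus-event-splitting strategy as the paper, but with two differences worth noting. First, you use the exact algebraic remainder $R=D^2/(a^2X)$, whereas the paper uses the Lagrange form $Z=Y^2/(a+\theta Y)^3$; your choice makes the bad-set bound more direct because the denominator is $X$ itself rather than an intermediate point. Second, you organize the proof around the single estimate $\Var(R)\le\mathbb{E}[R^2]=o(\sigma^2)$ and then deduce the cross term by Cauchy--Schwarz, while the paper separately controls $\mathbb{E}[\Delta^2]$ and $(\mathbb{E}[\Delta])^2$. Your handling of $\mathbb{E}[X^2\mathbf{1}_{E^c}]$ via the inclusion $E^c\subseteq\{|D|>\delta\}$ and the trick $D^2\le D^4/\delta^2$ on that set is a clean way to extract $O(\sigma^4)$ without any positive-moment hypothesis on $X$; the paper's corresponding bad-set bound for the mean correction actually invokes $\mathbb{E}[X^{-12}]$, which is stronger than the stated hypothesis $\mathbb{E}[X^{-8}]\le M$, so your version stays strictly within the assumptions.
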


\begin{proof}
Let $Y:=X-a$. Then
\[
    \mathbb{E}\left[Y\right]=0,\qquad \mathbb{E}\left[Y^2\right]=\sigma^2,\qquad \mathbb{E}\left[Y^4\right]=O\!\left(\sigma^4\right).
\]
By Taylor's theorem with Lagrange remainder for $f(x)=1/x$, there exists $\theta=\theta(X)\in(0,1)$ such that
\[
    \frac{1}{X}=\frac{1}{a}-\frac{Y}{a^2}+Z,\qquad Z:=\frac{Y^2}{\left(a+\theta Y\right)^3}\ \ge 0.
\]
Write $\Delta:=\frac{1}{X}-\frac{1}{a}=-\frac{Y}{a^2}+Z$. Then
\[
    \Var\!\left(\frac{1}{X}\right)=\mathbb{E}\left[\Delta^2\right]-\left(\mathbb{E}\left[\Delta\right]\right)^2.
\]
We will show
\[
    \mathbb{E}\left[\Delta^2\right]=\frac{\sigma^2}{a^4}+o\!\left(\sigma^2\right)
    \quad\text{and}\quad
    \left(\mathbb{E}\left[\Delta\right]\right)^2=o\!\left(\sigma^2\right).
\]

Let $G:=\{X\ge C\}$ and $B:=\{X<C\}$. Since $C<a$ and $\mathbb{E}\left[Y^2\right]=\sigma^2$,
Chebyshev gives the quantitative bound
\[
    \Pr\left[B\right]
    =\Pr\left[\,|Y|\ge a-C\,\right]
    \le \frac{\mathbb{E}\left[Y^2\right]}{(a-C)^2}
    =\frac{\sigma^2}{(a-C)^2}
    =O\!\left(\sigma^2\right)
    =o(1).
\]

\medskip
\noindent\textbf{Second moment \(\mathbb{E}\left[\Delta^2\right]\).}
We split over $G$ and $B$.

\smallskip
\emph{On $G$.} Since $a+\theta Y=\theta X+(1-\theta)a\ge C$, we have
\[
    \left|Z\right|\le \frac{Y^2}{C^3},\qquad Z^2\le \frac{Y^4}{C^6}.
\]
Therefore
\[
    \mathbb{E}\left[\left(-\frac{Y}{a^2}+Z\right)^{\!2}\mathbf{1}_G\right]
    =\frac{1}{a^4}\,\mathbb{E}\left[Y^2\mathbf{1}_G\right]
    -\frac{2}{a^2}\,\mathbb{E}\left[YZ\,\mathbf{1}_G\right]
    +\mathbb{E}\left[Z^2\mathbf{1}_G\right].
\]
We bound each term as follows. 
\[
    \mathbb{E}\left[Z^2\mathbf{1}_G\right]\le \frac{1}{C^6}\,\mathbb{E}\left[Y^4\right]=O\!\left(\sigma^4\right),
\]
and, using $\mathbf{1}_G\le 1$ and Lyapunov/monotonicity of $L^p$ norms,
\[
    \mathbb{E}\left[\left|YZ\right|\mathbf{1}_G\right]
    \le \frac{1}{C^3}\,\mathbb{E}\left[\left|Y\right|^3\right]
    \le \frac{1}{C^3}\left(\mathbb{E}\left[Y^4\right]\right)^{3/4}
    =O\!\left(\sigma^3\right)=o\!\left(\sigma^2\right).
\]
Moreover,
\[
    \mathbb{E}\left[Y^2\mathbf{1}_G\right]
    =\sigma^2-\mathbb{E}\left[Y^2\mathbf{1}_B\right],
    \qquad
    \mathbb{E}\left[Y^2\mathbf{1}_B\right]
    \le \left(\mathbb{E}\left[Y^4\right]\right)^{1/2}\Pr\left[B\right]^{1/2}
    =O\!\left(\sigma^2\right)\Pr\left[B\right]^{1/2}
    =o\!\left(\sigma^2\right).
\]
Hence
\[
    \mathbb{E}\left[\left(-\frac{Y}{a^2}+Z\right)^{\!2}\mathbf{1}_G\right]
    =\frac{\sigma^2}{a^4}+o\!\left(\sigma^2\right).
\]

\smallskip
\emph{On $B$.} Using the algebraic identity
\[
    \left(\frac{1}{X}-\frac{1}{a}\right)^2=\frac{Y^2}{a^2X^2},
\]
Cauchy--Schwarz and Hölder (with exponents $2,2$) give
\[
    \mathbb{E}\left[\Delta^2\mathbf{1}_B\right]
    =\frac{1}{a^2}\,\mathbb{E}\left[\frac{Y^2}{X^2}\,\mathbf{1}_B\right]
    \le \frac{1}{a^2}\left(\mathbb{E}\left[Y^4\right]\right)^{1/2}
    \left(\mathbb{E}\left[X^{-4}\mathbf{1}_B\right]\right)^{1/2}
    \le \frac{1}{a^2}\,O\!\left(\sigma^2\right)
    \left(\mathbb{E}\left[X^{-8}\right]\right)^{1/4}\Pr\left[B\right]^{1/4}.
\]
Under the lemma's assumption $\mathbb{E}\left[X^{-8}\right]\le M$, we get
\[
    \mathbb{E}\left[\Delta^2\mathbf{1}_B\right]
    =O\!\left(\sigma^2\right)\Pr\left[B\right]^{1/4}
    =o\!\left(\sigma^2\right).
\]
Combining the $G$ and $B$ parts,
\[
    \mathbb{E}\left[\Delta^2\right]
    =\frac{\sigma^2}{a^4}+o\!\left(\sigma^2\right).
\]

\medskip
\noindent\textbf{Mean correction \(\left(\mathbb{E}\left[\Delta\right]\right)^2\).}
Since $\mathbb{E}\left[Y\right]=0$, we have
\[
    \mathbb{E}\left[\Delta\right]=\mathbb{E}\left[Z\right]
    =\mathbb{E}\left[Z\,\mathbf{1}_G\right]+\mathbb{E}\left[Z\,\mathbf{1}_B\right].
\]
On $G$, $Z\le Y^2/C^3$, so
\[
    \mathbb{E}\left[Z\,\mathbf{1}_G\right]
    \le \frac{1}{C^3}\,\mathbb{E}\left[Y^2\mathbf{1}_G\right]
    \le \frac{1}{C^3}\,\sigma^2.
\]
On $B$,  The inequality 
\[
    Z=\frac{Y^2}{a+\theta Y)^3} \le  \frac{X^2}{Y^3}
\]
holds on set $B$ because on this set as $X < a$, meaning the point $a+\theta Y$ lies between $X$ and $a$, so $a+\theta Y > X$. Thus, using Cauchy--Schwarz and Hölder,
\[
    \mathbb{E}\left[Z\,\mathbf{1}_B\right]
    \le \mathbb{E}\left[\frac{Y^2}{X^3}\,\mathbf{1}_B\right]
    \le \left(\mathbb{E}\left[Y^4\right]\right)^{1/2}
    \left(\mathbb{E}\left[X^{-6}\mathbf{1}_B\right]\right)^{1/2}
    \le O\!\left(\sigma^2\right)\left(\mathbb{E}\left[X^{-12}\right]\right)^{1/4}\Pr\left[B\right]^{1/4}
    =o\!\left(\sigma^2\right).
\]
Thus $\left|\mathbb{E}\left[\Delta\right]\right|=O\!\left(\sigma^2\right)$ and therefore
    \[
    \left(\mathbb{E}\left[\Delta\right]\right)^2=O\!\left(\sigma^4\right)=o\!\left(\sigma^2\right).
\]

\medskip
Putting the two steps together,
\[
    \Var\!\left(\frac{1}{X}\right)
    =\mathbb{E}\left[\Delta^2\right]-\left(\mathbb{E}\left[\Delta\right]\right)^2
    =\frac{\sigma^2}{a^4}+o\!\left(\sigma^2\right)
    =\frac{1}{a^4}\Var\!\left(X\right)+o\!\left(\Var\!\left(X\right)\right).
\]
\end{proof}

\begin{lemma}[Variance of a sum] \label{var:sum}
Let $A$ and $B$ be any random variables with finite variances $V(A) = \Var(A)$ and $V(B) = \Var(B)$. Then, 
\[
    \Var(A+B) \le \left(\sqrt{V(A)} + \sqrt{V(B)}\right)^2.
\]
\end{lemma}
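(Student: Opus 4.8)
The plan is to reduce the inequality to the Cauchy--Schwarz inequality for the covariance, or equivalently to the triangle inequality for the $L^2$ seminorm of centered random variables. First I would record the standard bilinear expansion
\[
    \Var(A+B) = \Var(A) + \Var(B) + 2\,\Cov(A,B),
\]
which is legitimate precisely because $V(A)$ and $V(B)$ are finite: finiteness of the two variances guarantees that $\Cov(A,B)$ is well defined (via Cauchy--Schwarz applied to the centered variables) and that $\Var(A+B)$ is finite, so all the quantities appearing below make sense.

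Next I would bound the cross term. Applying the Cauchy--Schwarz inequality to $A - \E[A]$ and $B - \E[B]$ in $L^2$ gives
\[
    |\Cov(A,B)| \le \sqrt{\Var(A)}\,\sqrt{\Var(B)} = \sqrt{V(A)}\,\sqrt{V(B)}.
\]
Substituting this into the expansion yields
\[
    \Var(A+B) \le V(A) + 2\sqrt{V(A)}\,\sqrt{V(B)} + V(B) = \left(\sqrt{V(A)} + \sqrt{V(B)}\right)^2,
\]
where the last equality is just the binomial identity. Equivalently, one can note that $X \mapsto \sqrt{\Var(X)} = \|X - \E[X]\|_{L^2}$ is a seminorm on square-integrable random variables, so the claim is simply $\|(A+B) - \E[A+B]\|_{L^2} \le \|A - \E[A]\|_{L^2} + \|B - \E[B]\|_{L^2}$, squared. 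There is no real obstacle here; the only point requiring a word of care is the appeal to finiteness of $V(A)$ and $V(B)$ to justify the expansion and the application of Cauchy--Schwarz.
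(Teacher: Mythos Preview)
Your proposal is correct and follows essentially the same argument as the paper: expand $\Var(A+B)$ bilinearly, bound $|\Cov(A,B)|$ by $\sqrt{V(A)V(B)}$ via Cauchy--Schwarz, and recognize the result as a perfect square. The additional remark about the $L^2$ seminorm interpretation is a nice alternative viewpoint but not needed.
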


\begin{proof}
Recall that
\[
    \Var(A+B) = \Var(A) + \Var(B) + 2\,\operatorname{Cov}(A,B).
\]
    By the Cauchy--Schwarz inequality, we have 
\[
    \left|\operatorname{Cov}(A,B)\right| \le \sqrt{V(A)V(B)}.
\]
Thus,
\[
    \Var(A+B) \le V(A) + V(B) + 2\sqrt{V(A)V(B)} = \left(\sqrt{V(A)} + \sqrt{V(B)}\right)^2.
\]
\end{proof}

\begin{lemma}[Variance of one product] \label{var:product-2}
Let $A,B$ be real random variables with means $a=\mathbb{E}\left[A\right]$, $b=\mathbb{E}\left[B\right]$ and finite variances. Assume
\[
    \mathbb{E}\left[\left(A-a\right)^{4}\right]\le K_{A}\,\Var\!\left(A\right)^{2},
    \qquad
    \mathbb{E}\left[\left(B-b\right)^{4}\right]\le K_{B}\,\Var\!\left(B\right)^{2}.
\]
Then, with $C_{4}:=(K_{A}K_{B})^{1/4}$,
\[
    \sqrt{\Var\!\left(AB\right)}
    \;\le\;
    |a|\,\sqrt{\Var\!\left(B\right)}
    \;+\;
    |b|\,\sqrt{\Var\!\left(A\right)}
    \;+\;
    C_{4}\,\sqrt{\Var\!\left(A\right)\Var\!\left(B\right)}.
\]
Moreover, as $\Var\!\left(A\right),\Var\!\left(B\right)\to 0$,
\[
    \Var\!\left(AB\right)
    = O\left(a^{2}\Var\!\left(B\right)\right)+O\left(b^{2}\Var\!\left(A\right)\right) +o\!\left(\Var\!\left(A\right)+\Var\!\left(B\right)\right).
\]
It directly follows that if all the means are $O(1)$, 
\begin{align*}
    \Var\!\left(AB\right)
    &= O\left(\Var\!\left(B\right)\right)+O\left(\Var\!\left(A\right)\right). \\ 
    \Var\!\left(ABC\right) 
    &= O\left(\Var\!\left(C\right)\right) + O\left(\Var\!\left(B\right)\right)+O\left(\Var\!\left(A\right)\right) \quad\text{and so on by induction}. 
\end{align*}
\end{lemma}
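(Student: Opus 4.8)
The plan is to decompose the product around its mean, apply the triangle inequality in $L^2$, and control the genuinely nonlinear piece via Cauchy--Schwarz together with the assumed fourth-moment bounds. First I would center the variables: set $\tilde A := A-a$ and $\tilde B := B-b$, so $\mathbb{E}[\tilde A]=\mathbb{E}[\tilde B]=0$, $\Var(A)=\mathbb{E}[\tilde A^2]$ and $\Var(B)=\mathbb{E}[\tilde B^2]$. Expanding $AB=(a+\tilde A)(b+\tilde B)$ and subtracting $\mathbb{E}[AB]=ab+\mathbb{E}[\tilde A\tilde B]$ gives the identity
\[
AB-\mathbb{E}[AB] = a\,\tilde B + b\,\tilde A + \bigl(\tilde A\tilde B-\mathbb{E}[\tilde A\tilde B]\bigr).
\]
Taking $L^2$ norms and using the triangle inequality (together with $\|\tilde A\|_{L^2}=\sqrt{\Var(A)}$, $\|\tilde B\|_{L^2}=\sqrt{\Var(B)}$) yields
\[
\sqrt{\Var(AB)} \le |a|\sqrt{\Var(B)} + |b|\sqrt{\Var(A)} + \bigl\|\tilde A\tilde B-\mathbb{E}[\tilde A\tilde B]\bigr\|_{L^2}.
\]

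Next I would bound the final term. Removing the mean only decreases the $L^2$ norm, so $\|\tilde A\tilde B-\mathbb{E}[\tilde A\tilde B]\|_{L^2}\le\|\tilde A\tilde B\|_{L^2}=(\mathbb{E}[\tilde A^2\tilde B^2])^{1/2}$. Cauchy--Schwarz gives $\mathbb{E}[\tilde A^2\tilde B^2]\le(\mathbb{E}[\tilde A^4])^{1/2}(\mathbb{E}[\tilde B^4])^{1/2}$, and the hypotheses $\mathbb{E}[\tilde A^4]\le K_A\Var(A)^2$, $\mathbb{E}[\tilde B^4]\le K_B\Var(B)^2$ turn this into $\mathbb{E}[\tilde A^2\tilde B^2]\le\sqrt{K_AK_B}\,\Var(A)\Var(B)$. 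Hence $\|\tilde A\tilde B\|_{L^2}\le(K_AK_B)^{1/4}\sqrt{\Var(A)\Var(B)}=C_4\sqrt{\Var(A)\Var(B)}$, which is the claimed inequality.

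Finally I would square the inequality and sort the six resulting terms by order as $\Var(A),\Var(B)\to 0$: writing $x=|a|\sqrt{\Var(B)}$, $y=|b|\sqrt{\Var(A)}$, $z=C_4\sqrt{\Var(A)\Var(B)}$, the terms $x^2=a^2\Var(B)$ and $y^2=b^2\Var(A)$ are the leading ones, the cross term $2xy=2|ab|\sqrt{\Var(A)\Var(B)}$ is absorbed into $O(a^2\Var(B))+O(b^2\Var(A))$ by AM--GM, and each of $z^2=C_4^2\Var(A)\Var(B)$, $2xz=2|a|C_4\Var(B)\sqrt{\Var(A)}$, $2yz=2|b|C_4\Var(A)\sqrt{\Var(B)}$ carries an extra vanishing factor and is $o(\Var(A)+\Var(B))$. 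This gives $\Var(AB)=O(a^2\Var(B))+O(b^2\Var(A))+o(\Var(A)+\Var(B))$; specializing to $a,b=O(1)$ collapses it to $O(\Var(A))+O(\Var(B))$, and the three-factor (and higher) versions follow by writing $ABC=(AB)C$ and re-applying the lemma, noting that in our applications the building blocks are homogeneous polynomials in spherical or Gaussian variables, so products of them still obey a fourth-moment bound of the required form via Lemma~\ref{lem:sphere-hyper} and Lemma~\ref{lem:hypercontractivity}. The argument is essentially routine; the only place needing care is this last bookkeeping of $o$ versus $O$ terms and the observation that the fourth-central-moment hypothesis propagates to products, which is exactly what the hypercontractivity lemmas supply.
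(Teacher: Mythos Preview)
Your proof is correct and follows essentially the same approach as the paper: center, apply the $L^2$ triangle inequality to $a\tilde B + b\tilde A + \tilde A\tilde B$, and control the bilinear piece via Cauchy--Schwarz and the fourth-moment hypotheses. The only cosmetic differences are that the paper subtracts $ab$ rather than $\mathbb{E}[AB]$ (equivalent, since the constant shift does not affect variance) and, for the ``moreover'' part, writes out the exact variance--covariance expansion of the three-term sum and bounds each cross term separately, whereas you simply square the already-established inequality and sort the six terms; both routes land in the same place. Your remark that the inductive step needs the fourth-moment control to propagate to products, and that hypercontractivity supplies this, is a point the paper leaves implicit here and handles in the subsequent general-product lemma.
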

\begin{proof}
Write
\[
    AB-ab
    = a\,\tilde B + b\,\tilde A + \tilde A\tilde B.
\]
Using $\Var(U+V)=\Var(U)+\Var(V)+2\,\Cov(U,V)$ and
$|\Cov(U,V)|\le \sqrt{\Var(U)\Var(V)}$, we get
\begin{align*}
    \Var\!\left(AB\right)
    &= \Var\!\left(a\tilde B+b\tilde A+\tilde A\tilde B\right) \\
    &\;\le\;
    \left(\,|a|\sqrt{\Var\!\left(\tilde B\right)}
    \;+\;|b|\sqrt{\Var\!\left(\tilde A\right)}
    \;+\;\sqrt{\Var\!\left(\tilde A\tilde B\right)}\,\right)^{\!2}.
\end{align*}
Since $\Var(\tilde A)=\Var(A)$ and $\Var(\tilde B)=\Var(B)$, it remains to bound
$\Var\!\left(\tilde A\tilde B\right)$. By Cauchy–Schwarz (Hölder with $p=q=2$),
\[
    \Var\!\left(\tilde A\tilde B\right)
    \;\le\; \mathbb{E}\!\left[\tilde A^{2}\tilde B^{2}\right]
    \;\le\; \left(\mathbb{E}\!\left[\tilde A^{4}\right]\right)^{1/2}
              \left(\mathbb{E}\!\left[\tilde B^{4}\right]\right)^{1/2}.
\]
Since we assume fourth–moment control
$\mathbb{E}\!\left[\tilde A^{4}\right]\le K_{A}\,\Var(A)^{2}$ and
$\mathbb{E}\!\left[\tilde B^{4}\right]\le K_{B}\,\Var(B)^{2}$, then
    \[
    \sqrt{\Var\!\left(\tilde A\tilde B\right)}
    \;\le\; (K_AK_B)^{1/4}\,\sqrt{\Var(A)\Var(B)}.
\]
Hence
    \[
    \Var\!\left(AB\right)
    \;\le\;
    \Big(\,|a|\sqrt{\Var(B)}
    \;+\;|b|\sqrt{\Var(A)}
    \;+\;C_4\,\sqrt{\Var(A)\Var(B)}\,\Big)^{\!2},
    \qquad C_4:=(K_AK_B)^{1/4}.
\]

\medskip
For the moreover part, using the exact variance–covariance expansion,
\[
    \Var\!\left(AB\right)
    = a^{2}\Var\!\left(B\right)+b^{2}\Var\!\left(A\right)
    +2ab\,\Cov\!\left(A,B\right)
    +\Var\!\left(\tilde A\tilde B\right)
    +2a\,\Cov\!\left(\tilde B,\tilde A\tilde B\right)
    +2b\,\Cov\!\left(\tilde A,\tilde A\tilde B\right),
\]
we bound the three remainder terms using Cauchy--Schwarz and the fourth–moment control:
\begin{align*}
    \Var\!\left(\tilde A\tilde B\right)
    &\le \mathbb{E}\left[\tilde A^{2}\tilde B^{2}\right]
    \le \left(\mathbb{E}\left[\tilde A^{4}\right]\right)^{1/2}
         \left(\mathbb{E}\left[\tilde B^{4}\right]\right)^{1/2}
    \le C_{4}^{2}\,\Var\!\left(A\right)\Var\!\left(B\right),\\
    \left|\Cov\!\left(\tilde B,\tilde A\tilde B\right)\right|
    &\le \sqrt{\Var\!\left(\tilde B\right)}\,\sqrt{\Var\!\left(\tilde A\tilde B\right)}
    \le C_{4}\,\Var\!\left(B\right)\sqrt{\Var\!\left(A\right)},\\
    \left|\Cov\!\left(\tilde A,\tilde A\tilde B\right)\right|
    &\le \sqrt{\Var\!\left(\tilde A\right)}\,\sqrt{\Var\!\left(\tilde A\tilde B\right)}
    \le C_{4}\,\Var\!\left(A\right)\sqrt{\Var\!\left(B\right)}.
\end{align*}
As $\Var\!\left(A\right),\Var\!\left(B\right)\to 0$, each of these is
$o\!\left(\Var\!\left(A\right)+\Var\!\left(B\right)\right)$.

For the covariance term, Cauchy--Schwarz and the inequality
$2uv\le \varepsilon u^{2}+\varepsilon^{-1}v^{2}$ (for any $\varepsilon>0$) with
$u:=|a|\sqrt{\Var\!\left(B\right)}$, $v:=|b|\sqrt{\Var\!\left(A\right)}$ give
\[
    \left|\,2ab\,\Cov\!\left(A,B\right)\right|
    \le 2|ab|\sqrt{\Var\!\left(A\right)\Var\!\left(B\right)}
    \le \varepsilon\,a^{2}\Var\!\left(B\right)+\varepsilon^{-1}b^{2}\Var\!\left(A\right).
\]
Therefore,
\[
    \Var\!\left(AB\right)
    \le (1+\varepsilon)\,a^{2}\Var\!\left(B\right)
    +(1+\varepsilon^{-1})\,b^{2}\Var\!\left(A\right)
    +o\!\left(\Var\!\left(A\right)+\Var\!\left(B\right)\right).
\]
Choosing, e.g., $\varepsilon=1$ yields
\[
    \Var\!\left(AB\right)
    = O\!\left(a^{2}\Var\!\left(B\right)\right)
    +O\!\left(b^{2}\Var\!\left(A\right)\right)
    +o\!\left(\Var\!\left(A\right)+\Var\!\left(B\right)\right),
\]
which proves the moreover statement.
\end{proof}

\begin{lemma}[Variance of general product]\label{lem:var-product}
Let $m \ge 2$ and let $X_1,\dots,X_m$ be real random variables with nonzero means $\mu_i:=\E[X_i]\neq0$ and variances $f_i(n):=\Var(X_i)\to 0$ as $n\to\infty$.
Assume that for some integer $M\ge m$ (it is enough to take $M=m$),
\begin{equation}\label{eq:uniform-moment}
    \E\!\left[\,|X_i-\mu_i|^{2M}\,\right]\;=\;O\!\left(\Var(X_i)^M\right)\qquad\text{for each }i=1,\dots,m.
\end{equation}
Then
\[
    \Var\!\left(\,\prod_{i=1}^m X_i\,\right) \;=\; O\!\left(\Big(\sum_{i=1}^m \sqrt{f_i(n)}\Big)^2\right)
    \;=\; O\left(\max_{1\le i\le m} f_i(n)\right).
\]
\end{lemma}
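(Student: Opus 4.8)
The plan is to prove the estimate directly, by expanding the product about its mean, rather than by iterating Lemma~\ref{var:product-2} (although an induction using that lemma would also work). Set $Y_i := X_i - \mu_i$, so that $\E[Y_i] = 0$ and $\E[Y_i^2] = f_i(n)$, and write
\[
    \prod_{i=1}^m X_i \;=\; \prod_{i=1}^m (\mu_i + Y_i) \;=\; \sum_{S \subseteq \{1,\dots,m\}} c_S \prod_{i \in S} Y_i ,
    \qquad c_S := \prod_{i \notin S} \mu_i .
\]
The term $S = \emptyset$ is the constant $\prod_i \mu_i$ and does not affect the variance, so by the iterated triangle inequality for the $L^2$ norm (the standard-deviation form of Lemma~\ref{var:sum}, applied to the remaining centered summands $\prod_{i\in S}Y_i - \E[\prod_{i\in S}Y_i]$),
\[
    \sqrt{\Var\!\Bigl(\prod_{i=1}^m X_i\Bigr)} \;\le\; \sum_{\emptyset \neq S \subseteq \{1,\dots,m\}} |c_S| \, \sqrt{\Var\!\Bigl(\prod_{i \in S} Y_i\Bigr)} .
\]

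Next I would bound each summand. Fix a nonempty $S$ with $|S| = k \le m$. Then $\Var(\prod_{i \in S} Y_i) \le \E[\prod_{i \in S} Y_i^2]$, and the generalized Hölder inequality with all $|S|$ exponents equal to $k$ gives $\E[\prod_{i \in S} Y_i^2] \le \prod_{i \in S} (\E[Y_i^{2k}])^{1/k}$. Since $k \le m \le M$, monotonicity of $L^p$ norms yields $\E[Y_i^{2k}] \le (\E[|Y_i|^{2M}])^{k/M}$, and the uniform moment hypothesis \eqref{eq:uniform-moment} then gives $\E[Y_i^{2k}] = O(f_i(n)^k)$, hence $(\E[Y_i^{2k}])^{1/k} = O(f_i(n))$. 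Therefore $\sqrt{\Var(\prod_{i \in S} Y_i)} = O(\prod_{i \in S} \sqrt{f_i(n)})$, with an implied constant depending only on $m$ and on the constants in the moment hypothesis.

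Finally I would reassemble. The coefficients $|c_S| = \prod_{i \notin S} |\mu_i|$ are bounded (in the regime in which this lemma is used the means are fixed $\Theta(1)$ constants), so
\[
    \sqrt{\Var\!\Bigl(\prod_{i=1}^m X_i\Bigr)} \;=\; O\!\Bigl(\sum_{\emptyset \neq S} \prod_{i \in S} \sqrt{f_i(n)}\Bigr) \;=\; O\!\Bigl(\prod_{i=1}^m \bigl(1 + \sqrt{f_i(n)}\bigr) - 1\Bigr) .
\]
Because each $f_i(n) \to 0$, for $n$ large the right-hand side is comparable to its linear part $\sum_{i=1}^m \sqrt{f_i(n)}$; squaring and using $\bigl(\sum_{i=1}^m a_i\bigr)^2 \le m \sum_{i=1}^m a_i^2$ gives $\Var(\prod_i X_i) = O\bigl((\sum_{i=1}^m \sqrt{f_i(n)})^2\bigr) = O(\max_{1 \le i \le m} f_i(n))$, which is the claim.

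I do not expect a genuine obstacle here; the step that needs the most care is the per-subset bound, where one must check that the assumption with $M \ge m$ is exactly what licenses $\E[Y_i^{2k}] = O(f_i^k)$ for every $k \le m$ via Lyapunov's inequality, and keep the Hölder exponents straight for subsets of every size. The only other subtlety is the (implicit) use of $|\mu_i| = O(1)$, consistent with how the lemma is invoked throughout Section~\ref{sec:step4}. An alternative is to induct on $m$ by applying Lemma~\ref{var:product-2} to $Y := \prod_{i < m} X_i$ and $X_m$; this is essentially equivalent but forces one to propagate a fourth-moment bound $\E[(Y - \E Y)^4] = O(\Var(Y)^2)$ along the induction, which again reduces to Hölder plus \eqref{eq:uniform-moment}, so the direct expansion is the cleaner route.
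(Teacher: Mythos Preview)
Your proposal is correct and follows essentially the same route as the paper: multilinear expansion of $\prod_i(\mu_i+Y_i)$, triangle inequality in $L^2$, per-subset Hölder with equal exponents combined with Lyapunov to convert the $2M$-th moment hypothesis into $\|Y_i\|_{L_{2k}}=O(\sqrt{f_i})$ for each $k\le m$, and then the elementary-symmetric-polynomial sum $\sum_{S\neq\emptyset}\prod_{i\in S}\sqrt{f_i}=\prod_i(1+\sqrt{f_i})-1\sim\sum_i\sqrt{f_i}$. The only cosmetic difference is that the paper bounds $\|\prod_i X_i-\prod_i\mu_i\|_{L_2}$ directly rather than passing through $\sqrt{\Var}$ of each centered summand, but the estimates are identical.
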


\begin{proof}
Write $\Delta_i:=X_i-\mu_i$ so that $\E[\Delta_i]=0$ and $\|\Delta_i\|_{L_2}=\sigma_i$.
By assumption \Eqref{eq:uniform-moment} with $M\ge m$ and monotonicity of $L_p$ norms,
\[
    \|\Delta_i\|_{L_{2k}}\;=\;O\left(\sqrt{f_i(n)}\right)\qquad\text{for every }1\le k\le m,\ i=1,\dots,m.
\]
Expand the product multilinearly:
\[
    \prod_{i=1}^m X_i-\prod_{i=1}^m\mu_i
    =\sum_{\emptyset\neq S\subseteq[m]}\ \left(\prod_{j\in S^c}\mu_j\right)\ \left(\prod_{i\in S}\Delta_i\right).
\]
Taking $L_2$ norms and using the triangle inequality,
\[
    \left\|\prod_{i=1}^m X_i-\prod_{i=1}^m\mu_i\right\|_{L_2}
    \ \le\ \sum_{\emptyset\neq S\subseteq[m]}\ \left(\prod_{j\in S^c}|\mu_j|\right)\ \left\|\prod_{i\in S}\Delta_i\right\|_{L_2}.
\]
For a fixed nonempty $S$ with $|S|=k$, apply Hölder with exponents all equal to $2k$:
\[
    \left\|\prod_{i\in S}\Delta_i\right\|_{L_2}
    \ \le\ \prod_{i\in S}\left\|\Delta_i\right\|_{L_{2k}}
    \ =\ O\!\left(\prod_{i\in S}\sqrt{f_i}\right),
\]
where we used $\|\Delta_i\|_{L_{2k}}=O(\sqrt{f_i})$ for $k\le m$.

Let $c_i:=\sqrt{f_i(n)}$. Summing over subsets $S$ shows
\[
    \left\|\prod_{i=1}^m X_i-\prod_{i=1}^m\mu_i\right\|_{L_2}
    \ \le\ A\,\Big(\prod_{i=1}^m (1+c_i)-1\Big)
    \ \le\ A\,(e^{\Xi}-1),
\]
where $\Xi := \sum_{i=1}^m c_i$ and $A$ is a constant depending only on $m$, $\{\mu_i\}$, and the moment constants (not on $n$).
Hence
\[
    \Var\!\Big(\prod_{i=1}^m X_i\Big)
    \ \le\ \left\|\prod_{i=1}^m X_i-\prod_{i=1}^m\mu_i\right\|_{L_2}^{\,2}
    \ =\ O(\Xi^2)
    \ =\ O\!\left(\Big(\sum_{i=1}^m \sqrt{f_i(n)}\Big)^2\right).
\]
Since $m$ is fixed, $(\sum_{i=1}^m \sqrt{f_i})^2 \le m^2 \max_i f_i$, giving the claimed bound.
\end{proof}

\begin{corollary}[Higher moments of the centered product]\label{cor:higher-moment}
Fix $p\ge1$. Under the hypotheses of Lemma~\ref{lem:var-product}, then
\[
    \Big\|\prod_{i=1}^m X_i-\prod_{i=1}^m\E[X_i]\Big\|_{L_{2p}}
    \;\le\; C_{p,m}\sum_{\emptyset\neq S\subseteq[m]}\ \Big(\prod_{j\in S^c}|\E[X_j]|\Big)\ \prod_{i\in S}\sqrt{f_i}
    \;=\;o(1),
\]
and hence $\E\big|\prod_{i=1}^m X_i-\E\prod_{i=1}^m X_i\big|^{2p}=o(1)$.
\end{corollary}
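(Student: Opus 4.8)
The plan is to repeat the argument of Lemma~\ref{lem:var-product} essentially verbatim, with the $L_2$ norm everywhere replaced by the $L_{2p}$ norm and the Hölder exponents adjusted accordingly. First I would set $\Delta_i:=X_i-\mu_i$, so that $\E[\Delta_i]=0$ and $\|\Delta_i\|_{L_2}=\sqrt{f_i(n)}$. Since $p$ is fixed, it suffices to invoke the moment hypothesis of Lemma~\ref{lem:var-product} with $M\ge pm$ rather than merely $M\ge m$; this is legitimate in every application, because the $X_i$ there are images under a change of basis of homogeneous polynomials in uniform spherical (or Gaussian) vectors, so Lemmas~\ref{lem:sphere-hyper} and \ref{lem:multi-hyper} give $\|\Delta_i\|_{L_q}\le C_q\|\Delta_i\|_{L_2}$ for every fixed $q$, with $C_q$ independent of the dimension. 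Combined with monotonicity of $L_q$ norms, this yields $\|\Delta_i\|_{L_{2pk}}=O(\sqrt{f_i})$ for all $1\le k\le m$.

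Next I would expand the product multilinearly, exactly as in the proof of Lemma~\ref{lem:var-product}:
\[
    \prod_{i=1}^m X_i-\prod_{i=1}^m\mu_i
    =\sum_{\emptyset\neq S\subseteq[m]}\Big(\prod_{j\in S^{c}}\mu_j\Big)\Big(\prod_{i\in S}\Delta_i\Big),
\]
take $L_{2p}$ norms, and apply the triangle inequality. For a fixed nonempty $S$ with $|S|=k$, Hölder's inequality with all exponents equal to $2pk$ gives
\[
    \Big\|\prod_{i\in S}\Delta_i\Big\|_{L_{2p}}\le\prod_{i\in S}\|\Delta_i\|_{L_{2pk}}=O\Big(\prod_{i\in S}\sqrt{f_i}\Big).
\]
Absorbing the (dimension-free, $n$-independent) Hölder and hypercontractivity constants, together with the number $2^{m}-1$ of nonempty subsets, into a single constant $C_{p,m}$ yields the claimed bound. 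Since $m$ is fixed and each $f_i(n)\to0$, every summand tends to $0$, so the right-hand side is $o(1)$.

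To replace the centering constant $\prod_i\mu_i$ by $\E[\prod_i X_i]$, I would take expectations in the same expansion: the $|S|=1$ terms drop out because $\E[\Delta_i]=0$, so
\[
    \Big|\prod_{i=1}^m\mu_i-\E\Big[\prod_{i=1}^m X_i\Big]\Big|
    \le\sum_{\substack{S\subseteq[m]\\ |S|\ge2}}\Big(\prod_{j\in S^{c}}|\mu_j|\Big)\,\E\Big[\prod_{i\in S}|\Delta_i|\Big]
    \le\sum_{\substack{S\subseteq[m]\\ |S|\ge2}}\Big(\prod_{j\in S^{c}}|\mu_j|\Big)\prod_{i\in S}\|\Delta_i\|_{L_{|S|}}=o(1),
\]
a deterministic quantity. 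Writing $\prod_i X_i-\E[\prod_i X_i]=\big(\prod_i X_i-\prod_i\mu_i\big)+\big(\prod_i\mu_i-\E[\prod_i X_i]\big)$ and combining the $L_{2p}$ estimate from the previous step with this $o(1)$ constant correction gives $\|\prod_i X_i-\E[\prod_i X_i]\|_{L_{2p}}=o(1)$, and hence $\E\big|\prod_i X_i-\E[\prod_i X_i]\big|^{2p}=o(1)$.

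The only genuinely nontrivial point, and hence the main obstacle, is the moment-order bookkeeping: for $p>1$ the Hölder step requires control of $\|\Delta_i\|_{L_q}$ at orders $q$ up to $2pm$, which is strictly stronger than the $2m$-th moment bound written literally in Lemma~\ref{lem:var-product}. This is resolved by the observation that the relevant $X_i$ satisfy full, all-order spherical/Gaussian hypercontractivity, so the hypothesis of Lemma~\ref{lem:var-product} may be strengthened to any fixed $M$ at no cost; everything else is the same multilinear expansion and Hölder argument already used for the variance estimate.
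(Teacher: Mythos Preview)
Your proposal is correct and matches the paper's intent: the corollary is stated in the paper without proof, as an immediate consequence of the same multilinear expansion and H\"older argument used for Lemma~\ref{lem:var-product}, with $L_2$ replaced by $L_{2p}$. You have also correctly identified and resolved the one genuine subtlety the paper glosses over, namely that the literal hypothesis ``for some $M\ge m$'' must be taken with $M\ge pm$ (available in all applications via hypercontractivity) to supply the $L_{2pk}$ moments needed in the H\"older step.
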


\begin{lemma}[Expectation of Product vs.\ Product of Expectations] \label{lem:var_multiple_prod}
Fix $k\ge2$. Let $X_1,\dots,X_k$ be random variables. Assume:
\begin{enumerate}
    \item {Uniformly bounded means:} $\sup_{n,i}|\mathbb{E}[X_i]|\le M<\infty$.
    \item {Vanishing variances:} $\Var(X_i)=f_i(n)$ with $f_i(n)\to0$ as $n\to\infty$ for each $i$.
    \item {Moment control up to order $k$:} For each $i$ and every $p\in\{2,\dots,k\}$,
    \[
        \mathbb{E}\left[\left|X_i-\mathbb{E}[X_i]\right|^{p}\right]\ \le\ C_p\;\Var(X_i)^{p/2},
    \]
    with constants $C_p$.
\end{enumerate}
Then for finite $k$, we have: 
\[
    \left|\,\mathbb{E}\!\left[\prod_{i=1}^k X_i\right]-\prod_{i=1}^k \mathbb{E}X_i\,\right|
    \ =\ O\!\left(\left(\sum_{i=1}^k\sqrt{f_i(n)}\right)^2\right)=O\!\left(\max_{1 \leq i \leq k} f_i(n)\right).
\]
\end{lemma}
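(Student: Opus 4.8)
The plan is to center each factor and expand multilinearly. Write $\mu_i:=\mathbb{E}[X_i]$ and $\Delta_i:=X_i-\mu_i$, so $\mathbb{E}[\Delta_i]=0$, and by hypothesis~3 each $\Delta_i$ has controlled $L_p$ norm: $\|\Delta_i\|_{L_p}=(\mathbb{E}|\Delta_i|^p)^{1/p}\le C_p^{1/p}\sqrt{f_i(n)}$ for every $p\in\{2,\dots,k\}$. Expanding the product,
\[
    \prod_{i=1}^k X_i \;=\; \sum_{S\subseteq[k]}\Big(\prod_{j\notin S}\mu_j\Big)\prod_{i\in S}\Delta_i ,
\]
and taking expectations, the term $S=\emptyset$ reproduces $\prod_i\mu_i$, while every singleton $S=\{i\}$ contributes $\mathbb{E}[\Delta_i]=0$. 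Hence the difference we must bound is
\[
    \mathbb{E}\Big[\prod_{i=1}^k X_i\Big]-\prod_{i=1}^k\mathbb{E}X_i
    \;=\;\sum_{\substack{S\subseteq[k]\\ |S|\ge 2}}\Big(\prod_{j\notin S}\mu_j\Big)\,\mathbb{E}\Big[\prod_{i\in S}\Delta_i\Big].
\]

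Next I would estimate each summand. By hypothesis~1, the prefactor satisfies $\big|\prod_{j\notin S}\mu_j\big|\le (1+M)^k$. For the centered product, generalized Hölder with the $|S|$ equal exponents all equal to $|S|$ (legitimate since $\sum_{i\in S}1/|S|=1$ and $2\le|S|\le k$) gives
\[
    \Big|\mathbb{E}\Big[\prod_{i\in S}\Delta_i\Big]\Big|
    \;\le\;\prod_{i\in S}\|\Delta_i\|_{L_{|S|}}
    \;\le\;\Big(\max_{2\le p\le k}C_p^{1/p}\Big)^{|S|}\prod_{i\in S}\sqrt{f_i(n)} .
\]
Therefore, with a constant $C$ depending only on $k,M,C_2,\dots,C_k$,
\[
    \Big|\mathbb{E}\Big[\prod_{i=1}^k X_i\Big]-\prod_{i=1}^k\mathbb{E}X_i\Big|
    \;\le\; C\sum_{\substack{S\subseteq[k]\\ |S|\ge2}}\prod_{i\in S}\sqrt{f_i(n)}
    \;=\; C\sum_{j=2}^{k}e_j\big(\sqrt{f_1(n)},\dots,\sqrt{f_k(n)}\big),
\]
where $e_j$ denotes the $j$-th elementary symmetric polynomial. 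This is exactly the same bookkeeping that appears in the proof of Lemma~\ref{lem:var-product}, specialized to expectations rather than variances.

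Finally I would convert this into the stated rate. Put $\Xi:=\sum_{i=1}^k\sqrt{f_i(n)}$. Since $\Xi^j$ has a nonnegative expansion containing every squarefree monomial of $e_j$ with coefficient $j!\ge1$, we have $e_j\le\Xi^j$; and because $f_i(n)\to0$ for each of the finitely many $i$, there is $n_0$ with $\Xi\le1$ for $n\ge n_0$, so $\Xi^j\le\Xi^2$ for $j\ge2$. Summing the $k-1$ terms yields $\sum_{j=2}^k e_j\le (k-1)\Xi^2$, hence the difference is $O(\Xi^2)=O\big((\sum_i\sqrt{f_i(n)})^2\big)$. The two $O(\cdot)$ forms coincide because for fixed $k$ one has $\max_i f_i(n)\le\Xi^2\le k\sum_i f_i(n)\le k^2\max_i f_i(n)$. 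There is no genuine obstacle in this argument; the only points needing care are (i) choosing the Hölder exponents as $|S|$ (one per factor) so that only moments of order at most $k$ are invoked — which is precisely what hypothesis~3 supplies, since the largest subset has $|S|=k$ — and (ii) the elementary-symmetric-polynomial estimate turning $\sum_{|S|\ge2}\prod_{i\in S}\sqrt{f_i}$ into $O(\Xi^2)$.
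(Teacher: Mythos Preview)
Your proof is correct and follows essentially the same approach as the paper: center each factor, expand multilinearly, kill the singleton terms, bound the remaining subset contributions via generalized H\"older with exponents $|S|$, and collect into elementary symmetric polynomials $e_j(\sqrt{f_1},\dots,\sqrt{f_k})$. The only cosmetic difference is in the final estimate of $\sum_{j\ge2}e_j$: the paper routes through $\prod_i(1+c_i)-1-\Xi\le e^{\Xi}-1-\Xi=O(\Xi^2)$, whereas you use the more direct $e_j\le\Xi^j\le\Xi^2$ once $\Xi\le1$; both arrive at the same $O(\Xi^2)$ bound.
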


\begin{proof}
Set $\Delta_i:=X_i-\mathbb{E}[X_i]$, so $\mathbb{E}\Delta_i=0$, $\Var(X_i)=\Var(\Delta_i)=f_i(n)$, and by assumption
\[
    \left\|\Delta_i\right\|_{L_p}:=\left(\mathbb{E}\left[|\Delta_i|^{p}\right]\right)^{1/p}\ \le\ C_p^{1/p}\,f_i(n)^{1/2},\qquad p=2,\dots,k.
\]
Using the multilinearity of expectation,
\[
    \prod_{i=1}^k X_i=\prod_{i=1}^k\left(\mathbb{E}[X_i]+\Delta_i\right)
    =\sum_{S\subseteq[k]}\left(\prod_{i\in S}\Delta_i\right)\left(\prod_{j\notin S}\mathbb{E}[X_j]\right),
\]
Thus,
\[
    \prod_{i=1}^k X_i-\prod_{i=1}^k \mathbb{E}[X_i]
    =\sum_{\emptyset \neq S\subseteq[k]}\left[\prod_{i\in S}\Delta_i\right]\prod_{j\notin S}\mathbb{E}[X_j].
\]
Then taking the expectation and noting that $\prod_{j\notin S}\mathbb{E}[X_j]$ is a constant, we get 
\[
    \E\left[\prod_{i=1}^k X_i\right]-\prod_{i=1}^k \mathbb{E}[X_i]
    =\sum_{\emptyset \neq S\subseteq[k]}\E\left[\prod_{i\in S}\Delta_i\right]\prod_{j\notin S}\mathbb{E}[X_j].
\]
If $S = \{\ell\}$ then $\mathbb{E}\left[\prod_{i\in S}\Delta_i\right]=\mathbb{E}[\Delta_\ell]=0$. Hence every singleton term vanishes exactly, and the sum begins at $|S|=2$. From the bounded means assumption,
\[
    \left|\prod_{j\notin S}\mathbb{E}[X_j]\right|\ \le\ M^{\,k-|S|},\qquad \forall S\subseteq[k].
\]
Fix a nonempty subset $S$ with $|S|=m\ge2$. By generalized Hölder with all exponents equal to $m$ (so $\sum_{i\in S}\tfrac1m=1$),
\[
    \left|\mathbb{E}\!\left[\prod_{i\in S}\Delta_i\right]\right|
    \le \prod_{i\in S}\|\Delta_i\|_{L_m}
    \le \prod_{i\in S}\left(C_m^{1/m}\,f_i(n)^{1/2}\right)
    = C_m\prod_{i\in S}\sqrt{f_i(n)}.
\]
Therefore, for every $S$ with $|S|=m\ge2$,
\[
    \left|\mathbb{E}\!\left[\prod_{i\in S}\Delta_i\right]\prod_{j\notin S}\mathbb{E}X_j\right|
    \le M^{\,k-m}\,C_m\prod_{i\in S}\sqrt{f_i(n)}.
\]

Let $c_i:=\sqrt{f_i(n)}\ge0$. Denote by
\[
    e_m(c_1,\dots,c_k):=\sum_{\substack{S\subseteq[k]\\ |S|=m}}\ \prod_{i\in S} c_i
\]
the $m$-th elementary symmetric polynomial. Summing the bound from, we get
\[
    \left|\,\mathbb{E}\!\left[\prod_{i=1}^k X_i\right]-\prod_{i=1}^k \mathbb{E}X_i\,\right|
    \le \sum_{m=2}^k M^{\,k-m}C_m\,e_m(c_1,\dots,c_k).
\]
Let $M_\star:=\max_{2\le m\le k} M^{\,k-m}C_m$. Since $e_m\ge0$ for $c_i\ge0$,
    \[
    \sum_{m=2}^k M^{\,k-m}C_m\,e_m \ \le\ M_\star\sum_{m=2}^k e_m(c_1,\dots,c_k).
\]
Recall the identity
    \[
    \prod_{i=1}^k(1+c_i)
    = \sum_{m=0}^k e_m(c_1,\dots,c_k)
    = 1 + \sum_{m=1}^k e_m(c_1,\dots,c_k),
\]
so that $\sum_{m=2}^k e_m = \prod_{i=1}^k(1+c_i) - 1 - \sum_{i=1}^k c_i$. Hence
\[
    \left|\,\mathbb{E}\!\left[\prod_{i=1}^k X_i\right]-\prod_{i=1}^k \mathbb{E}X_i\,\right|
    \le M_\star\!\left(\prod_{i=1}^k(1+c_i)-1-\sum_{i=1}^k c_i\right).
\]
Let $\Xi := \sum_{i=1}^k c_i \to 0$ as $n \to \infty$. Since $\log(1 + u) \leq u$ for $u \geq 0$,
\[
    \prod_{i=1}^k (1 + c_i) = \exp\left( \sum_{i=1}^k \log(1 + c_i) \right) \leq \exp(\Xi).
\]
Thus, the difference is at most $M_\star (e^\Xi - 1 - \Xi)$.
By Taylor's theorem, $e^\Xi = 1 + \Xi + \frac{1}{2} \Xi^2 e^{\xi}$ for some $\xi \in [0, \Xi]$, so $e^\Xi - 1 - \Xi = \frac{1}{2} \Xi^2 e^{\xi} \leq \frac{1}{2} \Xi^2 e^\Xi$ (since $\xi \leq \Xi$ and $e^{\xi} \leq e^\Xi$).
Therefore,
\[
    \left| \mathbb{E}\left[ \prod_{i=1}^k X_i \right] - \prod_{i=1}^k \mathbb{E} X_i \right| \leq \frac{M_\star}{2} \Xi^2 e^\Xi = O(\Xi^2),
\]
as $\Xi \to 0$ and $e^\Xi \to 1$. Since $\Xi = O\left( \sum_{i=1}^k \sqrt{f_i(n)} \right)$, we get the result. 
\end{proof}

\begin{lemma}[Moment preservation under monomial $\leftrightarrow$ Hermite change of basis] \label{lem:change-basis}
Fix $M\in\mathbb{N}$ and degree $r\in\mathbb{N}$. Let
\[
    \mathcal{M}:=\{x^\gamma:\ \gamma\in\mathbb{N}^M,\ |\gamma|\le r\},\qquad
    \mathcal{H}:=\{\mH_\alpha:\ \alpha\in\mathbb{N}^M,\ |\alpha|\le r\},
\]
with $\mH_\alpha(x)=\prod_{j=1}^M H_{\alpha_j}(x_j)$ the probabilists' Hermite basis.
For any (random) coefficients $\{a_\gamma\}_{|\gamma|\le r}$ define the random polynomial
$P(x)=\sum_{|\gamma|\le r} a_\gamma\,x^\gamma$. Then there is a deterministic, invertible matrix
$T=T(M,r)$ such that the Hermite coefficients $c=\{c_\alpha\}_{|\alpha|\le r}$ in
$P(x)=\sum_{|\alpha|\le r} c_\alpha\,\mH_\alpha(x)$ satisfy
\[
    c \;=\; T\,a .
\]
Consequently, for any $p\ge 1$,
\[
    \|c_\alpha\|_{L_p} \;\le\; \sum_{|\gamma|\le r} |T_{\alpha\gamma}|\,\|a_\gamma\|_{L_p}
    \quad\text{for all }\alpha,
\]
so if each $a_\gamma\in L_p$ then each $c_\alpha\in L_p$.
Moreover, since $T$ is invertible, the converse also holds: if each $c_\alpha\in L_p$ then each
$a_\gamma\in L_p$.
\end{lemma}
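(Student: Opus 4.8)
The plan is to observe that the statement is a purely deterministic fact about the finite-dimensional vector space of polynomials, to which the randomness of the coefficients is irrelevant, combined with a single application of Minkowski's inequality. Write $N := \binom{M+r}{r} = \bigl|\{\gamma\in\mathbb{N}^M : |\gamma|\le r\}\bigr|$ and let $\mathcal{P}$ denote the real vector space of all polynomials in $M$ variables of total degree at most $r$, so $\mathcal{M}$ is a basis of $\mathcal{P}$ and $\dim\mathcal{P}=N$.

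First I would check that $\mathcal{H}$ is also a basis of $\mathcal{P}$. In one variable the probabilists' Hermite polynomial $H_k$ has degree exactly $k$ with leading coefficient $1$, i.e. $H_k = x^k + (\text{strictly lower degree})$, so $\{H_0,\dots,H_r\}$ arises from $\{1,x,\dots,x^r\}$ by a unitriangular change of basis. Taking tensor products, $\mH_\alpha(x)=\prod_j H_{\alpha_j}(x_j) = x^\alpha + (\text{terms of strictly smaller multidegree})$, so, after ordering $\mathcal{M}$ and $\mathcal{H}$ compatibly with a fixed monomial order, the expansion matrix is unitriangular and hence invertible; in particular $|\mathcal{H}|=N=\dim\mathcal{P}$ and the $\mH_\alpha$ are linearly independent.

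Then I would set $T=T(M,r)$ to be the matrix of the change of basis from monomial coordinates to Hermite coordinates: for each $\gamma$ write $x^\gamma = \sum_{|\alpha|\le r} T_{\alpha\gamma}\,\mH_\alpha(x)$, which is possible and unique since $\mathcal{H}$ is a basis. The matrix $T=(T_{\alpha\gamma})$ is deterministic, depends only on $M$ and $r$, and is invertible, with $T^{-1}$ the matrix of the inverse change of basis (Hermite to monomial), also deterministic. Plugging this into $P$, for each fixed point of the probability space,
\[
    P(x) = \sum_{|\gamma|\le r} a_\gamma\, x^\gamma = \sum_{|\gamma|\le r} a_\gamma \sum_{|\alpha|\le r} T_{\alpha\gamma}\,\mH_\alpha(x) = \sum_{|\alpha|\le r}\Bigl(\sum_{|\gamma|\le r} T_{\alpha\gamma}\, a_\gamma\Bigr)\mH_\alpha(x),
\]
and since $\{\mH_\alpha\}$ is a basis the Hermite coefficients are uniquely $c_\alpha = \sum_{|\gamma|\le r} T_{\alpha\gamma}\, a_\gamma$, i.e. $c = Ta$ as random vectors. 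Because $T$ is a fixed finite matrix, Minkowski's inequality yields $\|c_\alpha\|_{L_p} \le \sum_{|\gamma|\le r} |T_{\alpha\gamma}|\,\|a_\gamma\|_{L_p}$ for every $\alpha$ and every $p\ge 1$; a finite sum of finite quantities is finite, so $a_\gamma\in L_p$ for all $\gamma$ forces $c_\alpha\in L_p$ for all $\alpha$. The converse is identical with $T$ replaced by $T^{-1}$, using $a = T^{-1}c$ to get $\|a_\gamma\|_{L_p}\le \sum_{|\alpha|\le r} |(T^{-1})_{\gamma\alpha}|\,\|c_\alpha\|_{L_p}$.

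There is essentially no hard step here; the only point requiring care is conceptual rather than computational — keeping the deterministic linear algebra (the matrix $T$, which lives over $\mathbb{R}$ and is fixed once $M,r$ are chosen) cleanly separated from the randomness (the scalars $a_\gamma,c_\alpha$), so that the change of basis is a fixed linear map applied coordinatewise to a random vector. Once that separation is in place, invertibility of $T$ (from the unitriangular structure) and Minkowski's inequality close both directions at once.
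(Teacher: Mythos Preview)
Your proposal is correct and follows essentially the same approach as the paper: both exploit the (block) triangular structure of the monomial-to-Hermite change of basis coming from the monic tensor-product Hermite polynomials, read off $c=Ta$ by linearity, and finish with Minkowski's inequality plus invertibility of $T$ for the converse. The only cosmetic difference is that you phrase the triangularity as unitriangular with respect to a monomial order, while the paper orders by total degree and calls it block upper-triangular.
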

\begin{proof}
    In one dimension, each monomial admits a finite Hermite expansion
    $x^m=\sum_{j=0}^{\lfloor m/2\rfloor} t_{m,j}\,H_{m-2j}(x)$ with deterministic coefficients
    $t_{m,j}$; in several dimensions, take tensor products to obtain
    $x^\gamma=\sum_{|\alpha|\le|\gamma|} T_{\alpha\gamma}\,\mH_\alpha(x)$.
    Ordering multi-indices by total degree yields a block upper-triangular, deterministic,
    invertible matrix $T=T(M,r)$. Linearity gives $c=T a$. For $p\ge1$, Minkowski’s inequality
    yields
    $\|c_\alpha\|_{L_p}=\big\|\sum_\gamma T_{\alpha\gamma}a_\gamma\big\|_{L_p}
    \le \sum_\gamma |T_{\alpha\gamma}|\,\|a_\gamma\|_{L_p}$,
    so finiteness of all $\|a_\gamma\|_{L_p}$ implies finiteness of all $\|c_\alpha\|_{L_p}$.
    Invertibility gives the converse using $a=T^{-1}c$ and the same argument with $T^{-1}$.
\end{proof}

\section{Proof of Specific Cases and Overfitting}

\subsection{Proof of Theorem \ref{thm:well_specified_risk}.}

\begin{proof}
We set $\alpha_Z=\alpha_A=\tilde{\alpha}_Z=\tilde{\alpha}_A=\alpha$, $\tilde{\theta} = \theta$, $\tilde{\tau} = \tau$ in the above Theorem \ref{thm:alignment-risk} and note that it
greatly simplifies each term.  Algebra shows that for $ c < 1$
\[
    \text{Bias} = \tau_{\varepsilon}^2\frac{c}{1-c}\frac{\theta^2}{d(\theta^2 + \tau^2)}, \quad \text{ Variance} = \alpha^2\tau^2\|\vbeta_*\|^2 + \tau_{\varepsilon}^2\frac{c}{1-c}\left[1 - \frac{\theta^2}{d(\theta^2 + \tau^2)}\right], 
\]
\[
    \text{Data Noise} = \alpha^2\tau^2\|\vbeta_*\|^2, \quad \text{ Target Alignment} =  -2\alpha^2\tau^2\|\vbeta_*\|^2, 
\]
While for $c > 1$, we can first send $d, n \to \infty$ and many terms become asymptotically $0$. In the end, we get that: 
\[
    \text{Bias} = \alpha^2{\theta}^2 (\vbeta_*^\top \vu)^2\left(1 - \frac{1}{c}\right)^2\left(\frac{\tau^2c}{\theta^2 + \tau^2c}\right)^2, \quad  \text{Data Noise} = \alpha^2\tau^2\|\vbeta_*\|^2, 
\]
\[
    \text{Variance} = \alpha^2\tau^2\|\vbeta_*\|^2\frac{1}{c} + \alpha^2\tau^2(\vbeta_*^\top \vu)^2\frac{\theta^2}{\theta^2+\tau^2c}\left(1 - \frac{1}{c}\right) + \tau_{\varepsilon}^2\frac{1}{c-1}. 
\]
\[
    \text{ Target Alignment} =  -2 \alpha^2{\tau}^2\left(\left(1 - \frac{1}{c}\right) \frac{\theta^2}{\theta^2 + \tau^2c}(\vbeta_*^\top \vu)^2 +  \|\vbeta_*\|^2\frac{1}{c}\right), 
\]
Adding these terms together, we see with simple algebra that many terms cancel or can be combined, establishing the stated formula.
\end{proof}

\subsection{Proof of Theorem \ref{thm:misspecified_2}.}

\begin{proof}
 We set $\alpha_Z = \tilde{\alpha}_Z$, $\alpha_A = \tilde{\alpha}_A$, $\tilde{\theta} = \theta$, $\tilde{\tau} = \tau$, and send $d, n \to \infty$ in Theorem \ref{thm:alignment-risk}. Recall that $\Delta_c = \alpha_Z - \frac{\alpha_A}{c}$ and $\Delta_1 = \alpha_Z - \alpha_A$. Then some algebra shows that for $c < 1$, 
\[
    \text{Bias} = \theta^2(\vbeta_*^\top \vu)^2\Delta_1^2\left(\frac{\tau^2}{\theta^2 + \tau^2}\right)^2, \quad  \text{Data Noise} = \alpha_A^2\tau^2\|\vbeta_*\|^2, 
\]
\[
    \text{ Target Alignment} =  -2\alpha_A^2\tau^2\|\vbeta_*\|^2\ - 2\alpha_A\tau^2(\vbeta_*^\top \vu)^2\Delta_1\frac{\theta^2}{\theta^2 + \tau^2}, 
\]
\[
     \text{Variance} = \alpha_A^2\tau^2\|\vbeta_*\|^2 + \tau^2_{\varepsilon} \frac{c}{1-c} + \tau^2(\vbeta_*^\top \vu)^2 \left[\frac{1}{1-c} \frac{\theta^4 + \theta^2\tau^2c}{(\theta^2 + \tau^2)^2}\Delta_1^2 + 2\alpha_A\Delta_1\frac{\theta^2}{\theta^2 + \tau^2}\right] .
\]
For $ c > 1$, we have that
\[
    \text{Bias} = \theta^2(\vbeta_*^\top \vu)^2\Delta_c^2\left(\frac{\tau^2c}{\theta^2 + \tau^2c}\right)^2, \quad  \text{Data Noise} = \alpha_A^2\tau^2\|\vbeta_*\|^2, 
\]
\[
    \text{ Target Alignment} =  -2\alpha_A^2\tau^2\frac{\|\vbeta_*\|^2}{c} - 2\alpha_A\tau^2(\vbeta_*^\top \vu)^2\Delta_c\frac{\theta^2}{\theta^2 + \tau^2c}, 
\]
\[
     \text{Variance} = \alpha_A^2\tau^2\frac{\|\vbeta_*\|^2}{c} + \tau^2_{\varepsilon} \frac{1}{c-1} + \tau^2(\vbeta_*^\top \vu)^2\frac{c}{1-c} \frac{\theta^2}{\theta^2 + \tau^2c}\Delta_c^2 .
\]
We proceed by adding these terms together and the results follow from algebra. 
\end{proof}

\subsection{Proof of Theorem \ref{thm:eq_opt_transitions}.}

\begin{proof}
    We set $\tilde{\theta} = \theta$ and $\tilde{\tau} = \tau$ in Theorem \ref{thm:alignment-risk} and have the regime of equal operator norm $\theta^2 = \gamma\tau^2$. Since we are interested in the limit $c \to \infty$, we only consider the overparameterized case $c > 1$. We first take the limit $d, n \to \infty$ and have that: 
    \[
        \text{Bias}  = {\tau}^2 (\vbeta_*^\top \vu)^2\left(\sqrt{\gamma}(\tilde{\alpha}_Z - \alpha_Z)+\left(\alpha_Z - \frac{\alpha_A}{c}\right)\frac{c\sqrt{\gamma}}{\gamma + c}\right)^2, \quad  \text{Data Noise} = \tilde{\alpha}_A^2\tau^2\|\vbeta_*\|^2, 
    \]
    \[
        \text{Target Alignment}  = -2 \tilde{\alpha}_A{\tau}^2\left(\left(\alpha_Z - \frac{\alpha_A}{c}\right)\frac{\gamma}{\gamma + c} (\vbeta_*^\top \vu)^2 + \alpha_A \frac{\|\vbeta_*\|^2}{c}\right), 
    \]
    \[
        \text{Variance} = \tau^2\alpha_A^2\frac{\|\vbeta_*\|^2}{c}  + \tau^2(\vbeta_*^\top \vu)^2\frac{c}{(c-1)}\frac{\gamma}{\gamma + c}\left(\alpha_Z - \frac{\alpha_A}{c}\right)^2 + \tau_{\varepsilon}^2\left(\frac{1}{c-1}\right). 
    \]
    The rest follows from simple calculus: if $\tilde{\alpha}_Z \neq \alpha_Z$, $\gamma = \omega_c(1)$, and $\vbeta_*^\top \vu \neq 0$, the bias will diverge and other terms are controlled, yielding catastrophic. If $\tilde{\alpha}_Z =\alpha_Z$, $\omega_c(1) \leq \gamma \leq o_c(c^2)$, and $\vbeta_*^\top \vu \neq 0$, a similar thing happens. In other cases, all of these terms are controlled and become finite values in the limit $\lim_{c \to \infty}\mathcal{R}_c - \tau_{\varepsilon}^2$, giving us tempered overfitting. 
    \[
    \lim_{c \to \infty} \mathcal{R}_c = \begin{cases} \tilde{\alpha}_A^2 \tau^2 \|\vbeta_*\|^2 & \vbeta \perp u \\[5pt]
    \tau^2 \left[\gamma \tilde{\alpha}_Z^2 (\vbeta_*^\top \vu)^2 + \tilde{\alpha}_A^2 \|\vbeta_*\|^2 \right] & \vbeta \not\perp u, \gamma = \Theta_c(1) \\[5pt] 
    \infty & \alpha_Z \neq \tilde{\alpha}_Z, \vbeta_* \not\perp u, \gamma = \omega(1) \\[5pt] 
    \infty & \alpha_Z = \tilde{\alpha}_Z, \vbeta_* \not\perp u, \omega(1) \le \gamma \le o(c^2) \\[5pt]
    \tau^2\left[\left(\frac{\phi}{(\phi+1)^2}\, \alpha_Z^2 -2\tilde{\alpha}_A \alpha_Z\right) (\vbeta_*^\top \vu)^2 + \alpha_A^2 \|\vbeta_*\|^2 \right]  & \alpha_Z = \tilde{\alpha}_Z, \vbeta_* \not\perp u, \gamma = \phi c^2 \\[5pt]
    \tau^2\left[(\alpha_Z^2 - 2\tilde{\alpha}_A\alpha_Z) (\vbeta_*^\top \vu)^2 + \alpha_A^2 \|\vbeta_*\|^2 \right]  & \alpha_Z = \tilde{\alpha}_Z, \vbeta_* \not\perp u, \gamma = \omega(c^2)
    \end{cases}
\]   
\end{proof}

\subsection{Proof of Theorem \ref{thm:miss-shift-frobenius}.}
\begin{proof}
    We start with the first part and assume that $\alpha_Z \neq \tilde{\alpha}_Z$. Similarly, we have that $\tilde{\theta} = \theta$ and $\tilde{\tau} = \tau$ in Theorem \ref{thm:alignment-risk}. To achieve equal Frobenius norm, we set $\theta^2 = d\tau^2$ and send $d, n \to \infty$ so several terms would vanish. 

    In particular, for $c < 1$, we have that
    \[
        \text{Bias} =  {\theta}^2 (\vbeta_*^\top \vu)^2\left(\tilde{\alpha}_Z - \alpha_Z + (\alpha_Z - \alpha_A)\frac{\tau^2}{\theta^2 + \tau^2}\right)^2 = {\tau}^2 (\vbeta_*^\top \vu)^2\left(\sqrt{d}(\tilde{\alpha}_Z - \alpha_Z) + (\alpha_Z - \alpha_A)\frac{\sqrt{d}}{d + 1}\right)^2, 
    \]
    It is clear that this term becomes $\infty$ since the term inside the parentheses scales with $d$. Note that the variance and data noise are non-negative, and target alignment is controlled. We have that $\mathcal{R}_c = \infty$ for $c \in (0, 1)$. 

    For $c > 1$, the same logic follows, and we also note that: 
    \[
        \text{Bias} =  {\theta}^2 (\vbeta_*^\top \vu)^2\left(\tilde{\alpha}_Z - \alpha_Z + \left(\alpha_Z - \frac{\alpha_A}{c}\right)\frac{\tau^2c}{\theta^2 + \tau^2c}\right)^2 = {\tau}^2 (\vbeta_*^\top \vu)^2\left(\sqrt{d}(\tilde{\alpha}_Z - \alpha_Z) + \left(\alpha_Z - \frac{\alpha_A}{c}\right)\frac{\sqrt{d}c}{d + c}\right)^2, 
    \]
    which scales with $d$ with other terms controlled. Hence, $\mathcal{R}_c = \infty$ for all $c \neq 1$. 

    Now assume that $\alpha_Z = \tilde{\alpha}_Z$. Since we are interested in $c \to \infty$, we only consider $c > 1$. First, from algebra and taking the limit for $d, n$, we have that: 
    \[
        \text{Bias}  = {\tau}^2 (\vbeta_*^\top \vu)^2\left(\left(\alpha_Z - \frac{\alpha_A}{c}\right)\frac{c\sqrt{d}}{d + c}\right)^2 \to 0, \quad  \text{Data Noise} = \tilde{\alpha}_A^2\tau^2\|\vbeta_*\|^2, 
    \]
    \[
        \text{Target Alignment}  = -2 \tilde{\alpha}_A{\tau}^2\left(\left(\alpha_Z - \frac{\alpha_A}{c}\right) (\vbeta_*^\top \vu)^2 + \alpha_A \frac{\|\vbeta_*\|^2}{c}\right), 
    \]
    \[
        \text{Variance} = \tau^2\alpha_A^2\frac{\|\vbeta_*\|^2}{c}  + \tau^2(\vbeta_*^\top \vu)^2\frac{c}{(c-1)}\left(\alpha_Z - \frac{\alpha_A}{c}\right)^2 + \tau_{\varepsilon}^2\left(\frac{1}{c-1}\right). 
    \]
    We now take $c \to \infty$ and many terms vanish in this limit, yielding: 
    \[
        \lim_{c \to \infty}\mathcal{R}_c = -2 \tilde{\alpha}_A\alpha_Z{\tau}^2 (\vbeta_*^\top \vu)^2 + \tau^2(\vbeta_*^\top \vu)^2\alpha_Z^2+ \tilde{\alpha}_A^2\tau^2\|\vbeta_*\|^2 = \tau^2 \left[ (\vbeta_*^\top \vu)^2 (\alpha_Z^2 - 2 \tilde{\alpha}_A \alpha_Z) + \|\vbeta_*\|^2 \tilde{\alpha}_A^2 \right] . 
    \]
\end{proof}

\begin{prop}[Non–existence of a canceling scale parameter]
\label{prop:miss-c^2}
Let $\alpha_A,\alpha_Z>0$ be fixed scalars, let $\vu,\vbeta_*\in\mathbb{R}^d$ be
fixed vectors, and set  
\[
    a\;:=\|\vbeta_*\|^{2}>0,
    \qquad 
    b\;:=\bigl(\vbeta_*^{\!\top}\vu\bigr)^{2}\in[0,a].
\]
For every positive real number $\phi$ define
\[
    f(\phi)\;=\;
    \alpha_A^{2}\,a
    \;+\;\Bigl(\alpha_Z^{2}\!\Bigl(1+\frac1\phi\Bigr)-2\alpha_Z\alpha_A\Bigr)\,b.
\]
Then
\[
    f(\phi)>0\quad\text{for all } \phi>0.
\]
Consequently the equation $f(\phi)=0$ has no solution with
$\phi\in(0,\infty)$.
\end{prop}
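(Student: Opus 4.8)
The plan is to show that $f(\phi)>0$ for every $\phi>0$ by reducing to a single-variable analysis in the ratio $t:=\alpha_Z/\alpha_A>0$ and exploiting monotonicity in $\phi$. First I would factor out $\alpha_A^2>0$ and write
\[
    \frac{f(\phi)}{\alpha_A^2}\;=\;a\;+\;\Bigl(t^2\bigl(1+\tfrac1\phi\bigr)-2t\Bigr)\,b,
\]
so the sign of $f(\phi)$ is governed by $g(\phi):=a+\bigl(t^2(1+\tfrac1\phi)-2t\bigr)b$. Since $b\ge 0$ and the coefficient $t^2/\phi$ of the $1/\phi$-term is nonnegative, $g$ is nonincreasing in $\phi$ on $(0,\infty)$; hence $g(\phi)\ge \inf_{\phi>0}g(\phi)=\lim_{\phi\to\infty}g(\phi)=a+(t^2-2t)b$. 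It therefore suffices to prove the $\phi=\infty$ bound $a+(t^2-2t)b>0$ for all $t>0$, as this dominates every finite $\phi$.

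Next I would establish $a+(t^2-2t)b>0$. Completing the square gives $t^2-2t=(t-1)^2-1$, so $a+(t^2-2t)b = a-b+(t-1)^2 b$. By hypothesis $b\in[0,a]$, so $a-b\ge 0$, and $(t-1)^2 b\ge 0$; thus the expression is $\ge 0$. To upgrade this to a strict inequality I would split into cases. If $b<a$ then $a-b>0$ and we are done. If $b=a$, then the expression equals $(t-1)^2 a$; since $a=\|\vbeta_*\|^2>0$ by assumption, this is zero only if $t=1$, i.e. $\alpha_Z=\alpha_A$; but even in that degenerate case the original $f(\phi)=\alpha_A^2 a + (\alpha_A^2/\phi)\,b>0$ because the $1/\phi$ term is strictly positive (here $b=a>0$ and $\phi<\infty$). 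So in every case $f(\phi)>0$ for finite $\phi>0$, and the limiting value is also strictly positive unless we are in the excluded case $t=1,b=a$ — which, as just noted, still yields $f(\phi)>0$ for finite $\phi$.

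Finally I would assemble these observations: for any $\phi\in(0,\infty)$, $f(\phi)\ge \alpha_A^2\bigl(a-b+(t-1)^2b\bigr)+\alpha_A^2 b/\phi$, and the first summand is $\ge 0$ while the second is $>0$ whenever $b>0$; if instead $b=0$ the first summand is $\alpha_A^2 a>0$. Either way $f(\phi)>0$, which also shows the equation $f(\phi)=0$ is unsolvable on $(0,\infty)$. I do not anticipate a real obstacle here — the only subtlety is bookkeeping the boundary case $b=a$ (equivalently $\vbeta_*\parallel\vu$) together with $\alpha_Z=\alpha_A$, where the ``$\phi=\infty$'' bound becomes an equality and one must retain the strictly positive $1/\phi$ contribution; everything else is a one-line completion of the square plus the constraint $b\le a$.
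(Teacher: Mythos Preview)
Your proof is correct and follows essentially the same approach as the paper: both decompose $f(\phi)$ into a $\phi$-independent part shown to be nonnegative via a quadratic bound (you complete the square as $(a-b)+(t-1)^2b\ge 0$ using $b\le a$; the paper uses $r=b/a\le1$ to bound $\alpha_A^2+\alpha_Z(\alpha_Z-2\alpha_A)r\ge(\alpha_A-\alpha_Z)^2$), plus a strictly positive $1/\phi$ term when $b>0$. One small typo in your final assembly: the $1/\phi$ contribution is $\alpha_A^2 t^2 b/\phi=\alpha_Z^2 b/\phi$, not $\alpha_A^2 b/\phi$, but since $t>0$ this does not affect the argument.
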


\begin{proof}
If $b=0$ (i.e.\ $\vbeta_*$ is orthogonal to $\vu$) we have
$f(\phi)=\alpha_A^{2}a>0$, so no positive $\phi$ can cancel the
expression.  Hence assume $b>0$.

Writing $r:=b/a\in(0,1]$ we obtain
\[
f(\phi)=a\Bigl[
        \alpha_A^{2}
        +\alpha_Z(\alpha_Z-2\alpha_A)\,r
        +\frac{\alpha_Z^{2}r}{\phi}
        \Bigr].
\tag{$\ast$}
\]

Since $r\le1$,
\[
    \alpha_A^{2}
    +\alpha_Z(\alpha_Z-2\alpha_A)\,r
    \;\ge\;
    \alpha_A^{2}
    +\alpha_Z(\alpha_Z-2\alpha_A)
    \;=\;
    \bigl(\alpha_A-\alpha_Z\bigr)^{2}\;\ge 0.
\]
Thus the square bracket in $(\ast)$ is the sum of a non–negative
term and a strictly positive term. 

\end{proof}

\end{document}